\documentclass[table]{ai2style/ai2}

\usepackage{microtype}
\usepackage{amsmath, amssymb, amsfonts}
\usepackage{graphicx}
\usepackage{booktabs, tabularx, longtable}
\usepackage{multirow}
\usepackage{makecell}
\usepackage{wrapfig}
\usepackage{float}
\usepackage{siunitx}
\usepackage{csquotes}
\usepackage{xspace}
\usepackage{adjustbox}
\usepackage{threeparttable}

\usepackage{tikz}
\usetikzlibrary{arrows.meta,positioning,calc,backgrounds,shadows.blur}
\usepackage{pgfplots}
\pgfplotsset{compat=1.18}
\usepgfplotslibrary{groupplots}

\usepackage{listings}
\usepackage[most]{tcolorbox}

\usepackage{url}

\usepackage[absolute]{textpos}

\definecolor{darkblue}{rgb}{0,0,0.5}
\hypersetup{
  citecolor=darkblue,
  linkcolor=darkblue,
  urlcolor=darkblue
}

\definecolor{ai2pink}{HTML}{f0529c}
\definecolor{ai2lightpink}{HTML}{fbecf3}
\definecolor{olmoDarkBlue}{HTML}{012e59}
\definecolor{olmoBlue}{HTML}{265ed4}
\definecolor{olmoTeal}{HTML}{00d5ff}
\definecolor{olmoOrange}{HTML}{ff9100}
\definecolor{lightgrey}{HTML}{fafcfc}
\definecolor{midgrey}{HTML}{e6eded}

\DeclareUnicodeCharacter{2212}{\ensuremath{-}}
\newcommand{\norm}[1]{\left\lVert#1\right\rVert}

\newtcolorbox{prompt}[1]{colback=gray!5,colframe=ai2pink!80,fonttitle=\bfseries,title={#1}}

\definecolor{LearnerMain}{RGB}{30,102,200}
\definecolor{LearnerLite}{RGB}{120,175,255}
\definecolor{ActorMain}{RGB}{200,57,43}
\definecolor{ActorLite}{RGB}{244,170,160}
\definecolor{QueueMain}{RGB}{34,121,60}

\tikzset{
  every node/.style={font=\sffamily},
  panel/.style={
    rounded corners=2mm,
    minimum width=58mm, minimum height=36mm,
    inner sep=6mm, align=center, text=white,
    blur shadow={shadow blur steps=4, shadow opacity=.45}
  },
  title/.style={font=\bfseries\Large, text=white},
  flow/.style={-{Latex[length=3mm,width=2.3mm]}, line width=.9pt},
  lbl/.style={font=\normalsize, fill=white, fill opacity=.85, text opacity=1, inner sep=1pt},
  flowlbl/.style={
    midway, sloped,
    inner sep=1.5pt,
    font=\sffamily\Large,
    fill=white, fill opacity=.98, text opacity=1,
  }
}

\usepackage{nicematrix}
\newcolumntype{L}[1]{>{\raggedright\let\newline\\\arraybackslash\hspace{0pt}}m{#1}}
\newcolumntype{C}[1]{>{\centering\let\newline\\\arraybackslash\hspace{0pt}}m{#1}}
\newcolumntype{R}[1]{>{\raggedleft\let\newline\\\arraybackslash\hspace{0pt}}m{#1}}
\newcolumntype{P}[1]{>{\centering\let\newline\\\arraybackslash\columncolor{ai2lightpink}}m{#1}}
\newcolumntype{Q}[1]{>{\centering\let\newline\\\arraybackslash}m{#1}}
\newcolumntype{H}{>{\setbox0=\hbox\bgroup}c<{\egroup}@{}}

\newcommand{\huggingface}{\raisebox{-1.5pt}{\includegraphics[height=1.05em]{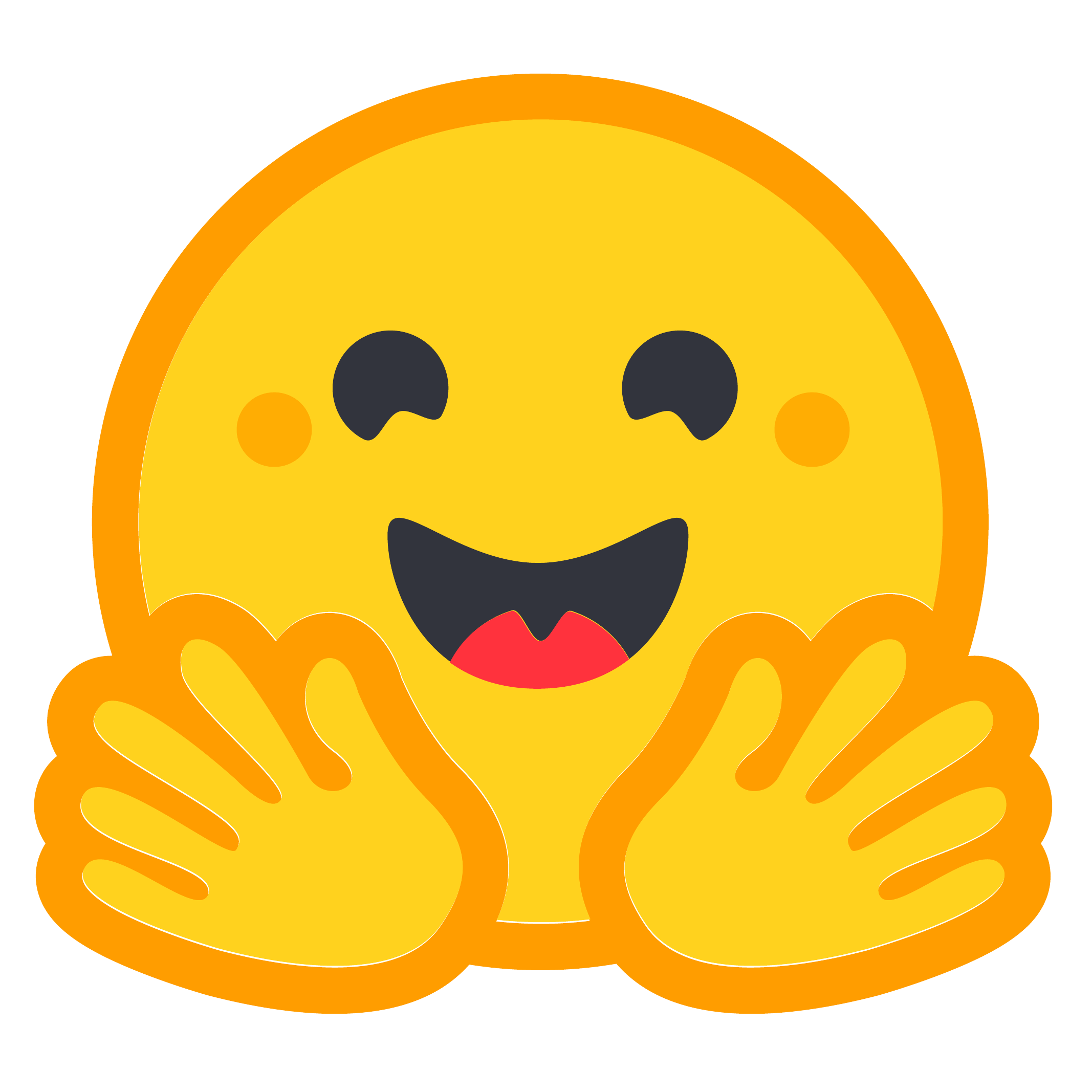}}\xspace}

\newcommand{\emailLogo}{\raisebox{-1.5pt}{\includegraphics[height=1.05em]{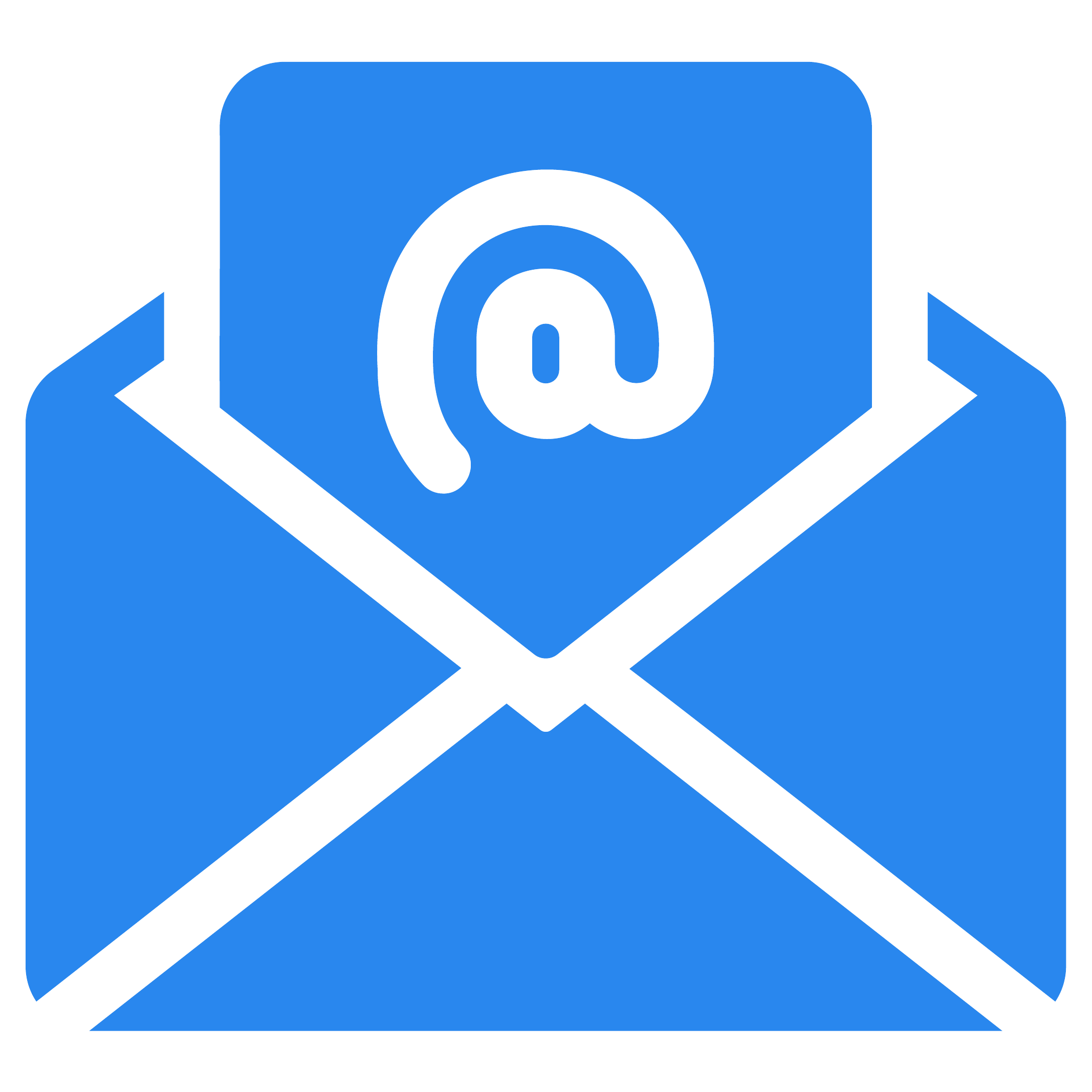}}\xspace}
\newcommand{\github}{\raisebox{-1.5pt}{\includegraphics[height=1.05em]{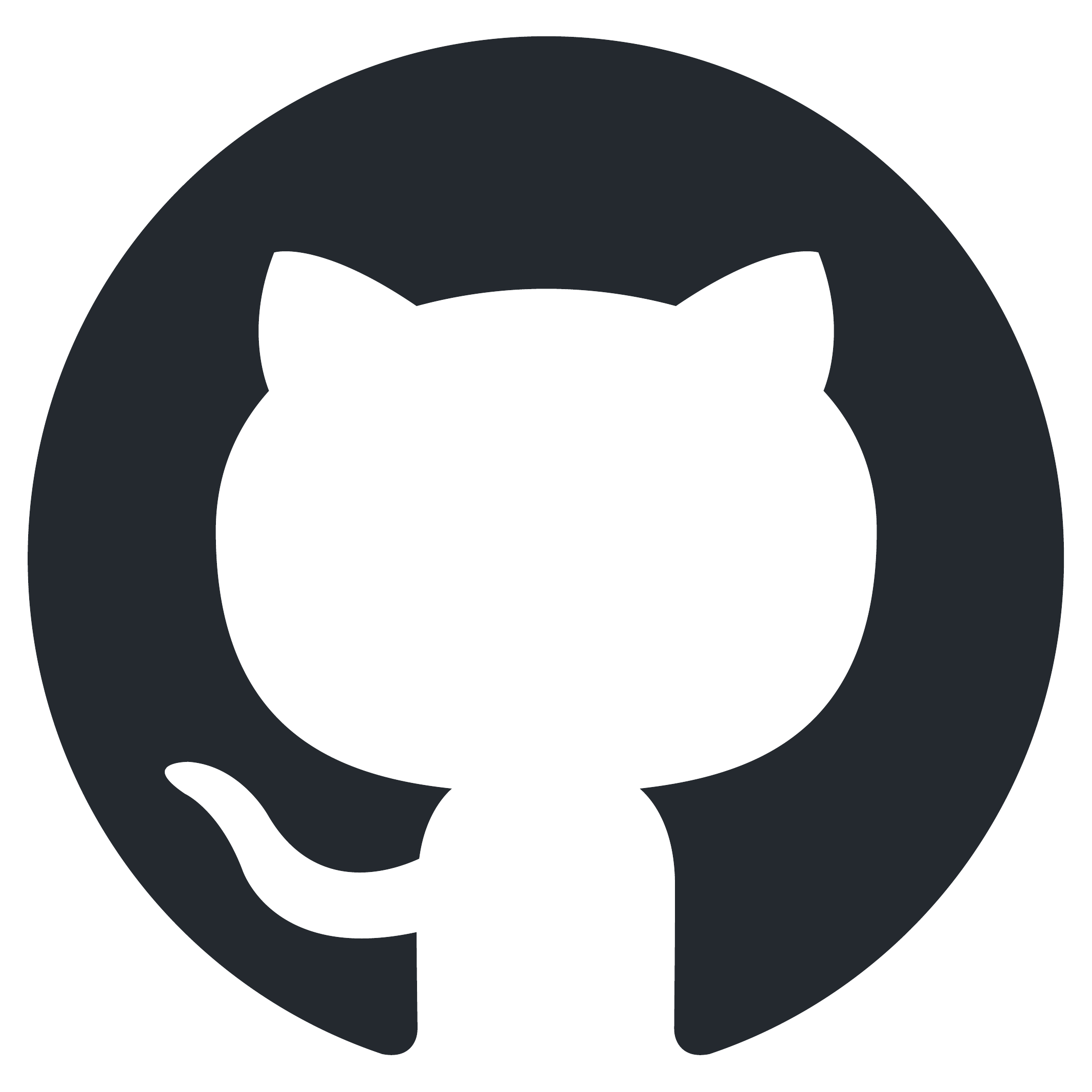}}\xspace}
\newcommand{\wandb}{\raisebox{-1.5pt}{\includegraphics[height=1.05em]{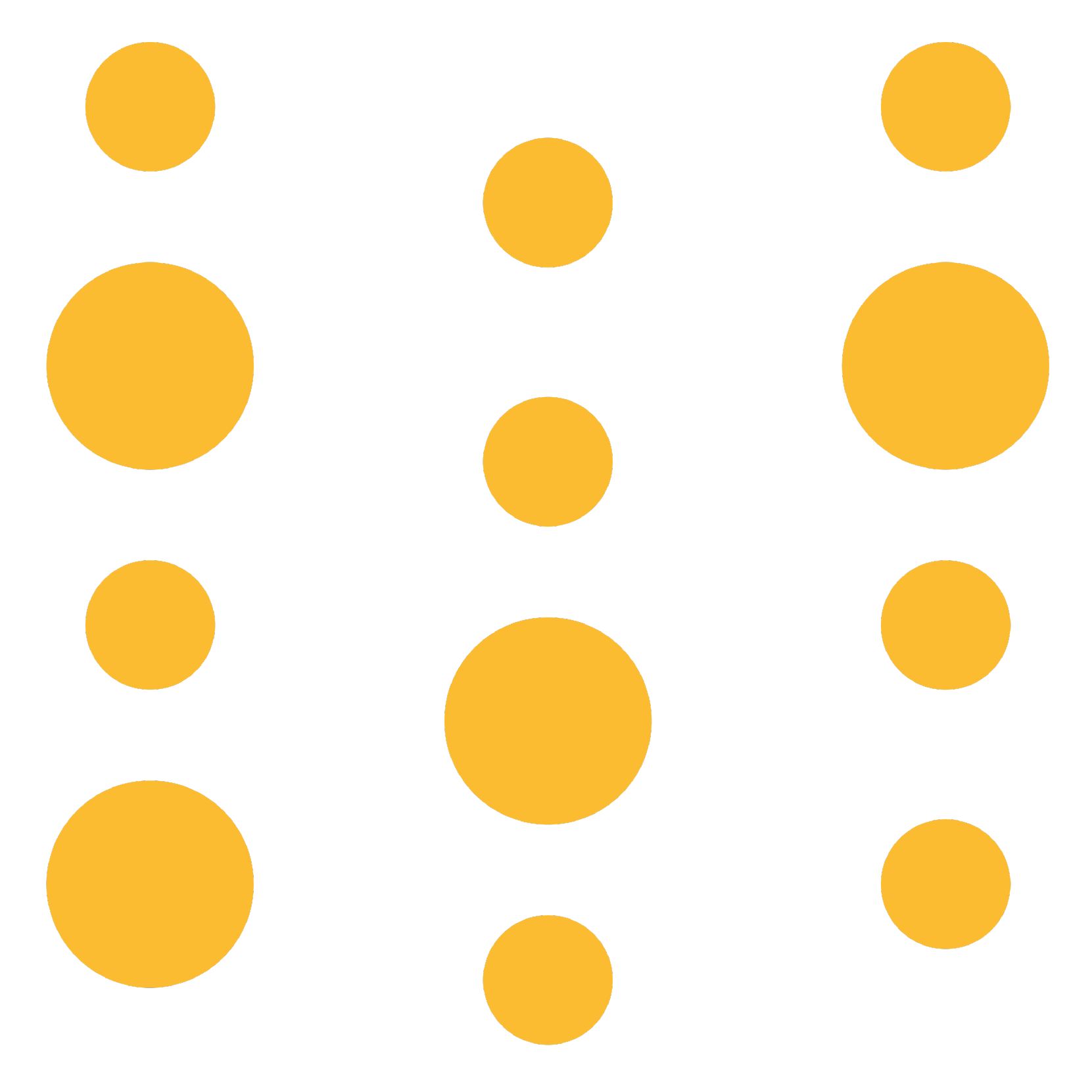}}\xspace}

\newcommand{\allenAiAff}{\raisebox{.28em}{\hspace{.02em}\scalebox{0.7}{\textbf{1}}}}
\newcommand{\uwAff}{\raisebox{.28em}{\hspace{.02em}\scalebox{0.7}{\textbf{2}}}}
\newcommand{\lambdaAff}{\raisebox{.28em}{\hspace{.02em}\scalebox{0.7}{\textbf{3}}}}
\newcommand{\ethAff}{\raisebox{.28em}{\hspace{.02em}\scalebox{0.7}{\textbf{4}}}}
\newcommand{\cambridgeAff}{\raisebox{.28em}{\hspace{.02em}\scalebox{0.7}{\textbf{5}}}}
\newcommand{\commaAff}{\raisebox{.28em}{\hspace{.02em}\scalebox{0.7}{\textbf{,}\hspace{0.1em}}}}

\newcommand{\coreContrib}{\raisebox{.28em}{\hspace{.05em}\includegraphics[height=.45em]{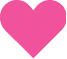}}\hspace{0.1em}}

\usepackage{amsmath}
\usepackage{amssymb}
\usepackage{amsthm}
\usepackage{thm-restate}
\usepackage{mathtools}
\usepackage{bbm}
\usepackage{paralist}
\usepackage{wrapfig}
\usepackage{cancel}

\newcommand{\olmothreeeval}{\textsc{OlmoBaseEval}\xspace}

\newcommand\model{Olmo Hybrid\xspace}

\definecolor{midgrey}{HTML}{e6eded}
\definecolor{ai2midwhite}{HTML}{f2e5d9}
\definecolor{ai2offwhite}{HTML}{fbf4ee}
\definecolor{lightgrey}{HTML}{fafcfc}

\definecolor{transformercolor}{HTML}{105257}
\definecolor{swacolor}{RGB}{190,215,235}
\definecolor{deltanetcolor}{HTML}{5ec9e1}
\definecolor{mambacolor}{RGB}{130,185,130}
\definecolor{hybridcolor}{HTML}{de5d9a}

\DeclarePairedDelimiter\abs{\lvert}{\rvert}
\DeclarePairedDelimiter\ceil{\lceil}{\rceil}
\DeclarePairedDelimiter\floor{\lfloor}{\rfloor}

\makeatletter
\let\oldabs\abs
\def\abs{\@ifstar{\oldabs}{\oldabs*}}
\let\oldceil\ceil
\def\ceil{\@ifstar{\oldceil}{\oldceil*}}
\let\oldfloor\floor
\def\floor{\@ifstar{\oldfloor}{\oldfloor*}}
\makeatother

\newcommand\NC{\mathsf{NC}}

\newcommand\PNC{\mathsf{PNC}}
\newcommand\AC{\mathsf{AC}}
\newcommand\TC{\mathsf{TC}}

\newcommand\NL{\mathsf{NL}}
\renewcommand\P{\mathsf{P}}
\newcommand\NP{\mathsf{NP}}

\usepackage{tikz}

\newtheoremstyle{ai2thm}%
  {6pt}{6pt}
  {\itshape}
  {}
  {\slightlysmaller\sffamily\bfseries}
  {}
  {0.5em}
  {%
    \thmname{#1}\thmnumber{ #2}\thmnote{: \sffamily\bfseries#3 \\}%
  }

\newtheoremstyle{ai2def}%
  {6pt}{6pt}
  {\normalfont}
  {}
  {\slightlysmaller\sffamily\bfseries}
  {}
  {0.5em}
  {%
    \thmname{#1}\thmnumber{ #2}\thmnote{: \sffamily\bfseries#3 \\}%
  }

\theoremstyle{ai2thm}
\newtheorem{theorem}{Theorem}
\newtheorem{lemma}{Lemma}

\newtheorem{corollary}{Corollary}[theorem]
\newtheorem{lemcorollary}{Corollary}[lemma]
\theoremstyle{ai2def}
\newtheorem{definition}{Definition}
\newtheorem{assumption}{Assumption}

\Crefname{corollary}{Corollary}{Corollaries}
\crefname{corollary}{corollary}{corollaries}
\Crefname{lemcorollary}{Corollary}{Corollaries}
\crefname{lemcorollary}{corollary}{corollaries}
\Crefname{assumption}{Assumption}{Assumptions}
\crefname{assumption}{assumption}{assumptions}

\tcbset{
    hybridBoxStyle/.style={
        colback=ai2lightpink,
        colframe=black
    }
}

\let\oldtheorem\theorem
\let\endoldtheorem\endtheorem
\renewenvironment{theorem}[1][]{%
  \begin{tcolorbox}[hybridBoxStyle]
  \oldtheorem[#1]
}{%
  \endoldtheorem
  \end{tcolorbox}
}

\let\olddefinition\definition
\let\endolddefinition\enddefinition
\renewenvironment{definition}[1][]{%
  \begin{tcolorbox}[hybridBoxStyle]
  \olddefinition[#1]
}{%
  \endolddefinition
  \end{tcolorbox}
}

\let\oldassumption\assumption
\let\endoldassumption\endassumption
\renewenvironment{assumption}[1][]{%
  \begin{tcolorbox}[hybridBoxStyle]
  \oldassumption[#1]
}{%
  \endoldassumption
  \end{tcolorbox}
}

\let\oldcorollary\corollary
\let\endoldcorollary\endcorollary
\renewenvironment{corollary}[1][]{%
  \begin{tcolorbox}[hybridBoxStyle]
  \oldcorollary[#1]
}{%
  \endoldcorollary
  \end{tcolorbox}
}

\newcommand\FO{\mathsf{FO}}

\newcommand{\trueloss}{\tilde{L}}

\usepackage{listings}
\usepackage{caption}
\lstdefinestyle{pythonstyle}{
    language=Python,
    basicstyle=\ttfamily\small,
    keywordstyle=\color{blue}\bfseries,
    stringstyle=\color{teal},
    commentstyle=\color{gray}\itshape,
    backgroundcolor=\color{gray!10},
    frame=single,
    framerule=0.5pt,
    rulecolor=\color{gray!50},
    showstringspaces=false,
    breaklines=true,
    postbreak=\mbox{\textcolor{red}{$\hookrightarrow$}\space},
}
\newcommand\WM[1]{\textsf{\textcolor{orange}{[WM: #1]}}}
\newcommand\AS[1]{\textsf{\textcolor{brown}{[AS: #1]}}}
\newcommand\anej[1]{\textsf{\textcolor{lime!80!black}{[Anej: #1]}}}
\newcommand\hanna[1]{\textsf{\textcolor{red}{[hanna: #1]}}}
\newcommand\nato[1]{\textsf{\textcolor{purple}{[nathan: #1]}}}
\newcommand\nascomment[1]{\textsf{\textcolor{blue}{[Noah: #1]}}}
\newcommand\david[1]{\textsf{\textcolor{magenta}{[David: #1]}}}
\newcommand\kyle[1]{\textsf{\textcolor{green}{[Kyle: #1]}}}
\newcommand\yanhong[1]{\textsf{\textcolor{cyan}{[Yanhong: #1]}}}
\newcommand\tylerr[1]{\textsf{\textcolor{purple}{[Tylerr: #1]}}}

\newcommand{\hidedc}{%
    \renewcommand\WM[1]{}%
    \renewcommand\AS[1]{}%
    \renewcommand\anej[1]{}%
    \renewcommand\hanna[1]{}%
    \renewcommand\nato[1]{}%
    \renewcommand\nascomment[1]{}%
    \renewcommand\david[1]{}%
    \renewcommand\kyle[1]{}%
    \renewcommand\yanhong[1]{}%
    \renewcommand\tylerr[1]{}%
}

\hidedc  

\title{Olmo Hybrid: From Theory to Practice and Back}

\authorOne[\allenAiAff]{William Merrill\coreContrib}
\authorOne[\allenAiAff]{Yanhong Li\coreContrib}
\authorOne[\allenAiAff]{Tyler Romero\coreContrib}
\authorOne[\allenAiAff\commaAff\ethAff]{Anej Svete\coreContrib}
\authorOne[\lambdaAff]{Caia Costello\coreContrib}
\authorOne[\allenAiAff]{Pradeep Dasigi}
\authorOne[\allenAiAff]{Dirk Groeneveld}
\authorOne[\allenAiAff]{David Heineman}
\authorOne[\allenAiAff]{Bailey Kuehl}
\authorOne[\allenAiAff]{Nathan Lambert}
\authorOne[\lambdaAff]{Chuan Li}
\authorOne[\allenAiAff\commaAff\uwAff]{Kyle Lo}
\authorOne[\allenAiAff]{Saumya Malik}
\authorOne[\lambdaAff]{DJ Matusz}
\authorOne[\allenAiAff\commaAff\cambridgeAff]{Benjamin Minixhofer}
\authorOne[\allenAiAff\commaAff\uwAff]{Jacob Morrison}
\authorOne[\allenAiAff]{Luca Soldaini}
\authorOne[\allenAiAff]{Finbarr Timbers}
\authorOne[\allenAiAff]{Pete Walsh}
\authorOne[\allenAiAff\commaAff\uwAff]{Noah A.~Smith}
\authorOne[\allenAiAff\commaAff\uwAff]{Hannaneh Hajishirzi}
\authorOne[\allenAiAff]{Ashish Sabharwal\coreContrib}

\vspace{-20pt}
\affiliation[\allenAiAff]{Allen~Institute~for~AI}
\affiliation[\uwAff]{University~of~Washington}
\affiliation[\lambdaAff]{Lambda~ML~Team}  
\affiliationTwo[\ethAff]{ETH~Z\"urich}
\affiliationTwo[\cambridgeAff]{University~of~Cambridge}
\vspace{-6pt}

\contribution[]{
\emailLogo \texttt{willm@allenai.org} \quad\quad
\coreContrib{marks core contributors}.
See author contributions \hyperref[sec:contrib]{here}.
}

\abstract{
    Recent work has demonstrated the potential of non-transformer language models, especially linear recurrent neural networks (RNNs) and hybrid models that mix recurrence and attention.
    Yet there is no consensus on whether the potential benefits of these new architectures justify the risk and effort of scaling them up.
    To address this, we provide evidence for the advantages of hybrid models over pure transformers on several fronts.
    First, theoretically, we show that hybrid models do not merely inherit the expressivity of transformers and linear RNNs, but can express tasks \emph{beyond} both,
    such as code execution.
    Putting this theory to practice,
    we train \textbf{\model}, a 7B-parameter model largely comparable to Olmo 3 7B but with the sliding window layers replaced by Gated DeltaNet layers.
    We show that \model outperforms Olmo 3 across standard pretraining and mid-training evaluations, demonstrating the benefit of hybrid models in a controlled, large-scale setting.
    We find that the hybrid model scales significantly more efficiently than the transformer, explaining its higher performance.
    However, its unclear why greater expressivity on specific formal problems should result in better scaling or superior performance on downstream tasks unrelated to those problems.
    To explain this apparent gap, we return to theory and argue why increased expressivity should translate to better scaling efficiency, completing the loop.
    Overall, our results suggest that hybrid models mixing attention and recurrent layers are a powerful extension to the language modeling paradigm: not merely to reduce memory during inference, but as a fundamental way to obtain more expressive models that scale better during pretraining.
}

\metadata[\quad\huggingface Collection:]{
\href{https://huggingface.co/collections/allenai/olmo-hybrid}{\texttt{Olmo-Hybrid-7B}}
}

\metadata[\vspace{.5em}\quad\wandb Training Logs:]{
  \href{https://wandb.ai/ai2-llm/Olmo-Hybrid-7B/reports/Olmo-3-7B-vs-Olmo-Hybrid--VmlldzoxNjA5NDIyNg}{\texttt{Olmo-3-7B-vs-Olmo-Hybrid}}
}

\metadata[\vspace{.5em}\quad\github Training Code:]{
  \href{https://github.com/allenai/OLMo-core}{\texttt{OLMo-core}}~\sans{(\href{https://github.com/allenai/OLMo-core/blob/main/src/scripts/official/OLMo-hybrid/OLMo-hybrid-7B-pretrain.py}{pretrain}, \href{https://github.com/allenai/OLMo-core/blob/main/src/scripts/official/OLMo-hybrid/OLMo-hybrid-7B-midtrain.py}{mid-train}, \href{https://github.com/allenai/OLMo-core/blob/main/src/scripts/official/OLMo-hybrid/OLMo-hybrid-7B-long-context.py}{long context}, \href{https://github.com/allenai/OLMo-core/blob/main/src/scripts/official/OLMo-hybrid/OLMo-hybrid-7B-sft-think.py}{sft-think}, \href{https://github.com/allenai/OLMo-core/blob/main/src/scripts/official/OLMo-hybrid/OLMo-hybrid-7B-sft-instruct.py}{sft-instruct})}
}

\begin{document}
\maketitle

\newpage
\setcounter{tocdepth}{2}
\tableofcontents
\newpage

\section{Introduction}


Transformers have become the backbone architecture for language models \citep{Radford2018ImprovingLU}, after initially eclipsing RNNs in machine translation \citep{vaswani2017attention}.
%
Recently, theoretical and empirical work has characterized the limitations of transformers~\citep{strobl2024formal} and the potential for alternative recurrent architectures to overcome them~\citep{merrill2024illusion,grazzi2025unlocking}.
Transformers lack the expressive power to robustly represent state tracking tasks, which require sequential computation \citep{merrill-sabharwal-2023-parallelism,chiang2025transformers}.
While the parallelism of transformers is important for efficient training, a longstanding issue at inference time is their quadratic scaling with context length.
Modern RNNs strike a better balance: their compute scales \emph{linearly} with context length, and, unlike classical RNNs \citep{elman1990finding,hochreiter1997lstm}, they can be parallelized through linear gating \citep{bradbury2017qrnn,katharopoulos2020transformers,gu2022efficiently,gu2024mamba}.
Moreover, these linear RNNs can also express state tracking tasks, giving an expressivity advantage over transformers \citep{merrill2024illusion,grazzi2025unlocking,peng2025rwkv}.

However, switching from transformers to RNNs is not a free lunch, as RNNs (even linear ones) struggle with copying and recall tasks due to their bounded state \citep{arora2024zoology,jelassi2024repeat}.
This has led to the exploration of \emph{hybrid models} that mix attention and linear RNN layers in order to leverage the benefits of both architectural primitives.
Recently, hybrid models have been trained at scales up to 9B active parameters and 36T tokens (e.g., Mamba-2-Hybrid, \citealp{waleffe2024empiricalstudymamba}; Samba, \citealp{ren2025samba}; Nemotron-H, \citealp{nvidia2025nemotronh}; Qwen3-Next, \citealp{qwen3next2025}; Kimi Linear, \citealp{kimi2025linear}; and Qwen 3.5, \citealp{qwen35blog}) with encouraging results.
Yet there is not yet consensus on whether the advantages of hybrid models justify the cost and risk of switching to a fundamentally different architecture, in part because a controlled comparison between transformer and modern hybrid LMs is lacking at large scale.

To rectify this gap, we introduce \textbf{\model}, a family of model artifacts comparable to Olmo 3 except that the architecture interleaves Gated DeltaNet (GDN, \citealp{yang2025gdn,grazzi2025unlocking}) layers with attention layers at a 3:1 ratio in place of the sliding-window-attention layers used in Olmo 3.
We train \model 7B on up to 6T tokens, finding large improvements in token efficiency (and thus also compute efficiency): \model matches Olmo 3 7B on MMLU~\citep{hendryckstest2021} with 49\% fewer training tokens.
After pretraining, this token efficiency translates to improvements on MMLU and other evaluations, with gains persisting after mid-training; the final \model base checkpoint outperforms Olmo 3 across all aggregated domains of \olmothreeeval.
Beyond these standard base model evaluations, \model shows large gains in long-context ability, with a 14.1\% improvement on RULER 64k over Olmo 3.
Because \model is closely comparable to Olmo 3 apart from its hybrid architecture, we take these results as strong evidence in favor of hybrid models over pure transformers.

Beyond large-scale experiments, we also present theoretical results and fully controlled scaling studies to explain these performance gains.
Past theoretical work has shown attention and recurrence have complementary strengths \citep{merrill2024illusion,grazzi2025unlocking}. Mixing them is, thus, a natural way to reap the benefits of both primitives.
We extend this with novel theory showing that hybrid models are more powerful than the sum of their parts: there are formal problems related to code evaluation that neither transformers nor GDN can express on their own, but which hybrid models can represent theoretically and learn empirically.
This greater expressivity does not immediately imply that hybrid models should be better LMs.
We therefore run controlled scaling studies comparing hybrid models to transformers and show that they indeed attain better token efficiency, consistent with our observations from the \model pretraining run.
It is not clear \emph{prima facie} why greater expressivity on specific formal problems should improve scaling efficiency on benchmarks like MMLU.
To fill this gap, we develop a formal explanation of how increasing an architecture's expressivity can improve data and compute efficiency for loss and downstream tasks---even on tasks unrelated to new problems expressible by the model.

Taken together, our results suggest that hybrid models dominate transformers both theoretically, in their balance of expressivity and parallelism, and empirically, in terms of benchmark performance and long-context abilities.
We believe these findings position hybrid models for wider adoption and call on the research community to pursue further architecture research.

\section{\model Overview} \label{sec:main}

\subsection{Architecture}

Our architecture matches that of Olmo 3 7B (see \citealp{olmo2025olmo3} for details), except that 75\% of layers use GDN heads in place of attention heads. The layers alternate so that 3 GDN layers are followed by 1 multi-head attention layer.
In particular, each GDN head uses GDN \citep{yang2025gdn} extended with negative eigenvalues \citep{grazzi2025unlocking}:

\begin{definition}[GDN with Negative Eigenvalues \normalfont\slightlylarger{\citep{schlag2021lineartransformers,yang2025gdn,grazzi2025unlocking}}] \label{def:gdn}
    Let $d$ be the head dimension.
    For each token $t$, we are given $\mathbf q_t, \mathbf k_t \in \mathbb R^d$, $\mathbf v_t \in \mathbb R^{2d}$, and $\alpha_t, \beta_t \in (0, 1)$, with $\norm{\mathbf k_t} = 1$.
    The initial state $\mathbf S_0$ is the 0 matrix. The state $\mathbf S_t \in \mathbb R^{2d \times d}$ and head output $\mathbf y_t \in \mathbb R^{2d}$ are updated via
    \begin{align*}
        \mathbf S_t &= \mathbf S_{t-1} \alpha_t (\mathbf I - 2 \beta_t \mathbf k_t \mathbf k_t^\top)  + \mathbf v_t \mathbf k_t^\top \\
        \mathbf y_t &= \mathbf S_t \mathbf q_t .
    \end{align*}
\end{definition}

GDN extends the earlier DeltaNet \citep{schlag2021lineartransformers,yang2024parallelizing} by adding the decay factor $\alpha_t \in (0, 1)$ on the state update.
The negative eigenvalue extension replaces $\beta_t$ from the original GDN with $2 \beta_t$.
Despite its simplicity, this change has important consequences for the expressive power of GDN \citep{grazzi2025unlocking}, and we therefore adopt it---in contrast to the GDN architecture used by Qwen-3-Next \citep{qwen3next2025}, Kimi Linear \citep{kimi2025linear}, and Qwen 3.5 \citep{qwen35blog}.

A GDN head fits seamlessly into the overall transformer architecture from Olmo 3 since we can use the standard query, key, and value projections as inputs and treat $\mathbf y_t$ as the per-head output.
It is straightforward to construct a hybrid model by replacing entire attention layers with GDN layers---substituting each attention head with a GDN head.
In particular, we replace the sliding-window attention (SWA) layers (75\% of layers) from Olmo 3 in this way, resulting in a 3:1 hybridization ratio.
In GDN, the value vector is typically twice the length of that in a transformer, and $\beta_t$ requires a few extra parameters to compute.
Thus, we slightly adjust the shape of the model to match Olmo 3 in parameter count and training throughput (see \Cref{sec:training-overview}).

\paragraph{Why GDN?}
As we discuss in \Cref{sec:expressivity}, GDN with negative eigenvalues, unlike linear attention~\citep{katharopoulos2020transformers} or Mamba~\citep{gu2024mamba}, can express state tracking problems that transformers cannot.
Moreover, in \Cref{sec:expressivity}, we prove that hybrid models with attention and GDN layers can express more than purely attention or purely GDN models.
Complementing these expressivity results, we show empirically in \Cref{sec:scaling-laws} that hybrid models with GDN exhibit favorable scaling properties relative to transformers, and provide a theoretical explanation for why their additional expressive power should translate into more efficient scaling (\Cref{sec:expressivity-to-scaling}).

Another important advantage of \model's GDN layers over Olmo 3's SWA layers is the reduced per-layer inference state size, as shown in \Cref{tab:state-size}.

\begin{table}[h]
    \centering
    \small
    \caption{Per-layer inference state size comparison under the Olmo 3 configuration stored in fp16.}
    \label{tab:state-size}
    \begin{tabular}{@{}lrrr@{}}
        \toprule
        Layer Type & Elements & FP16 Size & vs.\ GDN \\
        \midrule
        Multi-Head Attention ($32$K seq, $32$ KV heads, $d_h{=}128$) & $268.4$M & $512$ MiB & $485\times$ \\
        Grouped-Query Attention ($32$K seq, $8$ KV heads, $d_h{=}128$) & $67.1$M & $128$ MiB & $121\times$ \\
        Grouped-Query SWA ($4096$ window, $8$ KV heads, $d_h{=}128$) & $8.39$M & $16.0$ MiB & $15.2\times$ \\
        \model{} GDN ($30$ heads, $d_k{=}96$, $d_v{=}192$) & $0.55$M & $1.05$ MiB & --- \\
        \bottomrule
    \end{tabular}
\end{table}

\subsection{Training Overview} \label{sec:training-overview}

\begin{figure}[!t]
    \centering
    \includegraphics[width=1.0\linewidth]{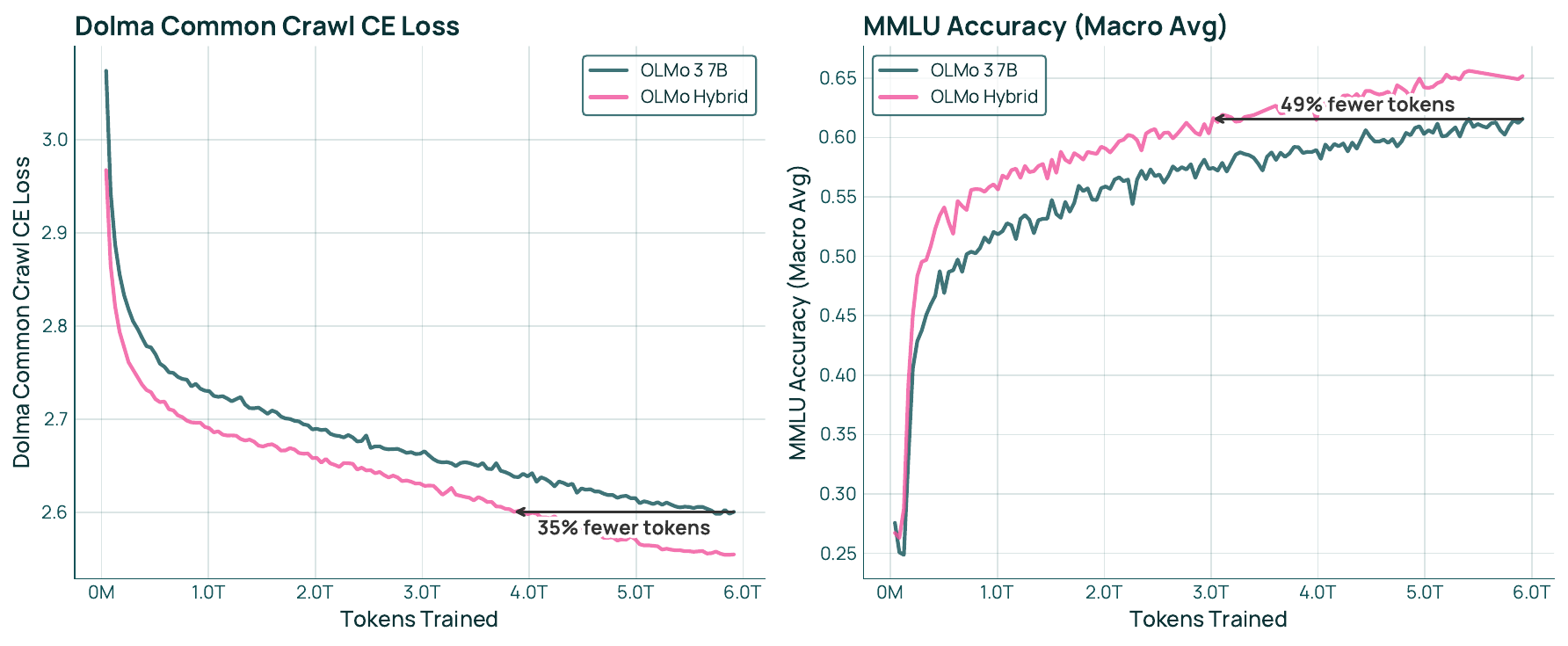}
    \caption{\model 7B is more efficient than OLMo 3 7B during pretraining, reaching the same Common Crawl loss in 35\% fewer tokens and the same MMLU accuracy using 49\% fewer tokens (and thus also 35\% and 49\% fewer FLOPs, respectively).
    }
    \label{fig:pretraining-perf}
\end{figure}

\paragraph{Pretraining.}
Overall, \model aims to use the same hyperparameters and training configuration as Olmo 3 to maximize comparability, though we made a few small changes where warranted.
In particular, since GDN has more parameters than a transformer with the same hyperparameters, we reduced the number of heads in \model from 32 to 30 and set the key and query head dimension to 96 and the value head dimension to 192.
This yielded an \model 7B model with 7.0B parameters (compared to 6.8B for Olmo 3) that closely matched---in fact, slightly outperformed---Olmo 3 7B in early training throughput benchmarks.
We will refer to this model as \textbf{\model 7B}.
We also made a few other minor changes to the hyperparameters and training configuration from Olmo 3; see \Cref{sec:pretraining} for more details.

\paragraph{Mid-Training.}
Mid-training largely follows the procedure from Olmo 3 \citep{olmo2025olmo3}: adaptation on the mid-training data used for Olmo 3 32B (light filtering applied to the mid-training mixture for Olmo 3 7B) followed by long-context extension.
We used a doubled mid-training batch size based on a refined understanding of the interplay between learning rate and batch size since training Olmo 3 \citep{merrill2025critical}. As for Olmo 3 32B, we conduct two separate mid-training runs on different 100B token subsets of Dolma 3 Dolmino Mix and merge the final checkpoints.
For long-context extension, we experimented with the YaRN methodology \citep{peng2024yarn} used for Olmo 3, as well as dropping RoPE entirely (DroPE, \citealp{gelberg2025extendingcontextpretrainedllms}). DroPE removes RoPE after mid-training, eliminating the extrapolation bottleneck imposed by rotation frequencies learned at shorter context lengths. We hypothesize that DroPE is well suited to hybrid architectures because the GDN layers already provide implicit positional encoding through their recurrent structure, reducing the model's reliance on RoPE. While we chose DroPE for the released \model, we evaluate \model with both methods for comparability (see \Cref{tab:lc_evals_ruler}). We run long-context extension on 100B tokens of Dolma 3 Longmino Mix.

As a proof of concept, we also apply the Olmo 3 post-training recipe to Olmo Hybrid in \Cref{sec:posttraining}.

\subsection{\model Results}

\begin{table}[t!]
\setlength\tabcolsep{2pt} 
\renewcommand{\arraystretch}{0.9}
{\fontsize{8}{8}\selectfont
\begin{center}
\begin{footnotesize}
\begin{tabular}{lH|ccccc|cccc}
\toprule
    &\multicolumn{6}{c}{\textbf{\texttt{Base Aggregate Scores}}} & \multicolumn{4}{c}{\textbf{\texttt{Held-out Scores}}} \\
    {\textbf{\fontsize{8}{8}\selectfont~Model}} &
    {\textbf{\fontsize{8}{8}\selectfont~$\#$ Toks}} & 
    {\textbf{\fontsize{8}{8}\selectfont~Math}} & 
    {\textbf{\fontsize{8}{8}\selectfont~Code}} & 
    {$\textbf{\fontsize{8}{8}\selectfont~MC}_\textbf{\fontsize{6}{6}\selectfont~STEM}$} & 
    {$\textbf{\fontsize{8}{8}\selectfont~MC}_\textbf{\fontsize{6}{6}\selectfont~Non-STEM}$} & 
    {\textbf{\fontsize{8}{8}\selectfont~GenQA}} & 
    {\textbf{\fontsize{8}{8}\selectfont~LBPP}} &
    {\textbf{\fontsize{8}{8}\selectfont~BBH}} &
    {\textbf{\fontsize{8}{8}\selectfont~MMLU Pro}} & 
    {\textbf{\fontsize{8}{8}\selectfont~DM Math}}
    \\
\midrule


\rowcolor{lightgrey}    Olmo 3 7B Pretrain & -- & 23.3 & 19.6 & 64.0 & 71.9 & 68.5 & 6.3 & 48.6 & 31.2 & 15.7 \\
\rowcolor{lightgrey}    \model{} 7B Pretrain & -- & 27.0 & 17.1 & 67.4 & 75.5 & 66.8 & 2.9 & 52.2 & 35.7 & 13.2 \\
\rowcolor{midgrey}      Olmo 3 7B Midtrain & -- & 59.8 & 32.1 & 67.3 & 78.2 & 71.3 & 17.8 & 65.9 & 36.6 & 23.8 \\
\rowcolor{midgrey}      \model{} 7B Midtrain & -- & 61.2 & 32.9 & 70.9 & 81.3 & 73.6 & 17.7 & 68.6 & 43.1 & 23.7 \\
\rowcolor{ai2lightpink}    Olmo 3 7B LC & -- & 54.6 & 30.9 & 66.2 & 78.2 & 72.5 & 17.7 & 64.0 & 37.2 & 23.6 \\
\rowcolor{ai2lightpink}    \model{} 7B LC & -- & 55.1 & 32.4 & 70.0 & 80.4 & 72.9 & 16.8 & 65.2 & 41.7 & 23.4 \\

\bottomrule
\end{tabular}
\end{footnotesize}
\vspace{2mm}
\caption{
Base model evaluations across the training pipeline for \model{} vs.~Olmo 3. After mid-training, \model{} outperforms Olmo 3 7B across every evaluation domain.
On evaluations held out from Olmo 3 development (right), \model{} outperforms Olmo 3 on MMLU Pro and BBH but loses slightly on LBPP and DM Math.
Overall, the results suggest that \model{} 7B generally outperforms Olmo 3 7B.
}
\label{tab:base_evals_overview}
\end{center}
}
\end{table}

\paragraph{Baselines.}
To evaluate the performance of hybrid models, we focus on the largely clean comparison between Olmo 3 and \model at a large scale of 7B parameters and 6T tokens.
Later, in \Cref{sec:scaling-laws}, we complement this with extensive fully controlled experiments between transformers and hybrid models at smaller scale, as well as, in \Cref{sec:full-comparison}, comparisons against other released (but not perfectly comparable) transformers, linear RNNs, and hybrid models.

\paragraph{Pretraining Efficiency.} As shown in \Cref{fig:pretraining-perf}, \model is more compute- and data-efficient than Olmo 3, reaching the same Common Crawl loss and MMLU accuracy as Olmo 3 7B with significantly fewer training tokens.
By the end of the 6T-token training run, this increased efficiency of \model translates to gains on both metrics compared to Olmo 3.
This pretraining efficiency is a compelling fundamental advantage of hybrid models over transformers.
Additional pretraining efficiency curves across 6 downstream benchmarks are shown in Appendix~\Cref{fig:pretraining-perf-extended}.

\paragraph{Standard Benchmarks.} In \Cref{tab:base_evals_overview}, we compare \model and Olmo 3 7B across different stages of pretraining and mid-training on domain-specific and held-out evaluation splits from \olmothreeeval.
The final pretrained \model checkpoint outperforms Olmo 3 7B on math (+4.3\%), STEM MC (+3.4\%), and non-STEM MC (+4.4\%), with smaller degradations in code (-2.5\%) and general QA (-2.3\%).
However, after mid-training, \model outperforms Olmo 3 7B across every evaluation domain, and these performance gains persist after long-context extension.
Additionally, we compare Olmo 3 and \model on evaluations held out from the Olmo 3 development process.
On these held-out evaluations, we find gains on MMLU Pro~\citep{wang2024mmlupro} (+4.5\%) and BBH~\citep{suzgun2022bbh} (+1.2\%) but degradations on LBPP~\citep{matton2024lbpp} (-1.1\%) and DM Math~\citep{saxton2019analysing} (-0.2\%).
Taken together, the domain-specific and held-out evaluations point to strong performance of the \model 7B base model relative to Olmo 3 7B.

\begin{table}[t!]
\setlength\tabcolsep{2pt}
\renewcommand{\arraystretch}{0.9}
{\fontsize{8}{8}\selectfont
\begin{center}
\begin{footnotesize}
\begin{tabular}{lc|ccccc}
\toprule
    & &\multicolumn{5}{c}{\textbf{\texttt{RULER Scores}}} \\
    {\textbf{\fontsize{8}{8}\selectfont~Model}} &
    {\textbf{\fontsize{8}{8}\selectfont~LC Method}} &
    {\textbf{\fontsize{8}{8}\selectfont~4k}} &
    {\textbf{\fontsize{8}{8}\selectfont~8k}} &
    {\textbf{\fontsize{8}{8}\selectfont~16k}} &
    {\textbf{\fontsize{8}{8}\selectfont~32k}} &
    {\textbf{\fontsize{8}{8}\selectfont~64k}}
    \\
\midrule


    Olmo 3 7B & YaRN & 95.8 & 89.3 & 83.2 & 78.9 & 70.9 \\
    \model{} 7B & YaRN & 92.8 & 91.3 & 90.0 & 84.7 & 76.9 \\
    \model{} 7B & DroPE & 92.2 & 89.8 & 88.4 & 86.2 & 85.0 \\

\bottomrule
\end{tabular}
\end{footnotesize}
\vspace{2mm}
\caption{
    Results for \model{} after long-context tension with YaRN and DroPE compared against Olmo 3. Both methods outperform Olmo 3 7B at long sequence lengths, with the gains from DroPE being particularly notable.
}
\label{tab:lc_evals_ruler}
\end{center}
}
\end{table}

\begin{figure}[!t]
    \centering
    \includegraphics[width=0.6\linewidth]{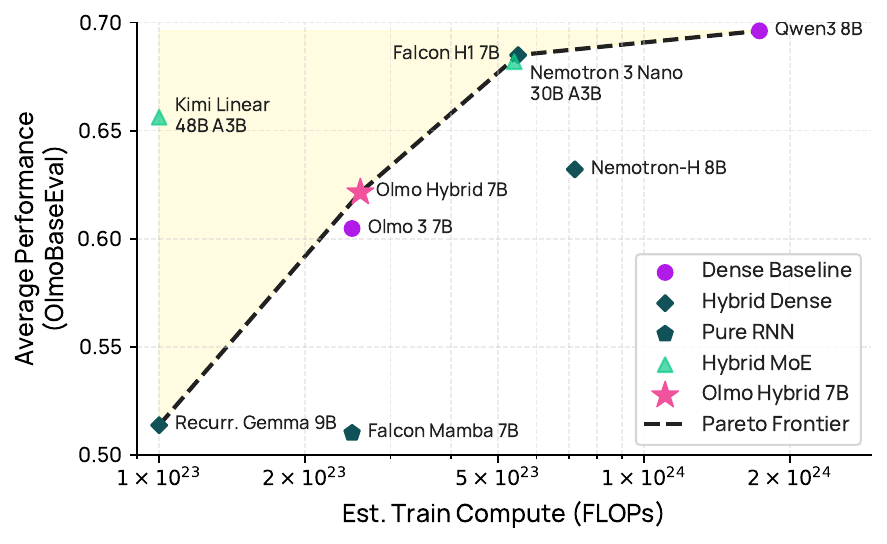}
    \caption{
    Compute-performance tradeoff of open-weight hybrid, RNN, and transformer base models on the average of \olmothreeeval task suites.
    \model 7B is on the Pareto frontier of open-weight dense models.
    Training compute was estimated using the $C=6ND$ heuristic from \citet{kaplan2020scalinglaws}, with reported token and parameter counts.
    Per-benchmark results are reported in Table \ref{tab:base-eval}.
    While theoretical compute for MoE-based SSMs is low, effective training efficiency can be roughly 50--80\% of a dense model \citep{rajbhandari2022deepspeed}, so we draw our frontier over dense models only.
    For this plot, we show only models obtaining >50\% average performance on \olmothreeeval.
    }
    \label{fig:base-performance-frontier}
\end{figure}

\paragraph{Long-Context Capabilities.} Additionally, after long-context extension \citep{olmo2025olmo3}, we find that \model shows substantial gains on RULER \citep{hsieh2024ruler}, a standard long-context benchmark, over Olmo 3.
We adapt \model for long context using both the YaRN methodology \citep{peng2024yarn} from Olmo 3 \citep{olmo2025olmo3} and the more recent DroPE method \citep{gelberg2025extendingcontextpretrainedllms}.
As shown in \Cref{tab:lc_evals_ruler}, \model 7B outperforms Olmo 3 7B at long RULER lengths with both YaRN and DroPE.
We attribute these gains on long-context tasks to the presence of linear RNN layers, in line with existing findings in the literature \citep{yang2025path,ren2025samba}.

\paragraph{Comparison to Open-Weight Models.}
Beyond the controlled comparison between \model and Olmo 3, we also benchmark \model against other open-weight models of similar parameter count.
As shown in \Cref{fig:base-performance-frontier}, \model is on the Pareto frontier for performance on \olmothreeeval as a function of training compute among open-weight dense models.
We detail baseline architectures and present full results in \S\ref{sec:full-comparison}.

\section{Expressive Power of Hybrid Models} \label{sec:expressivity}

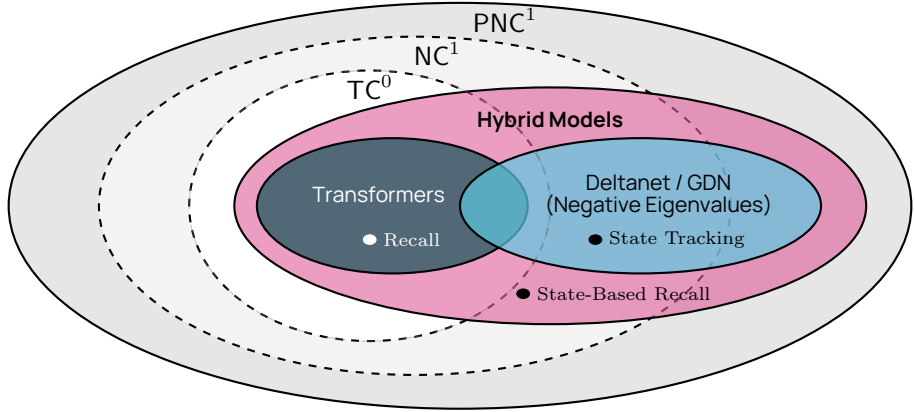
\begin{figure}[t]
    \centering
    \begin{tikzpicture}[xscale=1.2,yscale=0.9]
\definecolor{pnc1color}{RGB}{230,230,230}
\definecolor{nc1color}{RGB}{243,243,243}
\definecolor{tc0color}{RGB}{255,255,255}

\fill[pnc1color] (1,0) ellipse (5cm and 3cm);
\draw[thick] (1,0) ellipse (5cm and 3cm);
\node at (1.5, 2.75) {$\PNC^1$};

\fill[nc1color] (0.5,0) ellipse (3.5cm and 2.5cm);
\draw[thick,dashed] (0.5,0) ellipse (3.5cm and 2.5cm);
\node at (0.75,2.25) {$\NC^1$};

\fill[tc0color] (0,0) ellipse (2cm and 2cm);
\draw[thick,dashed] (0,0) ellipse (2cm and 2cm);
\node at (0, 1.75) {$\TC^0$};

\fill[hybridcolor,opacity=0.6] (2,0) ellipse (3.5cm and 1.75cm);
\draw[thick] (2,0) ellipse (3.5cm and 1.75cm);
\node at (2, 1.25) {\textbf{\footnotesize Hybrid Models}};

\fill[transformercolor,opacity=0.7] (0.25, 0) ellipse (1.5cm and 1cm);
\draw[thick] (0.25, 0) ellipse (1.5cm and 1cm);
\node at (0.1, 0.17) {\textcolor{white}{\footnotesize Transformers}};


\fill[deltanetcolor,opacity=0.7] (3, 0) ellipse (2cm and 1cm);
\draw[thick] (3, 0) ellipse (2cm and 1cm);
\node at (3.2, 0.35) {\footnotesize Deltanet / GDN};
\node at (3.2, 0) {\footnotesize (Negative Eigenvalues)};

\draw[gray, dash pattern=on 8pt off 6pt, opacity=0.45] (0,0) ellipse (2cm and 2cm);

\filldraw[white] (0, -0.5) circle (2pt);
\node[anchor=west, font=\scriptsize] at (0.05, -0.5) {\textcolor{white}{Recall}};

\filldraw[black] (2.5, -0.5) circle (2pt);
\node[anchor=west, font=\scriptsize] at (2.55, -0.5) {State Tracking};

\filldraw[black] (1.7, -1.3) circle (2pt);
\node[anchor=west, font=\scriptsize] at (1.75, -1.3) {State-Based Recall};

\end{tikzpicture}
    \caption{Expressive power of transformers, linear RNNs, and hybrid models relative to circuit complexity classes.
    Dashed lines represent unproven but \emph{conjectured} separations between classes (e.g., $\TC^0 \neq \NC^1$).
    Notably, transformers can express recall (\Cref{sec:transformer-expressivity}), and DeltaNet (or GDN) with negative eigenvalues  can express state tracking (\Cref{sec:rnn-expressivity}) \citep{grazzi2025unlocking}.
    Hybridizing gives both capabilities, and we prove that it also unlocks state-based recall, a problem that neither model can express on its own (\Cref{sec:hybrid-expressivity}).
    With the addition of padding tokens, transformers can express exactly the class $\TC^0$, whereas hybrid models can capture all of $\NC^1$, which enables solving boolean formula evaluation (\Cref{sec:expressivity-exact}).
    }
    \label{fig:hierarchy}
\end{figure}

This section considers the benefits of hybrid models over transformers for \emph{expressive power}, i.e., the class of computational tasks that each architecture can represent. We show that hybrid models can express tasks beyond both transformers and GDN.
Later, we return to explaining how this theoretical expressivity advantage could translate to better compute and data efficiency during pretraining compared to transformers~(\Cref{sec:scaling-laws}).

While no comprehensive theory of deep learning models exists, theory is advanced enough to provide a useful conceptual framework for understanding the practical limitations of architectures and how to address them.
In particular, our motivation for hybrid models takes \emph{expressivity} as a guiding principle:

\begin{tcolorbox}[hybridBoxStyle]
\textbf{Expressivity Thesis} \citep[Section 5.1]{merrill2025thesis} \\
A deep learning architecture should be made as expressive as possible---being able to represent as many naturalistic problems as possible theoretically---while remaining trainable and scalable for pretraining.
\end{tcolorbox}

In other words, this perspective calls for designing a highly \emph{flexible} model whose hypothesis class contains as many subtasks as possible that could conceivably be reflected in naturalistic data, while remaining trainable at scale~\citep[cf.~the Bitter Lesson;][]{sutton2019bitter}.
Rather than constraining the hypothesis class, we trust the optimizer to find the right fit to the data.
This perspective contrasts with classical machine learning wisdom that expressivity and inductive bias are at odds (the bias-variance tradeoff; \citealp{Mitchell1980NeedForBiases,Vapnik1991RiskMinimization,GemanBienenstockDoursat1992BiasVariance}). However, recent work suggests that deep learning methods have an implicit bias toward simplicity that prevents highly expressive models from overfitting \citep{wilson2025position}.
With this in mind, the expressivity thesis outlined above asserts that we should focus on making our architecture maximally expressive without worrying about the impact on inductive bias, as long as the architecture satisfies standard trainability constraints, i.e., the network must be differentiable and signals must propagate across layers during the forward and backward passes \citep[cf.][]{yang2024spectralcondition,dey2025completep}.

Beyond trainability, the other major constraint that competes with expressivity is scalability.
In general, there is a fundamental tradeoff between expressive power and the degree to which the model can process data efficiently during training.
For example, if we want our architecture to exactly express $\NP$-complete problems, it would not be possible to process text with it efficiently (assuming $\P \neq \NP$).
Even if we only require the model to express all tasks in $\P$ at training time, the model still could not be \emph{parallelized} effectively over long sequences \cite[assuming $\NC \neq \P$;][]{greenlaw1991compendium}, which would preclude scaling training to massive amounts of data.
Thus, our goal in architecture design is not to blindly increase expressivity as much as possible, but to do so while preserving parallelism.
A good architecture is one that pushes the frontier of the \textbf{expressivity-parallelism tradeoff} \citep{merrill-sabharwal-2023-parallelism,liu2025serialscaling} as much as possible within fundamental, complexity-theoretic limits.


Existing work has revealed that transformers do not sit on the frontier between expressivity and parallelism: more expressive models exist that are essentially just as parallelizable in practice.
In particular, hybrid models can achieve more expressivity while maintaining similar levels of scalability, which we justify both with prior work and new theoretical results.
Our core results are summarized in \Cref{fig:hierarchy}.
Linear RNNs and transformers have complementary strengths: while linear RNNs like GDN can express state tracking problems beyond the capabilities of transformers, transformers surpass linear RNNs at recall.
Each architecture can be trained at scale, but each also has its own expressivity limitations, motivating \emph{hybrid} models that inherit the best properties from each.
Moreover, we prove new theoretical results establishing that hybrid models have expressivity advantages \emph{beyond} both pure transformers and pure linear RNNs.

\subsection{Background: Expressive Power of Transformers} \label{sec:transformer-expressivity}

As the go-to LM architecture, transformers have received significant theoretical treatment investigating what problems they can and cannot solve \citep{strobl2024formal}.
Despite their empirical success, it has become clear that, when used as next-token predictors, transformers can only express a restricted set of parallelizable computational problems---formally, those that lie within the complexity class $\TC^0$ \citep{merrill-sabharwal-2023-parallelism,chiang2025transformers}.
Under standard complexity conjectures, this implies transformers of fixed depth cannot express \emph{inherently sequential} computational problems over arbitrary sequence lengths.
This includes many problems like graph connectivity ($\NL$-complete) and computing reward in a Markov decision process ($\P$-complete), but, in particular, it includes the fundamental problem of \emph{state tracking}.

\paragraph{Transformers are Limited at State Tracking.} A particularly prominent example of a problem not expressible by transformers (assuming $\TC^0 \neq \NC^1$) is \emph{state tracking} \citep[Section 3]{liu2023transformers,merrill2024illusion}. State tracking is the task of mapping a world state $x \in X$ and sequence of actions $\delta_1, \ldots, \delta_n \in \Delta$ to the updated world state after applying the actions sequentially, e.g., $(\delta_n \circ \ldots \circ \delta_1)(x) \in X$, where both $X, \Delta$ are finite sets and composition over $\Delta$ is associative.\footnote{State tracking can be described in many equivalent ways, e.g., the word problem in a finite monoid, recognizing a regular language, or simulating a deterministic finite automaton \citep[cf.][Section 3]{merrill2024illusion}.}
One natural example of state tracking is chess, where $X$ is the set of all board states and $\Delta$ is the set of all moves.
Another example is the ``shell game'': the problem of composing swaps (transpositions) over five objects.
The complexity of state tracking depends on the algebraic structure of the transition monoid $\Delta$: in the hardest case (capturing the shell game and certain notations of chess; \citealp{merrill2024illusion}), state tracking is $\NC^1$-complete.
It follows that these instances of hard state tracking cannot be expressed by fixed-depth transformers assuming the complexity conjecture $\TC^0 \neq \NC^1$, even though variants of these problems could conceivably manifest as subproblems of next-token prediction over natural language data.
Intuitively, the problem arises from the fact that attention cannot aggregate information over the sequence of state updates in a way that is fully sensitive to their order (each attention head could look to the last token and apply an individual update, but then the depth of the network would grow with the sequence length).

\subsection{Background: Expressive Power of Linear RNNs} \label{sec:rnn-expressivity}

In contrast to transformers, classical nonlinear RNNs can naturally express state tracking by using their hidden vector as a representation of the intermediate state and applying each transition one at a time \citep[e.g.,][Theorem 3.1]{merrill2019sequential}.
A fundamental question is whether linear RNNs, with their linear recurrent update, can still express sequential state updates in a similar way.


\begin{figure}[t!]
\centering
\begin{minipage}{0.48\textwidth}
\lstset{style=pythonstyle}
\begin{lstlisting}
a, b, c, d, e = range(5)
a, c = c, e
...  # n lines
assert a == _  # 0 to 4
\end{lstlisting}
\end{minipage}
\quad
\begin{minipage}{0.48\textwidth}
\lstset{style=pythonstyle}
\begin{lstlisting}
bits = [0, 1, 0, 0, ...]  # m bits
a = 36

assert bits[a] == _  # 0 or 1
\end{lstlisting}
\end{minipage}
\caption{Code evaluation contexts where predicting the next token requires solving state tracking (left) and recall (right). As the number of lines $n$ grows, fixed-depth transformers cannot represent state tracking, assuming $\TC^0 \neq \NC^1$ \citep{merrill2024illusion}. As the number of bits $m$ grows, RNNs with sub-linear precision cannot solve recall because of their bounded state \citep{arora2024based,jelassi2024repeat}. Hybrid models can represent both problems.}
\label{fig:state-tracking-recall}
\end{figure}

\begin{figure}[t!]
\centering
\lstset{style=pythonstyle}
\begin{lstlisting}
bits = [0, 1, 0, 0, ...]  # m bits
a, b, c, d, e = 36, 23, 12, 2, 56  # 0 to m - 1

a, c = c, e
...  # n lines

assert bits[a] == _  # 0 or 1
\end{lstlisting}
\caption{A code evaluation context where predicting the next token requires solving state-based recall.
As $n$ increases, the task becomes inexpressible by transformers because the variable states cannot be tracked (assuming $\TC^0 \neq \NC^1$).
As $m$ grows, the task becomes inexpressible by RNNs because recall into the bit array requires more memory than their bounded state can hold.
There exists a simple hybrid model that can solve the task robustly for any value of $n$ and $m$.}
\label{fig:pointer-based-recall}
\end{figure}

\paragraph{Linear RNNs Can Track State.}
It turns out that some (but not all) linear RNNs, despite being efficiently parallelizable, can express sequential state tracking.
\citet{merrill2024illusion} showed how early linear RNN architectures like S4~\citep{gu2022efficiently} and Mamba are limited in complexity to $\TC^0$ like transformers, and thus cannot express state tracking under standard conjectures.
However, \citet{merrill2024illusion} also showed how, by making the transition matrix \emph{non-diagonal} and \emph{time-dependent}, it is possible to construct a linear RNN that can express $\NC^1$-complete state tracking problems.
Along these lines, recent linear RNN architectures such as DeltaNet \citep{yang2024parallelizing,grazzi2025unlocking}, RWKV-7 \citep{peng2025rwkv}, and PD-SSM \citep{terzic2025structured} have incorporated more complex transition matrix parameterizations that unlock additional expressive power for state tracking.
In the case of DeltaNet or GDN, extending the transition matrix to have negative eigenvalues (\Cref{def:gdn}) is important for state tracking \citep{grazzi2025unlocking}.
Intuitively, negative eigenvalues are important because they allow DeltaNet to express swap-like dynamics that alternate between two states, as shown in \Cref{fig:neg-eigenvalues}.
On empirical state tracking evaluations (such as the $A_5$ word problem), these more expressive parameterizations lead to clear improvements over transformers.
In other related work, \citet{sarrof2024expressive} show advantages of linear RNNs over transformers on star-free regular languages, a simpler type of state tracking compared to $A_5$, and \citet{merrill2026lrnns} show that DeltaNet and other linear RNNs like RWKV-7 are upper bounded by $\PNC^1$ and can solve $\PNC^1$-complete problems.

\paragraph{Linear RNNs are Limited by Recall.}
While linear RNNs have an expressivity advantage over transformers on state tracking tasks, this advantage comes at a cost: they are limited on recall-heavy tasks due to their bounded state.
In particular, the fixed-size hidden state of linear or nonlinear RNNs means they struggle on tasks involving copying or recalling tokens from the context \citep{jelassi2024repeat,arora2024based}.
In-context recall mechanisms have been suggested to be important for language modeling \citep{olsson2022icl}, and, indeed, \citet{arora2024zoology} argue that most loss degradation of linear RNNs relative to transformers can be attributed to in-context recall.
Similarly, \citet{akyurek2024incontext} argue that transformers outperform pure RNNs at in-context learning, likely due to recall abilities.
Additionally, in-context recall is important in constructions for Turing completeness when models are augmented with chain of thought \citep{merrill2024cot,wen2025rnns}.
Thus, for several reasons, it is important for RNN-based models to incorporate some mechanism for recall, providing a strong motivation for hybrid transformer-RNN architectures, which excel at recall-based tasks like copying in practice \citep{waleffe2024empiricalstudymamba}.

\begin{figure}[t!]
    \centering
    \begin{subfigure}[b]{0.48\textwidth}
        \centering
        \[
        \mathbf I - \mathbf k_t \mathbf k_t^\top =
        \mathbf I -
        \frac{1}{2} \begin{pmatrix}
            1 & -1 \\
            -1 & 1
        \end{pmatrix}
        = \frac{1}{2} \begin{pmatrix}
            1 & 1 \\
            1 & 1
        \end{pmatrix}
        \]
        \caption{Without the negative eigenvalue extension (cf.~\Cref{def:gdn}), we get eigenvalues $\lambda_1 = 1$ and $\lambda_2 = 0$.}
        \label{fig:pos-example}
    \end{subfigure}
    \hfill
    \begin{subfigure}[b]{0.48\textwidth}
        \centering
        \[
        \mathbf I - 2 \mathbf k_t \mathbf k_t^\top =
        \mathbf I -
        \begin{pmatrix}
            1 & -1 \\
            -1 & 1
        \end{pmatrix}
        = \begin{pmatrix}
            0 & 1 \\
            1 & 0
        \end{pmatrix}
        \]
        \caption{With the extension, GDN implements a ``swap'' operator with eigenvalues $\lambda_1 = 1$ and $\lambda_2 = -1$.}
        \label{fig:neg-example}
    \end{subfigure}
    \caption{
    Let $\mathbf k_t^\top = (1, -1) / \sqrt{2}$; crucially, $\norm {\mathbf k_t} = 1$.
    Multiplying the low-rank term in the GDN update by 2 allows the transition matrix to have a negative eigenvalue, meaning it can implement dynamics that alternate between states \citep{grazzi2025unlocking}.
    This is useful for expressing state tracking tasks like parity and the $A_5$ word problem.
    }
    \label{fig:neg-eigenvalues}
\end{figure}

\subsection{Hybrid Models are More Than the Sum of Their Parts} \label{sec:hybrid-expressivity}

\Cref{sec:rnn-expressivity} demonstrates that linear RNNs like DeltaNet have a key expressivity advantage over transformers on state tracking tasks.
On the other hand, they are limited on recall-heavy tasks due to their bounded state size.
From this perspective, hybrid models that mix layer types offer a natural way to build a model that can express both state tracking and recall while remaining scalable \citep{yang2025path,mohri2026rationaltransductors}.
Going beyond this, we now show that hybrid models are, in fact, \emph{more expressive} than either transformers or linear RNNs in isolation, under standard complexity conjectures.
In particular, we will first show the expressivity advantage of hybrid models on the minimal synthetic task of state-based recall:

\begin{definition}[State-Based Recall \normalfont\slightlylarger{(\Cref{fig:pointer-based-recall})}] \label{def:pointer-based-recall}
    The input is a string $x, p, \pi$, where $x \in \{0, 1\}^n$ is a bitstring, $p_1, \ldots, p_5 \in \mathbb [1, n]$ are ``pointers'' into the bitstring, and $\pi_1, \ldots, \pi_n$ is a sequence of transpositions (swap operations) over $[1, 5]$.
    Define the permuted pointer values $q$ as $q = (\pi_n \circ \pi_{n-1} \circ \ldots \circ \pi_1) (p)$.
    The output is $x_{q_1}$.
\end{definition}

Intuitively, state-based recall is designed to require \emph{composing} state tracking and recall, rather than just one capability.
State-based recall can be naturally instantiated as a subtask of \emph{evaluating code}, which requires both tracking the state of variables and using their values for memory accesses (e.g., indexing into a list).
\Cref{fig:pointer-based-recall} shows an example of how state-based recall might appear within code language modeling.
This gives an intuition for how the additional expressivity of hybrid models could be relevant for tasks involving code evaluation.
We now show formally that, since it requires the composition of state tracking and recall, state-based recall is solvable by hybrid models, but not pure transformers or GDN models:\footnote{Our formal models of transformers and RNNs operate under the common \emph{log-precision arithmetic} assumption \citep{merrill-sabharwal-2023-parallelism,merrill2024illusion}, where all internal arithmetic is allowed to use $O(\log n)$ bits for input sequences of length $n$. All our inexpressibility results easily carry over to bounded-precision models. \Cref{thm:pointer-recall} holds also for poly-precision transformers~\citep{chiang2025transformers} and can be extended to sub-linear-precision RNNs.}

\begin{restatable}[State-Based Recall Separation]{theorem}{pointerRecall} \label{thm:pointer-recall}
    There exists a hybrid model (GDN with negative eigenvalues + averaging-hard attention) that solves state-based recall, with just one alternation between layer types, in either order.
    In contrast, no transformer or
    RNN can express this problem, assuming $\TC^0 \neq \NC^1$ for transformers.
\end{restatable}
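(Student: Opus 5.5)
The proof has two halves: an explicit hybrid construction for the positive direction, and two separate reductions for the negative direction, one per architecture.

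\textbf{Positive direction (GDN first, then attention).} A GDN layer with negative eigenvalues simulates the swap sequence on five slots. Following \citet{grazzi2025unlocking} and \Cref{fig:neg-eigenvalues}, a transposition $\pi_t = (i\,j)$ is realized by the Householder reflection $\mathbf I - 2\mathbf k_t\mathbf k_t^\top$ with $\mathbf k_t = (\mathbf e_i - \mathbf e_j)/\sqrt 2$. This needs $\beta_t = 1$ and $\alpha_t = 1$ (or the limits, handled by saturation and scaling as in prior work); the reflection acts as a permutation matrix on $\mathbb R^5$. The pointers enter the state without any arithmetic on their numeric values:
\begin{itemize}
\item At the token carrying $p_m$, write the rank-one term $\mathbf v_t\mathbf k_t^\top$, where $\mathbf k_t = \mathbf e_m$ and $\mathbf v_t$ encodes $p_m$, e.g.\ as $(\cos\theta_{p_m}, \sin\theta_{p_m})$ or simply as $p_m/n$ in log precision.
\item At the pointer tokens themselves, set $\mathbf k_t$ so that the transition is the identity on the other columns. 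Equivalently, a preceding feedforward/positional layer stores all five pointers in one step.
\end{itemize}
Right-multiplication by the reflections then permutes the columns of $\mathbf S$. At the final token, querying with $\mathbf q = \mathbf e_1$ returns the encoding of $q_1$. An averaging-hard attention head then attends from the last position to the bit position whose positional encoding matches $q_1$. It uses the standard quadratic-score trick $-(i - q_1)^2$, which is uniquely maximized at $i = q_1$, and outputs $x_{q_1}$.

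\textbf{Positive direction (attention first, then GDN).} Here I reverse the roles. Attention retrieval cannot happen before the permutation is known, so I instead let attention at each pointer token $p_m$ retrieve the bit $x_{p_m}$ into that token's representation. The GDN layer then tracks the permutation of these five retrieved bits, with the values $\mathbf v_t$ carrying bits rather than indices. The final query returns $x_{q_1}$. Each order uses exactly one alternation.

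\textbf{Negative direction.} Each architecture is ruled out by a reduction from a hard special case.
\begin{itemize}
\item \emph{Transformers.} Set $n$ bits with $x_j = [j = 1]$ and $p = (1,2,3,4,5)$. The output is then $1$ iff $(\pi_n\circ\cdots\circ\pi_1)^{-1}$ maps slot $1$ to $1$. Fixing the tracked element under products of transpositions generating $S_5$ is $\NC^1$-complete by Barrington's theorem; more precisely, recognizing whether the composed permutation sends $1 \mapsto 1$ is $\NC^1$-hard for $S_5$. Since log-precision transformers lie in $\TC^0$ \citep{merrill-sabharwal-2023-parallelism}, they cannot express the problem unless $\TC^0 = \NC^1$. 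This reduction is a simple projection and is computable in $\AC^0$.
\item \emph{RNNs.} Fix $\pi$ to be empty (identity swaps, or padding transpositions applied twice) and vary $p_1$ over $[1,n]$. This is the indexing/recall problem, which has one-way communication complexity $\Omega(n)$ when Alice holds $x$ and Bob holds $p_1$. An RNN with $O(\log n)$-bit (or sub-linear) precision and fixed state dimension passes only $O(\log n)$ bits across the cut between $x$ and $p$, which is a contradiction. I must check that the definition permits the trivial swap sequence, or else use $\pi_t = (2\,3)$ repeated an even number of times so that $q_1 = p_1$.
\end{itemize}

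\textbf{Main obstacle.} I expect the hardest part to be the precision and saturation details in the GDN construction. The parameterization requires $\alpha_t, \beta_t \in (0,1)$ strictly, so the exact reflection is only approached in a limit. I would handle this by allowing $\beta_t = 1$ as in \citet{grazzi2025unlocking}, or by arguing that the error accumulated over $n$ steps stays below the gap needed for the hard-attention matching under log precision. If that fails, I would fall back to encoding indices discretely so that small perturbations round correctly.
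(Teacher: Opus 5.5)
Your plan follows the paper's route. In the positive direction you use the same two orders: the GDN permutes pointer columns and attention then retrieves, or attention retrieves the five bits and the GDN then permutes them. In the negative direction you use the $\TC^0$ bound with $x=(1,0,\dots,0)$ and $p=(1,\dots,5)$ for transformers, and a communication bound for RNNs. The fixed-point test is $\NC^1$-hard because Barrington's programs output either the identity or a $5$-cycle. Your RNN argument embeds indexing into state-based recall, which is the direction the hardness claim actually needs. The parity worry there is unnecessary, since $(2\,3)$ never moves slot $1$.

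The genuine gap is how the integer $p_m\in[1,n]$ ever becomes available at a single position. You write of ``the token carrying $p_m$'' as if its embedding already encodes $p_m$. With a fixed vocabulary and unbounded $n$, a pointer must be spelled out over many tokens (the paper fixes a unary encoding), and some sequence-mixing layer must compute its value.

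In the GDN-first order you can repair this with your own rank-one write. At each unary symbol of pointer $m$, write a constant with $\mathbf k_t=\mathbf e_m$ and an identity transition, so column $m$ accumulates $p_m$. Your proposed alternative, a ``preceding feedforward/positional layer,'' fails. Feedforward layers are position-wise, and a preceding attention layer would make the model attention--GDN--attention, i.e.\ two alternations.

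In the attention-first order, only attention precedes the retrieval, so the count itself must be produced by attention. This is exactly where averaging is needed, and why the theorem specifies averaging-hard attention. The paper uses an averaging-hard block over the unary encoding to compute $\phi(p_k/i)$ and $\phi(1/i)$, derives $\phi(p_k)$ from them, and only then retrieves with query $\phi(p_k)$ against keys $\phi(i)$. Your construction uses attention only for an argmax lookup, which unique-hard attention already provides, so the attention-first half is incomplete as written. Once you add the averaging step, your quadratic score still works. A query carrying $p_m/t$ and $1/t$ just rescales every score by the common positive factor $1/t$, so the argmax is unchanged.
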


\begin{proof}[Proof Sketch]
    A hybrid model can solve state-based recall in two ways.
    One way is to first use GDN layers to compose the transpositions over pointers (an example of $\NC^1$-complete state tracking) and then retrieve the value at $p_1$ using attention.
    The other way is to first use attention heads to retrieve the values of $p_1, \ldots, p_5$ and then compose transpositions over the values.
    Thus, hybrid models with a single alternation of layer types in either order can express the task.
    On the other hand, full transformers and GDN each lack the ability to express one of the pieces, and thus cannot express state-based recall.
    We defer a rigorous proof of these negative results to \Cref{sec:hybrid-expressivity-proofs}.
\end{proof}

Since state-based recall is expressible with GDN before attention or vice versa,
it follows that alternating layers in either order also unlocks additional expressivity for hybrid models relative to full attention or full GDN.
It is an open question whether multiple alternations between layer types buy more expressivity than having just one alternation.


\subsection{Expressive Power of Padded Hybrid Models} \label{sec:expressivity-exact}

The interaction between attention and GDN layers naturally allows hybrid models to solve state-based recall, providing a minimal example of an expressivity gain over pure transformers and linear RNNs (under standard conjectures).
Moreover, we now show that it also provides more general expressivity benefits for hybrid models: in the presence of padding tokens, hybrid models can express \emph{every} problem in $\NC^1$, which includes many $\NC^1$-complete problems that are beyond the capabilities of padded transformers (which remain in $\TC^0$).
In particular, our results imply that, unlike padded transformers, padded hybrid models can evaluate boolean formulas, even though, at first glance, it is non-obvious why the interaction of attention and recurrence should enable this.

Before presenting our results, we briefly introduce \emph{padding tokens} and their relevance to theoretical analysis of expressivity.
Padding tokens \citep{goyal2024think,pfau2024lets} are simply blank tokens ($\square$) appended to a model's input context: i.e., a model with $n^c$ padding takes as input $w \square^{\abs{w}^c} \in \Sigma^*$ rather than just $w \in \Sigma^*$.
Padding tokens are interesting for theoretical analysis because adding padding tokens leads architectures to subsume (or exactly correspond to) natural complexity classes in expressivity.
Without padding tokens, showing that an entire circuit complexity class neatly lower bounds an architecture is not always possible, in large part because circuit complexity classes like $\TC^0$ and $\NC^1$ allow arbitrary polynomial size computation, whereas models leverage a computation graph of size $O(n^c)$ for some fixed $c$.
Adding padding tokens can close the gap between poly-size circuit classes and neural sequence models by allowing them to expand the size of the model's computation without adding new parameters \citep{li2024chain,merrill2025exact,london2025pausetokens}.

It is already established that, with padding tokens, the exact class of problems expressible by transformers has a natural characterization.
In particular, while averaging-hard-attention transformers without padding tokens are bounded within $\TC^0$ \citep{merrill-sabharwal-2023-parallelism}, with polynomial padding, the languages expressible by such transformers are \emph{exactly} $\TC^0$ \citep{merrill2025exact}:

\begin{theorem}[Padded Transformers \normalfont\slightlylarger{\citep{merrill2025exact}}] \label{thm:padded-ahat}
    With polynomial padding tokens, fixed-depth transformers with averaging-hard attention recognize exactly $\FO$-uniform $\TC^0$.
\end{theorem}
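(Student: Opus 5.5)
The statement has two directions: padded averaging-hard-attention transformers (AHATs) recognize only $\FO$-uniform $\TC^0$, and every $\FO$-uniform $\TC^0$ language is recognized by some padded AHAT. I would prove them separately.

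\textbf{Upper bound.} I would reuse the simulation of \citet{merrill-sabharwal-2023-parallelism}, which shows that a fixed-depth log-precision AHAT on inputs of length $N$ is computed by a uniform threshold circuit family of constant depth and size $\mathrm{poly}(N)$. The key ingredients are:
\begin{itemize}
\item each feedforward block and each attention score is a function on $O(\log N)$ bits, so it is a constant-depth threshold circuit of polynomial size;
\item averaging over the maximizing positions is iterated addition plus a division, both in $\TC^0$.
\end{itemize}
With $N = n + n^c$, the padded input is still polynomial in $n$. Appending the blank tokens is itself an $\FO$-definable map. I would then verify that the circuit's wiring is describable by $\FO$ formulas with $\mathrm{BIT}$ and $<$, so the family is $\FO$-uniform and the language lies in $\FO$-uniform $\TC^0$.

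\textbf{Lower bound.} I would use the characterization $\FO$-uniform $\TC^0 = \FO[\mathrm{BIT}, <, \mathrm{MAJ}]$ and compile a first-order sentence with majority quantifiers into a padded transformer, by induction on the formula.
\begin{itemize}
\item A formula with $k$ free position variables is evaluated on all $n^k$ assignments. I would allocate padding $n^k$ and let padding position $i$ encode a tuple $(x_1,\dots,x_k)$ through its base-$n$ digits.
\item Computing those digits from $i$, and testing $\mathrm{BIT}$ and $<$ on them, must be done by fixed-depth layers using only $O(\log n)$-bit arithmetic on positional encodings.
\item Atomic predicates $Q_a(x_j)$ need a retrieval: the padding cell attends to the input position equal to $x_j$. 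Averaging-hard attention can do this with the score $-(p - x_j)^2$, which is maximized uniquely at $p = x_j$.
\item Boolean connectives are handled by feedforward layers.
\item A quantifier $\exists y$ or $\mathrm{MAJ}\, y$ over a $(k{+}1)$-ary subformula is handled by having each $k$-tuple cell attend uniformly to the $n$ cells that extend it. The attended average gives the fraction of true extensions, and thresholding it at $>0$ or $\ge 1/2$ gives the quantifier.
\end{itemize}
Each quantifier costs a constant number of layers, so the total depth is bounded by the quantifier depth of the sentence, and the padding is polynomial with exponent equal to the maximum arity.

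\textbf{Main obstacle.} I expect the hardest step to be making the tuple-indexing arithmetic (digit extraction, $\mathrm{BIT}$, matching a tuple to its extensions) implementable with fixed depth and log precision. My first attempt would use layer-norm-friendly positional encodings of $i$, $i^2$ and $1/i$. To avoid division, I would make the attention pattern that selects extensions match on stored digit features rather than on arithmetic over raw positions.
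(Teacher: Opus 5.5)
\textbf{Status: the overall plan is right, but the lower bound has a genuine gap.}

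The paper does not prove this theorem; it imports it from \citet{merrill2025exact}. So I am assessing your proposal on its own merits.

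\textbf{Upper bound.} Citing prior work is fine. For $\FO$-uniformity, rely on \citet{chiang2025transformers}, which places AHATs in DLOGTIME-uniform $\TC^0$. Do not re-derive uniformity from \citet{merrill-sabharwal-2023-parallelism}, whose simulation is only log-space uniform.

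\textbf{Where the lower bound fails.} The steps you defer, or pass over as routine, cannot be carried out the way you describe. A transformer is one fixed network for all $n$. In the causally masked NoPE setting these results use (cf.\ the NoPE tricks invoked in this paper's appendix), every activation is bounded by a constant independent of $n$. Log precision limits what can be stored; it does not provide $n$-dependent arithmetic. Concretely:
\begin{itemize}
\item A feedforward sublayer cannot threshold an attention average at resolution $1/n$. Your $>0$ and $\ge 1/2$ tests must separate $0$ from $1/n$, and $1/2$ from $1/2-1/(2n)$.
\item It cannot output $\lfloor i/n\rfloor$ or $i \bmod n$.
\item Applied to a monotone code of $x$, it cannot compute $\mathrm{BIT}(x,y)$. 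Already $\mathrm{BIT}(x,0)$ alternates $n$ times, whereas a fixed network's output changes sign only boundedly often.
\item Features $i$ and $i^2$ do not exist, so the score $-(p-x_j)^2$ cannot be formed. Assuming such features changes the model.
\item Causal masking breaks your quantifier step. A cell cannot attend to extensions placed after it, so each quantifier level needs a fresh block of padding cells that attends back to the previous block.
\end{itemize}

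\textbf{The missing idea.} What is missing is the normalization primitive built into the cited model (masked pre-norm; see this paper's appendix) and the gadgets it enables.
\begin{itemize}
\item \emph{Integer codes.} Layer-normalizing a vector proportional to $(m,1)$ gives a unit code $\phi(m)$ for integers up to $\mathrm{poly}(n)$, with $\langle\phi(a),\phi(b)\rangle=1$ iff $a=b$. The input vector can be built from bounded ratio features such as $(1,1/i)$ or $(c/i,1/i)$, which uniform causal attention produces.
\item \emph{Exact matching.} Summing such inner products over the coordinates that must agree makes both the retrieval of $Q_a(x_j)$ and the selection of a tuple's extensions exact argmaxes.
\item \emph{Exact comparisons.} Normalizing a suitably offset scalar difference returns its sign exactly. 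This makes $<$, $=$, $\exists$ and $\mathrm{MAJ}$ exact.
\item \emph{Coordinates without division.} Tuple coordinates can come from counting and matching. For example:
\begin{itemize}
\item flag offsets $m$ divisible by $n$ by matching the code of $m/n$ (the normalization of $(m/p,n/p)$) against codes of earlier offsets;
\item count flags causally to get a code of $\lfloor m/n\rfloor$;
\item count within the current block to get $m \bmod n$;
\item iterate $k$ times by retrieving from the cell at offset $\lfloor m/n\rfloor$.
\end{itemize}
\item \emph{BIT.} You must either build $\mathrm{BIT}$ from such gadgets or eliminate it. Elimination is easy. Counting quantifiers over one or two variables define $+$ and $\times$, hence $\mathrm{BIT}$, from $<$; for example, $x\cdot y=z$ iff $\#\{(a,b): a<x \wedge b<y\}=z$. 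Your quantifier step simulates such counting quantifiers once a count can be compared with a stored coordinate.
\end{itemize}
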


We give a comparable result for hybrid models with polynomial padding, showing they capture \emph{all} of $\NC^1$, a complexity class thought to be larger than $\TC^0$:

\begin{restatable}[Padded Hybrid Models]{theorem}{paddedHybrid} \label{thm:padded-hybrid}
    With polynomial padding tokens, fixed-depth hybrid models (averaging-hard attention + GDN with negative eigenvalues) can recognize any language in $\FO$-uniform $\NC^1$.
\end{restatable}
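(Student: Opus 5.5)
We will show that padded hybrid models can evaluate an $\FO$-uniform $\NC^1$ circuit family. The route goes through Barrington's theorem and the equivalence between state tracking and $\NC^1$.

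Fix a language $L$ in $\FO$-uniform $\NC^1$. By Barrington's theorem (in its $\FO$-uniform form due to Barrington, Immerman and Straubing), $L$ is recognized by an $\FO$-uniform family of width-5 branching programs of polynomial length $N = n^c$ over $S_5$. Each instruction $j$ has the form $(i_j, g_j^0, g_j^1)$: it reads input bit $w_{i_j}$ and emits the permutation $g_j^{w_{i_j}} \in S_5$. We accept iff the product $g_N^{w_{i_N}} \cdots g_1^{w_{i_1}}$ maps a fixed element to a designated accepting element. So recognizing $L$ reduces to two phases. In the first, each padding position $j \le N$ fetches the bit $w_{i_j}$ and computes the corresponding permutation. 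In the second, we compose $N$ elements of $S_5$ in order. We will use $n^{c'}$ padding tokens, with $c'$ large enough that each instruction gets a constant-size block of positions. This lets us expand every permutation into a fixed word of transpositions.

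For the first phase we rely on \Cref{thm:padded-ahat}. Because padded averaging-hard-attention transformers capture all of $\FO$-uniform $\TC^0$, and because the map $(w, j) \mapsto g_j^{w_{i_j}}$ is $\FO$-uniform (uniformity of the branching program plus a single lookup), a fixed-depth transformer block can write at each padding position an encoding of the transposition (or identity) it is responsible for. More carefully, we will arrange that each position $t$ in the padded region holds a transposition $\tau_t$ such that the full product $\tau_{T} \cdots \tau_1$ equals the branching program's product. We get this by writing each $g_j$ as a fixed word of at most 4 transpositions and allotting 4 consecutive padding slots per instruction. I will state this as a lemma: any function computable by the padded transformer in \Cref{thm:padded-ahat} can be carried out by the attention layers of a hybrid model, and its outputs passed to later layers through the residual stream. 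This should follow by inspecting the construction of \citet{merrill2025exact}, since GDN layers can be set to act as identity (value $0$) in that block.

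For the second phase we use a GDN layer with negative eigenvalues that simulates the $S_5$ word problem, following \citet{grazzi2025unlocking}. A transposition $(a\ b)$ on $\mathbb R^5$ is the Householder reflection $\mathbf I - 2\mathbf k\mathbf k^\top$ with $\mathbf k = (e_a - e_b)/\sqrt2$, as in \Cref{fig:neg-eigenvalues}. We take $\alpha_t \to 1$ and $\beta_t = 1$ at padding positions and the identity update elsewhere, meaning $\beta_t \to 0$ and $\mathbf v_t = 0$. To make the state begin at a fixed initial vector, we inject $\mathbf v_1\mathbf k_1^\top$ at one marked position. From there, each row of $\mathbf S_t$ evolves as a permuted indicator vector. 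A query $\mathbf q_T = e_{\text{start}}$ at the final position reads off the resulting permutation, and a final feedforward layer checks it against the accepting element.

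The main obstacle is that the definition requires $\alpha_t, \beta_t \in (0,1)$ strictly, while the construction needs exact values $1$ and $0$. Products over $N = \mathrm{poly}(n)$ steps could then accumulate error, and log-precision must be respected throughout. I would first try to take $\alpha_t = 1 - n^{-c''}$ and $\beta_t = 1 - n^{-c''}$, bounding the total drift by $O(N n^{-c''})$. The resulting state stays within a constant gap of a permutation vector, so a thresholding feedforward layer can decode it; the $\alpha$ decay is then a known scalar $\prod\alpha_t \ge 1/2$. If the paper's precision model forbids such values, I would fall back to arguing at the level of the idealized closure, as \citet{grazzi2025unlocking} do. A secondary check is that the combined construction is $\FO$-uniform and has fixed depth, which follows because it is a composition of a constant number of such blocks.
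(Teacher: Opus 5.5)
Your proposal is correct and follows essentially the same route as the paper: an $\FO$ reduction via \citet{barrington1986bounded} to the transposition form of the $S_5$ word problem, computed at the padding positions by a padded averaging-hard-attention block, followed by a GDN with negative eigenvalues that composes the transpositions as reflections \citep{grazzi2025unlocking}. You are more explicit than the paper, which cites the reduction and the GDN construction directly and implicitly assumes exact reflections; your $1 - n^{-c''}$ approximation is a reasonable way to handle the open intervals. One small slip: with a rank-one injected state the query returns a single scalar. So $\mathbf q_T = e_{\text{start}}$ only tests whether the start element is fixed, which is enough for Barrington's identity-versus-5-cycle normal form; otherwise you should query the accepting element.
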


\begin{proof}[Proof Sketch]
    We use the surprising classical result that any problem in $\NC^1$ can be reduced via a first-order ($\FO$) formula to composing transpositions over 5 elements \citep{barrington1986bounded}.
    Padded attention layers can express an arbitrary $\FO$ reduction, and GDN can implement transposition composition.
    Thus, this decomposition allows solving any problem in $\NC^1$ with a padded hybrid model.
    Full proof in \Cref{sec:hybrid-expressivity-proofs}.
\end{proof}

In contrast to \Cref{thm:pointer-recall}, which establishes an expressivity advantage for hybrid models on a specific problem, \Cref{thm:padded-hybrid} establishes that padded hybrid models subsume an entire complexity class ($\NC^1$) that, under standard complexity conjectures, is more powerful than the class padded transformers correspond to ($\TC^0$).
Under the conjecture that $\TC^0 \neq \NC^1$, any $\NC^1$-complete problem can be expressed by a padded hybrid model but not by a padded transformer.
One interesting problem in this regime is \textbf{boolean formula evaluation}.
Let $\phi$ be a boolean formula over $\langle \{0, 1\}, \vee, \wedge \rangle$.
The evaluation problem takes as input $\phi$ serialized in Polish notation, and
the output is the value of $\phi$.
Since formula evaluation is $\NC^1$-complete \citep{buss1987alogtime}, we obtain:

\begin{corollary}[Boolean Formula Evaluation Separation] \label{cor:formula-eval}
    For some $c$, there exists a hybrid model (averaging-hard attention + GDN with negative eigenvalues) that solves boolean formula evaluation with $n^c$ padding tokens.
    On the other hand, assuming $\TC^0 \neq \NC^1$, no transformer or
    RNN can solve boolean formula evaluation even with $n^c$ padding tokens, for any $c$.
\end{corollary}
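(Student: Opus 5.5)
The corollary follows by combining \Cref{thm:padded-hybrid} with the $\NC^1$-completeness of boolean formula evaluation, and then using \Cref{thm:padded-ahat} together with known upper bounds for padded RNNs for the negative direction.

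\textbf{Positive direction.} First, fix the problem as a language: take $L_{\mathrm{BFE}} = \{\phi : \phi \text{ in Polish notation evaluates to } 1\}$. By \citet{buss1987alogtime}, $L_{\mathrm{BFE}}$ is in ALOGTIME, which equals $\FO$-uniform (equivalently $\mathsf{DLOGTIME}$-uniform) $\NC^1$. This uniformity match is the one point to check carefully. Buss's result gives membership in ALOGTIME, and the ALOGTIME $= \mathsf{DLOGTIME}$-uniform $\NC^1 = \FO$-uniform $\NC^1$ equivalences (Barrington--Immerman--Straubing) then place the language in the uniformity class required by \Cref{thm:padded-hybrid}. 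Applying \Cref{thm:padded-hybrid} yields a fixed-depth hybrid model and a constant $c$ such that the model recognizes $L_{\mathrm{BFE}}$ with $n^c$ padding. Reading off the acceptance bit at the final token is exactly solving the evaluation problem.

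\textbf{Negative direction for transformers.} Suppose a transformer with $n^c$ padding solves formula evaluation. By \Cref{thm:padded-ahat}, or more directly by the $\TC^0$ upper bound of \citet{merrill-sabharwal-2023-parallelism,chiang2025transformers} applied to the padded input of length $n + n^c$, the language is in $\TC^0$. Here I would note that the polynomial blow-up in input length preserves polynomial circuit size and constant depth, and that the padding symbols are hardwired constants. Since $L_{\mathrm{BFE}}$ is $\NC^1$-complete under reductions that $\TC^0$ is closed under ($\FO$/$\mathsf{DLOGTIME}$ reductions), this would give $\NC^1 \subseteq \TC^0$, contradicting the assumption.

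\textbf{Negative direction for RNNs.} This is the main obstacle, since "no RNN" must be read as no pure linear RNN (GDN). The cited upper bound $\PNC^1$ of \citet{merrill2026lrnns} does not exclude $\NC^1$, so a complexity separation will not work. I would instead follow the approach of \Cref{thm:pointer-recall} and use a bounded-state, communication-style argument. Formula evaluation contains recall-like instances. For example, $(x_1 \wedge s_1) \vee \cdots \vee (x_m \wedge s_m)$ in Polish notation, with the $x$-block placed before the selector $s$ (a one-hot vector), forces the state after reading the $x$ prefix to encode all $m$ bits. Padding tokens come after the input and are fixed, so they add no information flow from the prefix. A log-precision RNN has an $O(\log n)$-bit state, so for large $m$ it cannot distinguish $2^m$ prefixes. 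The Polish-notation arrangement that separates the $x$'s from the selectors needs some care, for instance by interleaving operators so that all $x_i$ appear before the $s_i$. I would verify this encoding in detail, reusing the lemma from \Cref{sec:hybrid-expressivity-proofs} for the RNN half of \Cref{thm:pointer-recall}.
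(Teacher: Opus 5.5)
Your positive direction and the transformer half follow the paper's route: \Cref{thm:padded-hybrid} with Buss's $\NC^1$-completeness, and the $\TC^0$ bound for padded transformers plus closure under $\FO$ reductions, as in \Cref{lem:transformers-tc0}. For RNNs you rightly switch to a bounded-state argument, as in \Cref{lemma:rnns-logn}.

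\textbf{The gap is your witness for the $\Omega(n)$ bound.} No Polish-notation formula over $\langle\{0,1\},\vee,\wedge\rangle$ can place the bits $x_1,\ldots,x_m$ as leaves before the selectors and still compute $\bigvee_i (x_i\wedge s_i)$, however the operators are interleaved.

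To see this, fix the operator skeleton and look at the parser state after the $x$-block. It is a stack of pending operators $\circ_j$, some of which already have an evaluated first argument $a_j$. Completing the formula composes two kinds of steps. The first kind is a map $r\mapsto a_j\circ_j r$. Each such map is a constant if $a_j$ is absorbing for $\circ_j$ ($0$ for $\wedge$, $1$ for $\vee$), and the identity otherwise. The second kind is a binary step $r\mapsto r\circ_j\psi_j$ whose right argument comes entirely from the suffix.

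Hence the final value depends on the prefix's leaf values only through the position of the outermost pending operator with an absorbing first argument. That is at most $n+1$ possibilities, i.e.\ $O(\log n)$ bits, which a log-precision state can store. So the instance family you describe does not exist, and no rearrangement can produce one: any gadget that encodes the prefix's information in constant leaves cannot force $\Omega(n)$ bits of state.

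\textbf{The missing idea is that the hard information must sit in the operator symbols.} Take the prefix $\circ_1\circ_2\cdots\circ_m$ with $\circ_i\in\{\wedge,\vee\}$ encoding $x$, followed by constants $b_0b_1\cdots b_m$. This parses as $\circ_1(\circ_2(\cdots\circ_m(b_0,b_1)\cdots,b_{m-1}),b_m)$. Its value is $b_{m+1-j}$ for the smallest $j$ such that $b_{m+1-j}$ is absorbing for $\circ_j$, and $b_0$ if there is no such $j$.

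Now suppose $x\neq x'$ first differ at position $i$. Choose the $b$'s as follows:
\begin{itemize}
\item non-absorbing for $\circ_j=\circ'_j$ when $j<i$;
\item absorbing for $\circ_i$ when $j=i$, hence non-absorbing for $\circ'_i$;
\item non-absorbing for $\circ'_j$ when $j>i$;
\item $b_0\neq b_{m+1-i}$.
\end{itemize}
The two prefixes then yield different values. This gives $2^m$ pairwise distinguishable prefixes, so $\Omega(n)$ bits must cross the cut.

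A log-precision RNN on the padded input still has only $O(\log n^c)=O(\log n)$ bits of state, and the padding carries no prefix information, so it cannot do this. This is exactly the bound the paper imports from \citet[Theorem 6]{merrill2021linguisticcapacityrealtimecounter} in \Cref{lemma:rnns-logn}.
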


In contrast to state-based recall, it is not obvious at first glance that boolean formula evaluation should be expressible via the interaction of attention and recurrence.
However, \Cref{cor:formula-eval} shows that it can be, at least for padded models, and moreover that, under standard complexity conjectures, it could not be solved with just one of these primitives.
\Cref{cor:formula-eval} can be restated to fold the padding into the problem definition itself: \emph{unpadded} hybrid models can solve \emph{padded} formula evaluation, but neither transformers nor linear RNNs can.
Finally, the only property of boolean formula evaluation leveraged to obtain \Cref{cor:formula-eval} is that it is $\NC^1$-complete. Thus, a similar result follows for any $\NC^1$-complete problem.
It is an open question whether a similar boolean formula evaluation result might be obtained for unpadded hybrid models.

\begin{figure}[t!]
    \centering
    \includegraphics[width=\linewidth]{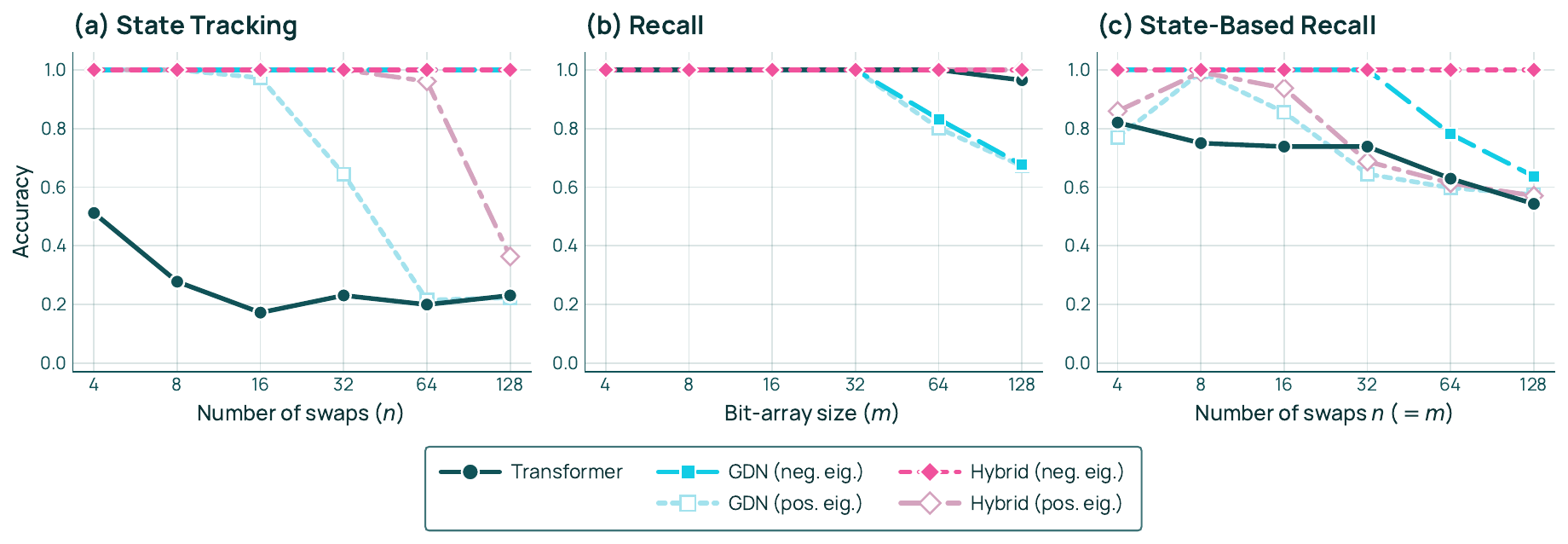}
    \caption{Synthetic task results.
    \textbf{(a)} State tracking accuracy vs.\ number of updates $n$: linear RNN and hybrid remain near-perfect, while the transformer falls quickly.
    \textbf{(b)} Recall accuracy vs.\ bit-array size $m$: transformer and hybrid maintain perfect recall, while the linear RNN degrades for large $m$.
    \textbf{(c)} State-based recall accuracy vs.\ number of swaps $n$ ($m=n$): the hybrid remains robust, while both the transformer and linear RNN degrade for large $m, n$.}
    \label{fig:synth-results}
\end{figure}

\subsection{Synthetic Evaluations} \label{sec:synthetic-evals}

To empirically validate the expressivity tradeoffs from
\Cref{sec:transformer-expressivity,sec:rnn-expressivity,sec:hybrid-expressivity},
we train three primary causal LMs---a standard transformer, a linear RNN (GDN with negative eigenvalues), and a hybrid model mixing GDN blocks with full attention---on synthetic state tracking, recall, and state-based recall tasks.
We additionally evaluate no-negative-eigenvalue ablations of the linear RNN and hybrid model.
Each task is generated online from short code-like templates (cf.\ \Cref{fig:state-tracking-recall,fig:pointer-based-recall}) and framed as a next-token prediction task where the final token requires evaluating the program correctly.
We report next-token accuracy on the final answer token and vary task difficulty by increasing the number of required state updates and/or the size of the stored bit-array.
For the state-tracking and state-based recall tasks, we additionally train on execution traces that include intermediate state-reveal checks in the form of assert statements, and we include these tokens in the language-modeling loss following \citet{siems2026learningstatetrackingcodeusing}.
Complete experimental details (model hyperparameters, optimization, curricula, and ablation runs) are provided in \Cref{app:synthetic-evals}.

\paragraph{State Tracking.}
To evaluate state tracking, we use the code evaluation task in \Cref{fig:state-tracking-recall}, where the context applies a sequence of state updates and the model must predict the value of a queried component of the final state.
As shown in \Cref{fig:synth-results}, the GDN-based linear RNN and the hybrid remain near-perfect across all tested lengths, while the transformer drops rapidly as the update sequence length increases, consistent with the theoretical limitation of fixed-depth attention on state tracking (\Cref{sec:transformer-expressivity}).
The no-negative-eigenvalue ablations substantially weaken this behavior: the linear RNN without negative eigenvalues falls to $0.21484$ and $0.22266$ at $n=64$ and $128$, and the corresponding hybrid falls to $0.36328$ at $n=128$.
In this implementation, negative eigenvalues appear important for robust recurrent state tracking, consistent with the findings from \citet{grazzi2025unlocking}.

\paragraph{Recall.}
To evaluate pure recall, we use the code evaluation task illustrated in \Cref{fig:state-tracking-recall}, where predicting \texttt{bits[idx]} requires retrieving a single entry from a long list of bit assignments.
As shown in \Cref{fig:synth-results}, the transformer remains near-perfect and the hybrid remains perfect across the tested context sizes, while the linear RNN degrades as the number of bits grows.
The no-negative-eigenvalue ablation leaves the hybrid unchanged and only slightly worsens the linear RNN on the hardest settings ($0.80078$ at $m=64$ and $0.67188$ at $m=128$, compared with $0.83203$ and $0.67578$ with negative eigenvalues).
This matches the theoretical expectation that linear RNNs should struggle with recall due to their bounded-size state (\Cref{sec:rnn-expressivity}).

\paragraph{State-Based Recall.}
To test the composition of state tracking and recall, we use the state-based recall task in \Cref{fig:pointer-based-recall}, where state updates transform a set of pointers that are subsequently used to index into a bit array.
Theoretically, \Cref{thm:pointer-recall} predicts hybrid models can express state-based recall while transformers and RNNs cannot under standard complexity conjectures; we now test empirically whether a separation is observed when these models are trained on the task.
As shown in \Cref{fig:synth-results}, both the transformer and the pure GDN-based linear RNN degrade as difficulty increases, while the hybrid with negative eigenvalues remains essentially perfect across all tested difficulties.
By contrast, the no-negative-eigenvalue hybrid loses this advantage, dropping to $0.57031$ at $n=128$; the corresponding no-negative-eigenvalue linear RNN is similarly weak.
This suggests that, in practice, the hybrid's compositional advantage depends on the recurrent block's access to negative eigenvalues.

Overall, these results support the theoretical picture while refining the architectural story: transformers excel at recall but struggle with hard state tracking, GDN-based linear RNNs show the opposite pattern, and hybrids with negative eigenvalues inherit both capabilities and remain strong on their composition. Removing negative eigenvalues has little effect on pure recall, but substantially harms state tracking and the composed state-based recall task, in line with theoretical predictions \citep{grazzi2025unlocking}.

\subsection{Discussion: Pushing the Expressivity-Parallelism Frontier}

Transformers and linear RNNs have complementary strengths from an expressivity perspective.
We have shown that hybrid models do not simply inherit the strengths of each architecture; they go beyond both transformers and linear RNNs by expressing tasks that neither architecture alone can.
Notably, this expressivity advantage of hybrid models comes despite them retaining theoretical parallelizability to a similar degree as transformers \citep{merrill2026lrnns}.
In other words, hybrid architectures extract strictly more from the level of parallelism at which transformers and linear RNNs operate, in a similar vein to the idea of ``leaving less money on the table'' and getting more out of the same resources in machine learning~\citep{ligett2026:talk-leaving-less-money-behind}.
Thus, hybrid models push the expressivity-parallelism frontier for language modeling architectures beyond transformers (cf.~beginning of~\Cref{sec:expressivity}), yielding fundamentally more expressive models that remain similarly scalable.


\section{Scaling Behavior of Hybrid Models} \label{sec:scaling}

Having established the increased theoretical expressivity of hybrid models over transformers in \Cref{sec:expressivity}, we now turn to evaluating hybrid LMs in practice.
A central question is whether the theoretical expressivity guarantees for hybrid models translate to better empirical performance as a function of the parameter count and data invested into a language model.

First, by fitting scaling laws for hybrid models and transformers in a carefully controlled setting (\Cref{sec:scaling-laws}), we establish that hybrid models achieve better scaling efficiency on loss-based pretraining metrics compared to transformers---particularly in scaling with data quantity---consistent with our findings from the \model pretraining run.
Next, drawing on existing theoretical explanations for scaling laws, we present a theoretical argument that increasing an LM's expressive power should improve its scaling efficiency (\Cref{sec:expressivity-to-scaling}).
Informally, building on explanations of scaling laws in terms of the multi-task nature of language modeling \citep{michaud2023quantization}, we argue that increased expressivity can improve scaling because it allows a model to acquire a larger proportion of the discrete computational tasks reflected in the pretraining data.
This provides a conceptual explanation for the improved scaling efficiency of hybrid models that we take as a compelling hypothesis for future work to test and develop further.


\subsection{Scaling Laws for Hybrid Models} \label{sec:scaling-laws}

This section presents derived empirical scaling laws for transformers, pure GDN models, and hybrid models that form the backbone of \model.
Empirical scaling laws enable a principled comparison of different architectures, evaluating not only performance at specific scales but also projecting to larger scales.
Our results show that hybrid models are both more data-efficient and more compute-efficient than transformers across scales.

\paragraph{Overview.}
We fit Chinchilla-style scaling laws \citep{hoffmann2022chinchilla} to transformers, pure GDN models, and hybrid GDN--transformer models.
We find that the hybrid model has a meaningfully lower data coefficient $B$, while scaling exponents are statistically indistinguishable across architectures.
This aligns with our theoretical analysis in \Cref{sec:expressivity-to-scaling} that increased expressivity should improve scaling efficiency by reducing the data coefficient, which captures the fixed-factor improvement in the data required to reach a target loss.
This translates to projected token savings of ${\sim}1.3$--$1.9\times$ at model sizes from 1B to 70B parameters.

\paragraph{Scaling Laws Formulation. } Scaling laws describe the behavior of the language modeling loss as a smooth power law with model size and data budget \citep{kaplan2020scalinglaws}.
\citet{hoffmann2022chinchilla} investigate scaling laws with the parametric form
\begin{equation} \label{eq:scaling-law}
    L(N, D) = E + \frac{A}{N^\alpha} + \frac{B}{D^\beta},
\end{equation}
where the number of parameters $N$ and number of training tokens $D$ are the independent variables.
The quantity $E$ is the irreducible loss, i.e., the loss that would be attained with infinite resources.
The other fit parameters govern how efficiently loss is reduced when $N, D$ are scaled.
The coefficients $A$ and $B$ capture fixed-factor improvements in the resources required to reach a target loss: reducing either by a factor $k$ with fixed $E$ implies that the same target loss can be reached with a $k$-fold reduction in resources.
Finally, the exponents $\alpha$ and $\beta$ govern the rate at which loss improves with scale, though generally these exponents are not changed much by architecture choices.
Comparing these parameters between two architectures provides a principled way to quantify which is fundamentally more compute- or data-efficient.

We now fit scaling laws to evaluate how efficiently hybrid models scale relative to other architectures.
We evaluate three architectures---a pure transformer, a pure GDN, and a hybrid GDN architecture with every fourth layer being a full transformer block---across scales from $60\text{M}$ to $760\text{M}$ parameters, fitting scaling laws to estimate their coefficients and project performance at larger compute budgets.
We follow an \textbf{isoparams data collection strategy} \citep{hoffmann2022chinchilla}: for each model size $N \in \{60\text{M}, 100\text{M}, 190\text{M}, 370\text{M}, 600\text{M}, 760\text{M}, 1\text{B}\}$, we train a model at $0.5\times, 1\times, 2\times, 4\times$, and $8\times$ Chinchilla-optimal tokens (20 tokens per parameter).
Na{\"i}vely, this would require launching \emph{five} separate runs per architecture and parameter budget.
We avoid this by using a WSD-S (warmup--stable--decay with periodic resets) learning rate schedule \citep{hu2024minicpm,wen2025understanding}, which is \textbf{token-agnostic}: the learning rate at step $t$ does not depend on the total number of tokens $T$.
This allows us to \emph{reuse} intermediate checkpoints from a single long run to collect loss measurements at different data budgets $D$.
More precisely, we \emph{decay} the learning rate for $5\%$ of the training tokens to 0 at each of the five Chinchilla factors to obtain the trained checkpoint for that Chinchilla factor.
Decaying the learning rate at each of the five Chinchilla factors and evaluating the resulting checkpoints yields five $(N, D, L)$ triples from a single training run per architecture and parameter count.

We run this procedure for three models:
\begin{enumerate}
    \item The transformer baseline based on the Olmo 3 model,
    \item A fully linear RNN (GDN), and
    \item A hybrid model with a $3:1$ GDN-to-transformer layer ratio---the architecture that forms the basis of \model.
\end{enumerate}
We train all architectures under \textbf{matched conditions}: identical training data (Olmo 3 32B mix), optimizer settings, batch size scaling, and evaluation.
The only difference between runs is the architecture itself.

\paragraph{Model Specification.}
Rather than matching parameter counts exactly, we match the architectural blueprint---the number of layers, heads, and the hidden dimension---of each model to that of Olmo 3 at each scale.
This results in models with different numbers of parameters.
We account for these differences by focusing on performance as a function of the training \emph{FLOPs} and fit the scaling laws with respect to the exact number of parameters.
The detailed configurations are presented in \Cref{tab:ablation-configs}.


\paragraph{Fitting and Using Scaling Laws.}
Given the collected $(N, D, L)$ data triples (35 per architecture; 5 for each of the 7 scales), we fit parametric scaling laws following Approach 3 of \citet{hoffmann2022chinchilla}: we directly fit $L(N, D) = E + A / N^\alpha + B / D^\beta$ by minimizing the Huber loss of $\log L$ between predictions and data.
We model the \emph{validation} loss, computed as the average cross-entropy across 11 held-out evaluation domains: C4~\citep{raffel2019t5}, Dolma Books, Dolma Common Crawl, Dolma pes2o, Dolma Reddit, Dolma Stack, Dolma Wiki~\citep{soldaini2024dolma}, ICE~\citep{greenbaum1996ice}, M2D2 S2ORC~\citep{reid2022m2d2,lo2020s2orc}, Pile~\citep{gao2020pile}, and WikiText-103~\citep{merity2016wikitext}.
Averaging across domains provides a less noisy and more representative signal than any single validation set.
Following \citet{hoffmann2022chinchilla}, we use Huber loss to reduce sensitivity to outliers from training instabilities.
The fit is performed jointly over all $(N, D, L)$ triples for each architecture separately.
To assess uncertainty in our scaling law estimates, we additionally compute 95\% confidence intervals via bootstrap resampling (1{,}000 iterations).

\paragraph{Free and Fixed-Exponent Fits.}
We fit scaling laws in two ways.
In the \emph{unconstrained} fit, all five parameters ($E$, $A$, $\alpha$, $B$, $\beta$) are free; this provides the most flexible fit that explains the scaling laws but yields wide bootstrap confidence intervals, making per-coefficient comparisons unreliable.
We therefore also present a \emph{fixed-exponent} fit in which we fix $\alpha$ and $\beta$ across all architectures and fit only $E$, $A$, and $B$.
Fixing the exponents concentrates the remaining variance onto the efficiency coefficients, enabling more statistically robust comparisons between architectures.
Full details on the data used for fitting, the optimization procedure, and the construction of all figures and tables in this section are provided in \Cref{sec:fig-table-details}.

\begin{figure*}[t!]
\centering
\includegraphics[width=\textwidth]{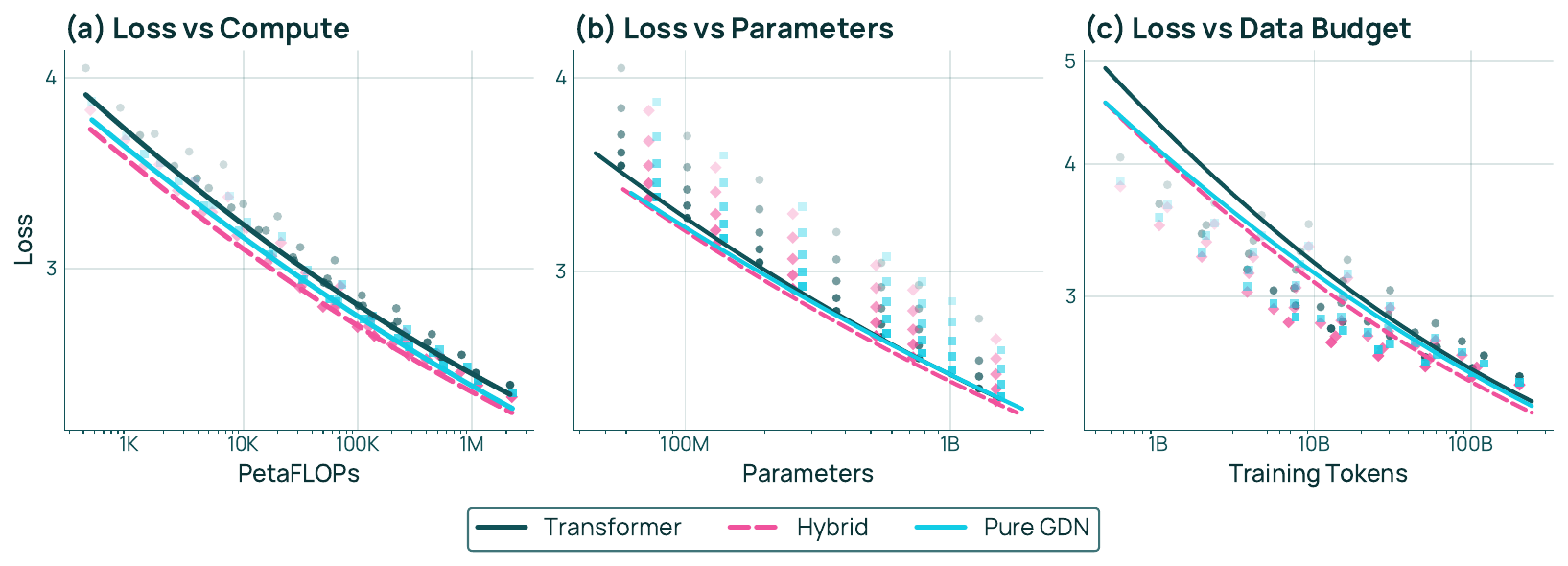}
\caption{Scaling law fits $L(N,D) = E + A/N^\alpha + B/D^\beta$ for all architectures. Transformer: $\alpha=0.252$, $\beta=0.213$, $R^2=0.998$. Hybrid: $\alpha=0.226$, $\beta=0.219$, $R^2=0.999$. Pure GDN: $\alpha=0.183$, $\beta=0.227$, $R^2=0.999$. \textbf{(a)} Loss vs.~compute; points represent model checkpoints at different model sizes (shown by opacity) and the bold curve shows the fitted scaling law at the compute-optimal frontier. \textbf{(b)} Loss vs.~parameter count; points represent checkpoints at different Chinchilla multiples ($D/N$ ratios, shown by opacity) and the fitted curve is evaluated at the largest multiple. \textbf{(c)} Loss vs.~data budget; points represent checkpoints at different model sizes (shown by opacity) and the fitted curve is evaluated at the largest model size. Loss axes are plotted on a logarithmic scale so that the power-law relationships appear linear.
}
\label{fig:scaling-law-fit-log-opt}
\end{figure*}

\paragraph{Scaling Law Fits and Fit Quality.}
\Cref{fig:scaling-law-fit-log-opt} shows the fitted scaling laws alongside the raw data for all three architectures, plotted against compute, parameter count, and token budget.
The curves in all figures are drawn using the coefficients estimated from the unconstrained fit; we use the fixed-exponent fit only when making quantitative comparisons between architectures (see below).
Qualitatively, the hybrid and pure GDN curves lie consistently below the transformer curve, indicating lower loss at matched compute or parameter budget; the separation is visible across all three panels.
The fits achieve $R^2 \geq 0.998$ for all three architectures, confirming that the Chinchilla scaling law form describes the observed loss curves well across the full range of scales considered.

\paragraph{The Derived Scaling Laws are Consistent with Pretraining Observations.}
\Cref{tab:prediction-validation} validates the derived scaling laws against \emph{larger} models---the final base 7B \model as well as the 7B and 32B base Olmo~3 models---and confirms that they generalize well outside the fitting range.
The predicted loss for the 7B \model trained on 5.5T tokens is $2.142$, compared to the observed $2.136$ (error $0.28\%$); for Olmo~3 7B (5.93T tokens), the error is similarly $0.28\%$; and even for Olmo~3 32B---well beyond the ${\sim}1\text{B}$ fitting range---the error is only $1.69\%$.
These results establish that the fitted scaling laws are reliable enough to project performance at larger scales.

\begin{table}[tbp]
    \centering
    \small
    \begin{tabular}{llrrrrr}
        \toprule
        Architecture & Model & $N$ & $D$ & Observed & Predicted & Error (\%) \\
        \midrule
        Transformer & Olmo 3 & 7B & 5.9T & 2.17 & 2.16 & 0.28 \\
        Transformer & Olmo 3 & 32B & 5.5T & 2.02 & 2.05 & 1.69 \\
        \midrule
        Hybrid GDN & \model{} & 7B & 5.5T & 2.14 & 2.14 & 0.28 \\
        \bottomrule
    \end{tabular}
    \caption{Scaling law prediction validation against final models. Predicted loss is computed from the fitted law $L(N, D) = E + A/N^\alpha + B/D^\beta$.
    Extrapolation is generally strong despite differences in the learning rate schedule between our scaling studies and the large-scale training runs.
    }
    \label{tab:prediction-validation}
\end{table}

\paragraph{The Hybrid Model Has a Robustly Lower Data Coefficient.}
\Cref{fig:scaling-law-fit-log-opt} visually suggests that the hybrid model has better scaling efficiency than the transformer, particularly with respect to data.
To quantify this, we compare the fitted scaling law parameters across architectures in \Cref{fig:scaling_params_bar} (see \Cref{tab:scaling-law-parameters} in the appendix for full details).
While the unconstrained fit allows all five parameters to vary and is thus the most flexible, the resulting CIs are wide and largely overlapping, yielding no statistically robust per-coefficient conclusions.
We therefore turn to the fixed-exponent fit to draw robust conclusions: we fix $\alpha = \beta = 0.22$ shared across architectures (the mean of the unconstrained estimates) and refit only $E$, $A$, and $B$.
This gives us a clear signal in the \emph{data efficiency} coefficient $B$: the hybrid's $B = 83.7$ (CI $[80.2, 87.1]$) is significantly lower than the transformer's $94.9$ (CI $[88.7, 102.0]$), with non-overlapping confidence intervals.
The parameter coefficient $A$ also slightly favors the hybrid ($70.1$ vs.\ $71.8$), but the CIs overlap.
The scaling exponents in the unconstrained fit are statistically indistinguishable across architectures---consistent with the theory (\Cref{sec:expressivity-to-scaling}), which predicts that expressivity shifts the efficiency constants without altering the power-law exponents.
The primary \emph{robust} advantage of the hybrid model is thus its data efficiency coefficient $B$, as also predicted by our theoretical results in \cref{sec:expressivity-to-scaling}.

\begin{figure}[t!]
\centering

\includegraphics[width=\textwidth]{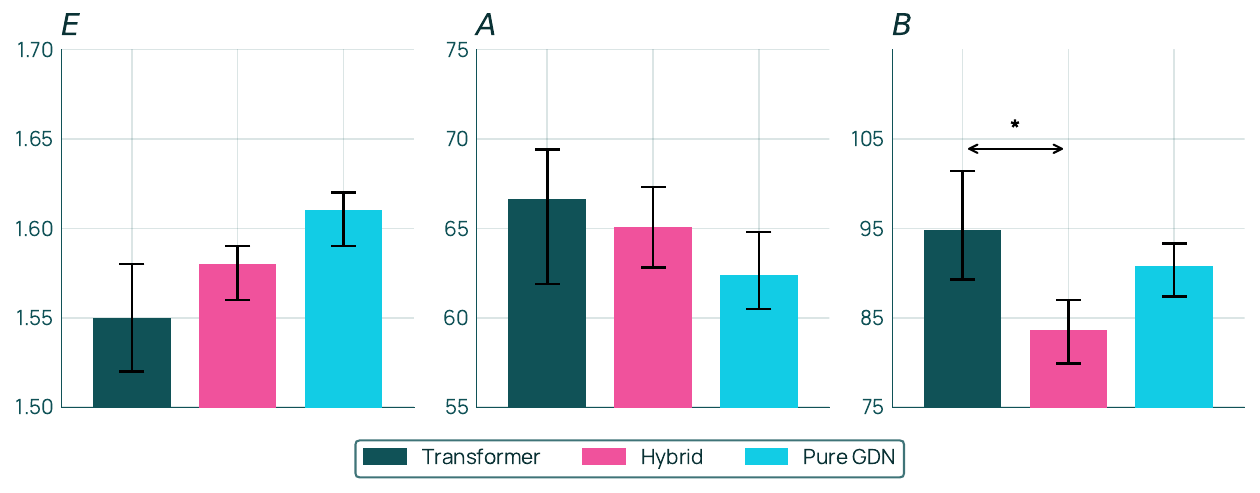}

\caption{Scaling law coefficients $E$, $A$, $B$ with 95\% bootstrap CIs for the fixed-exponent fit ($\alpha{=}\beta{=}0.22$ shared), which yields tight confidence intervals.
The data efficiency coefficient $B$ shows a robust advantage for Hybrid GDN over the transformer ($83.7$ vs.\ $94.9$, non-overlapping CIs, marked~$*$), while differences in $E$ and $A$ are not statistically conclusive.}
\label{fig:scaling_params_bar}
\end{figure}

\paragraph{Token Savings Grow Steadily with Model Size.}
The improved scaling trend of hybrid models naturally translates to better scaling of the loss with compute.
The projected savings are derived by analytically inverting the fitted scaling law: for a target loss $L^*$, the number of tokens required by a model with $N$ parameters is $D^*(N) = \bigl(B / (L^* - E - A/N^\alpha)\bigr)^{1/\beta}$, evaluated at each model size using each architecture's fitted coefficients (see \Cref{sec:fig-table-details} for full details).
While the pure GDN achieves the lowest estimated loss at small parameter budgets, the hybrid model achieves better loss at the more typical medium and large scales, where its advantage in the data-efficiency coefficient $B$ becomes the dominant and most reliable signal.
\Cref{fig:combined-savings} visualizes the projected parameter and data savings factors across model sizes for a target loss of $2.474$ (the minimum observed training loss); the token requirements are obtained by analytically inverting the fitted scaling law as described in \Cref{sec:fig-table-details}.
The parameter savings factor (cf.\ \Cref{fig:combined-savings}(a)) reveals that transformers are more parameter-efficient at larger token budgets, a consequence of the larger scaling factor $\alpha$.
\Cref{fig:combined-savings}(b), in contrast, shows that the \emph{token} savings factor grows steadily with model size, rising from ${\sim}1.3\times$ at 1B parameters to ${\sim}1.9\times$ at 70B parameters.
Concretely, to match a transformer trained at a given scale, one can train a hybrid model of the same size on ${\sim}1.3$--$1.9\times$ fewer tokens (e.g., $1.68\times$ at 7B, $1.82\times$ at 30B, $1.89\times$ at 70B; see \Cref{fig:combined-savings}(b) and \Cref{tab:token-savings-by-scale}).
The pure GDN model, by contrast, initially requires \emph{more} data than the transformer at the 1B scale ($0.82\times$, i.e., no savings) before catching up and achieving savings of ${\sim}1.7\times$ at 70B.

\paragraph{Compute Savings Also Grow Steadily with Model Size.}
Token savings directly lead to improved compute efficiency.
\Cref{fig:combined-savings}(c) visualizes this by showing the compute savings factor needed for reaching a target loss, revealing steadily growing savings.
At $10^{22}$ FLOPs, for example, the hybrid model is projected to achieve a loss of $2.267$ compared to $2.308$ for the transformer ($\Delta = -0.042$, 95\% CI $[-0.14, +0.06]$); full results across compute budgets from $10^{18}$ to $10^{23}$ FLOPs are reported in \Cref{tab:compute-equivalent-loss} in the appendix.

\begin{figure*}[t!]
\centering
\includegraphics[width=\textwidth]{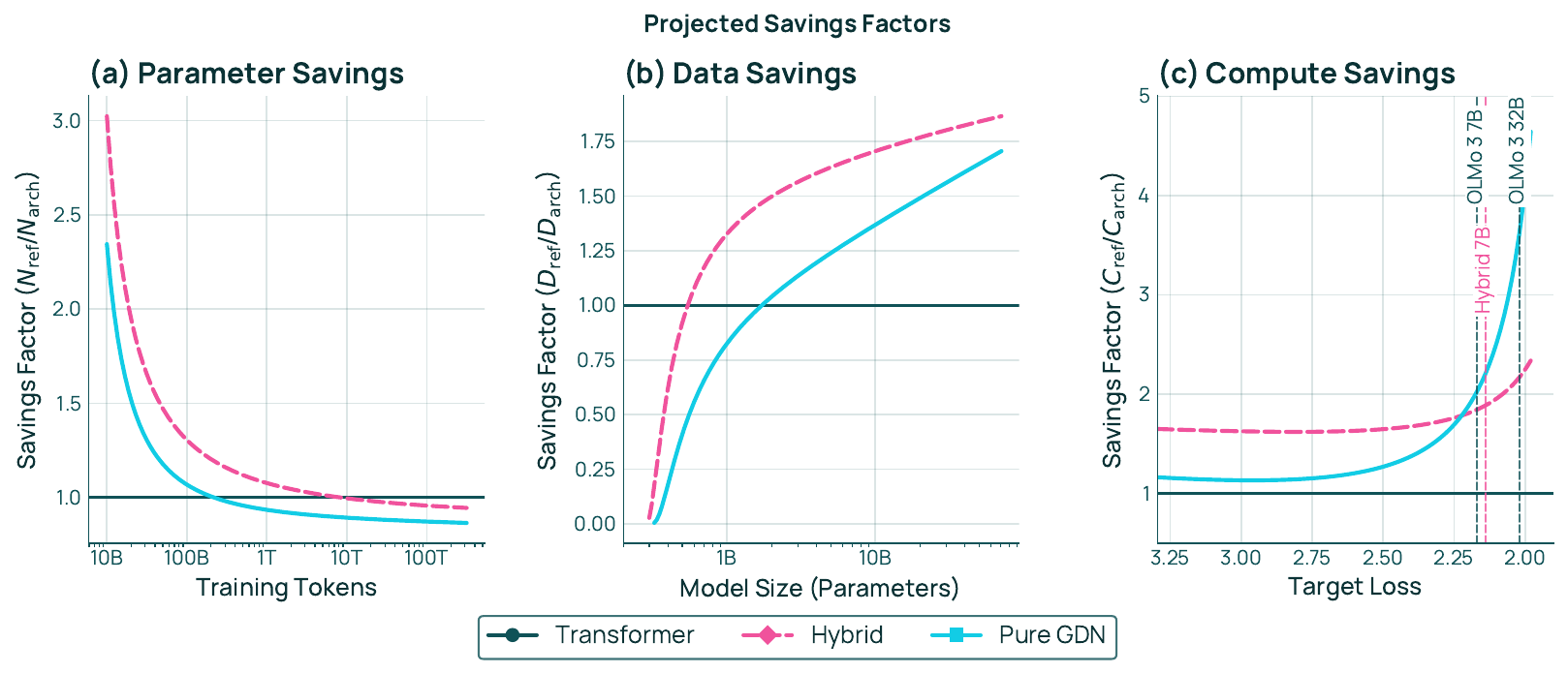}
\caption{Projected savings factors across model scales (target loss $= 2.474$, selected as the minimum of all observed training losses). \textbf{(a)} Parameter savings ($N_\mathrm{ref}/N_\mathrm{arch}$) vs training tokens: fewer parameters are needed to reach the same loss at a given data budget. \textbf{(b)} Data savings ($D_\mathrm{ref}/D_\mathrm{arch}$) vs model size: fewer training tokens are needed to reach the target loss at a given model size. \textbf{(c)} Compute savings ($C_\mathrm{ref}/C_\mathrm{arch}$, $C \propto N{\cdot}D$) vs target loss: for each target loss the minimum total compute is found by optimising over model size $N$; harder targets (lower loss) are on the right. Dashed vertical lines mark the observed losses of our final production models. The savings factor grows with difficulty for both Pure GDN and Hybrid models. Values above $1\times$ indicate an advantage over Transformer. Note that estimates at lower loss values (right side of the plot) involve extrapolation beyond the fitting range and are therefore less certain.}
\label{fig:combined-savings}
\end{figure*}

\paragraph{Implications.}
The derived scaling laws suggest favorable scaling for hybrid models compared to both pure transformers and purely linear RNN models.
The most statistically robust finding, from the fixed-exponent analysis, is that the hybrid model has a meaningfully lower data coefficient $B$ (83.7 vs.\ 94.9 for the transformer, non-overlapping 95\% CIs), indicating that hybrid models are more data-efficient learners.
The parameter coefficient $A$ also slightly favors the hybrid, though this difference is less statistically clear.
In \Cref{sec:expressivity-to-scaling}, we argue that the improved $B$ is consistent with the expressivity-grounded prediction that more expressive architectures can learn a larger proportion of discrete tasks from data, making each training token more valuable.
The scaling exponents $\alpha$ and $\beta$ are statistically indistinguishable across architectures---consistent with the theoretical prediction that expressivity shifts the constant-factor efficiency without altering the power-law exponents.
Together, these results suggest that the primary practical benefit of hybrid models for scaling is more efficient use of training data, which is consistent with our pretraining observations (cf.\ \Cref{fig:pretraining-perf}) and the projected ${\sim}1.9\times$ data savings at 70B scale (cf.\ \Cref{fig:combined-savings}(b) and \Cref{tab:token-savings-by-scale}).
We study additional architecture ablations in \Cref{sec:architecture-ablations}, where we justify choosing GDN for the linear component of \model and the specific hybridization strategy.

\subsection{Theory: Increased Expressive Power Improves Scaling} \label{sec:expressivity-to-scaling}

In \Cref{sec:scaling-laws}, we saw that the hybrid model scaled more efficiently during pretraining than the transformer, achieving better performance with the same token or compute budget.
At first glance, it may be unclear why pretraining efficiency should be related to the greater expressivity of hybrid models~(\Cref{sec:expressivity}): after all, our expressivity guarantees imply a binary difference in abilities on synthetic tasks, whereas scaling laws concern smooth improvements on modeling natural-language data.
However, we now formalize a plausible explanation for why the greater expressivity of hybrid models should translate to improved scaling behavior.
In particular, recent work explains neural scaling laws as emerging from the gradual aggregation of many discrete tasks reflected in language modeling data.
Working within this framework, we show that increasing expressivity improves scaling trends (\Cref{cor:efficient-scaling,cor:irreducible-loss}) because it increases the number of discrete tasks that a model can learn on a fixed parameter and token budget. In aggregate, this means that more expressive models can achieve lower loss on the same budget, as shown in \Cref{fig:scaling-theory-example}.

\paragraph{Quantization Model {\normalfont\slightlylarger\citep{michaud2023quantization}}.}
It is well established that language modeling loss decreases smoothly with model size and token budget despite the fact that many properties of language are discrete \citep{kaplan2020scalinglaws,hoffmann2022chinchilla}.
One explanation is that language modeling is fundamentally a multi-task problem: LMs acquire individual tasks discretely, leading to a smooth reduction of aggregate loss \citep[inter alia]{hutter2021learningcurvetheory,arora2023theoryemergence,nam2024exactly}.
In particular, this intuition has recently been formalized in the \emph{quantization model} of neural scaling laws \citep{michaud2023quantization}, a minimal framework for deriving LM scaling laws.
We encapsulate the quantization model via the following basic assumptions:
\begin{enumerate}
    \item Language modeling consists of a large number of discrete tasks (originally called ``quanta''), each of which is either unlearned or learned.

    \item The distribution of tasks in the data follows a power law, resembling the Zipfian distribution of word types in natural language.
    Each token leverages exactly one task, and the probability of a token (in the training or test data) leveraging task $k$ is $p_k \propto k^{-(\alpha + 1)}$ for $\alpha > 0$.

    \item During training, task $k$ becomes learnable once a critical threshold $T_k$ of relevant tokens leveraging the task have been observed.
    An LM learner acquires learnable tasks incrementally, spending parameters $C_k$ on each one, in order of their rank $k$ (formalized in \Cref{def:quantization-model} in \Cref{sec:scaling-laws-proofs}).
    Once an LM learns a task, it predicts tokens leveraging that task with lower loss than before.
\end{enumerate}
Under these assumptions, the loss $\tilde L$ depends on the number of tasks learned, which is controlled by the budget for parameters $N$ or training tokens $D$.
\citet{michaud2023quantization} show that loss will follow a smooth power law as a function of tasks, parameters, or tokens, resulting from learning many discrete tasks one by one, with each successive task having smoothly decaying probability in the data.

\paragraph{Incorporating Expressivity.}
We now adapt the quantization model to study how the expressive power of an LM impacts scaling efficiency.
\Cref{sec:expressivity} analyzed which computational tasks can be expressed by hybrid models and transformers; now we consider the \emph{multi-task} setting, where achieving low loss requires learning many individual tasks.
We first stipulate that each of these tasks is expressible or inexpressible by our LM, independent of its frequency in the data:

\begin{restatable}[Only Some Tasks Are Expressible]{assumption}{assmExpressibility} \label{assm:expressible}
    For a given architecture, each task is either expressible or inexpressible,
    modeled as an \textit{iid} boolean random variable where the probability that
    any individual task is expressible is $1 - \epsilon$.
\end{restatable}

Increasing expressive power (e.g., going from a transformer to a hybrid model) corresponds to decreasing $\epsilon$.
We will analyze the \emph{expected loss} under \Cref{assm:expressible}.
Conceptually, increasing a model's expressivity ($1 - \epsilon$) could improve scaling efficiency because it increases the proportion of tasks that a model can learn efficiently, which is the mechanism by which loss decreases.
There are two plausible mechanisms: first, the model might fail to learn inexpressible tasks entirely, or, second, the model might approximate them but require more parameters and data because the architecture does not admit a compact subnetwork for solving the task.
We formalize a unified \emph{expressivity-aware} extension to the quantization model that allows for either (or both) of these effects of expressivity.
First, we formalize the fact that the loss achieved on a task after it has been learned may depend on whether the task is expressible:

\begin{restatable}[Expressible Tasks Can Be Learned to Lower Loss]{assumption}{assmLossReduction} \label{assm:loss-incurred}
    Tokens leveraging unlearned tasks incur loss $L_0$.
    If a task $k$ has been learned,
    the loss incurred by the learner on a token leveraging $k$ is reduced to $L_0 - \Delta$ if $k$ is expressible by that learner and to $L_0 - \Delta'$ if $k$ is inexpressible,
    for some $\Delta, \Delta' > 0$ and $\Delta' \leq \Delta$.
\end{restatable}

We will refer to $\Delta$ and $\Delta'$ as \emph{loss reductions} for expressible and inexpressible tasks, respectively.
When $\Delta' = 0$, inexpressible tasks cannot be learned at all (as briefly explored by \citealp{michaud2025quanta}), and when $\Delta' = \Delta$, they can be fully learned, though they might require more resources (parameters and data) to learn.

We now formalize the resource requirements of different tasks.
In the original quantization model \citep{michaud2023quantization}, all tasks require the same number of parameters $C$ to represent and have the same sample complexity $T$.
In contrast, we now imagine that the number of parameters and tokens needed per task can change based on whether the task is expressible:

\begin{restatable}[Expressible Tasks Can Be Learned with Fewer Parameters and Tokens]{assumption}{assmResources} \label{assm:less-succint}
    Each expressible task is representable with $C$ parameters and learnable with $T$ relevant tokens.
    An inexpressible task requires $C' \geq C$ parameters to represent approximately and is learnable with $T' \geq T$ relevant tokens.
\end{restatable}

\Cref{assm:less-succint} can be motivated as follows.
If a task is fully expressible, a \emph{small} subnetwork exists that solves the task across all inputs, requiring a reasonable number of samples to learn.
If the task is inexpressible, no such small subnetwork exists, but the model can still learn a \emph{large} lookup table that approximates the function over common inputs, requiring more samples to cover every case.
This connection between expressiveness and succinctness of representation recalls philosophical arguments in theoretical computer science for analyzing computation in terms of infinite formal languages \citep{savitch1993mightpay}.
It is natural to imagine that expressibility leads to an exponential reduction in required resources (i.e., $C' = 2^C$), though we do not need to commit to this.
The case where $C' = C$ recovers the case where inexpressible tasks do not require more parameters.

Having extended the quantization model to be expressivity-aware, we now characterize how expressivity affects scaling laws under these assumptions.
The following general result establishes that increasing expressiveness leads to more parameter- and data-efficient scaling in the quantization model, and can also improve the irreducible loss achievable with infinite parameters and data.

Let $\trueloss(N)$ and $\trueloss(D)$ denote the actual loss incurred under the quantization model as a function of the number of parameters $N$ and the token budget $D$.

\begin{restatable}[Expressivity-Aware Scaling Laws]{theorem}{scalingEfficiency} \label{thm:scaling-efficiency}
    Consider the quantization model of neural scaling laws augmented by \Cref{assm:expressible,assm:loss-incurred,assm:less-succint}.
    Then, as a function of the number of parameters $N$, the loss $\trueloss(N)$ is closely approximated by a power law $L(N)$, i.e., $\trueloss(N) \approx L(N)$ where:
    \begin{equation*}
        L(N) - L^\epsilon_\infty \propto A_\epsilon \cdot N^{-\alpha} ,
        \quad \quad \textrm{where} \;
        A_\epsilon = (L_0 - L^\epsilon_\infty) \cdot \left( C + \epsilon (C' - C) \right)^\alpha .
    \end{equation*}
    Similarly, as a function of the token budget $D$, the loss $\trueloss(D)$ is closely approximated by a power law $L(D)$, i.e., $\trueloss(D) \approx L(D)$ where
    \begin{equation*}
        L(D) - L^\epsilon_\infty \propto B_\epsilon \cdot D^{-\alpha/(\alpha + 1)} ,
        \quad \quad \textrm{where} \;
        B_\epsilon = (1 - \epsilon) \Delta T^{\alpha/(\alpha+1)} + \epsilon \Delta' T'^{\alpha/(\alpha+1)} .
    \end{equation*}
    Finally, the irreducible loss $L^\epsilon_\infty$ for both power laws depends on expressivity via
    \begin{equation*}
        L^\epsilon_\infty = L_0 - (1 - \epsilon) \Delta - \epsilon \Delta' .
    \end{equation*}
\end{restatable}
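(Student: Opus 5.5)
Our plan is to reduce both scaling laws to the original quantization-model computation of \citet{michaud2023quantization}, replacing every per-task constant by its expectation under \Cref{assm:expressible}. We first fix the bookkeeping. With $p_k = k^{-(\alpha+1)}/\zeta(\alpha+1)$, let $\chi_k \in \{0,1\}$ be the iid indicator that task $k$ is expressible, so that $\Pr[\chi_k = 1] = 1-\epsilon$. If $\mathcal K$ denotes the set of learned tasks, then \Cref{assm:loss-incurred} gives
\begin{equation*}
    \trueloss = L_0 - \sum_{k \in \mathcal K} p_k \bigl(\chi_k \Delta + (1-\chi_k)\Delta'\bigr).
\end{equation*}
Taking expectations termwise, each learned task contributes $p_k \bar\Delta$, where $\bar\Delta = (1-\epsilon)\Delta + \epsilon\Delta'$. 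Letting every task be learned yields $L^\epsilon_\infty = L_0 - \bar\Delta \sum_k p_k = L_0 - \bar\Delta$, which is the irreducible-loss claim. The remaining work is to determine which tasks are learned as a function of $N$ or $D$. In both regimes we then approximate the tail sum $\sum_{k > n} p_k \approx \int_n^\infty k^{-(\alpha+1)}\,dk / \zeta(\alpha+1) = n^{-\alpha}/(\alpha\zeta(\alpha+1))$. This integral approximation is the source of the ``$\approx$'' in the statement, and its constant is absorbed into ``$\propto$''.

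\textbf{Parameter scaling.} The learner acquires tasks in rank order and spends $C$ or $C'$ parameters on each. With $N$ parameters, the number of learned tasks $n(N)$ is therefore the largest $n$ satisfying $\sum_{k\le n} (\chi_k C + (1-\chi_k)C') \le N$. By the law of large numbers, or simply by working in expectation, the average cost per task is $\bar C = C + \epsilon(C'-C)$, so $n(N) \approx N/\bar C$. We expect this step to be the main subtlety. The loss is a nonlinear function of the random cutoff $n$, and the statement treats it in expectation. We would justify replacing $n$ by $N/\bar C$ through concentration, since the relative fluctuation is $O(n^{-1/2})$. Because the expressibility indicators are independent of rank, conditioning on the cutoff does not bias $\mathbb E[\chi_k]$ among learned tasks to first order. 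Substituting the cutoff gives
\begin{equation*}
    L(N) - L^\epsilon_\infty \approx \bar\Delta \sum_{k > N/\bar C} p_k \propto \bar\Delta\, \bar C^{\alpha} N^{-\alpha}.
\end{equation*}
Here $\bar\Delta = L_0 - L^\epsilon_\infty$, so this matches $A_\epsilon$.

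\textbf{Data scaling.} Task $k$ becomes learnable once $D p_k \ge T_k$, where $T_k \in \{T, T'\}$. We assume parameters are unconstrained, as in Michaud's data-limited regime. Expressible tasks are then learned iff $k \le k_T(D) = (D/(T\zeta))^{1/(\alpha+1)}$, and inexpressible tasks iff $k \le k_{T'}(D)$. Note that with $T' > T$ the learned set need not be a prefix. We therefore split the expected residual loss into the two subpopulations:
\begin{equation*}
    \mathbb E[\trueloss(D)] - L^\epsilon_\infty = (1-\epsilon)\Delta \sum_{k > k_T} p_k + \epsilon\Delta' \sum_{k > k_{T'}} p_k .
\end{equation*}
Applying the tail approximation gives $k_T^{-\alpha} \propto T^{\alpha/(\alpha+1)} D^{-\alpha/(\alpha+1)}$, and likewise for $T'$. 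The residual is thus proportional to $\bigl((1-\epsilon)\Delta T^{\alpha/(\alpha+1)} + \epsilon\Delta' T'^{\alpha/(\alpha+1)}\bigr) D^{-\alpha/(\alpha+1)} = B_\epsilon D^{-\alpha/(\alpha+1)}$, with the common factor $\zeta$ absorbed into the proportionality.

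Beyond the concentration argument for the random parameter cutoff, the step needing most care is verifying that the proportionality constants in the $N$ and $D$ laws do not depend on $\epsilon$. Only then are the claimed dependences of $A_\epsilon$ and $B_\epsilon$ on $\epsilon$ meaningful. We would check this explicitly by tracking the factors $\alpha\zeta(\alpha+1)$ and $\zeta^{\alpha/(\alpha+1)}$ through both computations.
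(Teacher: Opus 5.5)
Your proposal is correct and follows essentially the same route as the paper. It averages over expressibility to get $L^\epsilon_\infty$, takes the parameter cutoff $n_N \approx N/(C+\epsilon(C'-C))$, uses the thresholds $k_{\mathrm e} = (D/(T\zeta(\alpha+1)))^{1/(\alpha+1)}$ and $k_{\mathrm i}$ (the same with $T'$) for data, and applies the integral tail approximation $\sum_{k>n}p_k \approx n^{-\alpha}/(\alpha\zeta(\alpha+1))$. The differences are minor: you split the data-limited residual by subpopulation via linearity of expectation rather than through the paper's three-phase telescoping lemma (same two-tail expression), and you add a concentration justification for the random parameter cutoff, which the paper skips by working directly with the expected number of learned tasks.
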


See \Cref{sec:scaling-laws-proofs} for a proof.
\Cref{thm:scaling-efficiency} can be interpreted as follows.
If we can express all tasks ($\epsilon = 0$), we recover the standard irreducible loss $L^0_\infty = L_0 - \Delta$ and scaling coefficients $A_0 = \Delta C^\alpha$ and $B_0 = \Delta T^{\alpha/(\alpha + 1)}$ from the quantization model \citep{michaud2023quantization}.
On the other hand, if there are some tasks we cannot express ($\epsilon > 0$), the scaling coefficients $A_\epsilon$ and $B_\epsilon$ (and potentially irreducible loss $L^\epsilon_\infty$) will change.
In particular, as illustrated in \Cref{fig:scaling-theory-example}, decreasing $\epsilon$ \emph{improves} loss towards the $\epsilon = 0$ case.
Formally, under any non-trivial instantiation of \Cref{assm:expressible,assm:loss-incurred,assm:less-succint}, increasing expressivity shifts the loss curve down:

\begin{figure}[t]
    \centering
    \includegraphics[width=\linewidth]{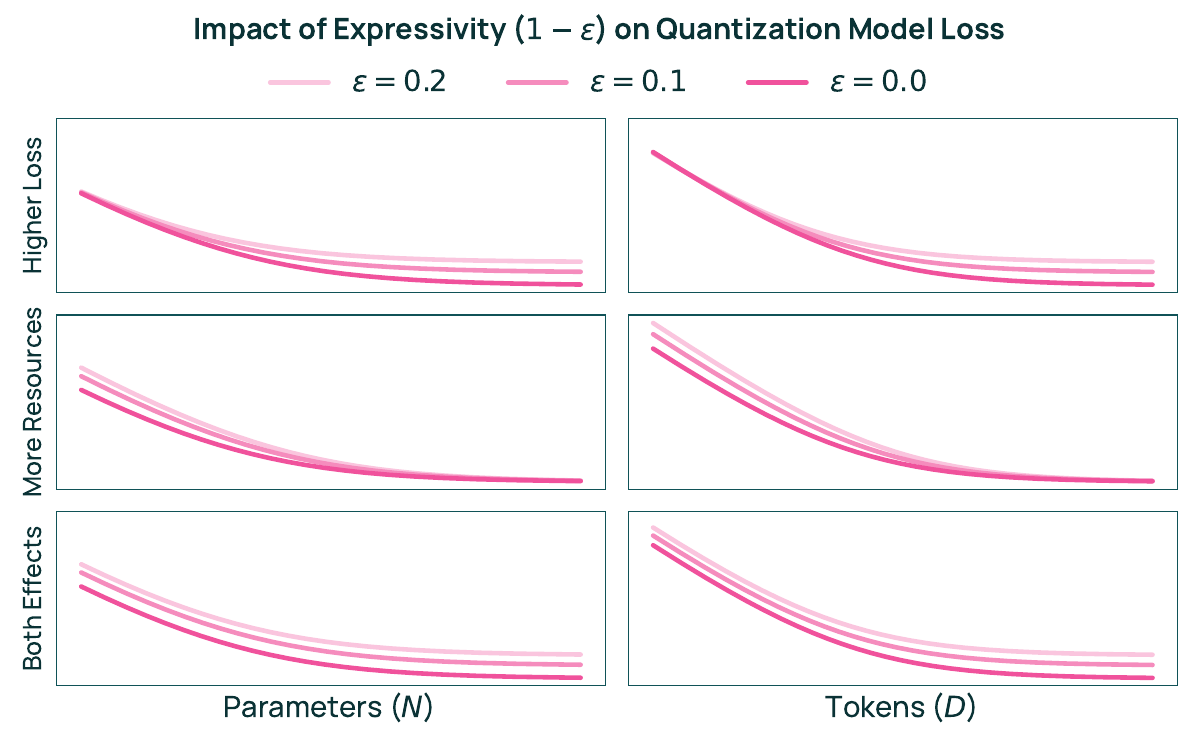}
    \caption{
        Impact of expressivity $1 - \epsilon$ on loss under three instantiations of the expressivity-aware quantization model; both axes are log-scaled.
        In the first row, $\Delta' < \Delta$, so decreasing $\epsilon$ lowers irreducible loss~(\Cref{cor:irreducible-loss}) and reduces loss everywhere~(\Cref{cor:efficient-scaling}), but especially for large $N$ and $D$.
        In the second row, $\Delta' = \Delta$, but inexpressible tasks require more parameters and tokens than expressible tasks.
        Thus, while increasing expressivity shifts down the scaling curve everywhere, irreducible loss remains the same, so for large enough $N, D$, the loss reduction diminishes.
        Finally, the third row incorporates both effects of expressivity. This means decreasing $\epsilon$ both causes an initial difference in loss and lowers irreducible loss, leading to a visible gap in loss as $\epsilon$ decreases across values of $N$ and $D$.
    }
    \label{fig:scaling-theory-example}
\end{figure}

\begin{restatable}[Expressivity Always Improves Scaling]{corollary}{expressivityAlwaysImprovesScaling} \label{cor:efficient-scaling}
    Fix a nontrivial instantiation of \Cref{assm:expressible,assm:loss-incurred,assm:less-succint}, i.e., where either $\Delta' < \Delta$, 
    or $C' > C$ and $T' > T$.
    Then, both $L(N)$ and $L(D)$ strictly decrease as $\epsilon$ decreases, elementwise for all $N, D$.
\end{restatable}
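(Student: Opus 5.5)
We will prove \Cref{cor:efficient-scaling} directly from the closed forms in \Cref{thm:scaling-efficiency}. Write each law as its irreducible part plus a scaling term:
\begin{equation*}
L(N) = L^\epsilon_\infty + \kappa A_\epsilon N^{-\alpha}, \qquad L(D) = L^\epsilon_\infty + \kappa' B_\epsilon D^{-\alpha/(\alpha+1)},
\end{equation*}
where $\kappa, \kappa' > 0$ are proportionality constants independent of $\epsilon$. We will take this independence from the proof of \Cref{thm:scaling-efficiency}; it holds because they depend only on the Zipf normalization and $\alpha$. For fixed $N$ or $D$, each law is then an explicit function of $\epsilon \in [0,1]$. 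The plan is to show that each is nondecreasing in $\epsilon$, with strict increase under the nontriviality hypothesis. Decreasing $\epsilon$ then strictly decreases the loss pointwise.

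\textbf{The $D$ law.} Both $L^\epsilon_\infty$ and $B_\epsilon$ are affine in $\epsilon$, so $L(D)$ is affine in $\epsilon$ with slope
\begin{equation*}
(\Delta - \Delta') + \kappa' D^{-\alpha/(\alpha+1)}\bigl(\Delta' T'^{\alpha/(\alpha+1)} - \Delta T^{\alpha/(\alpha+1)}\bigr).
\end{equation*}
The first term is $\geq 0$ by \Cref{assm:loss-incurred}. The second term can be negative when $\Delta' < \Delta$ and $T' = T$. So we will not bound the terms separately. Instead we rewrite the loss as a per-task expected loss. Under the quantization model, a token of rank $k$ contributes $L_0 - \Delta$ if its task is expressible and learned by budget $D$, contributes $L_0 - \Delta'$ if its task is inexpressible and learned, and contributes $L_0$ otherwise. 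The learning threshold for inexpressible tasks is weakly later, since $T' \geq T$. Each task's expected contribution is therefore $(1-\epsilon)\ell_k + \epsilon\ell'_k$ with $\ell_k \leq \ell'_k$, so the loss is nondecreasing in $\epsilon$. This is the cleanest route, and it controls the approximation $\tilde L \approx L$ as well.

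Strictness holds if $\Delta' < \Delta$, or if the threshold strictly shifts ($T' > T$) so that some task with positive mass is learned only when expressible. To transfer the argument to the power-law form itself, we will check that the fitted slope equals the aggregated per-task differences, which is a nonnegative sum.

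\textbf{The $N$ law.} The slope is
\begin{equation*}
(\Delta - \Delta') + \kappa N^{-\alpha}\,\partial_\epsilon\Bigl[(L_0 - L^\epsilon_\infty)\bigl(C + \epsilon(C'-C)\bigr)^\alpha\Bigr].
\end{equation*}
The factor $L_0 - L^\epsilon_\infty = (1-\epsilon)\Delta + \epsilon\Delta'$ decreases in $\epsilon$, so this slope also has competing signs. We will again argue at the task level: parameter capacity $N$ buys $N/C$ expressible tasks but only $N/C'$ inexpressible ones, and learned inexpressible tasks give a smaller reduction. Replacing an expressible task by an inexpressible one therefore weakly raises the expected loss. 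Strictness follows when $C' > C$ or $\Delta' < \Delta$.

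\textbf{Main obstacle.} The hard step is reconciling this monotone task-level argument with the approximate closed forms. Differentiating the formulas directly leaves a sign ambiguity: a smaller coefficient multiplies a larger irreducible gap. We will first try to show that the exact sum $\sum_k p_k(\cdot)$ is monotone and that the power-law approximation preserves order because it is derived by the same integral replacement for every $\epsilon$. As a fallback, we will verify the slope inequality directly using $N^{-\alpha}$-type bounds: in the regime where the approximation is valid, the learned fraction is below $1$, which gives $\kappa N^{-\alpha}(C'')^\alpha \leq 1$.
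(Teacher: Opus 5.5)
\textbf{Gap in your primary route.} The transfer step fails. You claim the power-law forms inherit the order of the exact sums ``because [they are] derived by the same integral replacement for every $\epsilon$.'' That is false. The replacement turns the learned mass $1-\sum_{k>n}p_k\in[0,1]$ into $1-n^{-\alpha}/(\alpha\,\zeta(\alpha+1))$. This becomes negative once the continuous number of learned tasks satisfies $n<(\alpha\,\zeta(\alpha+1))^{-1/\alpha}$, and there the $\epsilon$-dependence reverses.

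Here is a concrete case. Take $\Delta'<\Delta$ and $C'=C$. Then $L(N)=L_0-g(\epsilon)\bigl(1-\kappa C^\alpha N^{-\alpha}\bigr)$ with $g(\epsilon)=(1-\epsilon)\Delta+\epsilon\Delta'$ and $\kappa=1/(\alpha\,\zeta(\alpha+1))$. Whenever $\kappa C^\alpha N^{-\alpha}>1$, decreasing $\epsilon$ \emph{raises} $L(N)$. Yet the exact loss there is $L_0$ for every $\epsilon$, because no task fits in the budget. The same reversal happens for $L(D)$ with $T'=T$ once $\kappa'(T/D)^{\alpha/(\alpha+1)}>1$, where $\kappa'=1/(\alpha\,\zeta(\alpha+1)^{1/(\alpha+1)})$.

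So your per-task coupling correctly shows that the exact $\trueloss$ is weakly monotone in $\epsilon$, but it cannot deliver the corollary. It gives no strictness: that fails when no task is learnable, or when the floored thresholds $k_\mathrm{i},k_\mathrm{e}$ coincide despite $T'>T$. And by itself it says nothing about $L(N)$ or $L(D)$. In the $\Delta'<\Delta$ branch, the ``for all $N,D$'' holds only where the approximate \emph{unlearned} mass is below $1$. That hypothesis has to be imposed and used, not inherited from the exact sums.

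\textbf{Your fallback is the right route, and it is the paper's proof.} The paper combines \Cref{lem:param-monotonic} and \Cref{lem:tokens-monotonic} and works only with the closed forms.
\begin{itemize}
\item For $L(N)$ with $\Delta'=\Delta$, the irreducible term is constant and only the coefficient grows with $\epsilon$.
\item For $L(N)$ with $\Delta'<\Delta$, the paper applies \Cref{lem:irreducible-loss-lowers-loss-everywhere} to $L(N)=a+(b-a)f(N)$, where $a=L^\epsilon_\infty$, $b=L_0$ and $f(N)=\kappa\,(C+\epsilon(C'-C))^\alpha N^{-\alpha}$. Its hypothesis $f\in(0,1)$ is your condition $\kappa N^{-\alpha}(C'')^\alpha\le 1$. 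Note that it is this unlearned mass, not the ``learned fraction'' as you wrote, that must be below $1$.
\item For $L(D)$, the paper differentiates in $\epsilon$ exactly as you do. It then uses $T'\ge T$ to bound the slope below by $(\Delta-\Delta')\bigl(1-\kappa'(T/D)^{\alpha/(\alpha+1)}\bigr)$.
\end{itemize}

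A clean sufficient condition for positivity of these factors is that at least one task is learnable. Since $\zeta(\alpha+1)>1/\alpha$, we have $n^{-\alpha}/(\alpha\,\zeta(\alpha+1))<1$ for all $n\ge 1$. This gives $f(N)<1$ for $N\ge C'$, and $\kappa'(T/D)^{\alpha/(\alpha+1)}<1$ for $D\ge\zeta(\alpha+1)\,T$. The paper phrases the latter step via $D\ge T$, which suffices only when $\alpha\,\zeta(\alpha+1)^{1/(\alpha+1)}>1$.

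In the remaining branch ($\Delta'=\Delta$, $C'>C$, $T'>T$), both slopes are positive with no regime condition. So commit to the fallback, state the regime explicitly, and drop the claim that the exact-level argument ``controls the approximation.''
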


See justification in \Cref{sec:main-scaling-results}.
\Cref{cor:efficient-scaling} predicts that increased expressive power will translate into more efficient model scaling.
Conceptually, we can understand two separate beneficial effects, coming from \Cref{assm:loss-incurred,assm:less-succint}, respectively.
First, if we instantiate \Cref{assm:loss-incurred} such that inexpressible tasks achieve a strictly lower loss reduction ($\Delta' < \Delta$), the \emph{irreducible loss} $L^\epsilon_\infty$ part of the scaling laws is affected:

\begin{restatable}[Expressivity Can Shift Irreducible Loss]{corollary}{irreducibleLoss} \label{cor:irreducible-loss}
    If and only if $\Delta' < \Delta$, irreducible loss $L^\epsilon_\infty$ strictly decreases as $\epsilon$ decreases.
\end{restatable}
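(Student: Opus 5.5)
The plan is to read off the irreducible loss from \Cref{thm:scaling-efficiency}, which gives the closed form $L^\epsilon_\infty = L_0 - (1-\epsilon)\Delta - \epsilon\Delta'$. We treat this as an affine function of $\epsilon \in [0,1]$ and compute its derivative. Rewriting gives
\begin{equation*}
    L^\epsilon_\infty = (L_0 - \Delta) + \epsilon\,(\Delta - \Delta') ,
\end{equation*}
so $\partial L^\epsilon_\infty / \partial \epsilon = \Delta - \Delta'$. By \Cref{assm:loss-incurred} we have $\Delta' \leq \Delta$, so this slope is always nonnegative.

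For the forward direction, suppose $\Delta' < \Delta$. Then the slope is strictly positive, and for any $\epsilon_1 < \epsilon_2$ we get $L^{\epsilon_2}_\infty - L^{\epsilon_1}_\infty = (\epsilon_2 - \epsilon_1)(\Delta - \Delta') > 0$. Hence decreasing $\epsilon$ strictly decreases the irreducible loss. For the converse, argue by contrapositive. If $\Delta' \not< \Delta$, then $\Delta' = \Delta$ by \Cref{assm:loss-incurred}. The $\epsilon$-term then vanishes and $L^\epsilon_\infty = L_0 - \Delta$ is constant in $\epsilon$, so it does not strictly decrease.

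The only point needing care is the justification of the formula for $L^\epsilon_\infty$ itself, since the corollary inherits it from \Cref{thm:scaling-efficiency}. If I wanted to make it self-contained, I would note that as $N, D \to \infty$ every task is eventually learned. Under \Cref{assm:expressible}, the expected per-token loss is then $\sum_k p_k \bigl[(1-\epsilon)(L_0-\Delta) + \epsilon(L_0-\Delta')\bigr]$. Since $\sum_k p_k = 1$, this equals the stated expression, by independence of expressibility from frequency and linearity of expectation. With that in hand, the corollary is a one-line monotonicity argument, and I expect no real obstacle.
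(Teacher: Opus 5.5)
Your proposal is correct and follows the paper's argument: the paper likewise takes $L^\epsilon_\infty = L_0 - (1-\epsilon)\Delta - \epsilon\Delta'$ from \Cref{lem:task-scaling-expressivity} and notes that it is strictly increasing in $\epsilon$ exactly when $\Delta' < \Delta$. Your contrapositive for the converse (using $\Delta' \leq \Delta$) and your derivation of the formula by letting every task be learned simply spell out what the paper calls ``straightforward.''
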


As visualized in the top row of \Cref{fig:scaling-theory-example}, this leads to a clear difference in loss for large $N$ and $D$ (when the irreducible loss starts to dominate), even though the loss curves start more similarly for small $N$ and $D$.
In contrast, if \Cref{assm:loss-incurred,assm:less-succint} are instantiated so that inexpressible tasks can be fully learned $(\Delta' = \Delta)$ but require more parameters and tokens ($C' > C$, $T' > T$), the picture is different: we see dramatic differences in loss for small values of $N$ and $D$ based on $\epsilon$, but these differences go to 0 in the limit of large $N$ and $D$, as illustrated in the middle row of \Cref{fig:scaling-theory-example}.
Finally, if we combine both assumptions, i.e., inexpressible tasks achieve a lower loss reduction ($\Delta' < \Delta$) and also require more parameters and tokens to learn, we see a gap between the loss curves for all values of $N$ and $D$, as seen in the bottom row of \Cref{fig:scaling-theory-example}.

A natural question is which instantiation of \Cref{assm:expressible,assm:loss-incurred,assm:less-succint} is most consistent with the empirical scaling laws from \Cref{sec:scaling-laws}.
Based on \Cref{cor:irreducible-loss}, lower irreducible loss for more expressive architectures supports a model where expressible tasks have greater loss reduction, i.e., $\Delta' < \Delta$.
Because we do not find clear empirical evidence in \Cref{sec:scaling-laws} that architecture affects irreducible loss, it appears that a quantization model with $\Delta' = \Delta$ may better fit the data, though more precise estimates of the scaling law parameters could change this conclusion.
Further, based on \Cref{cor:efficient-scaling}, a model where expressivity does not improve parameter efficiency requires $C' = C$ and $\Delta' = \Delta$.
Thus, a quantization model with $T' > T$ but $C' = C$ and $\Delta' = \Delta$ appears most consistent with the observed scaling behavior of hybrid models vs.~transformers, though future work that more precisely estimates the scaling coefficients (in particular, the irreducible loss) for these architectures could change these conclusions.

\subsection{Discussion: From Expressivity to Scaling}

Through controlled scaling studies across model sizes and data budgets, we showed our hybrid architecture can attain better data and parameter efficiency compared to the transformer baseline.
This matches the improved pretraining efficiency (both in terms of tokens and compute spent) on Common Crawl, MMLU, and other evaluations that we observed for the \model pretraining run compared to Olmo 3.
In \Cref{sec:expressivity-to-scaling}, we argued that the pretraining efficiency of hybrid models could be explained by their greater expressivity relative to transformers, as standard explanations of scaling laws can be minimally extended to predict that more expressive models should have better loss curves.

In particular, \Cref{thm:scaling-efficiency} provides a conceptual explanation for why more expressive architectures might scale better.
In line with our empirical findings (\Cref{sec:scaling-laws}), greater expressivity improves loss across the scaling curve under our formal model but does not affect the scaling law exponent, which depend only on $\alpha$, which parameterizes the task power law underlying the data distribution.
Nevertheless, expressivity reduces loss across the loss curve, in line with the comparison of hybrid models and transformers in \Cref{sec:scaling-laws}.
As mentioned earlier, some instantiations of the quantization model (cf.~\Cref{assm:loss-incurred,assm:less-succint}) also predict improved irreducible loss and parameter efficiency for more expressive models, which we do not find clear evidence for in \Cref{sec:scaling-laws}.

Interestingly, and perhaps counterintuitively, even tasks that were already expressible by an architecture can be learned faster when expressivity is increased.
Under \Cref{assm:less-succint}, learning inexpressible tasks more efficiently frees up parameters than can be allocated to learning (potentially already expressible) tasks faster.
This mirrors results reported by \citet{hu2025circuits}, where allowing transformers to learn syntax more efficiently seemed to enable more efficient learning of other aspects of language.

We emphasize that our theoretical results were obtained in the simplified quantization model of scaling laws \citep{michaud2023quantization}, rather than by analyzing the actual learning dynamics of LMs.
While this setting is somewhat simplistic, it nicely captures fundamental properties of language modeling such as its multi-task data distribution.
We thus take it to provide a plausible conceptual explanation for the link between expressivity and scaling improvements,
though we caution against reading too much into the precise quantitative predictions such as the scaling law coefficients and exponents (cf.~\citealp{michaud2025quanta}).
There is ample opportunity for more in-depth theoretical work to expand the analysis presented here to more realistic models of training, as well as empirical work that tests predictions of different theory variants.

While expressivity is one clear advantage of GDN-based hybrid models, other factors may also contribute to the performance gains, such as increased training stability, which we explore in \Cref{sec:pretraining}.
In \Cref{sec:architecture-ablations}, we see that GDN hybrid models shows better scaling trends than hybrid models using Mamba, whose expressivity is constrained to $\TC^0$, in line with the hypothesized link between expressivity and scaling.
However, GDN without the negative eigenvalue extension (which is thought to be less expressive; cf.~\Cref{sec:rnn-expressivity}) shows very similar scaling trends to GDN with negative values, in potential disagreement with theory.
This could suggest that some other benefit of GDN beyond expressivity explains its improved scaling relative to transformers, though it is also an open question whether GDN, even without negative eigenvalues, could have expressivity advantages over transformers such as the ability to solve some $\NC^1$-complete problems.

\section{Other Research Questions} \label{sec:other-questions}

\subsection{RNN and Hybridization Architecture Choices} \label{sec:architecture-ablations}

To decide on the final hybrid architecture for \model, we ran a series of ablation experiments evaluating the performance of different sequence mixers and hybridization strategies.
In particular, we were interested in three key design questions:
\begin{enumerate}
    \item \textbf{RNN Architecture:} Which popular linear RNN architecture (GDN vs.\ Mamba2) results in the best hybrid model?
    \item \textbf{Layer Placement:} How should hybridization be performed---should attention layers be interleaved uniformly throughout the model, or concentrated in specific regions?\footnote{We only consider inter-layer hybridization and did not test intra-layer hybridization strategies such as those explored in \citet{ren2025samba}.}
    \item \textbf{Attention Ratio:} What is the optimal ratio of attention-to-RNN layers?
\end{enumerate}
To inform the final model choice and provide guidelines for future work on hybrid models, we evaluated a comprehensive suite of pure and hybrid architectures at multiple scales, illustrated in \Cref{fig:architecture-ablations}.

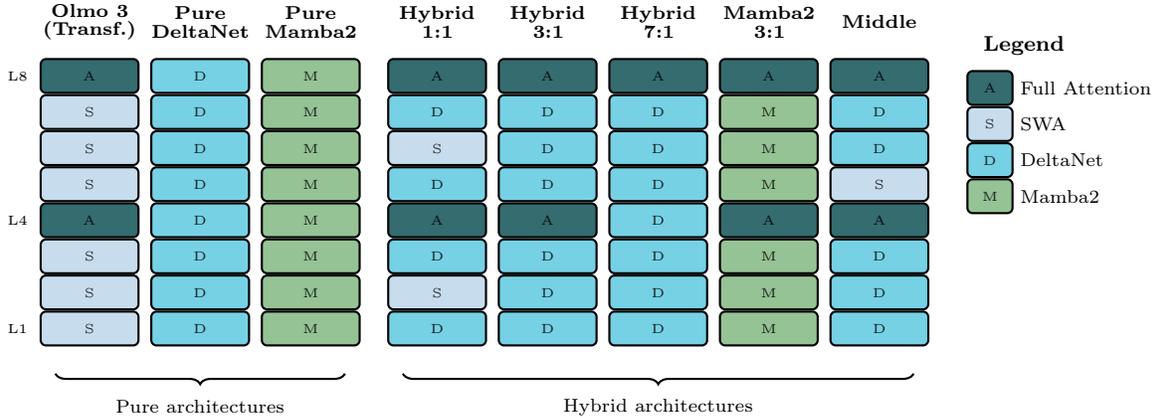
\begin{figure}[t]
    \centering
    \usetikzlibrary{decorations.pathreplacing}
\begin{tikzpicture}[
    xscale=0.7,
    yscale=0.8,
    layer/.style={minimum width=1.3cm, minimum height=0.45cm, draw, thick, font=\tiny, rounded corners=2pt},
    attn/.style={layer, fill=transformercolor, fill opacity=0.85},
    swa/.style={layer, fill=swacolor, fill opacity=0.85},
    deltanet/.style={layer, fill=deltanetcolor, fill opacity=0.85},
    mamba/.style={layer, fill=mambacolor, fill opacity=0.85},
    modellabel/.style={font=\scriptsize\bfseries, align=center, text width=1.5cm},
    grouplabel/.style={font=\scriptsize, align=center},
    bracestyle/.style={decorate, decoration={brace, amplitude=4pt, mirror}},
]

\def\colA{0}      
\def\colB{2.1}    
\def\colC{4.2}    
\def\colD{6.6}    
\def\colE{8.7}    
\def\colF{10.8}   
\def\colFb{12.9}  
\def\colG{15.0}   

\def\layerheight{0.6}
\def\nlayers{8}

\node[modellabel] at (\colA, \nlayers*\layerheight + 0.9) {Olmo 3\\[-1pt](Transf.)};
\node[modellabel] at (\colB, \nlayers*\layerheight + 0.9) {Pure\\[-1pt]DeltaNet};
\node[modellabel] at (\colC, \nlayers*\layerheight + 0.9) {Pure\\[-1pt]Mamba2};
\node[modellabel] at (\colD, \nlayers*\layerheight + 0.9) {Hybrid\\[-1pt]1:1};
\node[modellabel] at (\colE, \nlayers*\layerheight + 0.9) {Hybrid\\[-1pt]3:1};
\node[modellabel] at (\colF, \nlayers*\layerheight + 0.9) {Hybrid\\[-1pt]7:1};
\node[modellabel] at (\colFb, \nlayers*\layerheight + 0.9) {Mamba2\\[-1pt]3:1};
\node[modellabel] at (\colG, \nlayers*\layerheight + 0.9) {Middle};

\node[swa]  at (\colA, 1*\layerheight) {S};
\node[swa]  at (\colA, 2*\layerheight) {S};
\node[swa]  at (\colA, 3*\layerheight) {S};
\node[attn] at (\colA, 4*\layerheight) {A};
\node[swa]  at (\colA, 5*\layerheight) {S};
\node[swa]  at (\colA, 6*\layerheight) {S};
\node[swa]  at (\colA, 7*\layerheight) {S};
\node[attn] at (\colA, 8*\layerheight) {A};

\foreach \i in {1,...,8} {
    \node[deltanet] at (\colB, \i*\layerheight) {D};
}

\foreach \i in {1,...,8} {
    \node[mamba] at (\colC, \i*\layerheight) {M};
}

\node[deltanet] at (\colD, 1*\layerheight) {D};
\node[swa]      at (\colD, 2*\layerheight) {S};
\node[deltanet] at (\colD, 3*\layerheight) {D};
\node[attn]     at (\colD, 4*\layerheight) {A};
\node[deltanet] at (\colD, 5*\layerheight) {D};
\node[swa]      at (\colD, 6*\layerheight) {S};
\node[deltanet] at (\colD, 7*\layerheight) {D};
\node[attn]     at (\colD, 8*\layerheight) {A};

\node[deltanet] at (\colE, 1*\layerheight) {D};
\node[deltanet] at (\colE, 2*\layerheight) {D};
\node[deltanet] at (\colE, 3*\layerheight) {D};
\node[attn] at (\colE, 4*\layerheight) {A};
\node[deltanet] at (\colE, 5*\layerheight) {D};
\node[deltanet] at (\colE, 6*\layerheight) {D};
\node[deltanet] at (\colE, 7*\layerheight) {D};
\node[attn] at (\colE, 8*\layerheight) {A};

\foreach \i in {1,...,7} {
    \node[deltanet] at (\colF, \i*\layerheight) {D};
}
\node[attn] at (\colF, 8*\layerheight) {A};

\node[mamba] at (\colFb, 1*\layerheight) {M};
\node[mamba] at (\colFb, 2*\layerheight) {M};
\node[mamba] at (\colFb, 3*\layerheight) {M};
\node[attn] at (\colFb, 4*\layerheight) {A};
\node[mamba] at (\colFb, 5*\layerheight) {M};
\node[mamba] at (\colFb, 6*\layerheight) {M};
\node[mamba] at (\colFb, 7*\layerheight) {M};
\node[attn] at (\colFb, 8*\layerheight) {A};

\node[deltanet] at (\colG, 1*\layerheight) {D};
\node[deltanet] at (\colG, 2*\layerheight) {D};
\node[deltanet] at (\colG, 3*\layerheight) {D};
\node[attn]     at (\colG, 4*\layerheight) {A};
\node[swa]      at (\colG, 5*\layerheight) {S};
\node[deltanet] at (\colG, 6*\layerheight) {D};
\node[deltanet] at (\colG, 7*\layerheight) {D};
\node[attn]     at (\colG, 8*\layerheight) {A};

\draw[bracestyle, thick] (\colA-0.65, -0.15) -- (\colC+0.65, -0.15);
\node[grouplabel] at (2.1, -0.7) {Pure architectures};

\draw[bracestyle, thick] (\colD-0.65, -0.15) -- (\colG+0.65, -0.15);
\node[grouplabel] at (10.8, -0.7) {Hybrid architectures};

\begin{scope}[shift={(16.8, 3.5)}]
    \node[anchor=west, font=\footnotesize\bfseries] at (0, 1.8) {Legend};

    \node[attn, minimum width=0.6cm] at (0.3, 1.1) {A};
    \node[anchor=west, font=\scriptsize] at (0.7, 1.1) {Full Attention};

    \node[swa, minimum width=0.6cm] at (0.3, 0.5) {S};
    \node[anchor=west, font=\scriptsize] at (0.7, 0.5) {SWA};

    \node[deltanet, minimum width=0.6cm] at (0.3, -0.1) {D};
    \node[anchor=west, font=\scriptsize] at (0.7, -0.1) {DeltaNet};

    \node[mamba, minimum width=0.6cm] at (0.3, -0.7) {M};
    \node[anchor=west, font=\scriptsize] at (0.7, -0.7) {Mamba2};
\end{scope}

\node[font=\tiny, anchor=east] at (\colA-1.0, 1*\layerheight) {L1};
\node[font=\tiny, anchor=east] at (\colA-1.0, 4*\layerheight) {L4};
\node[font=\tiny, anchor=east] at (\colA-1.0, 8*\layerheight) {L8};

\end{tikzpicture}
    \caption{
        \textbf{Architecture configurations evaluated in our ablation study.}
        We compare pure architectures (Transformer, GDN, Mamba2) against hybrid variants with different linear-to-attention ratios (1:1, 3:1, 7:1), RNN backbones (GDN vs.\ Mamba2 at 3:1), and placement strategies (interleaved vs.\ middle).
        The highlighted configuration (3:1 interleaved with GDN) was selected for the final \model.
    }
    \label{fig:architecture-ablations}
\end{figure}

\paragraph{Experimental Setup.}
All ablation experiments follow the same setup as \Cref{sec:scaling-laws} and use identical training data, optimizer settings, and evaluation protocols, varying only the architecture.
We train models at the same seven scales: $60\text{M}$, $100\text{M}$, $190\text{M}$, $370\text{M}$, $600\text{M}$, $760\text{M}$, and $1\text{B}$ parameters, and each model is trained on $8\times$ Chinchilla-optimal tokens using a WSD-S learning rate schedule.
\Cref{tab:ablation-configs} lists the detailed configurations for each model size.
All hybrid models were implemented with the Flash Linear Attention library based on the Olmo 3 configuration---the only difference is the implementation of the sequence mixing component.\footnote{Thus, the position-wise MLPs were identical between the architectures. This aligns well with the GDN architecture but is slightly non-standard for Mamba2, which, as a Gated Linear Unit, contains non-linear activations in the gating mechanism. However, we found Mamba2 models without additional MLPs to perform worse, which is why we included them in the implementations used in this section.}

As in \Cref{sec:scaling-laws}, we measure performance as the average validation loss, computed as the average cross-entropy across 11 held-out evaluation domains: C4, Dolma Books, Dolma Common Crawl, Dolma pes2o, Dolma Reddit, Dolma Stack, Dolma Wiki, ICE, M2D2 S2ORC, Pile, and WikiText-103. Additionally, we evaluate the models on the \olmothreeeval Easy suite in bits per byte (BPB), reporting the average score across Math, Code, and general reasoning tasks.
We again compute scaling law coefficients; in \Cref{fig:base_easy_avg}, the curves represent the projected loss at each FLOP value at the optimal parameter and token counts, computed by fitting scaling laws analogously to \Cref{sec:scaling-laws}.
See also \Cref{sec:fig-table-details} for more details.

\Cref{fig:base_easy_avg} visually shows the performance of the tested architectures and \Cref{tab:ablation-evals-avg} presents the average performance on the \olmothreeeval Easy suite.
The rest of the section interprets these results to justify the final \model architecture.

\paragraph{RNN Architecture: GDN vs.\ Mamba2.}
We compare the two linear RNN architectures in both pure and hybrid (3:1 interleaved) settings against the transformer baseline.
Pure Mamba2 performed worse than both the transformer and GDN at most scales---particularly larger ones---with average BPB of 0.72 for Mamba2 at the 1B scale compared to 0.68 for the transformer (\Cref{tab:ablation-evals-avg}).
The hybrid Mamba2 configuration (Mamba2 + 3:1 Attn) improved over pure Mamba2 and outperformed the transformer at most scales, but fell behind at 1B (0.698 vs.\ 0.682).
In contrast, pure GDN outperformed the transformer at all scales (e.g., 0.722 vs.\ 0.747 at 760M; 0.677 vs.\ 0.682 at 1B), and the hybrid GDN 3:1 configuration improved further, achieving the best overall results (0.717 at 760M; 0.669 at 1B).
While hybrid Mamba2 is competitive with the transformer, GDN-based models consistently outperform it across Math, Code, and QA domains (\Cref{tab:ablation-evals-detail-a,tab:ablation-evals-detail-b}), confirming GDN as the right choice for the linear component of \model.

\paragraph{Layer Placement: Interleaved vs.\ Middle.}
We compare two hybridization strategies that use the same 3:1 linear-to-attention ratio but place the attention layers differently:
\begin{itemize}
    \item \textbf{Interleaved:} Attention layers placed at regular intervals (every 4th layer).
    \item \textbf{Middle:} Attention layers concentrated in the middle of the network, with an additional attention layer at the final position.
\end{itemize}
Despite having very similar parameter counts (948M vs.\ 932M at 760M; \Cref{tab:ablation-configs}), the interleaved configuration consistently outperforms the middle placement, with the gap growing at larger scales.
Both configurations outperform the transformer baseline, confirming that the benefit of hybridization is robust to placement strategy, though interleaving appears preferable.
One interpretation of the advantage of interleaved placement is that uniformly distributing attention layers allows every part of the network to access global context, whereas concentrating attention in the middle creates a bottleneck through which all ``attention-relevant information'' must pass.
Additionally, connecting to our theoretical results in \Cref{sec:hybrid-expressivity}, interleaving maximizes the number of alternations between layer types, which may unlock additional expressive power: for example, \emph{stacked} compositions of tasks like state-based recall (\Cref{thm:pointer-recall}) could benefit from multiple rounds of alternation between state tracking and recall.
While \Cref{thm:padded-hybrid} shows that a single alternation suffices with padding tokens, in practice, multiple alternations without padding may be more effective.

\paragraph{Linear-to-Attention Ratio: 1:1 vs.\ 3:1 vs.\ 7:1.}
We vary the fraction of attention layers from 50\% down to 12.5\%, using interleaved placement with GDN throughout.
At the smallest scales (60M--190M parameters), the 7:1 ratio (12.5\% attention) tends to perform best or on par with 3:1, suggesting that fewer attention layers can suffice when models are small.
At larger scales (600M--1B), however, the 3:1 ratio consistently achieves the best or second-best performance across all domains (\Cref{tab:ablation-evals-detail-b}), while the 7:1 ratio falls slightly behind despite having more parameters.
The 1:1 ratio (50\% attention) performs comparably to 3:1 at small scales but underperforms at larger scales, suggesting that the higher computational cost of more attention layers does not pay off relative to the GDN layers.
Overall, the 3:1 ratio (25\% attention) offers the best trade-off across scales and domains, and we select it for \model.
Notably, all three interleaved hybrid configurations substantially outperform the transformer at large scales, so the exact ratio is less critical than the decision to hybridize at all.

\paragraph{GDN Architecture: Gate and Eigenvalue Sign.}
\label{par:gdn-arch-ablations}
We additionally ablate two internal GDN design choices---the use of the output gate and the sign of the recurrence eigenvalues---across both pure and hybrid (3:1 interleaved) configurations.
In the GDN, the \emph{output gate} multiplies the RNN output by a learned sigmoid-gated projection, while the \emph{eigenvalue sign} determines whether the recurrence allows oscillatory dynamics (negative EVs) or monotone decay only (positive EVs).
Results are reported in the bottom two sections of \Cref{tab:ablation-evals-avg,tab:ablation-evals-detail-a,tab:ablation-evals-detail-b}.

For \textbf{pure GDN}, removing the gate consistently hurts performance across all scales and domains: both no-gate variants (Neg EV, no gate and Pos EV, no gate) underperform their gated counterparts, confirming the gate is a useful component in a pure-RNN setting.
Interestingly, positive EVs with a gate (Pos EV, gate) perform comparably to or slightly better than negative EVs with a gate at several scales, though the differences are small and inconsistent.

For \textbf{hybrid GDN (3:1)}, the picture changes: the selected architecture (Neg EV, gate) remains the most consistent performer, but the no-gate variants are more competitive in the hybrid setting than in the pure setting, and occasionally outperform the gated variants at individual scales.
This suggests that, in the hybrid setting, attention layers may partially compensate for the removed gate, reducing its marginal value.
Overall, the differences among the four hybrid variants are small (within $\sim$0.01 BPB at most scales), and we retain the gated negative-EV configuration as the default based on its slight edge in consistency.

\begin{table*}[t]
    \centering
    \caption{
    \textbf{Architecture ablation results --- averages.}
    Average \olmothreeeval BPB (across Math, Code, QA) for pure and hybrid architectures at 7 representative scales.
    \textbf{Bold} indicates best; \underline{underline} second best; $^\dagger$ best within section.
    $\bigstar$ marks our selected architecture.
    Note that per-size comparisons should be interpreted with care, as architectures at the same nominal scale differ in actual parameter count (see \Cref{tab:ablation-configs}).
    }
    \label{tab:ablation-evals-avg}
    \small
    \begin{tabular}{@{}l c c c c c c c c@{}}
        \toprule
        \textbf{Architecture} & \textbf{Attn \%} & \textbf{60M} & \textbf{100M} & \textbf{190M} & \textbf{370M} & \textbf{600M} & \textbf{760M} & \textbf{1B} \\
        \midrule
        \multicolumn{9}{l}{\textit{Pure Architectures}} \\
        \quad Transformer & 100\% & 1.155 & 1.037 & 0.950 & 0.839 & 0.781 & 0.747 & 0.682 \\
        \quad GDN & 0\% & 1.080$^\dagger$ & 0.990$^\dagger$ & 0.895$^\dagger$ & \textbf{0.795} & 0.761$^\dagger$ & 0.722$^\dagger$ & 0.677$^\dagger$ \\
        \quad Mamba2 & 0\% & 1.146 & 1.036 & 0.941 & 0.843 & 0.787 & 0.750 & 0.718 \\
        \midrule
        \multicolumn{9}{l}{\textit{Hybrid: Interleaved Attention}} \\
        \quad GDN (1:1) & 50\% & 1.093 & 0.992 & 0.896 & 0.804 & 0.787 & 0.724 & 0.674 \\
        \quad GDN (3:1)$^\bigstar$ & 25\% & \underline{1.077} & 0.986$^\dagger$ & \textbf{0.891} & 0.799$^\dagger$ & 0.754 & \underline{0.717} & 0.669$^\dagger$ \\
        \quad GDN (7:1) & 12.5\% & 1.082 & 0.988 & 0.892 & 0.803 & \textbf{0.748} & 0.721 & 0.675 \\
        \quad Mamba2 (3:1) & 25\% & 1.130 & 1.012 & 0.921 & 0.828 & 0.772 & 0.732 & 0.698 \\
        \midrule
        \multicolumn{9}{l}{\textit{Hybrid GDN (3:1): Middle Placement}} \\
        \quad Interleaved$^\bigstar$ & 25\% & 1.077$^\dagger$ & 0.986$^\dagger$ & \underline{0.891} & 0.799$^\dagger$ & 0.754 & 0.717$^\dagger$ & 0.669$^\dagger$ \\
        \quad Middle & 25\% & 1.087 & 0.988 & 0.899 & 0.800 & 0.754$^\dagger$ & 0.721 & 0.672 \\
        \midrule
        \multicolumn{9}{l}{\textit{Hybrid GDN (3:1): Gate/Eigenvalue Ablations}} \\
        \quad Neg EV, gate$^\bigstar$ & 25\% & 1.077$^\dagger$ & 0.986 & 0.891$^\dagger$ & 0.799 & 0.754 & 0.717 & 0.669 \\
        \quad Neg EV, no gate & 25\% & 1.081 & \textbf{0.976} & 0.894 & 0.801 & 0.752$^\dagger$ & 0.721 & \textbf{0.666} \\
        \quad Pos EV, gate & 25\% & 1.086 & \underline{0.981} & 0.897 & 0.828 & 0.757 & \textbf{0.714} & \underline{0.667} \\
        \quad Pos EV, no gate & 25\% & 1.082 & 0.988 & 0.901 & 0.795$^\dagger$ & 0.754 & 0.717 & 0.670 \\
        \midrule
        \multicolumn{9}{l}{\textit{Pure GDN: Gate/Eigenvalue Ablations}} \\
        \quad Neg EV, gate & 0\% & 1.080 & 0.990 & 0.895 & \underline{0.795} & 0.761 & 0.722 & 0.677 \\
        \quad Pos EV, gate & 0\% & \textbf{1.070} & 0.982$^\dagger$ & 0.893$^\dagger$ & 0.798 & \underline{0.751} & 0.720$^\dagger$ & 0.674$^\dagger$ \\
        \quad Neg EV, no gate & 0\% & 1.087 & 0.999 & 0.904 & 0.809 & 0.761 & 0.731 & 0.680 \\
        \quad Pos EV, no gate & 0\% & 1.092 & 1.002 & 0.901 & 0.806 & 0.765 & 0.727 & 0.678 \\
        \bottomrule
    \end{tabular}
\end{table*}

\paragraph{Summary of Findings.}
The ablation study allows us to answer the three design questions posed at the beginning of this section.
First, \textbf{GDN is preferred over Mamba2} as the linear RNN component: the Mamba2-based models performed worse at every scale in both pure and hybrid configurations, while GDN matched or outperformed the transformer even as a pure architecture.
Second, \textbf{interleaved placement outperforms middle placement}: distributing attention layers uniformly throughout the network consistently yields better results than concentrating them, likely because it allows every part of the network to access global context and maximizes the number of layer-type alternations.
Third, \textbf{the 3:1 linear-to-attention ratio is a good default overall choice}: while the 7:1 ratio is competitive at small scales, the 3:1 ratio provides the most consistent gains at large scales, and the 1:1 ratio does not justify the additional attention cost.

\subsection{\model vs.~Other Open Models} \label{sec:full-comparison}

\textbf{Setup.} We organize our baseline open-weight models into four groups, distinguishing dense vs.\ MoE MLP layers and RNN-only vs.\ Hybrid vs.\ Attention-only architectures. We emphasize that baselines were trained with different datasets and token budgets, making direct comparisons between them less meaningful for evaluating architecture choices.

The closest group to \model are other hybrid models with dense MLP layers: Nemotron-H \citep{nvidia2025nemotronh}, Falcon H1 \citep{zuo2025falcon}, and RecurrentGemma \citep{botev2024recurrentgemma}.
Falcon H1 is a parallel hybrid mixer (also referred to as ``intra-layer hybridization''): it uses Mamba-2 SSM and attention blocks on 100\% of layers and performs a channel-wise concatenation prior to the MLP layer.
Nemotron-H is the closest architecture to ours, with 8\% of layers using self-attention (GQA; \citealp{ainslie2023gqa}) and all other layers using Mamba-2 SSM blocks.
RecurrentGemma applies local self-attention (with a 2K context window) and an SSM using the Griffin architecture \citep{de2024griffin} on 100\% of layers and takes a gated sum.

We also compare to open-weight pure RNN models: Falcon Mamba~\citep{zuo2024falconmamba} and xLSTM~\citep{beck2025xlstm7b}.
Falcon Mamba uses the Mamba 1 architecture on 100\% of layers. xLSTM uses mLSTM~\citep{beck2026xlstm} (a gated linear RNN with a memory state) on 100\% of layers, and is the only model running in FP32.

Finally, we report performance for hybrid models using MoE for MLP layers: Nemotron 3 Nano \citep{nvidia_nemotron_nano_v3_2025}, which alternates between Mamba-2 SSM on 85\% of layers and self-attention (GQA) on 15\% of layers; and Kimi Linear \citep{kimi2025linear}, which uses Kimi DeltaNet on 75\% of layers and self-attention (MLA; \citealp{deepseekv2}) on 25\% of layers.

\textbf{Evaluation Details.} Not all models apply long-context extension to the same window size, so for our base model results on RULER we evaluate up to the maximum supported context window at the end of pretraining. We use the most recent vLLM~\citep{vllm} and transformers versions across all models, except for the pure RNN baselines\footnote{We observed slight differences in scores from those reported in the Olmo 3 paper ($<$0.25\% absolute difference for all tasks for Olmo 3) when using the most recent \texttt{transformers==5.0.0}. As \texttt{transformers==5.0.0} breaks both pure RNN baselines (RecurrentGemma and xLSTM), we use older versions for those baselines.}.

\begin{table}[t!]
\centering
\footnotesize
\setlength\tabcolsep{5pt}
\renewcommand{\arraystretch}{0.9}
\adjustbox{max width=\linewidth}{
{\fontsize{9}{9}\selectfont
\begin{NiceTabular}{@{}Hl
P{38pt}C{38pt}C{38pt}C{38pt}|C{38pt}C{38pt}|C{38pt}C{38pt}|C{38pt}C{38pt}H@{}}
\toprule
& & \multicolumn{4}{c}{\textbf{\texttt{Hybrid Dense}}} & \multicolumn{2}{c}{\textbf{\texttt{Pure RNN Dense}}} & \multicolumn{2}{c}{\textbf{\texttt{Hybrid MoE}}} & \multicolumn{2}{c}{\textbf{\texttt{Dense Baselines}}} \\
&

& \textbf{\model{}} & \textbf{Nemo.-H} & \textbf{Falcon H1} & \textbf{Recurr. Gemma} & \textbf{Falcon Mamba} & \textbf{xLSTM} & \textbf{Nemo. 3 Nano} & \textbf{Kimi Linear} & \textbf{Olmo 3} & \textbf{Qwen 3} \\

\midrule
-- & \# Parameters & 7B & 8B & 7B & 9B & 7B & 7B & 30B A3B & 48B A3B & 7B & 8B \\
-- & \# Train Tokens & 6T & 15T & 12T & 2T & 6T & 2T & 25T & 6T & 6T & 36T \\
-- & Train Compute ($10^{23}$ FLOPs) & 2.6 & 7.2 & 5.5 & 1.0 & 2.5 & 0.9 & 5.4 & 1.0 & 2.6 & 17.3 \\


\midrule
\rowcolor{midgrey}   -- & {\textbf{\olmothreeeval \fontsize{9}{9}\selectfont~Math}}                                              & 55.1                        & 54.6                & 65.7                  & 32.1                      & 33.7                     & 18.3              & 53.2                          & 68.5                         & 54.6               & 67.2              \\
\rowcolor{lightgrey} -- & GSM8k                                                                                                  & 74.3                        & 76.8                & 80.9                  & 49.6                      & 57.1                     & 32.8              & 86.3                          & 84.9                         & 75.2               & 84.2              \\
\rowcolor{lightgrey} -- & GSM Symbolic                                                                                           & 49.2                        & 55.4                & 64.8                  & 25.3                      & 25.8                     & 10.8              & 68.6                          & 66.7                         & 48.4               & 65.4              \\
\rowcolor{lightgrey} -- & MATH                                                                                                   & 41.8                        & 31.7                & 51.4                  & 21.3                      & 18.2                     & 11.3              & 4.6                           & 54.0                         & 40.1               & 52.0              \\
\midrule
\rowcolor{midgrey}   -- & {\textbf{\olmothreeeval \fontsize{9}{9}\selectfont~Code}}                                              & 32.4                        & 37.1                & 45.3                  & 23.7                      & 14.6                     & 3.1               & 47.3                          & 30.3                         & 30.9               & 46.2              \\
\rowcolor{lightgrey} -- & BigCodeBench                                                                                           & 35.1                        & 40.3                & 40.5                  & 20.2                      & 0.2                      & 4.0               & 45.7                          & 44.5                         & 34.8               & 43.1              \\
\rowcolor{lightgrey} -- & HumanEval                                                                                              & 49.0                        & 60.1                & 61.0                  & 34.5                      & 0.1                      & 13.3              & 77.1                          & 72.4                         & 49.0               & 71.2              \\
\rowcolor{lightgrey} -- & DeepSeek LeetCode                                                                                      & 2.2                         & 4.1                 & 3.5                   & 0.6                       & 0.1                      & 0.0               & 10.1                          & 1.1                          & 1.5                & 8.8               \\
\rowcolor{lightgrey} -- & DS 1000                                                                                                & 21.1                        & 24.7                & 29.0                  & 19.1                      & 16.3                     & 2.3               & 33.3                          & 34.3                         & 20.6               & 33.3              \\
\rowcolor{lightgrey} -- & MBPP                                                                                                   & 50.3                        & 58.0                & 63.5                  & 33.7                      & 37.5                     & 1.3               & 66.4                          & 58.3                         & 43.6               & 65.8              \\
\rowcolor{lightgrey} -- & MultiPL HumanEval                                                                                      & 29.4                        & 32.3                & 59.7                  & 21.6                      & 15.7                     & 0.2               & 48.3                          & 1.1                          & 28.8               & 52.5              \\
\rowcolor{lightgrey} -- & MultiPL MBPPP                                                                                          & 39.5                        & 40.2                & 59.8                  & 36.4                      & 32.1                     & 0.3               & 50.0                          & 0.6                          & 38.3               & 48.5              \\
\midrule
\rowcolor{midgrey}   -- & {$\textbf{\olmothreeeval \fontsize{9}{9}\selectfont~MC}_\textbf{\fontsize{6}{6}\selectfont~STEM}$}     & 70.0                        & 72.4                & 75.7                  & 61.6                      & 64.2                     & 36.9              & 78.6                          & 77.5                         & 66.2               & 78.7              \\
\rowcolor{lightgrey} -- & ARC MC                                                                                                 & 90.8                        & 93.3                & 94.2                  & 82.6                      & 85.7                     & 46.7              & 94.8                          & 94.3                         & 89.2               & 95.4              \\
\rowcolor{lightgrey} -- & MMLU STEM                                                                                              & 64.6                        & 64.2                & 73.8                  & 49.6                      & 52.0                     & 34.3              & 73.8                          & 67.8                         & 59.7               & 76.7              \\
\rowcolor{lightgrey} -- & MedMCQA MC                                                                                             & 52.1                        & 56.2                & 59.9                  & 47.3                      & 48.9                     & 31.4              & 64.2                          & 64.6                         & 48.0               & 63.5              \\
\rowcolor{lightgrey} -- & MedQA MC                                                                                               & 48.7                        & 54.5                & 55.9                  & 39.4                      & 42.5                     & 23.2              & 64.9                          & 66.6                         & 41.6               & 62.0              \\
\rowcolor{lightgrey} -- & SciQ MC                                                                                                & 93.9                        & 94.1                & 94.7                  & 88.9                      & 91.9                     & 48.8              & 95.0                          & 94.0                         & 92.8               & 96.0              \\
\midrule
\rowcolor{midgrey}   -- & {$\textbf{\olmothreeeval \fontsize{9}{9}\selectfont~MC}_\textbf{\fontsize{6}{6}\selectfont~Non-STEM}$} & 80.4                        & 80.7                & 84.1                  & 71.1                      & 74.2                     & 39.9              & 83.9                          & 76.2                         & 78.2               & 84.9              \\
\rowcolor{lightgrey} -- & MMLU Humanities                                                                                        & 71.6                        & 76.8                & 78.7                  & 62.1                      & 65.8                     & 37.6              & 80.5                          & 79.0                         & 69.2               & 78.6              \\
\rowcolor{lightgrey} -- & MMLU Social Sci.                                                                                       & 79.7                        & 80.5                & 84.4                  & 68.4                      & 70.8                     & 39.5              & 85.1                          & 73.4                         & 75.2               & 84.9              \\
\rowcolor{lightgrey} -- & MMLU Other                                                                                             & 71.0                        & 73.2                & 76.3                  & 63.9                      & 64.1                     & 38.7              & 78.0                          & 78.2                         & 66.8               & 76.7              \\
\rowcolor{lightgrey} -- & CSQA MC                                                                                                & 78.4                        & 75.9                & 78.1                  & 67.8                      & 73.0                     & 32.5              & 75.3                          & 52.8                         & 75.2               & 84.1              \\
\rowcolor{lightgrey} -- & PiQA MC                                                                                                & 82.7                        & 85.9                & 87.8                  & 76.4                      & 83.5                     & 51.5              & 90.0                          & 82.8                         & 80.2               & 89.9              \\
\rowcolor{lightgrey} -- & SocialIQA MC                                                                                           & 81.1                        & 78.6                & 81.5                  & 75.5                      & 77.5                     & 34.1              & 80.7                          & 78.8                         & 80.3               & 83.3              \\
\rowcolor{lightgrey} -- & CoQA Gen2MC MC                                                                                         & 93.9                        & 91.6                & 94.4                  & 83.3                      & 86.5                     & 33.6              & 92.2                          & 91.2                         & 92.5               & 93.6              \\
\rowcolor{lightgrey} -- & DROP Gen2MC MC                                                                                         & 68.1                        & 60.9                & 81.3                  & 48.7                      & 51.4                     & 33.9              & 70.0                          & 71.9                         & 67.4               & 78.4              \\
\rowcolor{lightgrey} -- & Jeopardy Gen2MC MC                                                                                     & 89.3                        & 92.5                & 91.8                  & 83.9                      & 90.2                     & 46.7              & 94.9                          & 54.1                         & 86.9               & 92.1              \\
\rowcolor{lightgrey} -- & NaturalQs Gen2MC MC                                                                                    & 71.0                        & 76.0                & 73.3                  & 64.2                      & 67.7                     & 34.3              & 79.1                          & 78.1                         & 69.5               & 74.6              \\
\rowcolor{lightgrey} -- & SQuAD Gen2MC MC                                                                                        & 97.0                        & 95.8                & 97.5                  & 87.6                      & 85.4                     & 56.2              & 97.3                          & 97.3                         & 97.0               & 97.4              \\
\midrule
\rowcolor{midgrey}   -- & {\textbf{\olmothreeeval \fontsize{9}{9}\selectfont~GenQA}}                                             & 72.9                        & 71.2                & 71.7                  & 68.5                      & 68.5                     & 34.8              & 78.1                          & 75.7                         & 72.5               & 71.1              \\
\rowcolor{lightgrey} -- & HellaSwag RC                                                                                           & 79.0                        & 84.3                & 80.9                  & 81.3                      & 81.9                     & 61.1              & 86.0                          & 85.0                         & 77.7               & 80.6              \\
\rowcolor{lightgrey} -- & Winogrande RC                                                                                          & 86.2                        & 89.6                & 87.9                  & 86.3                      & 88.3                     & 62.1              & 88.7                          & 88.2                         & 85.6               & 86.4              \\
\rowcolor{lightgrey} -- & Lambada                                                                                                & 70.2                        & 76.0                & 73.0                  & 72.2                      & 80.9                     & 19.7              & 75.9                          & 77.2                         & 68.8               & 72.8              \\
\rowcolor{lightgrey} -- & Basic Skills                                                                                           & 89.7                        & 90.2                & 93.3                  & 83.0                      & 86.1                     & 60.9              & 92.5                          & 93.1                         & 89.5               & 93.4              \\
\rowcolor{lightgrey} -- & DROP                                                                                                   & 72.9                        & 65.7                & 69.4                  & 44.0                      & 41.6                     & 24.4              & 75.0                          & 72.2                         & 71.5               & 57.2              \\
\rowcolor{lightgrey} -- & Jeopardy                                                                                               & 64.0                        & 69.7                & 61.1                  & 66.4                      & 67.1                     & 9.3               & 77.2                          & 77.1                         & 60.3               & 65.1              \\
\rowcolor{lightgrey} -- & NaturalQs                                                                                              & 34.8                        & 37.7                & 27.4                  & 30.6                      & 35.1                     & 1.8               & 45.7                          & 25.9                         & 32.5               & 33.8              \\
\rowcolor{lightgrey} -- & SQuAD                                                                                                  & 92.1                        & 87.9                & 91.4                  & 85.9                      & 75.9                     & 33.9              & 94.0                          & 91.8                         & 93.6               & 89.0              \\
\rowcolor{lightgrey} -- & CoQA                                                                                                   & 67.4                        & 39.9                & 60.7                  & 66.6                      & 59.7                     & 40.4              & 67.7                          & 70.3                         & 72.8               & 61.8              \\
\midrule
\rowcolor{midgrey}   -- & {\textbf{\olmothreeeval \fontsize{9}{9}\selectfont~HeldOut}}                                           &                             &                     &                       &                           &                          &                   &                               &                              &                    &                   \\
\rowcolor{lightgrey} -- & LBPP                                                                                                   & 16.8                        & 26.8                & 30.2                  & 5.8                       & 5.7                      & 0.6               & 33.7                          & 31.1                         & 17.7               & 26.2              \\
\rowcolor{lightgrey} -- & BBH                                                                                                    & 65.2                        & 69.9                & 75.5                  & 53.3                      & 42.8                     & 24.6              & 78.2                          & 68.6                         & 64.0               & 76.5              \\
\rowcolor{lightgrey} -- & MMLU Pro MC                                                                                            & 41.7                        & 44.4                & 50.0                  & 27.6                      & 24.5                     & 11.4              & 53.5                          & 50.7                         & 37.2               & 50.1              \\
\rowcolor{lightgrey} -- & Deepmind Math                                                                                          & 23.4                        & 26.1                & 34.3                  & 17.3                      & 15.5                     & 7.5               & 32.8                          & 40.8                         & 23.6               & 47.6              \\
\midrule
\rowcolor{midgrey}   -- & {\textbf{\fontsize{9}{9}\selectfont~Long-context Eval}}                                                            &                             &                     &                       &                           &                          &                   &                               &                              &                    &                   \\
\rowcolor{lightgrey} -- & RULER 4K                                                                                               & 92.7                        & 84.1                & 92.0                  & 44.9                      & 68.2                     & 28.6              & 96.2                          & 88.5                         & 94.9               & 95.5              \\
\rowcolor{lightgrey} -- & RULER 8K                                                                                               & 91.4                        & 82.0                & 92.0                  & --                        & 49.0                     & 20.0              & 95.2                          & 89.4                         & 91.2               & 94.1              \\
\rowcolor{lightgrey} -- & RULER 16K                                                                                              & 90.0                        & --                  & 89.7                  & --                        & 23.8                     & 12.4              & 93.8                          & 94.0                         & 84.1               & 93.5              \\
\rowcolor{lightgrey} -- & RULER 32K                                                                                              & 86.4                        & --                  & 83.0                  & --                        & 4.7                      & --                & 90.3                          & 90.2                         & 78.3               & 90.1              \\
\rowcolor{lightgrey} -- & RULER 64K                                                                                              & 85.0                        & --                  & 77.2                  & --                        & 0.0                      & --                & 85.9                          & 90.5                         & 67.8               & --                \\


\bottomrule
\end{NiceTabular}}
}
\caption{
Base model performance on \olmothreeeval compared to open-weight baselines using the \olmothreeeval suite. 
\model{} outperforms the dense Olmo 3 on all \olmothreeeval multi-task averages, and both Pure RNN baselines (Falcon Mamba and xLSTM). 
\model{} is also competitive with other open-weight hybrid dense models, like Nemotron-H and Falcon H1, despite being trained on half or fewer tokens.
\model{} was not evaluated on held-out benchmarks prior to release.
RULER was evaluated up to the max context window for each model.
}
\label{tab:base-eval}
\end{table}


\textbf{Findings.} As shown in \Cref{tab:base-eval}, we report base model performance on \olmothreeeval, including reported parameter and token counts. For each baseline, we estimate training compute using the $\text{FLOPs}=6ND$ heuristic from \citet{kaplan2020scalinglaws} and $\text{FLOPs}=6N_{\text{active}}D$ for MoE models following \citet{fedus2022switch, clark2022unified}.
While all dense baselines have a similar number of parameters (7--9B), they were trained across a wide range of token budgets, presenting a clear compute-performance tradeoff.

\model substantially outperforms both pure RNN baselines, including Falcon Mamba (which is closely matched on tokens and parameters), across all task averages in \olmothreeeval. \model also outperforms the older open-weight model xLSTM, trained on roughly 2T tokens.

Across \olmothreeeval task averages, \model 7B shows competitive performance against both Nemotron-H 8B (trained for 15T tokens, 2.5${\times}$ more than \model) and Falcon H1 (trained for 12T tokens, $2{\times}$ as many as \model).

Among hybrid dense models, there is a large discrepancy in token budgets. \model outperforms RecurrentGemma, which was trained on 3${\times}$ fewer tokens. It also appears generally strong for the amount of data it was trained on, matching or outperforming Nemotron-H and Falcon H1 (which have 2${\times}$ larger token budgets) on some tasks, while performing worse on others (e.g., \model only outperforms Falcon H1 on GenQA tasks).

\subsection{Post-Training \model} \label{sec:posttraining}

\begin{table}[t!]
\setlength\tabcolsep{2pt}
\renewcommand{\arraystretch}{0.9}
{\fontsize{8}{8}\selectfont
\begin{center}
\begin{footnotesize}
\begin{tabular}{l|ccccc|cccc|ccc|ccc}
\toprule
    & \multicolumn{5}{c|}{\textbf{\texttt{Knowledge \& Reasoning}}}
    & \multicolumn{4}{c|}{\textbf{\texttt{Math}}}
    & \multicolumn{3}{c|}{\textbf{\texttt{Code}}}
    & \multicolumn{3}{c}{\textbf{\texttt{Chat \& IF}}} \\
    {\textbf{\fontsize{8}{8}\selectfont~Model}} &
    {\textbf{\fontsize{7}{7}\selectfont~MMLU}} &
    {\textbf{\fontsize{7}{7}\selectfont~PopQA}} &
    {\textbf{\fontsize{7}{7}\selectfont~BBH}} &
    {\textbf{\fontsize{7}{7}\selectfont~GPQA}} &
    {\textbf{\fontsize{7}{7}\selectfont~Zebra}} &
    {\textbf{\fontsize{7}{7}\selectfont~MATH}} &
    {\textbf{\fontsize{7}{7}\selectfont~$\Omega$}} &
    {\textbf{\fontsize{6}{6}\selectfont~AIME'25}} &
    {\textbf{\fontsize{6}{6}\selectfont~AIME'24}} &
    {\textbf{\fontsize{7}{7}\selectfont~HE+}} &
    {\textbf{\fontsize{7}{7}\selectfont~MBPP+}} &
    {\textbf{\fontsize{7}{7}\selectfont~LCB}} &
    {\textbf{\fontsize{7}{7}\selectfont~IFEval}} &
    {\textbf{\fontsize{7}{7}\selectfont~IFB}} &
    {\textbf{\fontsize{6}{7}\selectfont~AE3}}
    \\
\midrule

\rowcolor{lightgrey}    Olmo 3 Think SFT & 74.9 & 20.8 & 84.1 & 45.8 & 57.9 & 94.4 & 37.8 & 57.6 & 69.6 & 88.2 & 63.2 & 67.8 & 77.9 & 30.0 & 43.9 \\
\rowcolor{lightgrey}    \model{} Think SFT & 80.5 & 25.1 & 84.6 & 47.0 & 55.1 & 93.8 & 35.1 & 55.2 & 66.2 & 86.3 & 63.7 & 65.5 & 80.4 & 31.6 & 49.0 \\
\midrule

\rowcolor{midgrey}      Olmo 3 Instruct SFT & 67.1 & 16.5 & 51.0 & 30.0 & 18.0 & 65.1 & 14.4 & 7.2 & 6.7 & 69.8 & 56.5 & 20.0 & 81.7 & 27.4 & 21.8 \\
\rowcolor{midgrey}      \model{} Instruct SFT & 71.9 & 16.8 & 47.3 & 36.8 & 17.0 & 66.7 & 16.0   & 8.8 & 6.7 & 69.2 & 55.3 & 21.3 & 81.5 & 29.0 & 25.6 \\
\midrule

\rowcolor{ai2lightpink} Olmo 3 DPO & 69.1 & 20.7 & 69.3 & 37.9 & 28.4 & 79.6 & 22.8 & 20.4 & 23.5 & 72.9 & 55.9 & 18.8 & 82.0 & 29.3 & 43.3 \\
\rowcolor{ai2lightpink} \model{} DPO & 73.6 & 21.0 & 57.3 & 38.0 & 29.1 & 72.9 & 19.5 & 10.2 & 10.1 & 75.1 & 56.9 & 22.0 & 80.5 & 33.3 & 56.3 \\

\midrule
\multicolumn{16}{c}{\textit{Hybrid $-$ Dense ($\Delta$)}} \\
\midrule

$\Delta$ Think SFT & \cellcolor{green!15}+5.6 & \cellcolor{green!11}+4.3 & +0.5 & \cellcolor{green!3}+1.2 & \cellcolor{red!7}$-$2.8 & \cellcolor{red!2}$-$0.6 & \cellcolor{red!7}$-$2.7 & \cellcolor{red!6}$-$2.4 & \cellcolor{red!9}$-$3.4 & \cellcolor{red!5}$-$1.9 & +0.5 & \cellcolor{red!6}$-$2.3 & \cellcolor{green!7}+2.5 & \cellcolor{green!4}+1.6 & \cellcolor{green!13}+5.1 \\
$\Delta$ Instruct SFT & \cellcolor{green!13}+4.8 & +0.3 & \cellcolor{red!10}$-$3.7 & \cellcolor{green!18}+6.8 & \cellcolor{red!3}$-$1.0 & \cellcolor{green!4}+1.6 & \cellcolor{green!4}+1.6 & \cellcolor{green!4}+1.6 & $-$0.0 & \cellcolor{red!2}$-$0.7 & \cellcolor{red!3}$-$1.2 & \cellcolor{green!3}+1.3 & $-$0.3 & \cellcolor{green!4}+1.6 & \cellcolor{green!10}+3.8 \\
$\Delta$ DPO & \cellcolor{green!12}+4.5 & +0.3 & \cellcolor{red!31}$-$12.0 & +0.1 & \cellcolor{green!2}+0.7 & \cellcolor{red!18}$-$6.7 & \cellcolor{red!9}$-$3.3 & \cellcolor{red!27}$-$10.2 & \cellcolor{red!35}$-$13.4 & \cellcolor{green!6}+2.2 & \cellcolor{green!3}+1.0 & \cellcolor{green!8}+3.2 & \cellcolor{red!4}$-$1.5 & \cellcolor{green!10}+4.0 & \cellcolor{green!34}+13.0 \\

\bottomrule
\end{tabular}
\end{footnotesize}
\vspace{2mm}
\caption{
Post-training evaluation comparison between Olmo 3 (dense) and \model{} (hybrid) across three training stages: Think SFT, Instruct SFT, and DPO.
$\Omega$ = Omega Full. HE+ = HumanEvalPlus. IFB = IFBench. AE3 = AlpacaEval 3.
}
\label{tab:post_training_overview}
\end{center}
}
\end{table}

As a preliminary investigation of hybrid models' capabilities after post-training, we apply a portion of the Olmo 3 post-training pipeline to create a preliminary Instruct version of \model.
Following \citet{olmo2025olmo3}, this involves three stages: thinking SFT on long reasoning traces, then an instruction-tuning phase \textit{without} the model first thinking, and finally direct preference optimization (DPO; \citealp{rafailov2024direct}).
In this section, we compare \model and Olmo 3 7B at each stage and discuss details and challenges we encountered when adapting our post-training recipe to the hybrid architecture.

\paragraph{Data Changes.}
Relative to Olmo 3, there is one minor change to the Think SFT stage: the addition of function-calling data for tool use.
The lack of advanced tool use in our Think models is a known limitation for building agentic models; this new data is one part of broader changes planned for future Olmo models.
We use the same prompts from the Olmo 3 Instruct SFT models with new thinking traces, reusing thinking traces from DR Tülu \citep{shao2025drtulu} for the Web Search QA prompts and generating reasoning traces with GPT-4.1 for the remaining data, upsampling each data point 3$\times$ to increase token coverage.\footnote{The new Think SFT dataset is available at: \url{https://hf.co/datasets/allenai/Dolci-Think-SFT-Olmo-Hybrid}}
The Instruct SFT and DPO data are identical to the Olmo 3 7B and 32B Instruct SFT and DPO variants.\footnote{SFT: \url{https://hf.co/datasets/allenai/Dolci-Instruct-SFT}. DPO: \url{https://hf.co/datasets/allenai/Dolci-Instruct-DPO}}

\paragraph{Decoding Settings.}
Finding a stable configuration for \model required careful attention to vLLM flag settings and implementation details, particularly for evaluation.
Correct generation and throughput were both sensitive to careful choice of these settings, although this will likely change as hybrid models are more widely adopted and open-source tooling improves.
The two key flags needed to get maximum performance with the post-trained models were \verb|--disable-cascade-attn|, which disables cascade attention (an optimization for shared prompt prefixes), and \verb|--enforce-eager|, which disables torch compilation.
These two flags have been used in our RL setup dating back to Olmo 3, but are new additions to evaluations, and scores drop precipitously without them.
We hypothesize that \verb|--enforce-eager| matters more for \model than for a dense transformer because torch compilation can introduce subtle numerical differences that compound across recurrent GDN state updates in a way that standard attention layers do not experience.
Consistent with this interpretation, re-enabling compilation while storing the GDN cache in FP32 via \verb|--mamba_ssm_cache_dtype float32| \citep{nvidia2025nvidianemotronnano2} recovers similar scores, suggesting that repeated low-precision downcasting across recurrent steps is part of the issue.

\begin{table}[t]
\centering
\small
\begin{tabular}{llrrrrr}
\toprule
\textbf{Model} & \textbf{Metric} & \textbf{1K} & \textbf{4K} & \textbf{8K} & \textbf{16K} & \textbf{32K} \\
\midrule
\multirow{4}{*}{Olmo 3 7B (MHA)}
  & Tokens/sec/node                  & 2{,}909 & 2{,}164 & 1{,}510 & 1{,}690 & 1{,}247 \\
  & Inference GPUs (RL batch 1K)    &      32 &      48 &      56 &      64 &      96 \\
  & Generation wall time (h)        &    10.5 &    56.7 &   162.5 &   283.7 &   768.6 \\
  & GPU-hours ($\times$1K)          &     0.3 &     2.7 &     9.1 &    18.2 &    73.8 \\
\midrule
\multirow{4}{*}{Olmo Hybrid 7B}
  & Tokens/sec/node                   & 3{,}039 & 3{,}077 & 2{,}736 & 1{,}876 & 1{,}422 \\
  & Inference GPUs (RL batch 1K)    &       8 &      16 &      24 &      40 &      64 \\
  & Generation wall time (h)        &    10.1 &    39.9 &    89.6 &   261.5 &   689.9 \\
  & GPU-hours ($\times$1K)          &     0.1 &     0.6 &     2.2 &    10.5 &    44.2 \\
\midrule
\multirow{4}{*}{\shortstack[l]{Olmo Hybrid 7B \\(enforce eager)}}
  & Tokens/sec/node                  & 1{,}410 & 1{,}565 & 1{,}592 & 1{,}495 & 1{,}066 \\
  & Inference GPUs (RL batch 1K)    &       8 &      16 &      24 &      40 &      64 \\
  & Generation wall time (h)        &    21.7 &    78.4 &   154.1 &   328.1 &   920.4 \\
  & GPU-hours ($\times$1K)          &     0.2 &     1.3 &     3.7 &    13.1 &    58.9 \\
\bottomrule
\end{tabular}
\caption{Inference-bound scaling for async RL generation across context lengths.
In async RL, each training step requires generating a full batch of rollouts before the policy update;
generation throughput is the bottleneck.
We measure the minimum GPU allocation needed to serve an RL batch of 1{,}024 rollouts concurrently and
project generation wall time and total GPU-hours over 1{,}725{,}440 episodes (1{,}684 steps).
}
\label{tab:throughput}
\end{table}

\paragraph{Inference Throughput.}
We conducted early investigations into completing our recipe with reinforcement learning with verifiable rewards (RLVR; \citealp{lambert2025tulu3}).
To quantify basic inference throughput---the bottleneck in our synchronous Olmo RL setup~\citep{olmo2025olmo3}---we benchmarked \model{} on a single 8$\times$A100 node with a 2,048-token prompt, 2 degrees of tensor parallelism, and a batch size of 16 prompts, each prompt sampled 4$\times$. 
The details are shown in Table~\ref{tab:post_training_overview}.
Without \verb|--enforce-eager|, \model is competitive with or faster than the dense
OLMo 3 7B
(4K: 3,023 vs.\ 2,164, 8K: 2,247 vs.\ 1,510, 16K: 1,486 vs.\ 1,690, and 32K: 1,499 vs.\
1,247 tokens/sec/node),
largely because the gated delta net layers reduce KV cache memory pressure
compared to the full multi-head attention used in Olmo 3 7B (which lacks GQA).
However, enabling \verb|--enforce-eager| -- as required for numerical stability -- substantially reduces
throughput (4K: 1,425, 8K: 1,365, 16K: 1,197, and 32K: 654 tokens/sec),
roughly 0.5--0.9$\times$ that of the dense baseline.
The \verb|--enforce-eager| flag was also used in our Olmo 3 training runs, but the dense model did not suffer from this substantial slowdown.\footnote{Subsequent improvements to our vLLM implementation have recovered much of the throughput gap between \model and OLMo 3 7B in eager mode: \url{https://github.com/vllm-project/vllm/pull/32550\#issuecomment-3994432476}.}

\paragraph{Post-Training Performance.}
\Cref{tab:post_training_overview} shows the performance of the Think SFT, Instruct SFT, and DPO checkpoints for \model compared to Olmo 3.
Overall, the stronger pretraining performance translates to persistent gains on knowledge tasks, but the model still lags behind Olmo 3 on extended reasoning tasks such as AIME and Omega~\citep{sun2025omega}.
We anticipate that adapting the post-training data for the hybrid model could improve performance on these benchmarks.
Beyond the data, early post-training results were also sensitive to decoding kernels and related settings, so improvements there could further close the gap.
Other work has shown that the strongest teacher model does not always improve the downstream student proportionally~\citep{guha2025openthoughts, openthoughts-agent}, which could be base-model-specific, further highlighting the need to iterate on the post-training data beyond what was used for Olmo 3.\footnote{The long-context recipe used for \model also differs slightly from Olmo 3 7B, which could contribute to differences in post-training behavior.}
At the level of both engineering and research, post-training hybrid models is in its infancy;
we will continue to work on these recipes and share our findings with the community. 

One limitation of our current investigation into post-training is that we have not considered the implications of switching to a hybrid architecture for safety.
In future work, it would be interesting to assess whether \model exhibits different behavior on safety evaluations compared to transformers.



\section{Conclusion}

Compared to Olmo 3, \model achieves better pretraining efficiency, which translates to improvements on downstream tasks, including long-context abilities.
While there have been many releases of strong hybrid models mixing recurrence with attention, our results provide evidence for the benefit of hybrid architectures over transformers in a controlled, large-scale setting.
Beyond these headline results, we present theoretical analysis and fully controlled scaling studies that further substantiate these advantages.
In particular, we show that hybrid models are more expressive than the sum of their parts: they can represent synthetic tasks that neither transformers nor RNNs in isolation can.
We also explore the theoretical link between expressivity and language model scaling, showing that standard conceptual explanations for scaling laws predict that increasing expressivity should benefit training efficiency, consistent with our empirical findings.
Overall, \model provides evidence for hybrid models over transformers and a conceptual explanation for their observed gains.
This work raises many questions for future research: post-training hybrid models, optimizing RNN architecture details, and more deeply understanding the connection between an architecture's expressive power and its scaling behavior.

\clearpage
\section*{Author Contributions} \label{sec:contrib}
\begin{compactitem}
    \item \textbf{William Merrill} led the project, contributing to early experiments, pretraining, theory, and writing.
    \item \textbf{Yanhong Li} conducted early proof-of-concept experiments, implemented the HuggingFace and vLLM versions of \model{}, ran evaluations, and owned the synthetic experiments and their writeup.
    \item \textbf{Tyler Romero} ran pretraining, mid-training, and long-context extension. He also optimized training throughput, implemented Ulysses-style context parallelism, improved the throughput and numerical correctness of \model{}'s vLLM implementation, and helped with writing.
    \item \textbf{Anej Svete} ran early proof-of-concept experiments alongside WM, ran the experiments for the empirical scaling laws and architecture ablation sections, and wrote up those sections.
    \item \textbf{Caia Costello} led the collaboration from Lambda's side, including project scoping, data management, preliminary research on \model{} training across B200 and H100 hardware, and real-time support for pretraining.
    \item \textbf{Pradeep Dasigi} generated tool use data for \model{} Think SFT and helped with artifact release logistics.
    \item \textbf{Dirk Groeneveld} helped plan the \model{} pretraining run and analyze the training stability of \model{}.
    \item \textbf{David Heineman} ran generative evaluations for \model{} and open-weight models and wrote up these results.
    \item \textbf{Bailey Kuehl} handled technical aspects of preparing \model{} artifacts and documentation for release.
    \item \textbf{Nathan Lambert} led post-training for \model{} and wrote up post-training results.
    \item \textbf{Chuan Li} provided high-level supervision on the Lambda side of the project.
    \item \textbf{Kyle Lo} contributed to pre-training evaluations, writing, and presentation.
    \item \textbf{Saumya Malik} contributed to post-training methodology (tokenization and chat templates) and evaluation.
    \item \textbf{DJ Matusz} helped validate the compatibility of \model{} pretraining with B200 hardware.
    \item \textbf{Benjamin Minixhofer} implemented FLA in OLMo-core to support the initial experiments for \model{} and also benchmarked inference throughput.
    \item \textbf{Jacob Morrison} contributed to post-training, focusing on SFT.
    \item \textbf{Luca Soldaini} helped plan initial experiments and pretraining, and contributed to vLLM implementation and post-training debugging.
    \item \textbf{Finbarr Timbers} contributed to post-training \model{}, implementing the hybrid model in open-instruct.
    \item \textbf{Pete Walsh} worked on training code, built the Slurm launch system for pretraining on Lambda's cluster, and helped run pretraining.
    \item \textbf{Noah Smith} provided high-level guidance on many aspects of the project.
    \item \textbf{Hannaneh Hajishirzi} provided high-level guidance on many aspects of the project.
    \item \textbf{Ashish Sabharwal} provided guidance on the initial experiments and made core technical contributions to the theoretical aspects of the project.
\end{compactitem}

\section*{Acknowledgments} \label{sec:acks}
The authors thank Taira Anderson, Kyle Wiggers, David Albright, and Stephen Kelman for contributing to the release of \model{}.
WM thanks Songlin Yang and Mehryar Mohri for relevant discussions.
AS acknowledges the support of the ETH AI Center doctoral fellowship.
This research used resources of the Oak Ridge Leadership Computing Facility, which is a DOE Office of Science User Facility supported under Contract DE-AC05-00OR22725.
Additionally, this research used Lambda's computational resources for part of \model{} pretraining; the authors thank Amir Zadeh, Allison Beck, Abhi Sarma, and Long Fei for their support during that phase.
Finally, this material is based upon work supported by the National Science Foundation under Award No. 2413244.

\clearpage
\bibliographystyle{abbrvnat}
\bibliography{references}

\clearpage
\appendix
\section{Training \model{}} \label{sec:training-details}

We elaborate on experiments and implementation details in the development of \model.

\subsection{Pretraining} \label{sec:pretraining}
Each \model{} GDN head is sized proportionately to an Olmo 3 attention head. In line with standard mappings from attention to GDN\footnote{\url{https://github.com/fla-org/flash-linear-attention/blob/f24317a6a4f513748cd7eb05818534ce66029957/fla/layers/gated\_deltanet.py\#L40}} this means that the size of the query and key becomes $d = 3/4 \cdot 128 = 96$. Similarly, the size of the value becomes $2d = 192$.

We used the Flash Linear Attention \citep{yang2024fla} implementation of Gated DeltaNet. In particular, we used the default parameter implemented when instantiating single layer directly (as opposed to the model-level initialization).

Beyond the individual GDN heads, the overall architecture is roughly the same Olmo 3 7B \citep{olmo2025olmo3} with a few small tweaks to the architecture, learning rate schedule and training data:
\begin{itemize}
    \item We removed two heads from the model to make Olmo 3 and \model{} more comparable in parameter count and training throughput (see below).
    \item Rather than using the ad-hoc piecewise learning rate schedule from Olmo 3 7B \citep{olmo2025olmo3}, we use a standard cosine decay to 10\% of the maximum learning rate. However, the learning rate schedules still match closely for the majority of training, especially towards the beginning.
    \item For data, we use the improved data mix from Olmo 3 32B rather than the data mix from Olmo 3 7B---in preliminary experiments, we ran with the Olmo 3 7B data mix and saw similar trends in pretraining evaluation metrics toward the beginning of training.
\end{itemize}

The GDN architecture and 3:1 hybridization ratio were chosen based on early experiments at the scale of 1B parameters and 100B tokens.
We replicate those early experiments as carefully controlled architecture ablations in \Cref{sec:architecture-ablations}.

The model was trained on 512 GPUs. At the beginning of training, these were H100s, but roughly halfway through pretraining we migrated to 512 B200s.

\begin{table}[!t]
    \centering
    \caption{Rough training throughput measurements run before launching the \model{} pretraining run.
    }
    \label{tab:training-throughput}
    \begin{tabular}{lcccc}
        \toprule
        Model & \# Heads & $d_\textrm{model}$ & \# Parameters & Training Throughput (TPS) \\
        \midrule
        Olmo 3 & 32 & 4096 & 6.8B & 8.0K \\
        \model{} & 32 & 4096 & 7.7B & 7.7K \\
        & 31 & 3968 & 7.4B & 7.7K \\
        & 30 & 3840 & 7.0B & 8.2K \\
        \bottomrule
    \end{tabular}
\end{table}

\paragraph{Training Throughput.}
We calibrated training throughput to be comparable to that of Olmo 3 in initial experiments by removing individual heads until it closely matched the transformer.
Concretely, we benchmarked model size and throughput for several training configurations running on 128 H100s.
As shown in \Cref{tab:training-throughput}, the hybrid model with 2 heads removed closely matches the Olmo 3 transformer in terms of parameters and slightly outperforms it in terms of training throughput.
We therefore selected a hybrid architecture with 30 heads for the \model{} 7B training run.

\begin{figure}[!t]
    \centering
    \includegraphics[width=1.0\linewidth]{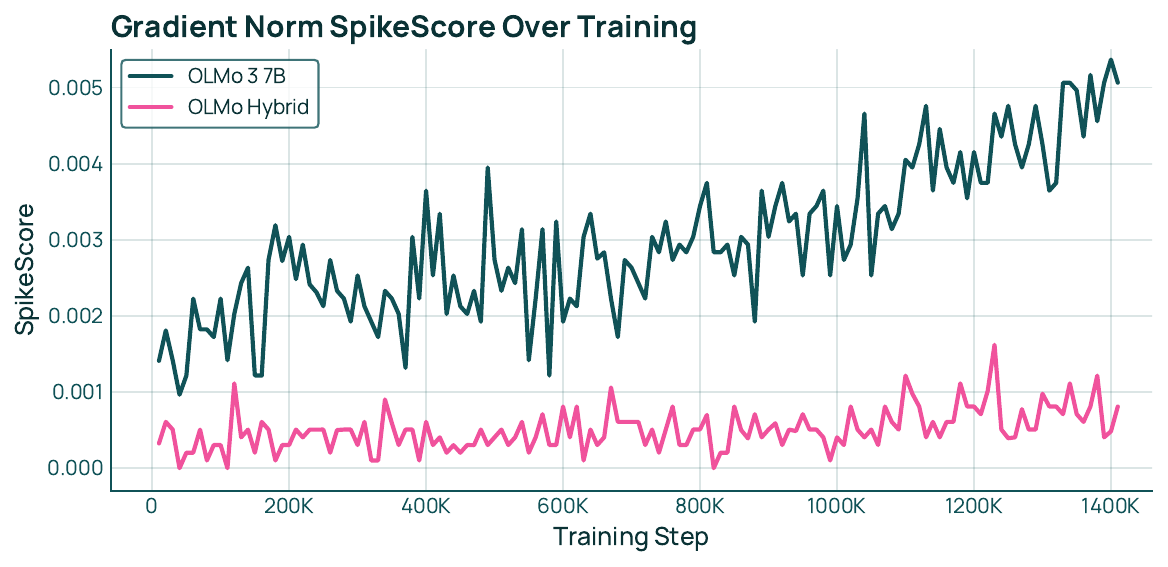}
    \caption{SpikeScore grows throughout training for Olmo 3, but it stays more stable when training \model{}.}
    \label{fig:rnnspikescore}
\end{figure}

\paragraph{Training Stability.}
To estimate the stability of Olmo training runs, we compute a \emph{spike score} as an objective measure. Concretely, we define the spike score as the percentage of values in a time series that are at least six standard deviations away from a rolling average of the last 128 values. We use spike score on the L2 norm of the gradient. This is a slight modification from the method used in~\cite{olmo2}, leading to slightly higher scores.

\Cref{fig:rnnspikescore} shows that Olmo 3 exhibits a high and growing number of spikes in the gradient norm, while \model{} shows a lower, flat trajectory. We take this as preliminary evidence that the \model{} architecture may be more stable (i.e., more tolerant of large learning rates and noisy data) compared to Olmo 3.

\subsection{Mid-Training and Long Context Extension} \label{sec:midtraining}
\paragraph{Mid-Training.}
We adapt our mid-training procedure from Olmo 3 \citep{olmo2025olmo3}, using the Olmo 3 32B mid-training data (the Olmo 3 7B mid-training data mixture with additional light filtering applied).
One notable change is a doubled batch size, motivated by recent insights into the relationship between learning rate and batch size \citep{merrill2025critical}.
Following the Olmo 3 32B recipe, we perform two independent mid-training runs on separate 100B token subsets of Dolma 3 Dolmino Mix and merge the resulting checkpoints.

\paragraph{Long Context Extension.}
After mid-training, we extend context length by continuing training on 100B tokens of Dolma 3 Longmino Mix. We compare two positional encoding strategies for this stage.
The first is YaRN \citep{peng2024yarn}, which was also used for Olmo 3 \citep{olmo2025olmo3}. YaRN modifies the RoPE frequency basis by partitioning dimensions according to frequency: low-frequency components (encoding longer-range position) are interpolated to cover the target context length, while high-frequency components (encoding local position) are left unchanged. An attention temperature factor corrects for the shift in attention logit magnitudes introduced by the interpolation.
The second is DroPE \citep{gelberg2025extendingcontextpretrainedllms}, which removes RoPE entirely during long-context extension. The model then relies on the causal attention mask and any positional signal already captured in its weights. The motivation is that RoPE's rotation frequencies, fit to shorter contexts during pretraining, can limit extrapolation to longer sequences; dropping them removes this constraint.

Both strategies produce strong long-context results for \model{} (\Cref{tab:lc_evals_ruler}), but DroPE shows a clear advantage at the longest evaluation lengths (e.g., 85.0 vs.\ 76.9 on RULER 64k). We attribute this in part to the hybrid architecture: the GDN layers carry implicit positional information through their recurrent structure, so the attention layers are less dependent on explicit positional encodings like RoPE. We therefore adopt DroPE for the released \model{} checkpoint.

To train at these longer sequence lengths, we implemented Ulysses-style context parallelism \citep{jacobs2023deepspeed_ulysses} through both the attention and GDN layers. Ulysses distributes the sequence across devices and uses all-to-all communication to transpose from a sequence-parallel layout to a head-parallel layout before each layer. After the all-to-all, each device holds the full sequence for a subset of heads, which suffices for both attention (where each head attends independently) and GDN (where the recurrent state update in \Cref{def:gdn} is also per-head). The short depthwise convolutions (kernel size~4) applied to the $q$, $k$, and $v$ streams in GDN layers operate per-channel along the sequence dimension; since channels are partitioned across heads, the convolution weights must be sharded consistently with the head assignment on each device.

\subsection{Post-Training}


\begin{table}[ht]
\centering
\caption{Training hyperparameters for \model{} 7B. Total tokens includes masked tokens (e.g.\ prompts). Think SFT total tokens is computed from the Olmo 3 baseline (45.4B) plus the $3\times$ tool-use upscale: $45.4\text{B} \times (1 + 2 \times 2.47\%) = 47.6\text{B}$.}
\label{tab:hyperparams}
\begin{tabular}{l c c c}
\toprule
& \textbf{7B Think SFT} & \textbf{7B Instruct SFT} & \textbf{7B Instruct DPO} \\
\midrule
Instances & 2{,}932{,}239 & 2{,}153{,}716 & 259{,}922 \\
Total Tokens & 47.6B & 3.4B & N/A \\
Batch Size & 1M tokens & 1M tokens & N/A \\
Learning Rate & $2.5 \times 10^{-5}$ & $2.5 \times 10^{-5}$ & $1 \times 10^{-6}$ \\
Num.\ GPUs & 64 & 64 & 32 \\
Max Sequence Length & 32K & 32K & 16K \\
Epochs & 2 & 2 & 1 \\
Loss & --- & --- & DPO Norm ($\beta{=}5$) \\
\bottomrule
\end{tabular}
\end{table}

\begin{table}[ht]
\centering
\caption{Differences between Olmo 3 and \model{} SFT training configurations.}
\label{tab:sft-diffs}
\begin{tabular}{l l l}
\toprule
& \textbf{Olmo 3} & \textbf{\model{}} \\
\midrule
Data Parallel & HSDP (shard within node) & FSDP (full sharding) \\
Context Parallel & Ring (degree${}=8$) & Ulysses (degree${}=2$) \\
Activation Checkpointing & Selected modules (\texttt{feed\_forward}) & Budget-based (0.1) \\
\bottomrule
\end{tabular}
\end{table}

Table~\ref{tab:hyperparams} summarizes the hyperparameters used for the \model{} 7B model variants. Table~\ref{tab:sft-diffs} highlights the key configuration differences between the Olmo 3 and \model{} SFT training setups, primarily involving changes to the parallelism strategy and activation checkpointing. The SFT models were trained with \href{https://github.com/allenai/OLMo-core}{OLMo-core}, and the DPO model was trained with \href{https://github.com/allenai/open-instruct/tree/main}{Open-Instruct}.

\subsection{Evaluation Details} \label{sec:eval-details}

We evaluate \model{} using the same evaluation suite as OLMo 3~\citep{olmo2025olmo3}.
\Cref{tab:eval-config-base} describes the base evaluation configuration
and \Cref{tab:eval-config-posttrain} describes the post-training evaluation configuration.

\begin{table*}[!ht]
  \centering

\begin{scriptsize}
\renewcommand{\arraystretch}{1}
\adjustbox{max width=\linewidth}{
\begin{tabular}{llHHlllllHllHl} 
\toprule
& {\bf Task} & {\bf Capability} & {\bf \# inst} & {\bf ICL} & {\bf Format} & {\bf Metric} & {\bf Temp} & {\bf Top-p} & {\bf Extract} & {\bf Max toks} & {\bf P@k (n)} & {\bf N} & {\bf \# sub} \\
\midrule

\rowcolor{ai2midwhite}\multicolumn{14}{c}{\rule{0pt}{1pt}} \\[-9pt]
\rowcolor{ai2midwhite}\multicolumn{14}{c}{\textbf{Base Main Suite}} \\
\rowcolor{ai2midwhite}\multicolumn{14}{c}{\rule{0pt}{1pt}} \\[-9pt]
\rowcolor{ai2offwhite} & GSM8K* (\citeyear{cobbe2021gsm8k}) & Math Gen & - & 8$^\alpha$ & CoT EM & pass@k & 0.6 & 0.6 & GSM & 512 & 1, 4 (8) & 8 & - \\
\rowcolor{ai2offwhite} & GSM Symbolic* (\citeyear{mirzadeh2024gsmsymbolic}) & Math Gen & - & 8$^\alpha$ & CoT EM & pass@k & 0.6 & 0.6 & GSM & 512 & 1, 4 (8) & 8 & 3 \\
\rowcolor{ai2offwhite} \multirow{-3}{*}{\rotatebox[origin=c]{90}{\textit{Math}}} & MATH 500* (\citeyear{lewkowycz2022solving,lightman2023lets}) & Math Gen & - & 4$^\alpha$ & CoT EM & pass@k & 0.6 & 0.6 & Minerva & 1024 & 1, 16 (32) & 32 & - \\
\rowcolor{lightgrey} & HumanEval* (\citeyear{chen2021codex}) & Code Gen & - & 3 & Code Exec & pass@k & 0.6 & 0.6 & - & 512 & 1, 16 (32) & 32 & - \\
\rowcolor{lightgrey} & MBPP* (\citeyear{austin2021program}) & Code Gen & - & 3 & Code Exec & pass@k & 0.6 & 0.6 & - & 512 & 1, 16 (32) & 32 & - \\
\rowcolor{lightgrey} & BigCodeBench* (\citeyear{zhuo2024bigcodebench}) & Code Gen & - & 3 & Code Exec & pass@k & 0.6 & 0.6 & - & 1280 & 1 (5) & 5 & - \\
\rowcolor{lightgrey} & DS 1000* (\citeyear{Lai2022DS1000}) & Code Gen & - & 3 & Code Exec & pass@k & 0.6 & 0.6 & - & 1024 & 1 (5) & 5 & - \\
\rowcolor{lightgrey} & Deepseek LeetCode* (\citeyear{guo2024deepseekcoder}) & Code Gen & - & 0 & Code Exec & pass@k & 0.6 & 0.6 & - & 512 & 1, 16 (32) & 32 & - \\
\rowcolor{lightgrey} & MultiPL-E HumanEval* (\citeyear{cassano2022multiple}) & Code Gen (6 Lang) & - & 0 & Code Exec & pass@k & 0.6 & 0.6 & - & 1024 & 1, 16 (32) & 32 & 6 \\
\rowcolor{lightgrey} \multirow{-7}{*}{\rotatebox[origin=c]{90}{\textit{Code}}} & MultiPL-E MBPP* (\citeyear{cassano2022multiple}) & Code Gen (6 Lang) & - & 0 & Code Exec & pass@k & 0.6 & 0.6 & - & 1024 & 1, 16 (32) & 32 & 6 \\
\rowcolor{ai2offwhite} & ARC (\citeyear{clark2018arc}) & Science QA & - & 5 & MC & Acc & - & - & - & - & - & - & 2 \\
\rowcolor{ai2offwhite} & MMLU STEM (\citeyear{hendryckstest2021}) & General QA & - & 5 & MC & Acc & - & - & - & - & - & - & 19 \\
\rowcolor{ai2offwhite} & MedMCQA* (\citeyear{pal2022medmcqa}) & Medical QA & - & 5 & MC & Acc & - & - & - & - & - & - & - \\
\rowcolor{ai2offwhite} & MedQA* (\citeyear{jin2021medqa}) & Medical QA & - & 5 & MC & Acc & - & - & - & - & - & - & - \\
\rowcolor{ai2offwhite} \multirow{-5}{*}{\rotatebox[origin=c]{90}{\textit{STEM QA}}} & SciQ* (\citeyear{welbl2017sciq}) & Science QA & - & 5 & MC & Acc & - & - & - & - & - & - & - \\
\rowcolor{lightgrey} & MMLU Humanities (\citeyear{hendryckstest2021}) & General QA & - & 5 & MC & Acc & - & - & - & - & - & - & 13 \\
\rowcolor{lightgrey} & MMLU Social Sci. (\citeyear{hendryckstest2021}) & General QA & - & 5 & MC & Acc & - & - & - & - & - & - & 12 \\
\rowcolor{lightgrey} & MMLU Other (\citeyear{hendryckstest2021}) & General QA & - & 5 & MC & Acc & - & - & - & - & - & - & 14 \\
\rowcolor{lightgrey} & CSQA (\citeyear{talmor2019commonsenseqa}) & Commonsense QA & - & 5 & MC & Acc & - & - & - & - & - & - & - \\
\rowcolor{lightgrey} & PiQA (\citeyear{bisk2020piqa}) & Physical QA & - & 5 & MC & Acc & - & - & - & - & - & - & - \\
\rowcolor{lightgrey} & SocialIQA (\citeyear{sap2019socialiqa}) & Social QA & - & 5 & MC & Acc & - & - & - & - & - & - & - \\
\rowcolor{lightgrey} & DROP Gen2MC* (introduced in \citet{olmo2025olmo3}; \citeyear{dua2019drop}) & Passage QA & - & 5 & MC & Acc & - & - & - & - & - & - & - \\
\rowcolor{lightgrey} & Jeopardy Gen2MC* (introduced in \citet{olmo2025olmo3}; \citeyear{mosaic2024jeopardy}) & Trivia QA & - & 5 & MC & Acc & - & - & - & - & - & - & - \\
\rowcolor{lightgrey} & NaturalQs Gen2MC* (introduced in \citet{olmo2025olmo3}; \citeyear{kwiatkowski2019naturalquestions}) & General QA & - & 5 & MC & Acc & - & - & - & - & - & - & - \\
\rowcolor{lightgrey} & SQuAD Gen2MC* (introduced in \citet{olmo2025olmo3}; \citeyear{rajpurkar2016squad}) & General QA & - & 5 & MC & Acc & - & - & - & - & - & - & - \\
\rowcolor{lightgrey} & CoQA Gen2MC* (introduced in \citet{olmo2025olmo3}; \citeyear{reddy2019coqa}) & Conversation QA & - & 0$^\dagger$ & MC & Acc & - & - & - & - & - & - & - \\
\rowcolor{lightgrey} \multirow{-12}{*}{\rotatebox[origin=c]{90}{\textit{Non-STEM QA}}} & Basic Skills* (introduced in \citet{olmo2025olmo3}) & Basic QA & - & 5 & MC & Acc & - & - & - & - & - & - & 6 \\
\rowcolor{ai2offwhite} & HellaSwag (\citeyear{zellers2019hellaswag}) & Language Modeling & - & 5 & RC$_\text{per-char}$ & Acc & - & - & - & - & - & - & - \\
\rowcolor{ai2offwhite} & WinoGrande (\citeyear{sakaguchi2020winogrande}) & Language Modeling & - & 5 & RC$_\text{none}$ & Acc & - & - & - & - & - & - & - \\
\rowcolor{ai2offwhite} & Lambada (\citeyear{paperno2016lambada}) & Language Modeling & - & 0 & RC$_\text{per-char}$ & Acc & - & - & - & - & - & - & - \\
\rowcolor{ai2offwhite} & Basic Skills* (introduced in \citet{olmo2025olmo3}) & Basic QA & - & 5 & RC$_\text{per-token}$ & Acc & - & - & - & - & - & - & 6 \\
\rowcolor{ai2offwhite} & DROP (\citeyear{dua2019drop}) & Passage QA & - & 5 & GenQA & F1 & 0 & 1 & - & 100 & - & - & - \\
\rowcolor{ai2offwhite} & Jeopardy (\citeyear{mosaic2024jeopardy}) & Trivia QA & - & 5 & GenQA & F1 & 0 & 1 & - & 50 & - & - & - \\
\rowcolor{ai2offwhite} & NaturalQs (\citeyear{kwiatkowski2019naturalquestions}) & General QA & - & 5 & GenQA & F1 & 0 & 1 & - & 50 & - & - & - \\
\rowcolor{ai2offwhite} & SQuAD (\citeyear{rajpurkar2016squad}) & General QA & - & 5 & GenQA & F1 & 0 & 1 & - & 50 & - & - & - \\
\rowcolor{ai2offwhite} \multirow{-9}{*}{\rotatebox[origin=c]{90}{\textit{GenQA}}} & CoQA (\citeyear{reddy2019coqa}) & Conversation QA & - & 0$^\dagger$ & GenQA & F1 & 0 & 1 & - & 50 & - & - & - \\
\rowcolor{ai2midwhite}\multicolumn{14}{c}{\rule{0pt}{1pt}} \\[-9pt]
\rowcolor{ai2midwhite}\multicolumn{14}{c}{\textbf{Base Held-out Suite}} \\
\rowcolor{ai2midwhite}\multicolumn{14}{c}{\rule{0pt}{1pt}} \\[-9pt]
\rowcolor{ai2offwhite} & MMLU Pro (\citeyear{wang2024mmlupro}) & - & - & 5 & MC & Acc & - & - & - & - & - & - & 13 \\
\rowcolor{ai2offwhite} & LBPP* (\citeyear{matton2024lbpp}) & - & - & 0 & Code Exec & pass@k & 0.6 & 0.6 & - & 4096 & 1 (32) & - & - \\
\rowcolor{ai2offwhite} & Deepmind Math* (\citeyear{saxton2019analysing}) & 5500 & - & 5 & CoT EM & pass@k & 0.6 & 0.6 & - & 2048 & 1 (1) & - & - \\
\rowcolor{ai2offwhite} & BigBench Hard (\citeyear{suzgun2022bbh}) & - & - & 3 & CoT EM & Acc & 0.6 & 0.6 & - & 512 & 1 (1) & - & 55 \\

\bottomrule
\end{tabular}
}
\end{scriptsize}
  \caption{
  \textbf{Details of the base evaluation suite}, as used in OLMo 3~\citep{olmo2025olmo3}.
  Tasks were formatted as multiple-choice (MC), rank choice (RC), short-form generative (GenQA), chain-of-thought with exact-match scoring (CoT EM), or code execution (Code Exec).
  We use * to indicate additions to the OLMo 2~\citep{olmo2} base suite, $^\dagger$ for tasks with few-shot examples already specified within each instance, and $^\alpha$ for tasks with human-written few-shot examples.
  }
  \label{tab:eval-config-base}
\end{table*}

\begin{table*}[!ht]
  \centering

\begin{scriptsize}
\begin{tabular}{HlHHllllllHll} 
\toprule
& \textbf{Task} & \textbf{Capability} & \textbf{\# Inst} & \textbf{Format} & \textbf{Metric} & \textbf{Temp} & \textbf{Top-p} & \textbf{Ans. Extract} & \textbf{Max Toks} & \textbf{P@k (N)} & \textbf{N} & \textbf{\# Sub} \\
\midrule

\rowcolor{midgrey}\multicolumn{13}{c}{\rule{0pt}{1pt}} \\[-9pt]
\rowcolor{midgrey}\multicolumn{13}{c}{\textbf{Chat Suite}} \\
\rowcolor{midgrey}\multicolumn{13}{c}{\rule{0pt}{1pt}} \\[-9pt]

\rowcolor{lightgrey} & IF Eval (\citeyear{zhou2023ifeval}) & Instruction Following & - & CoT & Custom & 0.6 & 0.95 & Custom & 32768 & - & 1 & - \\
\rowcolor{lightgrey} & IFBench (\citeyear{pyatkin2025ifbench}) & Instruction Following & - & CoT & Custom & 0.6 & 0.95 & Custom & 32768 & - & 1 & - \\
\rowcolor{lightgrey} & MATH 500 (\citeyear{lewkowycz2022solving,lightman2023lets}) & Math Gen & - & CoT EM & EM Flex & 0.6 & 0.95 & Minerva & 32768 & - & 1 & - \\
\rowcolor{lightgrey} & AIME 2024* & Math Gen & - & CoT EM & EM Flex & 0.6 & 0.95 & Minerva & 32768 & - & 32 & - \\
\rowcolor{lightgrey} & AIME 2025* & Math Gen & - & CoT EM & EM Flex & 0.6 & 0.95 & Minerva & 32768 & - & 32 & - \\
\rowcolor{lightgrey} & Omega Math (\citeyear{sun2025omega}) & Math Gen & - & CoT EM & EM Flex & 0.6 & 0.95 & Custom Regexes & 32768 & - & 1 & 55 \\
\rowcolor{lightgrey} & HumanEval+ (\citeyear{evalplus}) & Code Gen & - & CoT Code & pass@1 & 0.6 & 0.95 & Split on \texttt{```} & 32768 & - & 10 & - \\
\rowcolor{lightgrey} & MBPP+* (\citeyear{evalplus}) & Code Gen & - & CoT Code & pass@1 & 0.6 & 0.95 & Split on \texttt{```} & 32768 & - & 10 & - \\
\rowcolor{lightgrey} & LiveCodeBench v3* (\citeyear{jain2024livecodebench}) & Code Gen & - & CoT Code & pass@1 & 0.6 & 0.95 & Split on \texttt{```} & 32768 & - & 10 & - \\
\rowcolor{lightgrey} & ZebraLogic* (\citeyear{lin2025zebralogic}) & Puzzle Solving & - & CoT JSON & Custom & 0.6 & 0.95 & Custom JSON & 32768 & - & 1 & - \\
\rowcolor{lightgrey} & BigBench-Hard (\citeyear{suzgun2022bbh}) & Puzzle Solving & - & CoT EM & EM Flex & 0.6 & 0.95 & Custom Regex & 32768 & - & 1 & 23 \\
\rowcolor{lightgrey} & GPQA* (\citeyear{rein2024gpqa}) & General QA & - & CoT MC & Acc & 0.6 & 0.95 & Custom Regex & 32768 & - & 1 & - \\
\rowcolor{lightgrey} & MMLU (\citeyear{hendryckstest2021}) & General QA & - & CoT MC & Acc & 0.6 & 0.95 & Custom Regex & 32768 & - & 1 & 57 \\
\rowcolor{lightgrey} & PopQA (\citeyear{mallen2022popqa}) & Trivia QA & - & CoT MC & Acc & 0.6 & 0.95 & EM Recall & 32768 & - & 1 & - \\
\rowcolor{lightgrey} & Alpaca Eval v2 (\citeyear{dubois2024alpacaeval}) & - & - & CoT & Winrate & 0.6 & 0.95 & - & 32768 & - & 1 & - \\

\bottomrule
\end{tabular}
\end{scriptsize}
  \caption{
  \textbf{Details of the post-training evaluation suite}, as used in OLMo 3~\citep{olmo2025olmo3}.
  We mark tasks with * to indicate new additions compared to the OLMo 2 suite~\citep{olmo2}. All evaluation generations have thinking traces (text between \texttt{<think>...</think>}) stripped before passing to the answer scorer. We use zero-shot setting for all metrics.
}
  \label{tab:eval-config-posttrain}
\end{table*}

\section{Proofs: Expressive Power of Hybrid Models} \label{sec:hybrid-expressivity-proofs}

Throughout this section, we assume by default that complexity classes (e.g., $\TC^0$) refer to their $\FO$-uniform variants (i.e., $\FO$-uniform $\TC^0$).
We formalize \emph{bounded precision} to mean logarithmic precision, i.e., on input sequences of length $n$, we compute our model with $c \log n$ bits of precision, for some $c$.

Theoretical analysis of transformers makes various assumptions about the types of attention allowed \citep{hao-etal-2022-formal,strobl2024formal}.
In our results, we will consider two types.
First, we will consider unique-hard-attention transformers (UHATs), where attention can only attend to one unique position.
Additionally, we will consider averaging-hard-attention transformers (AHATs), where attention weight is uniformly distributed over all positions that maximize the attention score.
Our AHAT definition also allows masked pre-norm as in \citet[Section 2.1]{merrill2025exact}.
Our constructions for UHATs also work for AHATs, but we state them for UHATs for more generality.

We first establish the negative side of the main theorems (\Cref{thm:pointer-recall,cor:formula-eval}) via core lemmas in \Cref{sec:negative-results}.
We then complete the proofs by giving explicit constructions for hybrid models solving these problems.
Finally, we turn to giving a complete padded characterization.

\subsection{Limitations of Transformers and RNNs} \label{sec:negative-results}

We will now formalize the limitations of transformers and RNNs on several problems of interest via two core lemmas.
The same problems are inexpressible for these architectures for different reasons.
In the case of transformers, it is because these problems are inherently sequential, which we formalize via circuit complexity.
In the case of RNNs, it is because they require remembering a significant amount of information from the prefix of the string, which we formalize using communication complexity.

We first establish the inability of fixed-depth transformers (both AHAT and UHAT) to express the key problems mentioned in \Cref{thm:pointer-recall} and \Cref{cor:formula-eval}:

\begin{lemma} \label{lem:transformers-tc0}
    Assuming $\TC^0 \neq \NC^1$,
    fixed-depth transformers cannot solve problems that are $\NC^1$-hard under $\FO$ reductions,
    which includes the $A_5$ word problem, state-based recall over 5 variables, and formula evaluation.
    This holds even with polynomial padding.
\end{lemma}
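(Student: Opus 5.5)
The proof has two parts: an upper bound placing padded fixed-depth transformers inside $\TC^0$, and hardness reductions showing each listed problem is $\NC^1$-hard under $\FO$ reductions. For the upper bound we invoke known simulations. For AHATs with polynomial padding, \Cref{thm:padded-ahat} \citep{merrill2025exact} gives containment in $\FO$-uniform $\TC^0$ directly. Unpadded log-precision transformers are in uniform $\TC^0$ \citep{merrill-sabharwal-2023-parallelism}. UHATs are a special case of AHATs (a unique maximizer is averaged trivially), or are covered by the same simulation. Padding only replaces an input of length $n$ with one of length $n + n^c$. The padded string is $\FO$-definable from the original, so composing the $\TC^0$ circuit family for length $n+n^c$ with this map still gives an $\FO$-uniform $\TC^0$ family, since a polynomial size increase preserves the class.

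Next, $\TC^0$ is closed under $\FO$ reductions: an $\FO$ reduction is computable by $\FO$-uniform $\AC^0 \subseteq \TC^0$ circuits, and composing constant-depth threshold circuits stays in $\TC^0$. Hence if a padded transformer solved some problem $L$ that is $\NC^1$-hard under $\FO$ reductions, every language in $\NC^1$ would reduce to $L$ and so lie in $\TC^0$. This gives $\NC^1 \subseteq \TC^0$, contradicting $\TC^0 \neq \NC^1$.

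It remains to check each named problem is $\NC^1$-hard. The $A_5$ word problem is $\NC^1$-complete under $\FO$ (indeed $\mathsf{DLOGTIME}$) reductions by \citet{barrington1986bounded}. Formula evaluation in Polish notation is $\NC^1$-complete by \citet{buss1987alogtime}. For state-based recall over 5 variables, we reduce the $S_5$ word problem, which is at least as hard as $A_5$, to it. Given a word $g_1\cdots g_m$, we first write each permutation as a product of at most 4 transpositions; this is done by a lookup table per symbol, so it is $\FO$-definable. We then build an instance with bitstring $x = e_j$ (or a suitable indicator string), initial pointers $p_i = i$, and the resulting transposition sequence. Reading $x_{q_1}$ then reveals whether the composed permutation sends the relevant index to $j$. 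The instance must have $n$ transpositions when $x$ has length $n$, so we pad with identity-producing pairs (the same transposition twice) to match lengths.

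A single bit of the final permutation's action does not decide membership in the word problem. We instead use the query ``does $\pi$ map $1$ to $1$'', or run the 5 queries in parallel. Deciding whether a product in $A_5$ is the identity is $\NC^1$-hard, and this is equivalent to a constant number of such bit queries combined by an $\AC^0$ conjunction, which is fine because $\TC^0$ is closed under this. Getting these details right, together with the orientation convention for $q = (\pi_n\circ\cdots\circ\pi_1)(p)$, is the only nonroutine step and is the main obstacle. The rest is invoking known simulations and closure properties.
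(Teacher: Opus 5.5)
Your proposal is correct and takes essentially the paper's route: the known $\TC^0$ upper bound for fixed-depth (even padded) transformers, closure of $\TC^0$ under $\FO$ reductions, and hardness via Barrington, Buss, and a direct reduction to state-based recall. Your explicit padding argument is more careful than the paper's bare citation.

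The step you flag as the main obstacle simplifies in two ways. First, reduce from $A_5$ and pad each symbol's transpositions to exactly four using pairs $(a\,b)(a\,b)$; this works because every element of $A_5$ is even, and fixed-length blocks make positions $\FO$-definable without counting. Second, Barrington's programs output either the identity or a fixed-point-free 5-cycle, so the single query ``does the product fix 1'' (bits $(1,0,\ldots,0)$, pointers $p_i = i$, orientation irrelevant since $\sigma$ fixes 1 iff $\sigma^{-1}$ does) already gives a genuine many-one $\FO$ reduction, as in the paper. Your five-query conjunction, by contrast, is only a truth-table reduction. That suffices for inexpressibility, but it does not literally establish ``$\NC^1$-hard under $\FO$ reductions.''
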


\begin{proof}
    This follows from the fact that fixed-depth AHATs, UHATs, and softmax transformers (even with padding) are in $\TC^0$ \citep{merrill-sabharwal-2023-parallelism,chiang2025transformers}.
    If transformers could solve a problem hard for $\NC^1$ under $\FO$ reductions, then any $\NC^1$ problem could be solved in $\TC^0$ by composing an $\AC^0$ circuit for the $\FO$ reduction with the $\TC^0$ circuit for the complete problem.
    This would imply $\TC^0 = \NC^1$, which contradicts the given premise.

    We now justify that each of the mentioned problems is $\NC^1$-hard:
    \begin{itemize}
        \item The $\NC^1$ completeness of the $A_5$ word problem is a fundamental result \citep{barrington1986bounded}. This holds even for the variant where the input string is a sequence of transpositions over 5 elements and the output is the number that 1 gets mapped to \citep[Section 3]{merrill2024illusion}.
        \item To show state-based recall is $\NC^1$-complete, we construct an $\FO$ reduction from the transposition variant of the $A_5$ word problem as follows.
        Given a sequence of transpositions $w$, instantiate a fixed-length list where the first entry is 1 and others are 0.
        Then simply copy over the list of transpositions from the $A_5$ instance.
        By construction, the output reconstructs the number that $w$ maps 1 to.
        \item The $\NC^1$-hardness of formula evaluation over booleans or integers is straightforward. Moreover, evaluating boolean formulas is also $\NC^1$-complete \citep{buss1987alogtime}, whereas evaluating integer formulas is $\PNC^1$-complete \citep{caussinus1998nondeterministic}. \qedhere
    \end{itemize}
\end{proof}

Next we formalize the memory limitations of RNNs (linear or nonlinear) using standard techniques from communication complexity:

\begin{lemma} \label{lemma:rnns-logn}
    Log-precision RNNs (including DeltaNet) cannot solve problems with $\Omega(n)$ communication complexity, which includes recall, state-based recall, and formula evaluation in Polish notation.
    This result holds even with polynomial padding.
\end{lemma}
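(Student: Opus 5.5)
The plan is the standard simulation argument: an RNN run on an input split into two halves yields a one-way communication protocol, so its state size lower-bounds the communication complexity. Concretely, consider a log-precision RNN (linear or nonlinear, including DeltaNet/GDN with fixed head dimension $d$) processing an input $w = uv$, possibly followed by $\abs{w}^c$ padding tokens. Alice holds $u$ and Bob holds $v$. Alice runs the RNN on $u$ and sends Bob the hidden state. This state consists of $O(1)$ scalars (for DeltaNet, the $2d \times d$ matrix $\mathbf S_t$ plus any per-layer states across the fixed number of layers), each stored with $O(\log n)$ bits. Bob then continues the recurrence on $v$ and the padding; padding tokens are input-independent, so he can simulate them. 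He reads off the output.

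This gives a one-way protocol with $O(\log n)$ communication, for polynomial padding length, since precision stays $O(\log n^{c+1}) = O(\log n)$. A multi-layer model is handled the same way: Alice sends the states of all layers at the split point. Each layer's computation on Bob's side depends only on those states and the lower-layer outputs on Bob's positions, which Bob can compute himself. Hence any problem the model solves has one-way communication complexity $O(\log n)$ for that split, and $\Omega(n)$ lower bounds give a contradiction.

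The remaining work is exhibiting, for each problem, a split with $\Omega(n)$ one-way communication. I would reduce from INDEX: Alice has $x \in \{0,1\}^m$, Bob has $i \in [m]$, and one-way randomized or deterministic complexity is $\Omega(m)$.
\begin{itemize}
\item \emph{Recall:} Alice's half is the bit array and Bob's half is the query index \texttt{a = i}, written in $O(\log m)$ symbols, so $n = \Theta(m)$.
\item \emph{State-based recall} (\Cref{def:pointer-recall}): Alice holds $x$. Bob's half contains $p$ with $p_1 = i$ and an empty or identity sequence of transpositions (e.g.\ a transposition applied twice, to match the input format), so the output is $x_i$.
\item \emph{Formula evaluation in Polish notation:} I would encode INDEX so Alice's prefix carries the bits and Bob's suffix selects one. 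For instance, Alice writes the prefix $\vee\,\wedge\, x_1\, ?_1\, \vee\, \wedge\, x_2\, ?_2 \cdots$ and Bob fills each selector $?_j$ with $1$ iff $j = i$, then closes with a trailing $0$. The formula is then $\bigvee_j (x_j \wedge [j=i]) = x_i$.
\end{itemize}

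The main obstacle is this last encoding. In Polish notation the operands following Alice's prefix must be interleaved correctly, so I need the prefix to contain only operators and $x$-bits, with Bob's suffix supplying the remaining leaves in order. A cleaner fix is a right-nested form $\vee(\wedge(x_1, s_1), \vee(\wedge(x_2,s_2), \ldots))$ in which each $s_j$ is itself a literal Bob must place. Since the leaves are interleaved, I would instead give Alice the $x$-bits at the end and Bob the operator/selector prefix, i.e.\ reverse the roles. Only the order of the split matters, and INDEX with the index-holder speaking first has complexity $O(\log m)$, which is the wrong direction.

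So the step I expect to need care is choosing a formula layout in which all of $x$ precedes all selector information. My first attempt would be $\wedge$-chains of the form $\vee\,\wedge\, x_1 \ldots$ where the selector is the second operand of each $\wedge$. For a prefix/suffix split this forces nesting. I would therefore fall back on a general two-player lower bound: the fooling-set/rank argument shows that the prefix must determine $\Omega(m)$ bits whenever distinct $x$ can be distinguished by some suffix, which can be achieved with suffixes of the shape "$\ldots$ closing operands that zero out all but position $i$". Verifying that such suffixes exist for the Polish syntax is the crux.
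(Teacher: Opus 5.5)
Your overall route matches the paper's: an RNN yields an $O(\log n)$-bit one-way protocol across a prefix/suffix split, Bob simulates the input-independent padding, and precision stays $O(\log n^{c+1})=O(\log n)$. Your handling of multi-layer models and your INDEX embeddings for recall and state-based recall are fine, and more explicit than the paper's sketch. One small repair for state-based recall: the definition requires exactly $n$ transpositions, so rather than pairing transpositions (which only works for even $n$), set $p_1=\cdots=p_5=i$, which gives $q_1=i$ for every transposition sequence.

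The genuine gap is the third bullet. You never establish the $\Omega(n)$ bound for Polish-notation formula evaluation, and the INDEX-style encoding you are attempting cannot work. In Polish notation, every leaf in Alice's prefix is immediately absorbed by its pending operator. An entry like $\wedge\,x_j$ therefore acts on whatever Bob later supplies as a fixed map (the identity or the constant $0$), and no input-independent suffix can neutralize it. For example, if Alice sends $\vee\,\wedge\,x_1\,\vee\,\wedge\,x_2\cdots\vee\,\wedge\,x_m$ (your layout with the selectors removed), the value of every completion depends on $x$ only through the position of its first zero. That is only $O(\log m)$ bits. The paper avoids this by citing \citet[Theorem 6]{merrill2021linguisticcapacityrealtimecounter}.

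The missing idea is to put Alice's information in the \emph{operators} and let Bob's suffix consist only of constants. For $\omega=o_1\cdots o_k\in\{\wedge,\vee\}^k$, the string $o_1\cdots o_k\,b_0\,b_k\,b_{k-1}\cdots b_1$ is a well-formed Polish-notation formula of length $2k+1$ denoting $o_1(o_2(\cdots o_k(b_0,b_k)\cdots,b_2),b_1)$. Suppose $\omega\neq\omega'$ first differ at position $j$. Choose $b_0=1$, $b_{j''}=1$ for $j<j''\le k$, and $b_j=0$. For $j'<j$, let $b_{j'}$ be the neutral element of the shared operator $o_{j'}=o'_{j'}$ ($1$ for $\wedge$, $0$ for $\vee$). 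The two formulas then evaluate to $o_j(1,0)\neq o'_j(1,0)$.

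Hence the $2^k$ prefixes are pairwise distinguishable by suffixes that keep the total length fixed. Any deterministic streaming device, including an RNN with padding appended after the suffix, must therefore be in $2^k$ distinct combined states after reading the first $k$ symbols. A log-precision RNN with $O(1)$ scalars per layer has only $n^{O(1)}$ states, a contradiction.

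This is exactly the fooling-set argument you gestured at. It is not a reduction from INDEX, since the distinguishing suffix depends on the common prefix of $\omega$ and $\omega'$. But because the RNN is deterministic, counting distinct rows of the one-way communication matrix is all you need.
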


\begin{proof}
    It is natural that the bounded state size of RNNs restricts their expressive power \citep{jelassi2024repeat}.
    Log-precision RNNs have a hidden state of size $O(\log n)$, and thus any problem that requires passing $\Omega(n)$ bits from prefix to suffix will not be expressible.

    We next show that each of the listed problems have $\Omega(n)$ communication complexity, i.e., require passing this many bits from prefix to suffix when processing via streaming:
    \begin{itemize}
        \item It is natural that recall problems (e.g., recognizing $ww$) have $\Omega(n)$ communication complexity.
        \item State-based recall can be reduced to recall by a string homomorphism that simply deletes the state tracking tokens. Thus, it also requires $\Omega(n)$ communication complexity.
        \item Finally, evaluating formulas in Polish notation has a communication complexity of $\Omega(n)$ \citep[Theorem 6]{merrill2021linguisticcapacityrealtimecounter}.
    \end{itemize}

    With polynomial padding tokens, precision remains logarithmic in the input sequence length:
    \begin{equation*}
        O(\log (n^c)) = O(c \log n) = O(\log n) .
    \end{equation*}
    Thus, we are still restricted to remembering at most $O(\log n)$ bits of an input prefix.
\end{proof}

Thus, assuming both $\TC^0 \neq \NC^1$ and log precision, both transformers and linear RNNs cannot solve state-based recall, the permuted $A_5$ word problem, or formula evaluation.
As a side note, the same communication complexity argument also shows that RNNs cannot even recognize recall languages like $ww$ and $ww^R$, which have communication complexity $\Omega(n)$. These simple languages are, however, in uniform $\AC^0$ and can also be easily recognized by fixed-depth AHATs \citep{strobl2024formal}.

\subsection{Power of Hybrid Models}

We now turn to proving the main results that hybrid models can solve problems beyond the capabilities of both transformers and RNNs.
In this section, we consider a hybrid model where the transformer component is an AHAT, though the construction also goes through with UHAT blocks as long as there is at least one linear RNN layer prior to it. The inexpressibility part of the result applies to both UHAT and AHAT.
Assume the initial pointer values $p_1, \ldots, p_5$ are encoded in unary (binary encoding is discussed after the following result and its proof) for the state-based recall problem.

\pointerRecall*
\begin{proof}
    \Cref{lem:transformers-tc0} shows transformers cannot solve state-based recall assuming $\TC^0 \neq \NC^1$.
    Similarly, \Cref{lemma:rnns-logn} shows log-precision RNNs cannot express state-based recall unconditionally.
    We now describe how hybrid models with a single alternation can express pointer-based recall, considering each order separately.

    \paragraph{} \textit{(GDN + Attention)}
    First, we use GDN to read the pointers $p_1, \ldots, p_5$ into a residual stream cell, which can be done for either unary- or binary-encoded pointers.
    As mentioned in the main text, the hybrid model can capture state-based recall by first composing permutations over the pointers with a GDN block and then using attention to retrieve the value at $p_1$.
    Since each transposition can be expressed as an identity + rank 1 matrix, it can directly be implemented by GDN \citep{grazzi2025unlocking}.
    In the attention layer that implements recall, we first use layer-norm to project the final value of $p_1$ onto the unit sphere, which we denote $\phi(p_1)$.
    At each input bit $x_i$, we also compute the projection $\phi(i)$ using standard tricks with NoPE \citep{merrill2024cot,merrill2025exact}.
    We then implement the recall by attending with query $\phi(p_1)$ over keys $\phi(i)$ and values $x_i$.
    Since attention is maximized exactly when $p_1 = i$, we will retrieve the correct bit $x_{p_1}$.
    Thus, we have constructed a hybrid model of DeltaNet layers followed by attention layers that solves state-based recall.

    \paragraph{} \textit{(Attention + GDN)}
    The order of layers is reversed, but the overall idea remains similar.
    We first use an AHAT block to compute, for each pointer $k$, quantities $\phi(p_k/i)$ and $\phi(1/i)$, from which $\phi(p_k)$ can be computed.
    This is possible with averaging-hard attention (hence reliance on AHAT) because $p_k$ is encoded in unary.
    Next, we use another layer of 5 attention heads to retrieve each value $x_{p_k}$ from each pointer $p_k$ using query $\phi(p_k)$, key $\phi(i)$, and value $x_i$.
    Finally, we use GDN to implement transposition composition over the values $x_{p_1}, \ldots, x_{p_5}$, rather than over the pointers.
    Thus, in either order, one alternation allows a hybrid model to express state-based recall.
\end{proof}

If GDN precedes attention, then this construction works for binary-encoded pointers in addition to unary-encoded pointers, as well as for UHAT blocks.
It follows that, with more than one alternation, hybrid models (regardless of UHAT or AHAT) can also handle binary-encoded pointers, since this setting subsumes the one where GDN precedes attention.



\subsection{Padded Hybrid Models}

First, we describe padding in a little more detail.
Given a function $t : \mathbb N \to \mathbb N$, a model is run with $t(n)$ padding as follows.
For any input $w$, we append $t(n)$ padding tokens ($\square$) to get a padded input $w \square^{t(\abs{w})}$.
We then interpret the prediction at the final token of this padded input as the prediction for $w$.
We say that a language $L$ can be recognized by a transformer with polynomial padding if there exists $c$ and a transformer $T$ such that $T$ recognizes $L$ with $n^c$ padding.

We show that polynomially-padded hybrid models (with averaging-hard attention) can capture all of $\NC^1$:

\paddedHybrid*

\begin{proof}
    Let $L \in \NC^1$.
    Recognizing $L$ can be decomposed to implementing an $\FO$ reduction to an $\NC^1$-complete problem $L'$.
    Let $L'$ be the transposition variant of the $S_5$ word problem described by \citet[Section 3.1]{merrill2024illusion}.
    We use the fact that every language $L \in \NC^1$ is $\FO$-reducible to $L'$.
    That is, any $w \in \Sigma^n$ can be mapped via an $\FO$ reduction to a new sequence $u$ of length $n^k$ such that $\prod_{i=1}^{n^k} u_i = 1$ if and only if $w \in L$.
    We use a block of transformer layers to implement the $\FO$ reduction from $w$ to $u$. At this point, we have $n^k$ padding tokens where token $i$ encodes $u_i$.
    Next, we can construct a fixed-depth GDN with negative eigenvalues that recognizes $L'$ \citep{grazzi2025unlocking}, which is possible because each element in the transition monoid can be written as an identity + rank 1 operator.
    Thus, by composition, following Lemma 3 of \citet{merrill2025exact}, this hybrid model will accept iff $u \in L'$. By construction, $u \in L'$ iff $w \in L$.
    Thus, given an arbitrary language $L \in \NC^1$, we have constructed a hybrid model that recognizes $L$ with $n^c$ padding, where $c$ is fixed for each $L$.
\end{proof}

Based on recent work, \Cref{thm:padded-hybrid} can be naturally extended to give the stronger lower bound of $\FO$-uniform $\PNC^1$ using the fact that GDN with negative eigenvalues can represent $\PNC^1$-complete problems \citep{caussinus1998nondeterministic,merrill2026lrnns}.
This implies padded hybrid models can solve some additional problems not known to be in $\NC^1$, such as simulated weighted automata or evaluating formulas over integers.
Moreover, nonuniform $\PNC^1$ holds as an upper bound for hybrid models \citep{merrill2026lrnns}.
Thus, modulo details about uniformity, $\PNC^1$ represents an exact expressivity characterization for padded hybrid models.

Moreover, recent related work has studied hybrid models that mix transformers with weighted transducers, approaching such models from the perspective of expressive power and learning theory \citep{mohri2026rationaltransductors}.
Combined with the link between linear RNNs and weighted automata \citep{merrill2026lrnns}, there is a close connection between these models and hybrid models that mix transformers and linear RNNs, with both architectures falling in $\PNC^1$ and capable of expressing $\PNC^1$-complete problems.

\section{Additional Details: Synthetic Evaluations} \label{app:synthetic-evals}

This appendix documents the experimental setup used for the synthetic evaluation curves in \Cref{sec:synthetic-evals},
including model architectures, the hyperparameter search protocol, curricula, and the exact numbers plotted.

\subsection{Tasks and Evaluation Protocol}

All tasks are generated online (no fixed dataset) by sampling code-like strings and training models as next-token predictors.

\paragraph{Recall.}
A bit array of size $m$ is instantiated, followed by a query of the form \texttt{assert bits[i] == \_}.
The model must output the correct bit.
We evaluate across $m \in \{4,8,16,32,64,128\}$.

\paragraph{State Tracking.}
Variables are initialized and updated by $n$ swaps/assignments, followed by a query \texttt{assert v == \_}.
We evaluate across $n \in \{4,8,16,32,64,128\}$.

\paragraph{State-Based Recall.}
A bit array of size $m$ is instantiated; variables are initialized to indices in $[0,m{-}1]$ and then updated by $n$ swaps.
Finally, the model must answer \texttt{assert bits[v] == \_} for a queried variable $v$.
In the default state-based recall setting used in \Cref{sec:synthetic-evals}, we set $m=n$ and evaluate across $n \in \{4,8,16,32,64,128\}$.

\paragraph{Metric.}
We report next-token accuracy on the final answer token (the token immediately following the last \texttt{==}).
Each reported accuracy is computed over $256$ freshly generated evaluation samples at the specified difficulty.

\subsection{Model Architectures}

To address potential confounds from over-parameterization on shorter sequences, we use a compact, standardized architecture across all three models. They share the same base width and depth, differing only in their sequence-mixing layers (full attention vs.\ GDN vs.\ hybrid).
Table~\ref{tab:synth-arch} summarizes the shared hyperparameters and the model-specific settings.

\begin{table}[h]
\centering
\small
\caption{Architecture hyperparameters used for synthetic evaluations. }
\label{tab:synth-arch}
\begin{tabular}{l l}
\toprule
\textbf{Shared hyperparameters} & \textbf{Value} \\
\midrule
Layers ($L$) & $4$ \\
Model width ($d_{\text{model}}$) & $256$ \\
FFN intermediate size ($d_{\text{ff}}$) & $1024$ \\
Attention heads ($H$) & $4$ \\
Head dimension ($d_{\text{head}}$) & $64$ \\
Max positions & $1024$ (Recall) / $4096$ (State Tracking, State-based Recall) \\
\midrule
\textbf{Transformer} & full softmax attention in all layers \\
\textbf{Linear RNN (neg.\ eig.)} & GatedDeltaNet with \texttt{allow\_neg\_eigval=True} \\
\textbf{Linear RNN (pos.\ eig.)} & GatedDeltaNet with \texttt{allow\_neg\_eigval=False} \\
\textbf{Hybrid (neg.\ eig.)} & first 3 layers are GDN with \texttt{allow\_neg\_eigval=True}; last layer is full attention \\
\textbf{Hybrid (pos.\ eig.)} & first 3 layers are GDN with \texttt{allow\_neg\_eigval=False}; last layer is full attention \\
\bottomrule
\end{tabular}
\end{table}

\subsection{Training Details}

We train with bf16 using the HuggingFace \texttt{Trainer} (AdamW). To ensure robust comparisons, we standardize fundamental training parameters across all tasks: a batch size of $32$, gradient accumulation of $1$, and a warmup period of $250$ steps. Each experiment is conducted on a single H100 GPU.

Because model performance on these synthetic algorithmic tasks is highly sensitive to the learning rate, scheduler, and weight initialization, we conduct a systematic hyperparameter sweep rather than relying on arbitrary fixed values.

\paragraph{Recall and State Tracking Sweep.}
For the Recall and State Tracking tasks, we randomly sample $10$ configurations from a grid of common learning rates ($10^{-4}, 3\cdot 10^{-4}, 10^{-3}$) and schedulers (cosine, constant). Models are trained for $50{,}000$ steps (Recall) and $20{,}000$ steps (State Tracking).

\paragraph{State-Based Recall Sweep.}
State-based recall is a harder compositional task and exhibits high sensitivity to initialization. Initial trials indicated that a learning rate of $10^{-3}$ was unstable, causing all architectures to fail to learn. We therefore refined our search space to learning rates of $\{10^{-4}, 3\cdot 10^{-4}\}$ and schedulers $\in \{\text{cosine}, \text{constant}\}$. For each of the $4$ combinations, we run $5$ different random seeds across all three model types, totaling $120$ runs. Models on this task are trained for up to $200{,}000$ steps.
\paragraph{Curricula.}
To facilitate learning on these complex tasks, we employ task-specific curriculum strategies.
For \textbf{State Tracking}, we use a deterministic, time-based step schedule, advancing the difficulty $n$ through $\{8, 16, 32, 64\}$ at fixed cumulative step milestones ($500$, $1{,}500$, $3{,}500$, and $7{,}500$ steps, respectively).
For \textbf{Recall}, the task does not require a curriculum and is trained directly on a fixed bit-array size of $m=128$.
For \textbf{State-Based Recall}, we use a hybrid threshold-and-budget curriculum. The model starts at difficulty $n=8$ and advances through $n \in \{8, 16, 32, 64\}$ if it achieves a $0.95$ accuracy threshold (evaluated every $100$ steps over a hold-out batch of $256$ samples). To prevent stalling, the curriculum also forces an advancement if a step budget is exhausted ($10{,}000$ steps for $n=8$; $30{,}000$ steps for subsequent levels).

\paragraph{Data Generation.}
All tasks are framed as next-token prediction over Python-like execution traces.
For \textbf{State Tracking}, we follow the method in \citet{siems2026learningstatetrackingcodeusing}, where the generator inserts intermediate \texttt{assert} statements to explicitly reveal parts of the evolving program state across $5$ variables. These reveals occur at fixed, difficulty-dependent intervals.
For \textbf{Recall}, the sequence simply consists of a bit-array definition followed by a direct query, without any intermediate state updates.
For \textbf{State-Based Recall}, which composes state tracking and retrieval, we use intermediate \texttt{assert} statements but further strengthen the supervision and discourage pattern-matching through two techniques: first, we randomize the spacing between \texttt{assert} statements across examples (sampling uniformly from powers of $2$ up to $n$); second, we mix in a fraction of strict examples ($20\%$) that omit intermediate \texttt{assert} statements entirely, forcing the model to answer the final query.

\paragraph{No-Negative-Eigenvalue Ablation.}
In addition to the main three-model comparison, we evaluate ablated linear-RNN and hybrid variants in which the GDN blocks disallow negative eigenvalues by setting \texttt{allow\_neg\_eigval=False}.
These ablations use the same task definitions, metrics, and curricula as the main experiments.

\paragraph{Run Selection Criteria.}
To report the final numbers in \Cref{sec:synthetic-evals}, we select the best run for each model and task using a strict, standardized criterion: we identify the run that achieves the maximum accuracy on the hardest difficulty setting (maximum sequence length / max steps) at the final evaluation step. In the event of a tie, we break the tie by looking at the accuracy of the next longest length, continuing until a clear best run is isolated.

To ensure full reproducibility and transparency, the complete set of runs for our hyperparameter sweeps can be viewed on Weights \& Biases at: \url{https://wandb.ai/ai2-llm/Olmo-Hybrid-synth-eval}.

\paragraph{WandB Naming Conventions.}
Note that the task names in the WandB run logs differ slightly from the terminology used in this paper. When navigating the logs, please use the following mapping:
\begin{itemize}
    \item \textbf{Recall} is labeled as \texttt{retrieval} (e.g., \texttt{retrieval\_transformer\_lr1e-4\_bs32\_cosine}). The relevant evaluation metric is the \texttt{eval\_normal} field.
    \item \textbf{State Tracking} is labeled as \texttt{state\_tracking}. The relevant evaluation metric is the \texttt{eval\_strict} field.
    \item \textbf{State-Based Recall} is labeled as \texttt{ptr} (e.g., \texttt{ptr-transformer-lr3e-4-cosine-seed8}). The relevant evaluation metric is the \texttt{eval\_strict} field.
\end{itemize}

The no-negative-eigenvalue ablation projects use the same task naming, but are stored in separate projects whose names end with \texttt{no-neg-eigval}.

The exact numbers plotted in \Cref{fig:synth-results} and detailed in Tables~\ref{tab:synth-results-state}--\ref{tab:synth-results-ptr} are drawn from the following selected best runs:

\paragraph{State Tracking.}
\begin{itemize}
    \item \textbf{Transformer:} \url{https://wandb.ai/ai2-llm/Olmo-Hybrid-synth-eval/runs/925yzp3l}
    \item \textbf{Linear RNN (neg.\ eig.):} \url{https://wandb.ai/ai2-llm/Olmo-Hybrid-synth-eval/runs/i2t6vfgc}
    \item \textbf{Hybrid (neg.\ eig.):} \url{https://wandb.ai/ai2-llm/Olmo-Hybrid-synth-eval/runs/7oxbr81u}
    \item \textbf{Linear RNN (pos.\ eig.):} \url{https://wandb.ai/ai2-llm/Olmo-Hybrid-synth-eval/runs/8hc3oeoq}
    \item \textbf{Hybrid (pos.\ eig.):} \url{https://wandb.ai/ai2-llm/Olmo-Hybrid-synth-eval/runs/2hsy3ksf}
\end{itemize}

\paragraph{Recall.}
\begin{itemize}
    \item \textbf{Transformer:} \url{https://wandb.ai/ai2-llm/Olmo-Hybrid-synth-eval/runs/dattbjr0}
    \item \textbf{Linear RNN (neg.\ eig.):} \url{https://wandb.ai/ai2-llm/Olmo-Hybrid-synth-eval/runs/570kpuuh}
    \item \textbf{Hybrid (neg.\ eig.):} \url{https://wandb.ai/ai2-llm/Olmo-Hybrid-synth-eval/runs/5eljlaeq}
    \item \textbf{Linear RNN (pos.\ eig.):} \url{https://wandb.ai/ai2-llm/Olmo-Hybrid-synth-eval/runs/phcklxlr}
    \item \textbf{Hybrid (pos.\ eig.):} \url{https://wandb.ai/ai2-llm/Olmo-Hybrid-synth-eval/runs/elbinpt6}
\end{itemize}

\paragraph{State-Based Recall.}
\begin{itemize}
    \item \textbf{Transformer:} \url{https://wandb.ai/ai2-llm/Olmo-Hybrid-synth-eval/runs/9rexm8xm}
    \item \textbf{Linear RNN (neg.\ eig.):} \url{https://wandb.ai/ai2-llm/Olmo-Hybrid-synth-eval/runs/jnlavjjn}
    \item \textbf{Hybrid (neg.\ eig.):} \url{https://wandb.ai/ai2-llm/Olmo-Hybrid-synth-eval/runs/oqy3t8n3}
    \item \textbf{Linear RNN (pos.\ eig.):} \url{https://wandb.ai/ai2-llm/Olmo-Hybrid-synth-eval/runs/2onzuhs7}
    \item \textbf{Hybrid (pos.\ eig.):} \url{https://wandb.ai/ai2-llm/Olmo-Hybrid-synth-eval/runs/zn13y4j1}
\end{itemize}

\vspace{1em}

\begin{table}[H]
\centering
\caption{State tracking accuracy (varying number of updates $n$).}
\label{tab:synth-results-state}
\resizebox{\linewidth}{!}{%
\begin{tabular}{c c c c c c}
\toprule
$n$ & Transformer & Linear RNN (neg.\ eig.) & Linear RNN (pos.\ eig.) & Hybrid (neg.\ eig.) & Hybrid (pos.\ eig.) \\
\midrule
4   & 0.51172 & 1.00000 & 1.00000 & 1.00000 & 1.00000 \\
8   & 0.27734 & 1.00000 & 1.00000 & 1.00000 & 1.00000 \\
16  & 0.17188 & 1.00000 & 0.97266 & 1.00000 & 1.00000 \\
32  & 0.23047 & 1.00000 & 0.64453 & 1.00000 & 1.00000 \\
64  & 0.19922 & 1.00000 & 0.21484 & 1.00000 & 0.96094 \\
128 & 0.23047 & 1.00000 & 0.22266 & 1.00000 & 0.36328 \\
\bottomrule
\end{tabular}}
\end{table}

\begin{table}[H]
\centering
\caption{Recall accuracy (varying bit-array size $m$).}
\label{tab:synth-results-recall}
\resizebox{\linewidth}{!}{%
\begin{tabular}{c c c c c c}
\toprule
$m$ & Transformer & Linear RNN (neg.\ eig.) & Linear RNN (pos.\ eig.) & Hybrid (neg.\ eig.) & Hybrid (pos.\ eig.) \\
\midrule
4   & 1.00000 & 1.00000 & 1.00000 & 1.00000 & 1.00000 \\
8   & 1.00000 & 1.00000 & 1.00000 & 1.00000 & 1.00000 \\
16  & 1.00000 & 1.00000 & 1.00000 & 1.00000 & 1.00000 \\
32  & 1.00000 & 1.00000 & 1.00000 & 1.00000 & 1.00000 \\
64  & 1.00000 & 0.83203 & 0.80078 & 1.00000 & 1.00000 \\
128 & 0.96484 & 0.67578 & 0.67188 & 1.00000 & 1.00000 \\
\bottomrule
\end{tabular}}
\end{table}

\begin{table}[H]
\centering
\caption{State-based recall accuracy (varying number of swaps $n$, with $m=n$).}
\label{tab:synth-results-ptr}
\resizebox{\linewidth}{!}{%
\begin{tabular}{c c c c c c}
\toprule
$n$ & Transformer & Linear RNN (neg.\ eig.) & Linear RNN (pos.\ eig.) & Hybrid (neg.\ eig.) & Hybrid (pos.\ eig.) \\
\midrule
4   & 0.82031 & 1.00000 & 0.76953 & 1.00000 & 0.85938 \\
8   & 0.75000 & 1.00000 & 0.99219 & 1.00000 & 0.99219 \\
16  & 0.73828 & 1.00000 & 0.85547 & 1.00000 & 0.93750 \\
32  & 0.73828 & 1.00000 & 0.64453 & 1.00000 & 0.68750 \\
64  & 0.62891 & 0.78125 & 0.59766 & 1.00000 & 0.61328 \\
128 & 0.54297 & 0.63672 & 0.57422 & 1.00000 & 0.57031 \\
\bottomrule
\end{tabular}}
\end{table}

\section{Scaling and Ablation Experiments: Details and Additional Results}

\subsection{Additional Results}

\subsubsection{Additional Scaling Law Results} \label{sec:additional-results}

This section supplements \Cref{sec:scaling-laws} with the full scaling law coefficient table with confidence intervals (\Cref{tab:scaling-law-parameters}), compute-optimal loss predictions across all three architectures and compute budgets (\Cref{tab:compute-equivalent-loss}), projected token savings by scale (\Cref{tab:token-savings-by-scale}), and scaling law residual diagnostics (\Cref{fig:scaling-law-residuals}).

\paragraph{Scaling Coefficients.}
\Cref{tab:scaling-law-parameters} reports the full Chinchilla scaling law coefficients for all three architectures under both the unconstrained fit (all five parameters free) and the fixed-exponent fit ($\alpha = \beta = 0.22$ shared; only $E$, $A$, $B$ refit), together with 95\% bootstrap confidence intervals.
The unconstrained fit yields wide CIs due to high joint uncertainty when exponents are free; the fixed-exponent fit tightens the CIs considerably.
The clearest signal is in $B$ (data efficiency): the hybrid's $B = 83.7$ (CI $[80.2, 87.1]$) is robustly lower than the transformer's $94.9$ (CI $[88.7, 102.0]$), with non-overlapping CIs.
The parameter coefficient $A$ slightly favors the hybrid ($70.1$ vs.\ $71.8$) but the CIs overlap.
In the fixed-exponent fit the ordering of the irreducible loss $E$ reverses relative to the unconstrained fit: the transformer achieves a lower $E$ ($1.569$, CI $[1.54, 1.60]$) than the hybrid ($1.597$, CI $[1.58, 1.61]$), suggesting the apparent advantage in $E$ for the hybrid in the unconstrained fit was a fitting artifact.
The scaling exponents $\alpha$ and $\beta$ are statistically indistinguishable across architectures---the CIs broadly overlap in the unconstrained fit---consistent with the theoretical prediction that expressivity shifts efficiency constants without altering exponents (\Cref{sec:expressivity-to-scaling}).

\begin{table*}[t!]
    \centering
    \small
    \renewcommand{\arraystretch}{1.2}
    \begin{tabular}{l*{8}{c}}
        \toprule
        Architecture & $E$ & $A$ & $\alpha$ & $B$ & $\beta$ & $a_\text{opt}$ & $b_\text{opt}$ & $R^2$ \\
        \midrule
        \multicolumn{9}{l}{\textit{Free fit (all five parameters fit independently)}} \\
        \midrule
        Olmo 3    & \makecell{1.65 \\ {\scriptsize [1.11, 1.87]}} & \makecell{108.83 \\ {\scriptsize [21.2, 448.7]}} & \makecell{0.25 \\ {\scriptsize [0.14, 0.34]}} & \makecell{83.45 \\ {\scriptsize [29.6, 495.0]}} & \makecell{0.21 \\ {\scriptsize [0.15, 0.30]}} & \makecell{0.46 \\ {\scriptsize [0.33, 0.68]}} & \makecell{0.54 \\ {\scriptsize [0.32, 0.67]}} & 0.9976 \\
        Hybrid    & \makecell{1.60 \\ {\scriptsize [1.25, 1.72]}} & \makecell{71.14 \\ {\scriptsize [24.8, 145.3]}} & \makecell{0.23 \\ {\scriptsize [0.15, 0.27]}} & \makecell{81.72 \\ {\scriptsize [37.4, 219.6]}} & \makecell{0.22 \\ {\scriptsize [0.18, 0.27]}} & \makecell{0.49 \\ {\scriptsize [0.41, 0.64]}} & \makecell{0.51 \\ {\scriptsize [0.36, 0.59]}} & 0.9993 \\
        Pure GDN  & \makecell{1.46 \\ {\scriptsize [1.16, 1.75]}} & \makecell{36.24 \\ {\scriptsize [19.0, 113.3]}} & \makecell{0.18 \\ {\scriptsize [0.14, 0.26]}} & \makecell{102.72 \\ {\scriptsize [52.1, 214.8]}} & \makecell{0.23 \\ {\scriptsize [0.19, 0.26]}} & \makecell{0.55 \\ {\scriptsize [0.45, 0.65]}} & \makecell{0.45 \\ {\scriptsize [0.35, 0.55]}} & 0.9994 \\
        \midrule
        \multicolumn{9}{l}{\textit{Fixed-exponent fit ($\alpha = \beta = 0.22$ shared; only $E$, $A$, $B$ refit)}} \\
        \midrule
        Olmo 3    & \makecell{1.55 \\ {\scriptsize [1.52, 1.58]}} & \makecell{66.63 \\ {\scriptsize [61.9, 69.4]}} & \cellcolor{midgrey}0.22 & \makecell{94.85 \\ {\scriptsize [89.3, 101.4]}} & \cellcolor{midgrey}0.22 & \cellcolor{midgrey}0.50 & \cellcolor{midgrey}0.50 & 0.9975 \\
        Hybrid    & \makecell{1.58 \\ {\scriptsize [1.56, 1.59]}} & \makecell{65.09 \\ {\scriptsize [62.8, 67.3]}} & \cellcolor{midgrey}0.22 & \makecell{83.65 \\ {\scriptsize [79.9, 87.0]}} & \cellcolor{midgrey}0.22 & \cellcolor{midgrey}0.50 & \cellcolor{midgrey}0.50 & 0.9993 \\
        Pure GDN  & \makecell{1.61 \\ {\scriptsize [1.59, 1.62]}} & \makecell{62.38 \\ {\scriptsize [60.5, 64.8]}} & \cellcolor{midgrey}0.22 & \makecell{90.80 \\ {\scriptsize [87.4, 93.3]}} & \cellcolor{midgrey}0.22 & \cellcolor{midgrey}0.50 & \cellcolor{midgrey}0.50 & 0.9994 \\
        \arrayrulecolor{black}\bottomrule
    \end{tabular}
    \caption{Fit Chinchilla scaling law parameters for $L(N, D) = E + A/N^\alpha + B/D^\beta$.
    95\% bootstrap CI shown below each estimate.
    \textbf{Top}: unconstrained fit where all five parameters are fit independently per architecture.
    \textbf{Bottom}: fixed-exponent fit where $\alpha = \beta = 0.22$ (in \textcolor{gray}{gray} cells) (approximately the mean of the unconstrained estimates) are shared across architectures; only $E$, $A$, $B$ are refit. Equal exponents imply $a_\text{opt} = b_\text{opt} = 0.5$.
    }
    \label{tab:scaling-law-parameters}
\end{table*}

\Cref{tab:compute-equivalent-loss} reports compute-optimal model sizes, dataset sizes, and predicted losses for all three architectures across compute budgets from $10^{18}$ to $10^{23}$ FLOPs.
The pure GDN achieves modest but consistent loss improvements ($\Delta \approx -0.02$ to $-0.05$) across scales, though these are smaller than those of the hybrid model, reflecting its weaker data efficiency~$B$ despite lower parameter coefficient~$A$.
The compute-optimal allocation for each architecture is derived from the fitted scaling laws as described in \Cref{sec:fig-table-details}.

\begin{table}[htbp]
    \centering
    \caption{Compute-optimal model size ($N^*$, billions), dataset size ($D^*$, billions of tokens), and predicted loss for all three architectures. Each architecture's fitted $a_\text{opt}, b_\text{opt}$ determine its compute-optimal allocation ($C = 6ND$). For each loss entry: {\scriptsize \textcolor{gray}{[loss CI]}} on the second line; {\scriptsize \textcolor{teal}{[$\Delta$ CI]}} on the third line (loss reduction relative to the transformer). All CIs are 95\% bootstrap intervals.}
    \label{tab:compute-equivalent-loss}
    \small
    \begin{tabular}{r rrr r rrr r rrr}
        \toprule
         & \multicolumn{3}{c}{Transformer} & & \multicolumn{3}{c}{Hybrid GDN} & & \multicolumn{3}{c}{Pure GDN} \\
        \cmidrule{2-4} \cmidrule{6-8} \cmidrule{10-12}
        FLOPs & $N^*$ & $D^*$ & Loss & & $N^*$ & $D^*$ & Loss & & $N^*$ & $D^*$ & Loss \\
        \midrule
        $10^{18}$ & 0.2 & 0.8 & \makecell{3.57 \\ {\scriptsize \textcolor{gray}{[3.52, 3.62]}}} &
                  & 0.2 & 0.7 & \makecell{3.46 \\ {\scriptsize \textcolor{gray}{[3.42, 3.49]}} \\ {\scriptsize \textcolor{teal}{[$-$0.19, $-$0.05]}}} &
                  & 0.2 & 1.0 & \makecell{3.53 \\ {\scriptsize \textcolor{gray}{[3.50, 3.55]}} \\ {\scriptsize \textcolor{teal}{[$-$0.10, $+$0.01]}}} \\[2pt]
        $10^{19}$ & 0.6 & 2.9 & \makecell{3.12 \\ {\scriptsize \textcolor{gray}{[3.10, 3.14]}}} &
                  & 0.7 & 2.3 & \makecell{3.04 \\ {\scriptsize \textcolor{gray}{[3.02, 3.05]}} \\ {\scriptsize \textcolor{teal}{[$-$0.11, $-$0.05]}}} &
                  & 0.6 & 2.9 & \makecell{3.10 \\ {\scriptsize \textcolor{gray}{[3.09, 3.10]}} \\ {\scriptsize \textcolor{teal}{[$-$0.05, $+$0.00]}}} \\[2pt]
        $10^{20}$ & 1.6 & 10.2 & \makecell{2.78 \\ {\scriptsize \textcolor{gray}{[2.75, 2.79]}}} &
                  & 2.2 & 7.4 & \makecell{2.71 \\ {\scriptsize \textcolor{gray}{[2.69, 2.72]}} \\ {\scriptsize \textcolor{teal}{[$-$0.10, $-$0.04]}}} &
                  & 2.0 & 8.2 & \makecell{2.76 \\ {\scriptsize \textcolor{gray}{[2.74, 2.77]}} \\ {\scriptsize \textcolor{teal}{[$-$0.04, $+$0.01]}}} \\[2pt]
        $10^{21}$ & 4.7 & 35.5 & \makecell{2.51 \\ {\scriptsize \textcolor{gray}{[2.45, 2.55]}}} &
                  & 7.0 & 24.0 & \makecell{2.46 \\ {\scriptsize \textcolor{gray}{[2.41, 2.48]}} \\ {\scriptsize \textcolor{teal}{[$-$0.11, $+$0.01]}}} &
                  & 7.3 & 22.9 & \makecell{2.49 \\ {\scriptsize \textcolor{gray}{[2.46, 2.52]}} \\ {\scriptsize \textcolor{teal}{[$-$0.07, $+$0.04]}}} \\[2pt]
        $10^{22}$ & 13.5 & 123.6 & \makecell{2.31 \\ {\scriptsize \textcolor{gray}{[2.21, 2.36]}}} &
                  & 21.6 & 77.2 & \makecell{2.27 \\ {\scriptsize \textcolor{gray}{[2.18, 2.30]}} \\ {\scriptsize \textcolor{teal}{[$-$0.14, $+$0.06]}}} &
                  & 26.0 & 64.1 & \makecell{2.27 \\ {\scriptsize \textcolor{gray}{[2.23, 2.33]}} \\ {\scriptsize \textcolor{teal}{[$-$0.10, $+$0.08]}}} \\[2pt]
        $10^{23}$ & 38.7 & 430.2 & \makecell{2.15 \\ {\scriptsize \textcolor{gray}{[2.00, 2.23]}}} &
                  & 67.0 & 248.7 & \makecell{2.12 \\ {\scriptsize \textcolor{gray}{[2.00, 2.16]}} \\ {\scriptsize \textcolor{teal}{[$-$0.17, $+$0.11]}}} &
                  & 93.0 & 179.2 & \makecell{2.10 \\ {\scriptsize \textcolor{gray}{[2.03, 2.18]}} \\ {\scriptsize \textcolor{teal}{[$-$0.14, $+$0.12]}}} \\
        \bottomrule
    \end{tabular}
\end{table}

\Cref{tab:token-savings-by-scale} reports the projected token requirements and savings factors at a discrete set of model scales, corresponding to \Cref{fig:combined-savings}(b) in the main text.

\begin{table}[htbp]
    \centering
    \caption{Projected token requirements across model scales (target loss $= 2.474$, selected as the minimum of all observed training losses). Savings $> 1\times$ means fewer tokens needed than the transformer. 95\% bootstrap CI shown below each estimate. See also \Cref{fig:combined-savings}(b).}
    \label{tab:token-savings-by-scale}
    \begin{tabular}{rrrrrr}
        \toprule
         & Transformer & \multicolumn{2}{c}{Hybrid} & \multicolumn{2}{c}{Pure GDN} \\
        $N$ & $D$ & $D$ & Savings & $D$ & Savings \\
        \midrule
        1B & 776.5B & 587.1B & \makecell{1.32$\times$ \\ {\scriptsize [0.67, 4.02]}} & 949.6B & \makecell{0.82$\times$ \\ {\scriptsize [0.45, 2.50]}} \\
        3B & 90.0B & 57.2B & \makecell{1.57$\times$ \\ {\scriptsize [1.11, 2.50]}} & 79.5B & \makecell{1.13$\times$ \\ {\scriptsize [0.77, 1.63]}} \\
        7B & 35.1B & 20.9B & \makecell{1.68$\times$ \\ {\scriptsize [0.91, 3.47]}} & 27.0B & \makecell{1.30$\times$ \\ {\scriptsize [0.68, 2.30]}} \\
        13B & 21.5B & 12.3B & \makecell{1.75$\times$ \\ {\scriptsize [0.84, 4.12]}} & 15.2B & \makecell{1.41$\times$ \\ {\scriptsize [0.65, 2.75]}} \\
        30B & 13.1B & 7.2B & \makecell{1.82$\times$ \\ {\scriptsize [0.77, 4.98]}} & 8.4B & \makecell{1.56$\times$ \\ {\scriptsize [0.61, 3.44]}} \\
        70B & 9.0B & 4.8B & \makecell{1.89$\times$ \\ {\scriptsize [0.71, 5.81]}} & 5.3B & \makecell{1.70$\times$ \\ {\scriptsize [0.57, 4.23]}} \\
        \bottomrule
    \end{tabular}
\end{table}

\paragraph{Fit Quality.}
The $R^2$ values from \Cref{tab:scaling-law-parameters} ($0.998$ for Olmo 3, $0.999$ for the hybrid model, and $0.999$ for the linear RNN) indicate that the power-law form fits the observed data well.
\Cref{fig:scaling-law-residuals} provides additional diagnostics for the quality of the fits.
Panel~(a) shows (signed) relative prediction errors $((L - \hat{L})/L) \times 100\%$ as a function of predicted loss $\hat{L}$, for all data points across all model sizes.
Residuals are small (within $\pm 1\%$) and scattered around zero, confirming that the Chinchilla parametric form is an adequate model for all three architectures.
Panel~(b) reports the mean absolute relative error grouped by model size, showing that fit quality is consistent across the full range of scales (60M--1B parameters).

\begin{figure}[htbp]
\centering

\includegraphics[width=\textwidth]{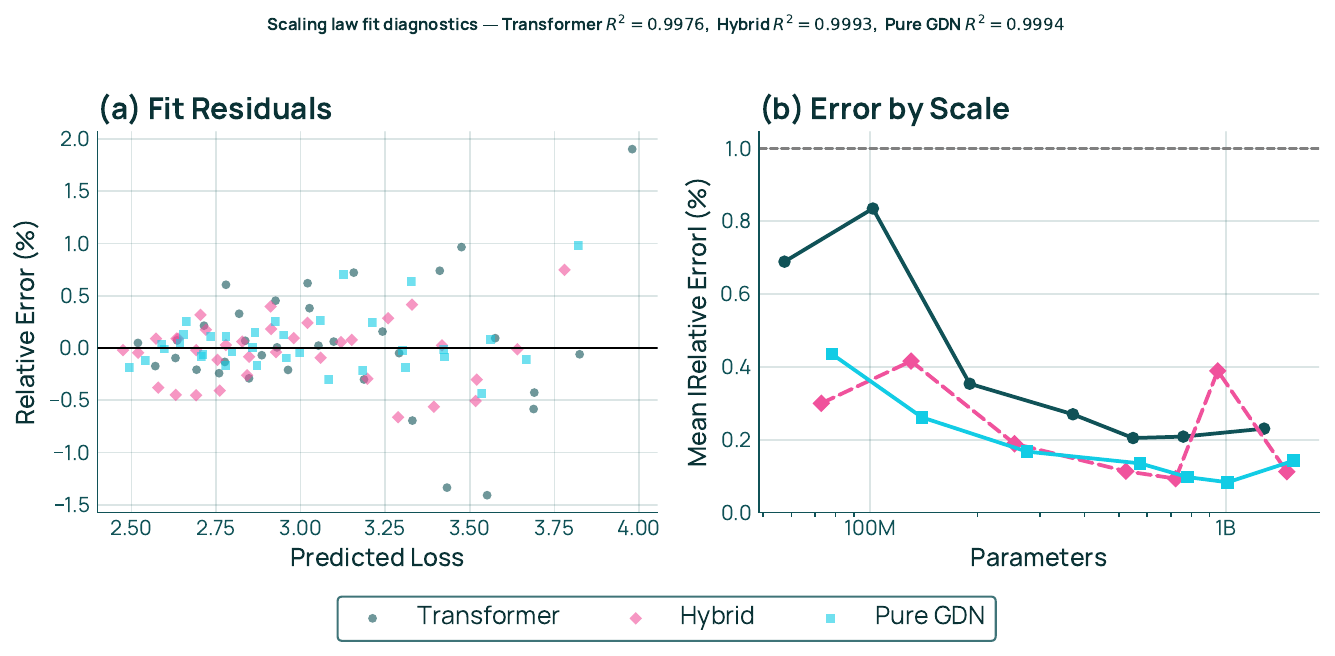}

\caption{Scaling law fit diagnostics. \textbf{(a)} Residuals are small and unbiased for all architectures. \textbf{(b)} Mean absolute relative error stays below 1\% across all model sizes, confirming reliable fits. Overall: Olmo 3 $R^2=0.9976$; Hybrid $R^2=0.9993$; Pure GDN $R^2=0.9994$.}
\label{fig:scaling-law-residuals}
\end{figure}

\begin{figure}[!t]
    \centering
    \includegraphics[width=0.9\linewidth]{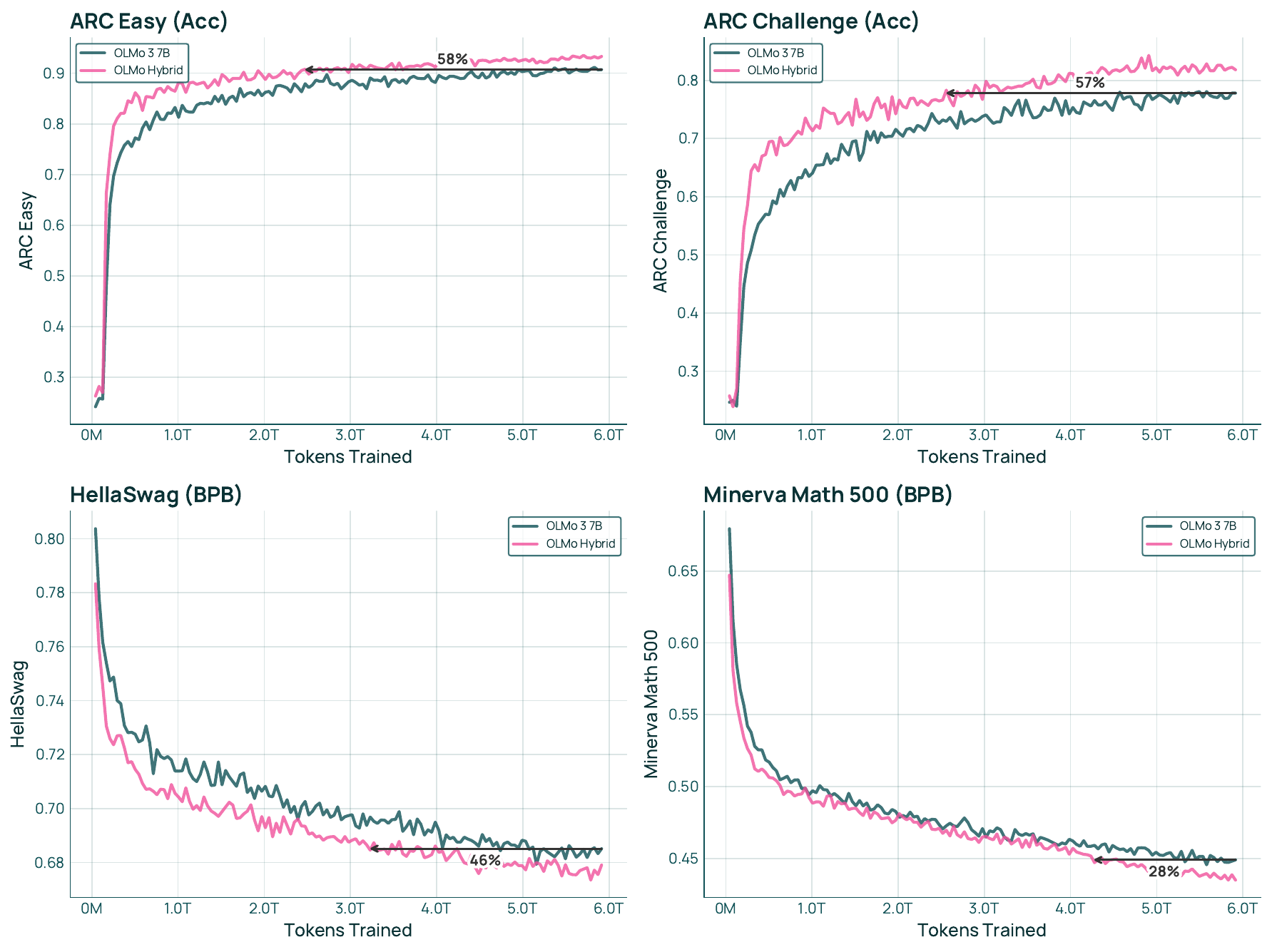}
    \caption{Extended downstream eval training curves comparing \model{} 7B and Olmo 3 7B across 6 benchmarks, sorted by token efficiency (see \Cref{fig:pretraining-perf} for CE loss and MMLU training curves). \model{} reaches Olmo 3's final performance using 19--58\% fewer tokens depending on the benchmark.}
    \label{fig:pretraining-perf-extended}
\end{figure}

\subsubsection{Ablation Results} \label{sec:ablation-results-appendix}

This section supplements \Cref{sec:architecture-ablations} with additional scaling plots across all tested architectures (\Cref{fig:base_easy_avg}) and per-domain BPB evaluations for all configurations at all scales (\Cref{tab:ablation-evals-detail-a}).

\paragraph{Scaling Trends of All Tested Architectures.}
\Cref{fig:base_easy_avg} plots the scaling of the \olmothreeeval average score against training FLOPs for all architectures.
Notably, the 7:1 configuration also shows very favorable scaling behavior, likely owing to the lower FLOP counts from having fewer transformer layers.
The per-domain breakdown (\Cref{fig:base_easy_avg}(c)--(e)) shows that the advantage of hybrid models is present across Math, Code, and QA.
Fine-grained scaling plots broken down by ablation group are provided in \Cref{fig:ablations_pure_vs_tf,fig:ablations_hybrid_vs_tf,fig:ablations_hybrid_ratios,fig:ablations_pure_gdns,fig:ablations_hybrid_gdns}, showing the average and the per-cluster BPB (Math, Code, QA) as a function of training FLOPs for each configuration.

\begin{figure*}[htbp]
\centering
\includegraphics[width=\textwidth]{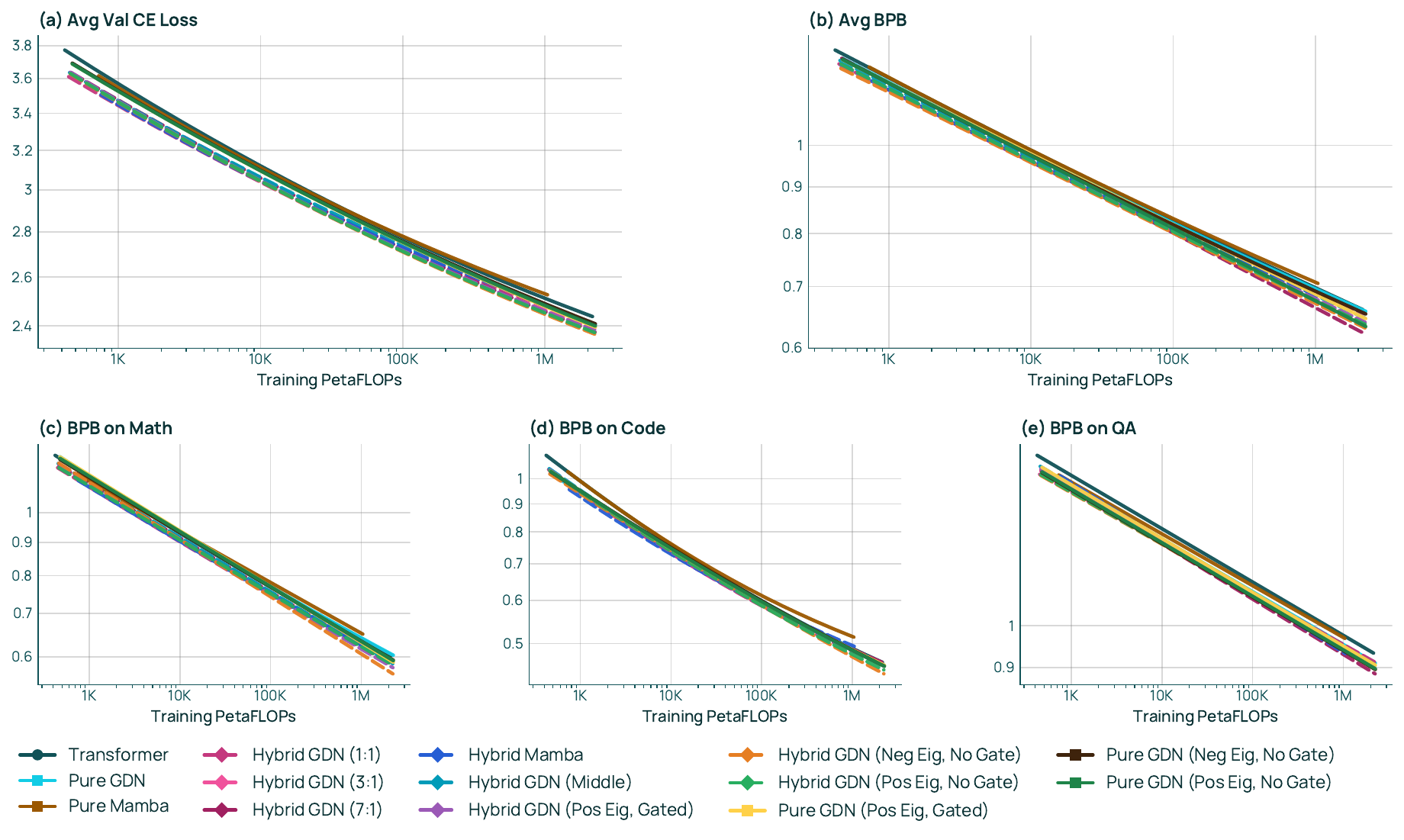}
\caption{Evaluation metrics vs.\ training FLOPs. \textbf{(a)} Average validation cross-entropy loss across 11 held-out corpora (C4, Dolma Books/CC/pes2o/Reddit/Stack/Wiki, ICE, M2D2~S2ORC, Pile, Wikitext-103; lower is better); this is the same loss used to fit the Chinchilla scaling laws. \textbf{(b)} Base Easy Suite average BPB (lower is better). \textbf{(c)--(e)} Per-cluster Base Easy BPB for Math, Code, and QA respectively. The thick line is the fitted compute-optimal frontier.}
\label{fig:base_easy_avg}
\end{figure*}

\begin{figure*}[htbp]
\centering
\includegraphics[width=\textwidth]{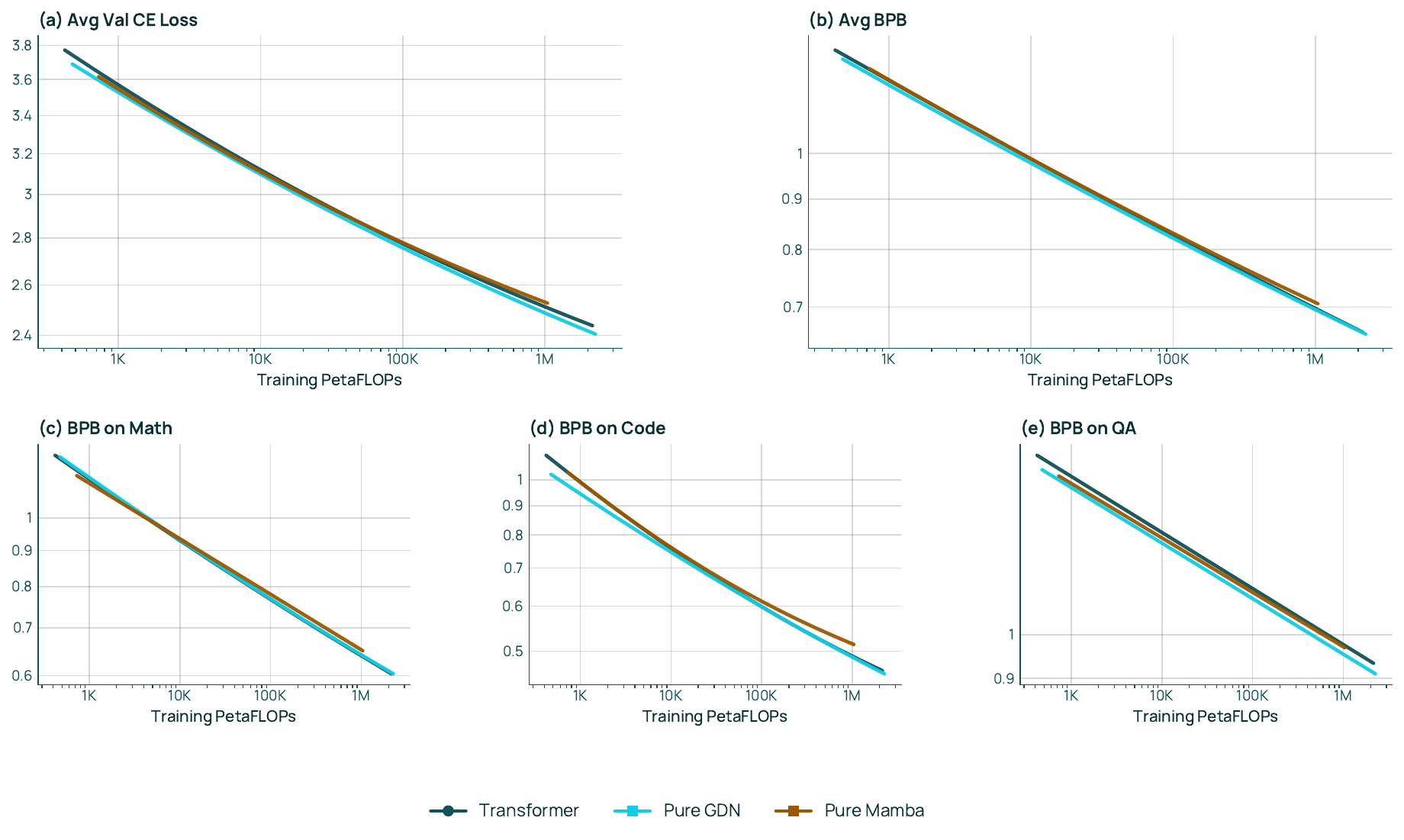}
\caption{Evaluation metrics vs.\ training FLOPs for \emph{pure} architectures (Transformer, pure GDN, pure Mamba2). Panels follow the same layout as \Cref{fig:base_easy_avg}: (a) average validation cross-entropy loss, (b) Base Easy Suite average BPB, and (c)--(e) per-cluster BPB for Math, Code, and QA. The thick line is the fitted compute-optimal frontier.}
\label{fig:ablations_pure_vs_tf}
\end{figure*}

\begin{figure*}[htbp]
\centering
\includegraphics[width=\textwidth]{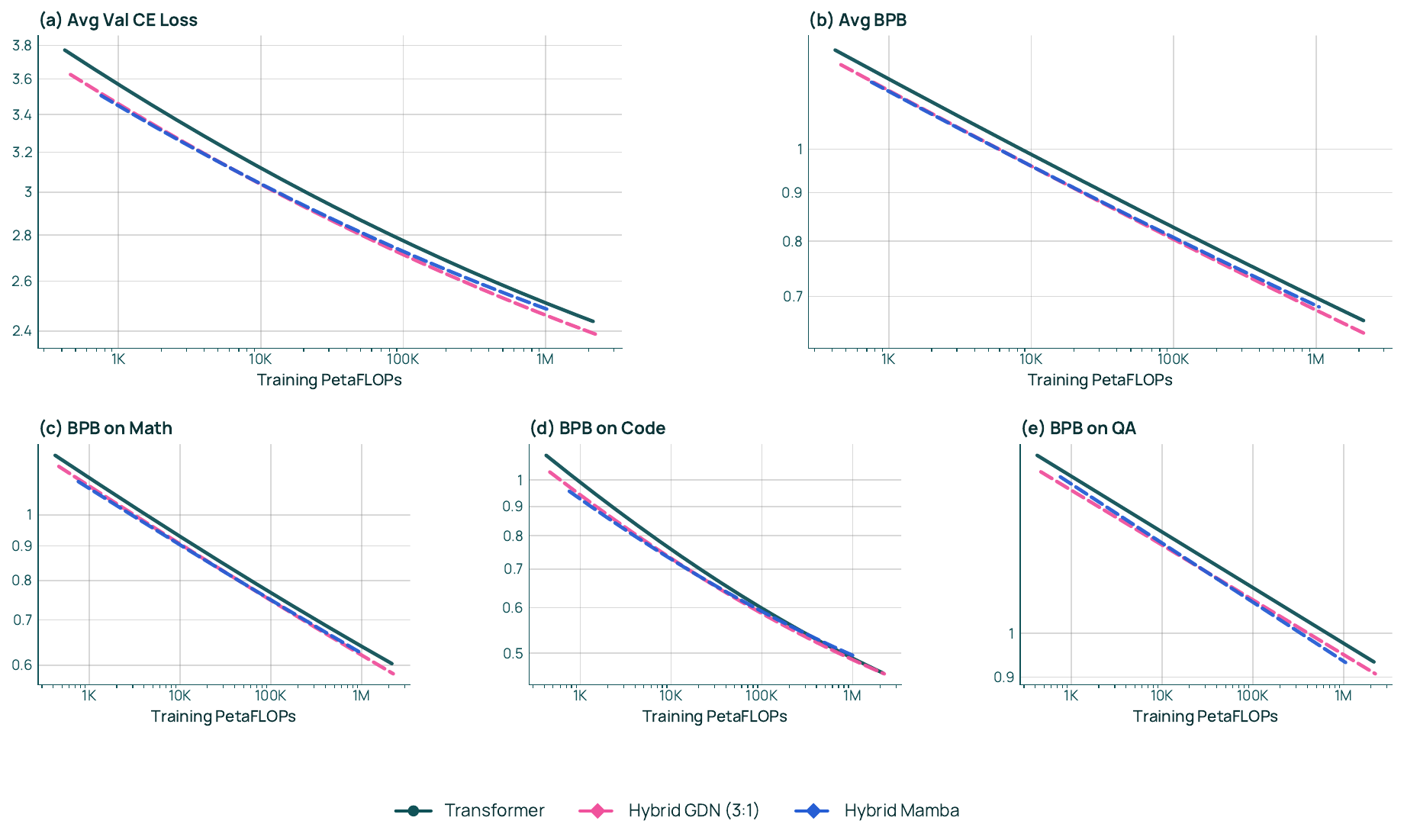}
\caption{Evaluation metrics vs.\ training FLOPs for \emph{hybrid} architectures (GDN hybrid with interleaved and middle placement, Mamba2 hybrid). Panels follow the same layout as \Cref{fig:base_easy_avg}: (a) average validation cross-entropy loss, (b) Base Easy Suite average BPB, and (c)--(e) per-cluster BPB for Math, Code, and QA. The thick line is the fitted compute-optimal frontier.}
\label{fig:ablations_hybrid_vs_tf}
\end{figure*}

\begin{figure*}[htbp]
\centering
\includegraphics[width=\textwidth]{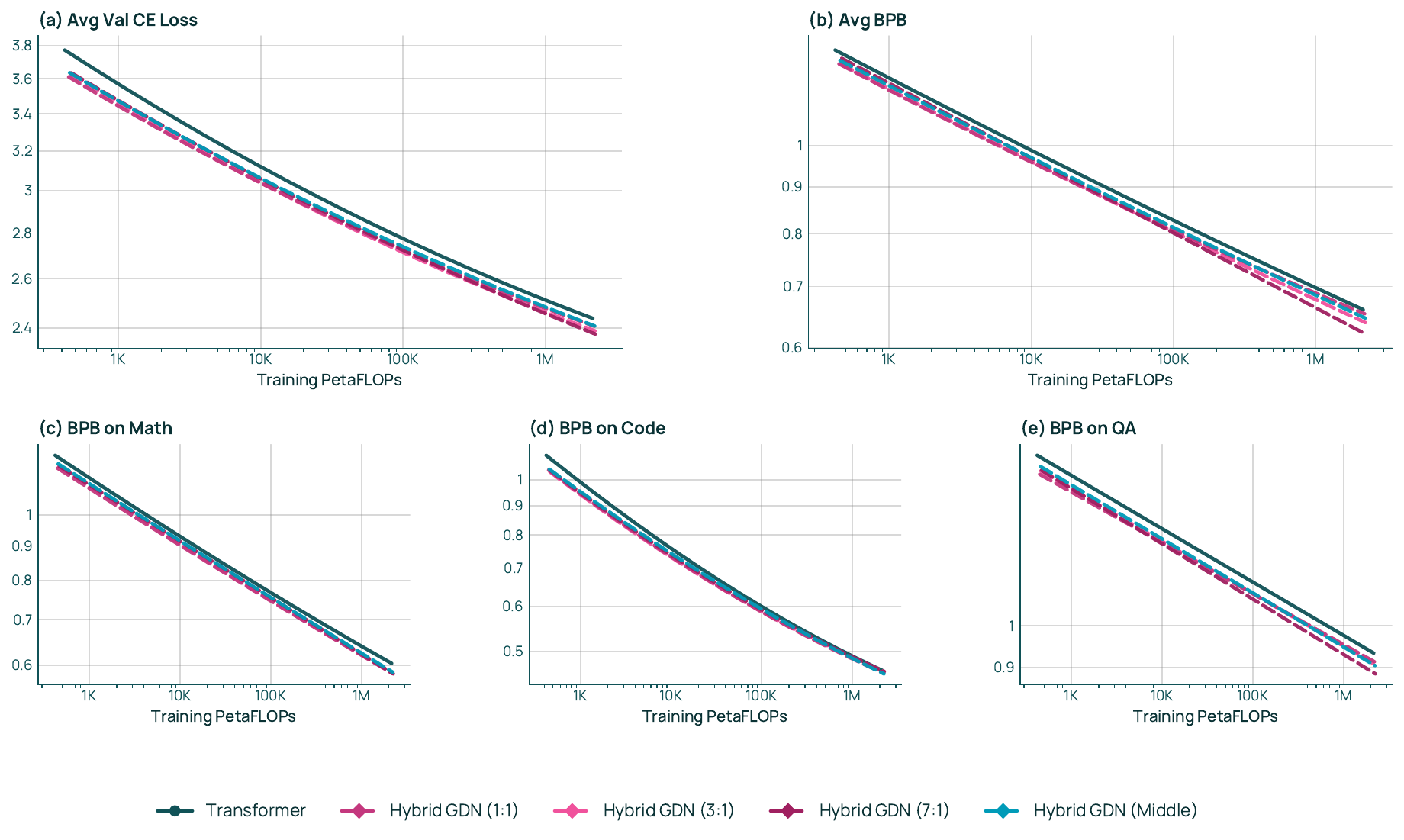}
\caption{Evaluation metrics vs.\ training FLOPs for GDN hybrid models with varying linear-to-attention ratios (1:1, 3:1, 7:1). Panels follow the same layout as \Cref{fig:base_easy_avg}: (a) average validation cross-entropy loss, (b) Base Easy Suite average BPB, and (c)--(e) per-cluster BPB for Math, Code, and QA. The thick line is the fitted compute-optimal frontier.}
\label{fig:ablations_hybrid_ratios}
\end{figure*}

\begin{figure*}[htbp]
\centering
\includegraphics[width=\textwidth]{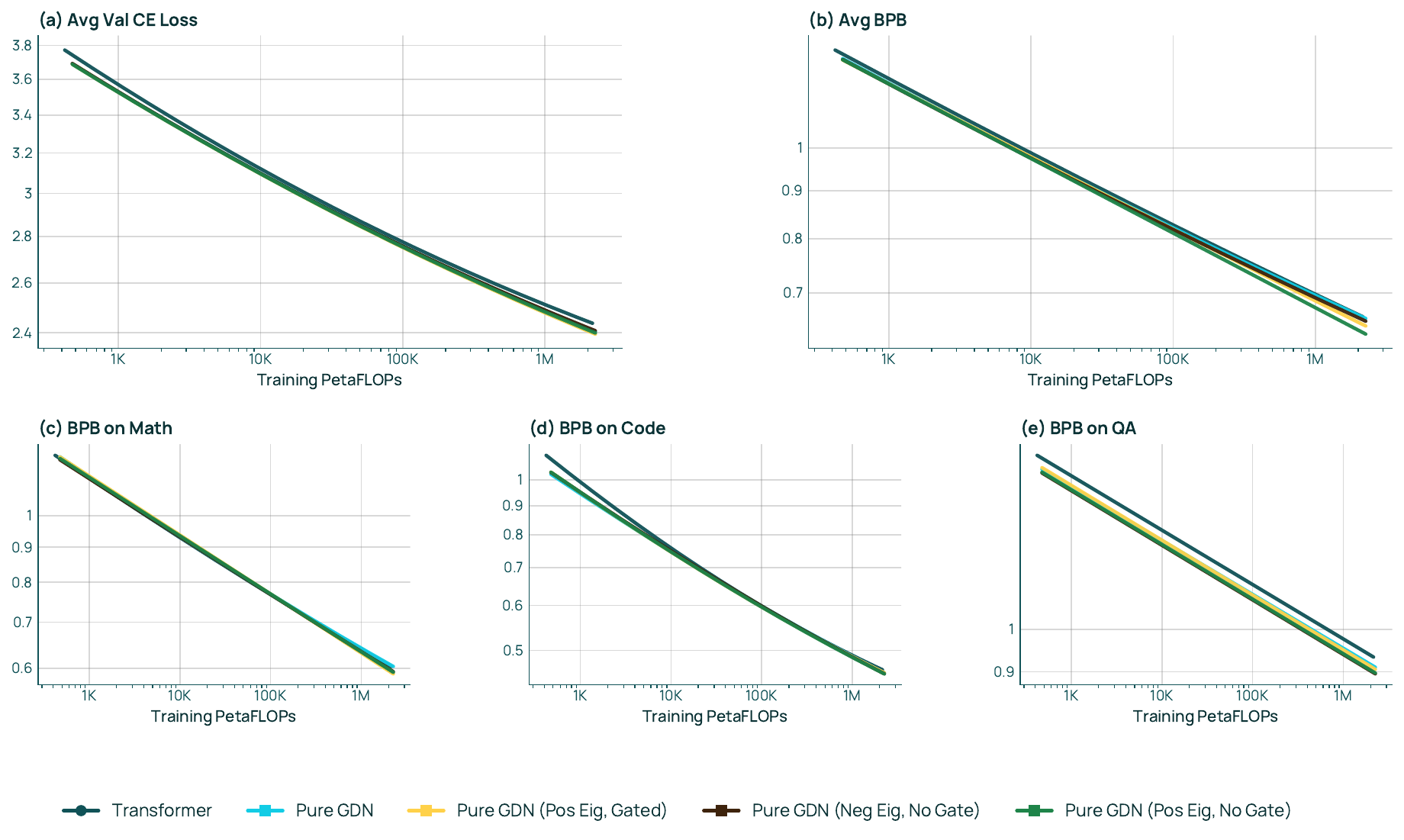}
\caption{Evaluation metrics vs.\ training FLOPs for GDN ablations (e.g., GDN with and without negative eigenvalues). Panels follow the same layout as \Cref{fig:base_easy_avg}: (a) average validation cross-entropy loss, (b) Base Easy Suite average BPB, and (c)--(e) per-cluster BPB for Math, Code, and QA. The thick line is the fitted compute-optimal frontier.}
\label{fig:ablations_pure_gdns}
\end{figure*}

\begin{figure*}[htbp]
\centering
\includegraphics[width=\textwidth]{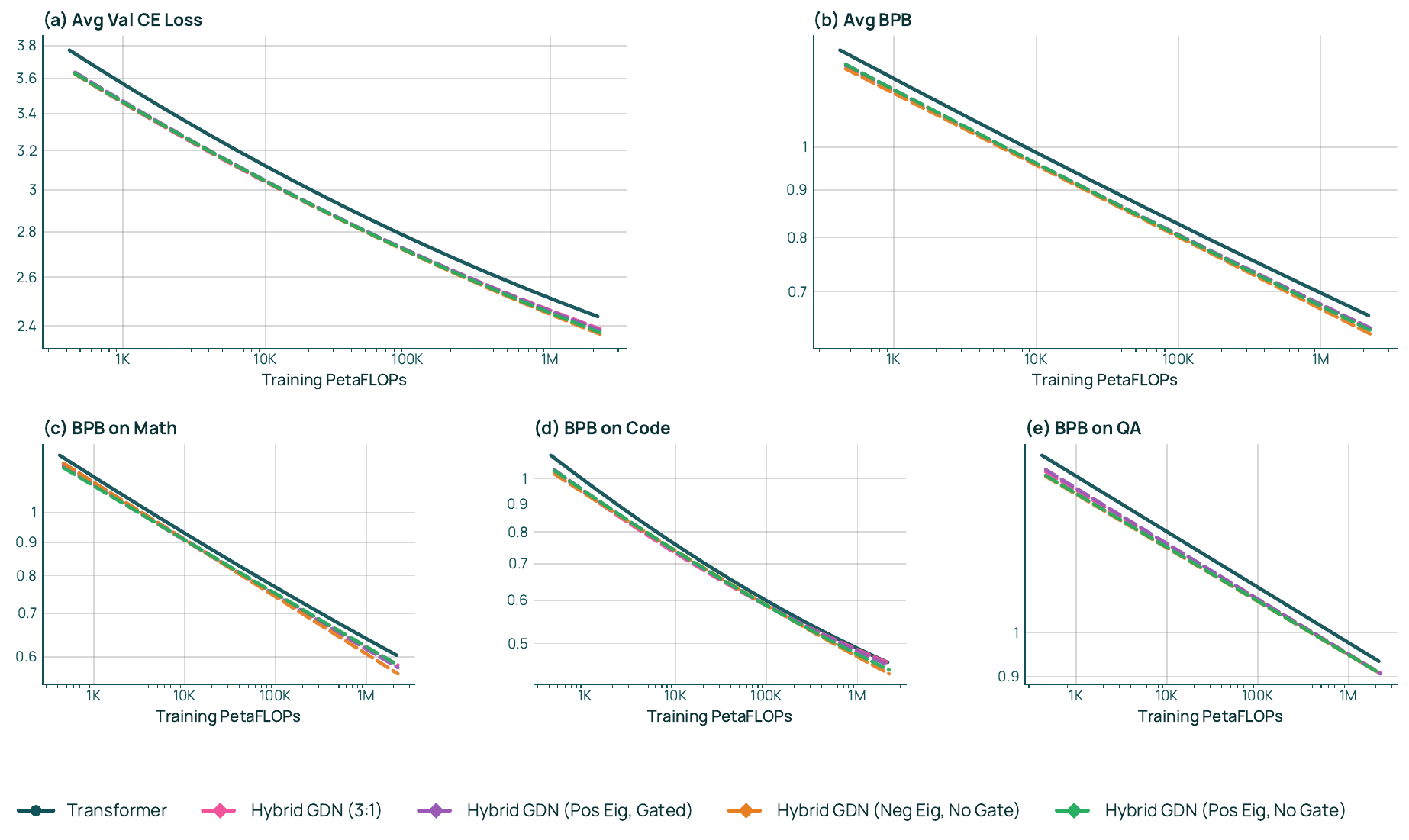}
\caption{Evaluation metrics vs.\ training FLOPs for GDN hybrid ablations (e.g., hybrid GDN with and without negative eigenvalues). Panels follow the same layout as \Cref{fig:base_easy_avg}: (a) average validation cross-entropy loss, (b) Base Easy Suite average BPB, and (c)--(e) per-cluster BPB for Math, Code, and QA. The thick line is the fitted compute-optimal frontier.}
\label{fig:ablations_hybrid_gdns}
\end{figure*}

\paragraph{Per-domain Breakdown.}
\Cref{tab:ablation-evals-detail-a} extends \Cref{tab:ablation-evals-avg} and provides the per-domain BPB breakdown for all ablation configurations evaluated in \Cref{sec:architecture-ablations}.
These verify that the aggregate loss improvements are consistent across domains.

\begin{table*}[t]
    \centering
    \caption{
    \textbf{Architecture ablation results --- per-domain breakdown.}
    Per-domain \olmothreeeval BPB for pure and hybrid architectures.
    \textbf{Bold} indicates best; \underline{underline} second best; $^\dagger$ best within section.
    $\bigstar$ marks our selected architecture.
    Note that per-size comparisons should be interpreted with care, as architectures at the same nominal scale differ in actual parameter count (see \Cref{tab:ablation-configs}).
    }
    \label{tab:ablation-evals-detail-a}
    \small
    \textit{Small scales (60M--370M parameters):}\\[2pt]
    \resizebox{\textwidth}{!}{%
    \begin{tabular}{@{}l c ccc ccc ccc ccc@{}}
        \toprule
        &  & \multicolumn{3}{c}{\textbf{60M}} & \multicolumn{3}{c}{\textbf{100M}} & \multicolumn{3}{c}{\textbf{190M}} & \multicolumn{3}{c}{\textbf{370M}} \\
        \cmidrule(lr){3-5} \cmidrule(lr){6-8} \cmidrule(lr){9-11} \cmidrule(lr){12-14}
        \textbf{Architecture} & \textbf{Attn \%} & Math & Code & QA & Math & Code & QA & Math & Code & QA & Math & Code & QA \\
        \midrule
        \multicolumn{14}{l}{\textit{Pure Architectures}} \\
        \quad Transformer & 100\% & 1.102 & 0.931 & 1.433 & 0.981 & 0.804 & 1.326 & 0.887 & 0.707 & 1.257 & 0.782 & 0.616 & 1.119 \\
        \quad GDN & 0\% & 1.040$^\dagger$ & \underline{0.837} & 1.363$^\dagger$ & 0.949$^\dagger$ & 0.752$^\dagger$ & 1.270$^\dagger$ & 0.839$^\dagger$ & 0.671$^\dagger$ & 1.176$^\dagger$ & 0.746$^\dagger$ & \textbf{0.573} & \textbf{1.066} \\
        \quad Mamba2 & 0\% & 1.110 & 0.917 & 1.411 & 0.994 & 0.798 & 1.316 & 0.896 & 0.711 & 1.215 & 0.790 & 0.616 & 1.122 \\
        \midrule
        \multicolumn{14}{l}{\textit{Hybrid: Interleaved Attention}} \\
        \quad GDN (1:1) & 50\% & 1.044 & 0.861 & 1.374 & 0.936 & 0.765 & 1.273 & 0.839 & 0.673 & 1.178 & 0.744 & 0.585 & 1.083 \\
        \quad GDN (3:1)$^\bigstar$ & 25\% & 1.034 & 0.849$^\dagger$ & \textbf{1.349} & 0.934 & 0.752$^\dagger$ & 1.271 & \textbf{0.835} & \textbf{0.663} & 1.177 & 0.741$^\dagger$ & 0.588 & 1.069$^\dagger$ \\
        \quad GDN (7:1) & 12.5\% & 1.032$^\dagger$ & 0.850 & 1.365 & 0.932$^\dagger$ & 0.764 & 1.269$^\dagger$ & 0.840 & 0.668 & \underline{1.169} & 0.745 & 0.584$^\dagger$ & 1.080 \\
        \quad Mamba2 (3:1) & 25\% & 1.080 & 0.905 & 1.404 & 0.957 & 0.786 & 1.292 & 0.863 & 0.698 & 1.202 & 0.770 & 0.603 & 1.110 \\
        \midrule
        \multicolumn{14}{l}{\textit{Hybrid GDN (3:1): Middle Placement}} \\
        \quad Interleaved$^\bigstar$ & 25\% & 1.034$^\dagger$ & 0.849$^\dagger$ & \underline{1.349} & 0.934$^\dagger$ & 0.752$^\dagger$ & 1.271$^\dagger$ & \underline{0.835} & \underline{0.663} & 1.177$^\dagger$ & 0.741$^\dagger$ & 0.588 & 1.069$^\dagger$ \\
        \quad Middle & 25\% & 1.041 & 0.868 & 1.353 & 0.934 & 0.754 & 1.275 & 0.842 & 0.671 & 1.183 & 0.743 & 0.576$^\dagger$ & 1.081 \\
        \midrule
        \multicolumn{14}{l}{\textit{Hybrid GDN (3:1): Gate/Eigenvalue Ablations}} \\
        \quad Neg EV, gate$^\bigstar$ & 25\% & 1.034 & 0.849 & 1.349$^\dagger$ & 0.934 & 0.752 & 1.271 & 0.835$^\dagger$ & 0.663$^\dagger$ & 1.177 & 0.741 & 0.588 & 1.069 \\
        \quad Neg EV, no gate & 25\% & \underline{1.026} & 0.838$^\dagger$ & 1.378 & 0.930 & \textbf{0.733} & \textbf{1.264} & 0.837 & 0.673 & 1.171$^\dagger$ & \underline{0.739} & 0.595 & 1.070 \\
        \quad Pos EV, gate & 25\% & 1.033 & 0.849 & 1.376 & \textbf{0.928} & 0.746 & \underline{1.267} & 0.839 & 0.675 & 1.176 & 0.764 & 0.598 & 1.122 \\
        \quad Pos EV, no gate & 25\% & 1.037 & 0.843 & 1.366 & \underline{0.929} & 0.757 & 1.278 & 0.839 & 0.681 & 1.183 & \textbf{0.738} & 0.579$^\dagger$ & 1.067$^\dagger$ \\
        \midrule
        \multicolumn{14}{l}{\textit{Pure GDN: Gate/Eigenvalue Ablations}} \\
        \quad Neg EV, gate & 0\% & 1.040 & 0.837 & 1.363 & 0.949 & 0.752 & 1.270$^\dagger$ & 0.839$^\dagger$ & 0.671 & 1.176 & 0.746 & \underline{0.573} & \underline{1.066} \\
        \quad Pos EV, gate & 0\% & \textbf{1.026} & \textbf{0.834} & 1.352$^\dagger$ & 0.936$^\dagger$ & \underline{0.740} & 1.271 & 0.848 & 0.664$^\dagger$ & \textbf{1.167} & 0.745$^\dagger$ & 0.577 & 1.073 \\
        \quad Neg EV, no gate & 0\% & 1.056 & 0.846 & 1.360 & 0.958 & 0.762 & 1.276 & 0.857 & 0.678 & 1.176 & 0.757 & 0.599 & 1.071 \\
        \quad Pos EV, no gate & 0\% & 1.052 & 0.856 & 1.369 & 0.963 & 0.751 & 1.292 & 0.851 & 0.683 & 1.170 & 0.758 & 0.585 & 1.075 \\
        \bottomrule
    \end{tabular}%
    }
    \medskip
    \phantomsection\label{tab:ablation-evals-detail-b}
    \textit{Large scales (600M--1B parameters):}\\[2pt]
    \resizebox{\textwidth}{!}{%
    \begin{tabular}{@{}l c ccc ccc ccc@{}}
        \toprule
        &  & \multicolumn{3}{c}{\textbf{600M}} & \multicolumn{3}{c}{\textbf{760M}} & \multicolumn{3}{c}{\textbf{1B}} \\
        \cmidrule(lr){3-5} \cmidrule(lr){6-8} \cmidrule(lr){9-11}
        \textbf{Architecture} & \textbf{Attn \%} & Math & Code & QA & Math & Code & QA & Math & Code & QA \\
        \midrule
        \multicolumn{11}{l}{\textit{Pure Architectures}} \\
        \quad Transformer & 100\% & 0.724 & 0.564 & 1.055 & 0.686 & 0.538 & 1.015 & 0.622$^\dagger$ & 0.492 & 0.933 \\
        \quad GDN & 0\% & 0.705$^\dagger$ & 0.551$^\dagger$ & 1.026$^\dagger$ & 0.673$^\dagger$ & 0.522$^\dagger$ & 0.970$^\dagger$ & 0.624 & 0.485$^\dagger$ & 0.921$^\dagger$ \\
        \quad Mamba2 & 0\% & 0.736 & 0.577 & 1.047 & 0.699 & 0.547 & 1.005 & 0.662 & 0.524 & 0.969 \\
        \midrule
        \multicolumn{11}{l}{\textit{Hybrid: Interleaved Attention}} \\
        \quad GDN (1:1) & 50\% & 0.725 & 0.571 & 1.065 & 0.663 & 0.527 & 0.983 & 0.610 & 0.484$^\dagger$ & 0.928 \\
        \quad GDN (3:1)$^\bigstar$ & 25\% & 0.694 & 0.546 & 1.022 & 0.658$^\dagger$ & 0.523$^\dagger$ & \textbf{0.969} & \textbf{0.604} & 0.488 & 0.914$^\dagger$ \\
        \quad GDN (7:1) & 12.5\% & \textbf{0.693} & \underline{0.543} & \underline{1.009} & 0.660 & 0.524 & 0.978 & 0.612 & 0.486 & 0.927 \\
        \quad Mamba2 (3:1) & 25\% & 0.714 & 0.561 & 1.042 & 0.673 & 0.531 & 0.992 & 0.638 & 0.509 & 0.946 \\
        \midrule
        \multicolumn{11}{l}{\textit{Hybrid GDN (3:1): Middle Placement}} \\
        \quad Interleaved$^\bigstar$ & 25\% & 0.694$^\dagger$ & 0.546$^\dagger$ & 1.022 & 0.658$^\dagger$ & 0.523 & \underline{0.969} & \underline{0.604} & 0.488 & 0.914$^\dagger$ \\
        \quad Middle & 25\% & 0.698 & 0.547 & 1.015$^\dagger$ & 0.661 & 0.521$^\dagger$ & 0.982 & 0.608 & 0.487$^\dagger$ & 0.922 \\
        \midrule
        \multicolumn{11}{l}{\textit{Hybrid GDN (3:1): Gate/Eigenvalue Ablations}} \\
        \quad Neg EV, gate$^\bigstar$ & 25\% & 0.694 & 0.546 & 1.022 & 0.658 & 0.523 & 0.969$^\dagger$ & 0.604$^\dagger$ & 0.488 & 0.914 \\
        \quad Neg EV, no gate & 25\% & 0.696 & 0.548 & 1.012$^\dagger$ & \textbf{0.654} & 0.526 & 0.983 & 0.610 & \textbf{0.483} & \textbf{0.904} \\
        \quad Pos EV, gate & 25\% & 0.695 & 0.546 & 1.030 & 0.659 & \textbf{0.514} & 0.970 & 0.605 & \underline{0.484} & \underline{0.912} \\
        \quad Pos EV, no gate & 25\% & \underline{0.694} & 0.545$^\dagger$ & 1.021 & \underline{0.656} & 0.518 & 0.977 & 0.605 & 0.486 & 0.918 \\
        \midrule
        \multicolumn{11}{l}{\textit{Pure GDN: Gate/Eigenvalue Ablations}} \\
        \quad Neg EV, gate & 0\% & 0.705 & 0.551 & 1.026 & 0.673 & 0.522 & 0.970$^\dagger$ & 0.624 & 0.485$^\dagger$ & 0.921 \\
        \quad Pos EV, gate & 0\% & 0.705$^\dagger$ & \textbf{0.542} & \textbf{1.004} & 0.669$^\dagger$ & \underline{0.516} & 0.975 & 0.614$^\dagger$ & 0.492 & 0.917$^\dagger$ \\
        \quad Neg EV, no gate & 0\% & 0.710 & 0.546 & 1.026 & 0.679 & 0.527 & 0.986 & 0.623 & 0.489 & 0.929 \\
        \quad Pos EV, no gate & 0\% & 0.712 & 0.559 & 1.024 & 0.677 & 0.526 & 0.977 & 0.626 & 0.488 & 0.919 \\
        \bottomrule
    \end{tabular}%
    }
\end{table*}

\subsection{Details on the Scaling Law Fits} \label{sec:fig-table-details}

\paragraph{Data Used for Fitting.}
Each architecture's scaling ladder consists of models trained at seven sizes and five Chinchilla multiples per size ($D/N \in \{10, 20, 40, 80, 160\}$), using post-decay checkpoints only (i.e., checkpoints after the learning-rate decay completes).
The loss signal is the \emph{average} of 11 held-out validation cross-entropy losses spanning diverse domains (C4, books, Common Crawl, Pes2o, Reddit, Stack, Wikipedia, ICE, S2ORC, Pile, and Wikitext-103); this averaging reduces noise from any single domain.
Non-embedding parameter counts are used throughout (embedding and language-model head parameters are excluded), and architecture-specific FLOPs are computed analytically using the parallel (chunkwise) formulations described in \Cref{sec:flop-computation}.

\paragraph{Scaling Law Fitting.}
We fit the Chinchilla parametric form $L(N, D) = E + A/N^\alpha + B/D^\beta$ by minimizing a Huber loss over the log-residuals, using a multi-start grid optimization (with six slices of the parameter space) to avoid local minima.
Bootstrap confidence intervals (95\%, $n = 1\text{,}000$ resamples) are computed by resampling data points with replacement and re-fitting.

\paragraph{Construction of \Cref{fig:scaling-law-fit-log-opt}.}
In each panel, thin curves connect the five post-decay checkpoints (one per Chinchilla multiple) for a given model size, and a bold curve shows the fitted scaling law $L(N, D)$ evaluated at the \emph{compute-optimal frontier}: for each $C$, $N$ and $D$ are allocated optimally and the predicted loss is computed.
Whereas we normally compute the FLOPs as $C = 3 \times F_\text{fwd} \times D$, where $F_\text{fwd}$ is the architecture-specific forward-pass FLOPs per token (see \Cref{sec:flop-computation}), in \Cref{fig:scaling-law-fit-log-opt}(a) we use the simplification $C = 6ND$ for all architectures.
In panel~(b), thin curves connect checkpoints of different model sizes at the same Chinchilla multiple, and in panel~(c), they connect checkpoints of the same size at different Chinchilla factors.
In both cases, the bold curve is evaluated at the largest Chinchilla multiple ($D/N = 160$).

\paragraph{Construction of \Cref{fig:combined-savings}.}
Both panels are derived from the fitted scaling laws by analytically inverting the loss function.
Panel~(a) solves $L(N, D) = L_\text{target}$ for $D$ as a function of $N$:
\begin{equation*}
  D(N) = \left(\frac{B}{L_\text{target} - E - A/N^\alpha}\right)^{1/\beta},
\end{equation*}
where $L_\text{target}$ is set to the minimum observed training loss.
This gives the projected number of training tokens each architecture needs to reach that target as a function of model size.
Panel~(b) plots the savings factor, defined as the ratio of the transformer's token requirement to each other architecture's token requirement at matched model size, i.e., $D_\text{Transformer}(N) / D_\text{arch}(N)$.

\paragraph{Construction of \Cref{tab:compute-equivalent-loss}.}
Here, we again use the simplification $C = 6ND$.
For each compute budget $C = 6ND$, the compute-optimal allocation is derived from the first-order condition $\alpha A / N^\alpha = \beta B / D^\beta$, which gives
\begin{equation*}
  N^* = \left(\frac{C}{6G}\right)^{a_\text{opt}}, \quad D^* = \frac{C}{6 N^*},
\end{equation*}
where $G = (\beta B / \alpha A)^{1/\beta}$ and $a_\text{opt} = \beta / (\alpha + \beta)$ is each architecture's fitted compute-optimal exponent.
The predicted loss $L(N^*, D^*)$ and the difference $\Delta$ relative to the transformer are reported with 95\% bootstrap confidence intervals.

\subsection{Architectural Details} \label{sec:arch-details}

All models in our scaling ladder and ablation experiments share the same base Olmo 3 hyperparameters at each size (see \Cref{tab:ablation-configs}): model dimension~$d$, number of attention heads~$h$, number of layers~$l$, and vocabulary size $V = 100{,}352$ (the Dolma~2 tokenizer, padded for efficient embedding lookups).
We provide some additional details on the architecture below.

\paragraph{Olmo 3 (Transformer Baseline).}
Each Olmo 3 layer consists of a multi-head self-attention sub-layer followed by a SwiGLU MLP \citep{shazeer2020glu}, with RMSNorm \citep{zhang2019rmsnorm} applied before each sub-layer (pre-norm).
Attention uses rotary position embeddings (RoPE; \citealp{jianlin2024rope}) and QK-norm.
No grouped-query attention is used: the number of key--value heads equals the number of query heads.
The MLP hidden dimension is $\lfloor \frac{3}{2} \cdot \frac{8d}{3} \rceil_{256}$, where $\lceil \cdot \rceil_{256}$ rounds up to the nearest multiple of 256.
The embedding and language-model head matrices are untied.

\paragraph{Hybrid (GDN--Transformer).}
Hybrid GDN models use the same $d$, $h$, and $l$ as Olmo 3 but replace a fraction of the attention sub-layers with GDN sub-layers \citep{yang2025gdn}.
In our default configuration, every $r$-th layer is an attention layer and the remaining layers are GDN layers, where $r$ is the \emph{transformer ratio} (default $r{=}4$, i.e., a 3:1 linear layer to attention ratio).
We additionally enforce the final layer be an attention layer; if it is not already selected by the every-$r$th rule, it is added.

Each GDN sub-layer computes query, key, and value projections with head dimension $h_{\text{GDN}} = \lceil 0.75 \cdot d / h \rceil_{128}$, yielding a key dimension of $k = h \cdot h_{\text{GDN}}$ and value dimension $v = h \cdot 2 h_{\text{GDN}}$ (i.e., the value dimension is expanded by a factor of~2).
The GDN sub-layer further includes two scalar per-head parameters ($A_{\log}$ and $\text{dt}_\text{bias}$), short depthwise convolutions (kernel size~4) over the $q$, $k$, and $v$ streams, a gate projection, and an output projection.
Each GDN layer retains the same SwiGLU MLP and two RMSNorms as in the Olmo 3 block.

\paragraph{Pure GDN.}
Pure GDN models replace \emph{all} attention layers with GDN layers (i.e., $r{=}0$) and do not force a final attention layer.
All other hyperparameters are identical to the hybrid variant.

\paragraph{Hybrid Mamba2 (Mamba2--Transformer).}
Hybrid Mamba2 models follow the same layer interleaving scheme as Hybrid GDN ($r{=}4$ by default with a forced final attention layer), but substitute Mamba2 \citep{dao2024mamba2} for the non-attention layers.
The Mamba2 sub-layer uses an expansion factor of~2 (intermediate size $= 2d$), state size $n = 128$, $n_{\mathrm{groups}} = 1$, and a depthwise convolution with kernel size~4.
In our hybrid Mamba2 configuration, the Mamba2 layers retain the full MLP from the attention blocks.

\begin{table*}[t]
\centering
\caption{
    \textbf{Architecture configurations and non-embedding parameter counts (millions).}
    $d$: model dimension, $h$: number of attention heads, $l$: number of layers.
    Architectures at the same nominal scale can differ substantially in parameter count due to differences in layer composition.
}
\label{tab:ablation-configs}
\small
\begin{tabular}{@{}l c r r r r r r r@{}}
\toprule
\textbf{Architecture} & \textbf{Attn \%} & \textbf{60M} & \textbf{100M} & \textbf{190M} & \textbf{370M} & \textbf{600M} & \textbf{760M} & \textbf{1B} \\
\midrule
$d$ &  & 384 & 512 & 768 & 1024 & 1280 & 1536 & 2048 \\
$h$ &  & 8 & 8 & 12 & 16 & 16 & 16 & 16 \\
$l$ &  & 8 & 12 & 12 & 16 & 16 & 16 & 16 \\
\midrule
\multicolumn{9}{l}{\textit{Pure Architectures}} \\
\quad Transformer & 100\% & 57 & 102 & 190 & 371 & 548 & 758 & 1279 \\
\quad GDN & 0\% & 78 & 140 & 276 & 574 & 780 & 1011 & 1549 \\
\quad Mamba2 & 0\% & 61 & 110 & 207 & 410 & 606 & 841 & 1423 \\
\midrule
\multicolumn{9}{l}{\textit{Hybrid: Interleaved Attention}} \\
\quad GDN (1:1) & 50\% & 68 & 121 & 233 & 472 & 664 & 885 & 1414 \\
\quad GDN (3:1)$^\bigstar$ & 25\% & 73 & 130 & 254 & 523 & 722 & 948 & 1482 \\
\quad GDN (7:1) & 12.5\% & 75 & 133 & 262 & 548 & 751 & 980 & 1516 \\
\quad Mamba2 (3:1) & 25\% & 60 & 108 & 203 & 400 & 592 & 820 & 1387 \\
\midrule
\multicolumn{9}{l}{\textit{Hybrid GDN (3:1): Middle Placement}} \\
\quad Interleaved (reference)$^\bigstar$ & 25\% & 73 & 130 & 254 & 523 & 722 & 948 & 1482 \\
\quad Middle & 25\% & 70 & 127 & 247 & 510 & 707 & 932 & 1465 \\
\midrule
\multicolumn{9}{l}{\textit{Hybrid GDN (3:1): Gate/Eigenvalue Ablations}} \\
\quad Neg EV, gate$^\bigstar$ & 25\% & 73 & 130 & 254 & 523 & 722 & 948 & 1482 \\
\quad Neg EV, no gate & 25\% & 73 & 130 & 254 & 523 & 722 & 948 & 1482 \\
\quad Pos EV, gate & 25\% & 73 & 130 & 254 & 523 & 722 & 948 & 1482 \\
\quad Pos EV, no gate & 25\% & 73 & 130 & 254 & 523 & 722 & 948 & 1482 \\
\midrule
\multicolumn{9}{l}{\textit{Pure GDN: Gate/Eigenvalue Ablations}} \\
\quad Neg EV, gate & 0\% & 78 & 140 & 276 & 574 & 780 & 1011 & 1549 \\
\quad Pos EV, gate & 0\% & 78 & 140 & 276 & 574 & 780 & 1011 & 1549 \\
\quad Neg EV, no gate & 0\% & 78 & 140 & 276 & 574 & 780 & 1011 & 1549 \\
\quad Pos EV, no gate & 0\% & 78 & 140 & 276 & 574 & 780 & 1011 & 1549 \\
\bottomrule
\end{tabular}
\end{table*}

\subsection{Parameter Count and FLOP Computations} \label{sec:flop-computation}

We report total (non-embedding) parameter counts and forward-pass floating-point operations (FLOPs) per token for each model.
Both quantities are computed analytically from the architecture specification; the formulas are detailed below.

\paragraph{Parameter Counting.}
The total parameter count is the sum of embedding, language-model head, per-layer, and final layer-norm parameters:
\begin{equation}
    N_{\text{total}} = \underbrace{V \cdot d}_{\text{embed}} + \underbrace{V \cdot d}_{\text{LM head}} + \sum_{\ell=1}^{l} N_{\text{layer}}^{(\ell)} + \underbrace{d}_{\text{final norm}},
\end{equation}
where $N_{\text{layer}}^{(\ell)}$ depends on the layer type.
Non-embedding parameters are $N_{\text{non-emb}} = N_{\text{total}} - V \cdot d$.

For an \textbf{attention layer}, the sub-layer parameter count is:
\begin{equation}
    N_{\text{attn}} = \underbrace{4d^2}_{\text{Q,K,V,O projections}} + \underbrace{2d}_{\text{QK-norm}} + \underbrace{3d \cdot d_{\text{MLP}}}_{\text{SwiGLU MLP}} + \underbrace{2d}_{\text{2 RMSNorms}},
\end{equation}
where $d_{\text{MLP}} = \lceil \frac{3}{2} \cdot \frac{8d}{3} \rceil_{256}$ is the MLP hidden size.

For a \textbf{GDN layer}, letting $k = h \cdot h_{\text{GDN}}$ and $v = 2k$ denote the total key and value dimensions:
\begin{equation}
    N_{\text{GDN}} = \underbrace{d(2k + v + 2h)}_{\substack{\text{q,k,v,a,b} \\ \text{projections}}} + \underbrace{2h}_{\text{$A_{\log}$, dt\textsubscript{bias}}} + \underbrace{(2k + v) \cdot 4}_{\text{short convolutions}} + \underbrace{dv}_{\substack{\text{gate} \\ \text{projection}}} + \underbrace{v}_{\text{norm}} + \underbrace{vd}_{\substack{\text{output} \\ \text{projection}}} + \underbrace{3d \cdot d_{\text{MLP}}}_{\text{MLP}} + \underbrace{2d}_{\text{norms}}.
\end{equation}

For a \textbf{Mamba2 layer}, letting $e = 2d$ (intermediate size), $c = e + 2 n_{\mathrm{groups}} \cdot n$ (convolution dimension), and $p = e + c + h$ (projection size):
\begin{equation}
    N_{\text{mamba2}} = \underbrace{d \cdot p + p}_{\text{in projection}} + \underbrace{c \cdot 4}_{\text{conv1d}} + \underbrace{3h}_{\text{dt,A,D}} + \underbrace{e}_{\text{norm}} + \underbrace{e \cdot d + d}_{\text{out projection}} + \underbrace{3d \cdot d_{\text{MLP}}}_{\text{SwiGLU MLP}} + \underbrace{2d}_{\text{layer norms}}.
\end{equation}

\subsubsection{FLOP Counting}
We estimate forward-pass FLOPs per token using the standard convention $\text{FLOPs} = 2 \times \text{MACs}$ (multiply-accumulate operations).
The total FLOPs per token are:
\begin{equation}
    F = 2 \left( \underbrace{d \cdot V}_{\text{LM head}} + \sum_{\ell=1}^{l} M_{\text{layer}}^{(\ell)} \right),
\end{equation}
where $M_{\text{layer}}^{(\ell)}$ is the per-token MAC count for layer $\ell$.
Embedding lookups are excluded as they involve no arithmetic.
For \textbf{attention layers}, the MACs per token include both the projections and the sequence-length-dependent attention computation. For causal attention, we use the average context length $s_{\text{eff}} = s/2$:
\begin{equation}
    M_{\text{attn}} = \underbrace{4d^2}_{\text{Q,K,V,O}} + \underbrace{h \cdot \frac{d}{h} \cdot s_\text{eff}}_{\text{scores } (\text{QK}^{\top})} + \underbrace{h \cdot \frac{d}{h} \cdot s_\text{eff}}_{\text{output } (\text{attn} \cdot V)} + \underbrace{\frac{5}{2} \cdot h \cdot s_\text{eff}}_{\text{softmax}} + \underbrace{3d \cdot d_{\text{MLP}}}_{\text{MLP}} = 4d^2 + ds + 3d \cdot d_{\text{MLP}},
\end{equation}
where $s$ is the sequence length.
The factor $\frac{5}{2}$ in front of the softmax contribution accounts for the $5 s_\text{eff}$ FLOPs usually attributed to softmax normalization \citep{beck2026xlstm} divided by $2$ since the expression computes MACs.

For the linear recurrent layers (GDN and Mamba2), we first present per-token MACs in the recurrent formulation, and then describe the chunkwise parallel formulation used during training.

\paragraph{Recurrent Formulation.}
For \textbf{GDN layers}, the per-token MACs include projections, convolutions, the recurrence, and the MLP.
The delta rule requires three state interactions (retention projection, state update, output projection):
\begin{equation}
    M_{\text{GDN}} = \underbrace{d(2k + v + 2h)}_{\text{linear projs}} + \underbrace{4(2k + v)}_{\text{depthwise convs}} + \underbrace{dv}_{\text{gate proj}} + \underbrace{vd}_{\text{out proj}} + \underbrace{3 \cdot h \cdot h_{\text{GDN}} \cdot 2h_{\text{GDN}}}_{\substack{\text{recurrence} \\ (Sk, uv^T, Sq)}} + \underbrace{3d \cdot d_{\text{MLP}}}_{\text{MLP}}.
\end{equation}

For \textbf{Mamba2 layers}, the recurrence involves two state interactions (state update and output projection):
\begin{equation}
    M_{\text{mamba2}} = \underbrace{d \cdot p}_{\text{in proj}} + \underbrace{4c}_{\text{conv1d}} + \underbrace{e \cdot d}_{\text{out proj}} + \underbrace{2 \cdot h \cdot \frac{e}{h} \cdot n}_{\substack{\text{SSM recurrence} \\ (vk^T, Sq)}} + \underbrace{3d \cdot d_{\text{MLP}}}_{\text{MLP}}.
\end{equation}

\paragraph{Parallel (Chunkwise) Formulation.}
While the recurrent formulation above describes inference-time FLOPs, training utilizes hardware-efficient chunkwise parallel algorithms.
For Gated DeltaNet, we utilize the \textbf{Extended WY Representation} algorithm \citep{yang2025gdn}, which folds the data-dependent decay terms directly into the update matrices, avoiding the need for log-space computations or sub-tiling required by previous gated linear attention methods \citep{yang2024gatedlinearattention}.
Training FLOPs account for the overhead of intra-chunk local attention and inter-chunk state passing.

For \textbf{GDN layers}, the Extended WY algorithm introduces overheads proportional to the chunk size $L_{\text{chunk}}$ (set to 256) for constructing the kernel, update matrices ($W, U$), and computing local attention.
\begin{equation}
    M_{\text{GDN,train}} = \underbrace{M_{\text{proj,conv,MLP}}}_{\text{sequence-independent}} + \underbrace{L_{\text{chunk}}(3k + 2v)}_{\substack{\text{intra-chunk overhead} \\ (\text{Kernel, } W/U, \text{ Attn})}} + \underbrace{3 \cdot \frac{kv}{h}}_{\substack{\text{inter-chunk} \\ \text{state passing}}},
\end{equation}
where $M_{\text{proj,conv,MLP}}$ represents the cost of projections, convolutions, and MLPs (identical to the recurrent formulation). The intra-chunk coefficient $(3k+2v)$ accounts for the construction of $KK^\top$, $W$, $U$, $QK^\top$, and the final output computation. The inter-chunk cost ($3kv/h$) represents the three matrix multiplications required to propagate the hidden state ($QS^\top$, $U^\top K$, $WS^\top$).

For \textbf{Mamba2 layers}, utilizing the sequential chunkwise algorithm (instead of parallel scan) reduces the state-passing overhead to simple unidirectional updates:
\begin{equation}
    M_{\text{mamba2,train}} = \underbrace{M_{\text{proj,conv,MLP}}}_{\text{sequence-independent}} + \underbrace{2 \cdot L_{\text{chunk}} \cdot e}_{\substack{\text{intra-chunk} \\ \text{SSD mixing}}} + \underbrace{2 \cdot e \cdot n}_{\substack{\text{inter-chunk} \\ \text{state passing}}},
\end{equation}
where $e$ is the intermediate size and $n$ is the state size.
The inter-chunk cost accounts for the unidirectional state passing (state update $S + V K^\top$ and output computation $Q S^\top$) required by the sequential algorithm.

For the scaling law fits, we use the analytically computed training FLOPs per token $F$ (using the parallel formulation) to obtain the compute estimate $C = 3FD$ (accounting for forward and backward passes).

\section{Proofs: Increased Expressive Power Improves Scaling} \label{sec:scaling-laws-proofs}

In the main text, we have theoretically explained how increasing an architecture's expressivity can lead to better scaling on a language modeling objective, working in the idealized setup of the quantization model for neural scaling laws.
We now clarify details of the quantization model and provide deferred proofs.

\citet{michaud2023quantization} introduce the quantization model of neural scaling laws, which we will augment to take into account expressivity limitations of different architectures.
First, we fully define the quantization model, starting with the notion of a learnable task:

\begin{definition}[Learnable Task] \label{def:learnable-task}
    Task $k$ is learnable with $D$ tokens if $D \geq \frac{1}{p_k} T_k$, i.e., we can expect to observe enough relevant tokens leveraging task $k$ within the $D$ tokens.
\end{definition}

In the quantization model, an idealized LM learner proceeds by allocating parameters to learn learnable tasks, ordered by their frequency:

\begin{definition}[Quantization Model] \label{def:quantization-model}
    The learner is given $N$ parameters and $D$ samples.
    It proceeds by ranking all tasks $k$ learnable with $D$ samples by their frequency $p_k$ in the data, greedily spending $C_k$ samples to learn task $k$ in this order until the parameter budget $N$ is exceeded.
\end{definition}

Let $X_k$ be a random variable representing the loss achieved on task $k$. We will analyze $\trueloss$, the expected loss across all tasks, which can be defined as
\begin{equation*}
    \trueloss = \mathbb E \left[ \sum_{k=1}^\infty X_k p_k \right ] .
\end{equation*}

We incorporate expressivity into the quantization model via the following assumptions, restated from the main text:

\assmExpressibility*

\assmLossReduction*

\assmResources*

It follows from these assumptions that, for an unlearned task, $X_k = b$, but, for a learned task,
\begin{equation*}
    X_k =
    \begin{cases}
        a & \textrm{w.p.~$1 - \epsilon$} \\
        a' & \textrm{otherwise.}
    \end{cases}
\end{equation*}
In contrast, for any task $k$, we have the following sample complexity and parameter requirements:
\begin{align*}
    T_k &=
    \begin{cases}
        T & \textrm{w.p.~$1 - \epsilon$} \\
        T' & \textrm{otherwise.}
    \end{cases} \\
    C_k &=
    \begin{cases}
        C & \textrm{w.p.~$1 - \epsilon$} \\
        C' & \textrm{otherwise.}
    \end{cases}
\end{align*}
We will assume $T' \geq T$, that is, inexpressible tasks require at least as many samples to approximate as expressible tasks.
It is natural to imagine $C' \geq C$, though we may also consider the case where $C' = 0$ and $\Delta' = 0$, i.e., inexpressible tasks are completely ignored by the learner.

\subsection{Scaling Law with Tasks Learned}

We first slightly generalize the original scaling law from \citet{michaud2023quantization} in the following way:

\begin{lemma} \label{lem:task-scaling}
    For every learned task $k$, let $X_k$ be an independent random variable with mean $a$.
    Similarly, for every unlearned task $k$, let $X_k$ be an independent random variable with mean $b$.
    Then the expected loss $\trueloss_n$ after learning $n$ tasks is
    \begin{equation*}
        \trueloss_n
        = a + \frac{b - a}{\zeta(\alpha+1)} \sum_{k=n+1}^\infty k^{-(\alpha+1)} .
    \end{equation*}
    Further, the following $L_n$ closely approximates $\tilde L_n$:
    \begin{equation*}
        L_n
        = a + \frac{b - a}{\alpha\, \zeta(\alpha+1)} n^{-\alpha} ,
    \end{equation*}
    in the sense that $L_{n+1} \leq \trueloss_n \leq L_n$, and thus the true loss $\trueloss_n$ rapidly converges to the power law $L_n$.
\end{lemma}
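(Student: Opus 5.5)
The plan is to compute $\trueloss_n$ directly by linearity of expectation and then sandwich the tail sum between two integrals. The task frequencies are $p_k = k^{-(\alpha+1)}/\zeta(\alpha+1)$, since normalizing $p_k \propto k^{-(\alpha+1)}$ requires dividing by $\sum_k k^{-(\alpha+1)} = \zeta(\alpha+1)$, which is finite because $\alpha > 0$. The quantization model (\Cref{def:quantization-model}) learns tasks in rank order, so after learning $n$ tasks the learned set is exactly $\{1,\ldots,n\}$. (One subtlety: under the expressivity-aware variant, learnability depends on $T_k$. The lemma is stated abstractly, conditional on the first $n$ tasks being learned, so I will take that as given.)

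By linearity of expectation and the stated means,
\[
\trueloss_n = \sum_{k=1}^{n} a\,p_k + \sum_{k=n+1}^\infty b\,p_k .
\]
Independence is not even needed here; only the means matter. Since $\sum_k p_k = 1$, I rewrite this as
\[
a + (b-a)\sum_{k>n} p_k = a + \frac{b-a}{\zeta(\alpha+1)}\sum_{k=n+1}^\infty k^{-(\alpha+1)},
\]
which is the first claim. Exchanging the expectation with the infinite sum is justified by absolute convergence, provided the $X_k$ have bounded means.

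For the approximation, I use that $f(x)=x^{-(\alpha+1)}$ is positive and decreasing. On each interval $[k-1,k]$ we have $f(k) \le \int_{k-1}^{k} f$, and on $[k,k+1]$ we have $f(k) \ge \int_k^{k+1} f$. Summing over $k \ge n+1$ gives
\[
\int_{n+1}^\infty f \;\le\; \sum_{k=n+1}^\infty k^{-(\alpha+1)} \;\le\; \int_n^\infty f,
\]
that is,
\[
\frac{(n+1)^{-\alpha}}{\alpha} \le \sum_{k>n} k^{-(\alpha+1)} \le \frac{n^{-\alpha}}{\alpha}.
\]
Multiplying through by $(b-a)/\zeta(\alpha+1)$ and adding $a$ yields $L_{n+1} \le \trueloss_n \le L_n$. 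This step uses $b \ge a$, i.e., learning a task does not increase loss, which matches the setting where $b - a$ is a loss reduction ($\Delta$ or $\Delta'$, both nonnegative). I would state that assumption explicitly; if $b<a$ the inequalities simply reverse.

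Finally, convergence: the gap satisfies
\[
L_n - L_{n+1} = \frac{b-a}{\alpha\zeta(\alpha+1)}\bigl(n^{-\alpha}-(n+1)^{-\alpha}\bigr) = O(n^{-\alpha-1}),
\]
so the relative error $(L_n-\trueloss_n)/(L_n - a)$ is $O(1/n)$. This makes ``closely approximates'' precise. The only step needing care is the integral comparison and the sign condition $b\ge a$; everything else is routine.
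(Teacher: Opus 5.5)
Your proposal is correct and follows the paper's proof. Both use linearity of expectation with $p_k = k^{-(\alpha+1)}/\zeta(\alpha+1)$ for the exact formula, and both apply the integral sandwich $\int_{n+1}^\infty \le \sum_{k>n} \le \int_n^\infty$ to get $L_{n+1} \le \trueloss_n \le L_n$. You depart from the paper only in two small refinements: you state the sign condition $b \ge a$ explicitly (the paper uses it silently, and its ratio bound $L_n/L_{n+1} \le (n+1)^\alpha/n^\alpha$ also needs $a \ge 0$), and you measure the $O(1/n)$ relative error against the reducible part $L_n - a$ rather than $\trueloss_n$, which is at least as strong and avoids the Taylor expansion.
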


\begin{proof}
    The expression for the first part, $\trueloss_n$, is a straightforward extension to the original derivation from \citet{michaud2023quantization}.
    For the approximation, we observe that the terms in the summation $\sum_{n+1}^\infty n^{-(\alpha+1)}$ represent a step function, which is naturally lower and upper bounded by the corresponding continuous ``envelopes'' of the step function as follows:
    \begin{equation*}
        \int_{n+1}^\infty k^{-(\alpha+1)} \, dk \ \leq \ \sum_{k=n+1}^\infty k^{-(\alpha+1)} \ \leq \ \int_n^\infty k^{-(\alpha+1)} \, dk
    \end{equation*}
    which simplifies to
    \begin{equation*}
        \frac{(n+1)^{-\alpha}}{\alpha} \ \leq \ \sum_{k=n+1}^\infty k^{-(\alpha+1)} \ \leq \ \frac{n^{-\alpha}}{\alpha} .
    \end{equation*}
    It follows that $\trueloss_n$ is sandwiched between $L_{n+1}$ and $L_n$, and rapidly approaches $L_n$ in the following sense: \AS{@will: added this for concreteness; have a look when you get a chance; could revisit making the claim in the lemma statement of relative error being only $O(1/n)$.}
    \begin{align*}
        \frac{L_n - \trueloss_n}{\trueloss_n} & = \frac{L_n}{\trueloss_n} - 1 \leq \frac{L_n}{L_{n+1}} - 1 \\\
        & = \frac{a + \frac{b - a}{\alpha\, \zeta(\alpha+1)}\, n^{-\alpha}}{a + \frac{b - a}{\alpha\, \zeta(\alpha+1)}\, (n+1)^{-\alpha}} \, - \, 1 \\
        & \leq \frac{n^{-\alpha}}{(n+1)^{-\alpha}} - 1 \\
        & = \frac{(n+1)^\alpha - n^\alpha}{n^\alpha} \\
        & = \frac{\alpha n^{\alpha-1} + \frac{\alpha (\alpha-1)}{2!} n^{\alpha-2} + \ldots}{n^\alpha} & \text{by Taylor series expansion}\\
        & = O(1/n)
    \end{align*}
    Thus, the relative error $(L_n - \trueloss_n)/\trueloss_n$ is only $O(1/n)$.
\end{proof}

In the upcoming results, we will follow \citet{michaud2023quantization} in using the approximation $L_n$ from \Cref{lem:task-scaling}.
Having established a general result about analyzing the quantization model, we consider its instantiation in the expressivity-aware case:

\begin{lemcorollary} \label{lem:task-scaling-expressivity}
    Adopt \Cref{assm:expressible,assm:loss-incurred}, i.e., tasks are expressible w.p.\ $1-\epsilon$, achieving loss $L_0 - \Delta$ when learned, or inexpressible w.p.\ $\epsilon$, achieving loss $L_0 - \Delta'$ when learned.
    Then the expected loss $\trueloss^\epsilon_n$ after learning $n$ tasks is closely approximated by $L^\epsilon_n$ where:
    \begin{equation*}
        L^\epsilon_n = L^\epsilon_\infty + \frac{L_0 - L^\epsilon_\infty}{\alpha\, \zeta(\alpha+1)} \cdot n^{-\alpha} ,
    \end{equation*}
    and $L^\epsilon_\infty = L_0 - (1 - \epsilon) \Delta - \epsilon \Delta'$.
\end{lemcorollary}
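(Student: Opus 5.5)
The plan is to obtain this corollary directly from \Cref{lem:task-scaling} by identifying the means $a$ and $b$ in the expressivity-aware setting. First, I would fix the unlearned-task loss: under \Cref{assm:loss-incurred}, every unlearned task incurs loss $L_0$ deterministically. So the unlearned random variables $X_k$ are constant with mean $b = L_0$.

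Next, I would compute the learned-task mean. By \Cref{assm:expressible}, each task is independently expressible with probability $1-\epsilon$. When learned, it incurs $L_0 - \Delta$ if expressible and $L_0 - \Delta'$ otherwise. Taking expectation over the expressibility coin gives
\begin{equation*}
a = (1-\epsilon)(L_0 - \Delta) + \epsilon (L_0 - \Delta') = L_0 - (1-\epsilon)\Delta - \epsilon\Delta' ,
\end{equation*}
which is exactly $L^\epsilon_\infty$. The expressibility variables are iid across tasks, so the $X_k$ are independent as \Cref{lem:task-scaling} requires.

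The one point needing care is the main obstacle: the set of learned tasks should not be correlated with expressibility. Otherwise the conditional mean of a learned task would differ from $a$. Here the statement is phrased in terms of ``after learning $n$ tasks'', which I read as the top-$n$ ranked tasks. Since the expressibility of task $k$ is independent of its rank (and thus of its frequency $p_k$), the learned tasks' losses have unconditional mean $a$. I would state this interpretation explicitly. Any budget-induced coupling between which tasks get learned and their expressibility is deferred to the later results on $N$ and $D$.

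Finally, I would substitute $a = L^\epsilon_\infty$ and $b = L_0$ into the approximation $L_n$ of \Cref{lem:task-scaling}. This gives
\begin{equation*}
L^\epsilon_n = L^\epsilon_\infty + \frac{L_0 - L^\epsilon_\infty}{\alpha\,\zeta(\alpha+1)} n^{-\alpha} .
\end{equation*}
The sandwich bound $L^\epsilon_{n+1} \le \trueloss^\epsilon_n \le L^\epsilon_n$ and the $O(1/n)$ relative-error guarantee transfer verbatim from that lemma.
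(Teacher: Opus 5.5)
Your proposal is correct and matches the paper's argument. The paper gives no separate proof of this corollary; it simply instantiates \Cref{lem:task-scaling} with $b = L_0$ and $a = (1-\epsilon)(L_0-\Delta) + \epsilon(L_0-\Delta') = L^\epsilon_\infty$, exactly as you do. Two points in your version are worthwhile clarifications that the paper leaves unstated: the learned set $\{1,\dots,n\}$ is fixed by rank and is therefore independent of expressibility, and the sandwich bound transfers because $L_0 - L^\epsilon_\infty = (1-\epsilon)\Delta + \epsilon\Delta' > 0$.
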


The irreducible loss term $L^\epsilon_\infty$ will be the same in this scaling law as in the upcoming scaling laws for parameters and tokens.
Quite naturally, it interpolates between the reduced loss of expressible and inexpressible tasks.
Assuming $\Delta' < \Delta$, then, for any $\tilde{\epsilon} > \epsilon$, we will have $L^{\tilde{\epsilon}}_\infty > L^\epsilon_\infty$.
Furthermore, the following shows that this reduction in irreducible loss translates to a reduction in loss everywhere along the loss curve:

\begin{lemma} \label{lem:irreducible-loss-lowers-loss-everywhere}
    Fix $f, f' : \mathbb N \to \mathbb (0, 1)$ with $f'(n) \geq f(n)$ for all $n$. Define
    \begin{align*}
        L_n &= a + (b - a) f(n) \\
        L'_n &= a' + (b - a') f'(n) .
    \end{align*}
    Then, if $a' > a$, it holds that $L'_n > L_n$ for all $n$.
\end{lemma}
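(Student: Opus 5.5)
The plan is a direct algebraic comparison: write $L'_n - L_n$ as a sum of two terms, each with an obvious sign. The one hypothesis I need beyond the displayed ones is $b \geq a'$. In the intended application $b = L_0$ and $a' = L_0 - \Delta'$ with $\Delta' > 0$ (\Cref{assm:loss-incurred}), so this holds, and I would state it explicitly as part of the setting.

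First, I expand the difference:
\begin{equation*}
    L'_n - L_n = (a' - a) + (b - a') f'(n) - (b - a) f(n).
\end{equation*}
Next, I add and subtract $a' f(n)$ to regroup it as
\begin{equation*}
    L'_n - L_n = (a' - a)\bigl(1 - f(n)\bigr) + (b - a')\bigl(f'(n) - f(n)\bigr).
\end{equation*}
This regrouping is the only real step. Expanding the right-hand side gives $a' - a - a' f + a f + b f' - b f - a' f' + a' f$, which simplifies back to the left-hand side.

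I then sign the two terms. The first is strictly positive, because $a' > a$ by hypothesis and $f(n) < 1$ since $f$ maps into $(0,1)$. The second is nonnegative, because $f'(n) \geq f(n)$ and $b - a' \geq 0$. Hence $L'_n - L_n > 0$ for every $n$.

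The main obstacle is not any computation but spotting that the statement silently relies on $b \geq a'$. Without it the claim can fail: take $f' \approx 1$, $f$ small and $a' > b$. The regrouping above makes exactly this requirement visible, so I would flag it and justify it from \Cref{assm:loss-incurred} in the context where the lemma is applied, namely comparing $L^{\tilde\epsilon}_n$ with $L^\epsilon_n$ for $\tilde\epsilon > \epsilon$ in \Cref{lem:task-scaling-expressivity}.
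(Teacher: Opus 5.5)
Your argument is the paper's argument. The paper's single inequality $(b-a')f'(n) \geq (b-a')f(n)$, which gives $L'_n - L_n \geq (a'-a)\bigl(1-f(n)\bigr)$, is your identity with the term $(b-a')\bigl(f'(n)-f(n)\bigr)$ dropped, and it relies on $b \geq a'$ just as tacitly as you noticed; this is harmless because $b = L_0 > a'$ in every application.

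One correction to your side remark: the regrouping $(a'-a)\bigl(1-f'(n)\bigr) + (b-a)\bigl(f'(n)-f(n)\bigr)$ shows that $b \geq a$ already suffices. So $a' > b$ alone does not break the claim: your limiting example gives $L'_n - L_n \to b - a$, which is negative only when $b < a$.
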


\begin{proof}
    Define $\Delta L_n = L'_n - L_n$. We show that, if $a' > a$, it holds that $\Delta L_n > 0$ for all $n$. By definition,
    \begin{align*}
        \Delta L_n
        &= a' - a + (b - a') f'(n) - (b - a) f(n) \\
        &\geq a' - a + (b - a') f(n) - (b - a) f(n) \\
        &= (a' - a) \left( 1 - f(n) \right) .
    \end{align*}
    Since $f(n) \in (0, 1)$, we have $\Delta L_n > 0$ as long as $a' > a$.
\end{proof}

Combining \Cref{lem:task-scaling-expressivity} and \Cref{lem:irreducible-loss-lowers-loss-everywhere}, the fact that we have $L^{\tilde{\epsilon}}_\infty > L^\epsilon_\infty$ implies that $L^{\tilde \epsilon}_n > L^\epsilon_n$ for any $n$.

\subsection{Parameter Scaling Law}

Define $\tilde L(N) = \mathbb E \left[ L_{n_N} \right]$, where $n_N$ is the expected number of tasks we can learn with $N$ parameters.

\begin{lemma}
    Adopt the assumptions of \Cref{lem:task-scaling}.
    Further, assume the parameter cost of task $k$ is a random variable $C_k$, independent across $k$ with mean $c$.
    Then, as a function of the number of parameters $N$, the loss $\trueloss(N)$ in the quantization model is closely approximated by a power law, i.e., $\trueloss(N) \approx L(N)$ where:
    \begin{equation*}
        L(N) = a + c^\alpha \cdot \frac{b - a}{\alpha\, \zeta(\alpha+1)} \cdot N^{-\alpha} .
    \end{equation*}
\end{lemma}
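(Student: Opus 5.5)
The approach is to reduce the parameter law to the task-count law of \Cref{lem:task-scaling}. The one new ingredient is a relation between the parameter budget $N$ and the number of tasks $n_N$ the greedy learner of \Cref{def:quantization-model} acquires. In the parameter-limited regime we take the token budget to be effectively unbounded, so every task is learnable. The learner then spends parameters on tasks $1, 2, \ldots$ in rank order, and it stops at the largest $n$ such that $S_n = \sum_{k=1}^n C_k \leq N$.

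The first step is to estimate $n_N$. Since the $C_k$ are independent with mean $c$, we have $\mathbb E[S_n] = nc$. By the law of large numbers, $S_n/n \to c$, so $n_N \approx N/c$. More carefully, $n_N + 1$ is a stopping time for the partial sums. Wald's identity then gives $c\,\mathbb E[n_N+1] = \mathbb E[S_{n_N+1}] \in [N, N + \max_k C_k]$, and hence $\mathbb E[n_N] = N/c + O(1)$. We follow the paper's definition $\trueloss(N) = \mathbb E[L_{n_N}]$ with $n_N$ the expected task count. So we substitute $n = N/c$ into the power law $L_n$ from \Cref{lem:task-scaling}, which yields
\begin{equation*}
a + \frac{b-a}{\alpha\,\zeta(\alpha+1)} \left(\frac{N}{c}\right)^{-\alpha} = a + c^\alpha \cdot \frac{b-a}{\alpha\,\zeta(\alpha+1)} \cdot N^{-\alpha},
\end{equation*}
which is exactly $L(N)$.

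The second step is to justify the ``closely approximated'' claim. It has two sources of error. The first is the sandwich $L_{n+1} \leq \trueloss_n \leq L_n$ from \Cref{lem:task-scaling}, which has relative error $O(1/n)$. The second is the rounding and fluctuation of $n_N$ around $N/c$. Since $L_n$ is monotone and smooth in $n$, an $O(1)$ additive error in $n$ perturbs $L$ by a relative $O(1/n) = O(c/N)$. If one instead wants $\mathbb E[L_{n_N}]$ over the random stopping index, we can use the convexity of $n \mapsto n^{-\alpha}$ and concentration of $S_n$ (for example, assuming $C_k$ bounded so that Hoeffding applies) to show $\mathbb E[n_N^{-\alpha}] = (N/c)^{-\alpha}(1 + o(1))$. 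The independence of the losses $X_k$ from the costs $C_k$ lets us take the expectation of the loss conditionally on which tasks were learned. This matters because, under \Cref{assm:expressible}, a task's expressibility drives both quantities, and they are correlated only through the same task index. Since only means enter, linearity still gives mean loss $a$ or $b$ per task.

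The main obstacle is this last point. The stopping index depends on the realized costs, and in the expressivity instantiation the cost $C_k$ and loss $X_k$ of a task are dependent. The claim ``mean $a$ for learned tasks'' therefore needs care. Conditioning on the set of learned tasks could in principle bias which tasks are learned only near the cutoff. My plan is to bound this boundary effect by a single task's contribution, $O(p_{n_N}) = O(N^{-(\alpha+1)})$, which is lower order than $N^{-\alpha}$. The rest follows from \Cref{lem:task-scaling}.
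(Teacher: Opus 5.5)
Your proposal is correct and follows the paper's argument. The paper likewise sets the number of learned tasks to $n_N = \lfloor N/c \rfloor \approx N/c$, by replacing $\sum_{k=1}^{n_N} C_k$ with $n_N c$, and substitutes this into $L_n$ from \Cref{lem:task-scaling}. Your Wald's-identity bound $\mathbb E[n_N] = N/c + O(1)$ and your treatment of the cost--loss dependence at the cutoff are extra rigor the paper omits, not a different route. For the latter, your plan works: $\{k \le n_N + 1\} = \{S_{k-1} \le N\}$ is independent of task $k$'s cost and loss, so the only bias is a single boundary term of order $p_{n_N+1} = O(N^{-(\alpha+1)})$, as you anticipated.
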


\begin{proof}
    The expected number of tasks $n_N$ we can learn with $N$ parameters satisfies
    \begin{equation*}
        N \geq \sum_{k=1}^{n_N} C_k
          = n_N \cdot c ,
    \end{equation*}
    which implies $n_N = \floor{ N / c } \approx N / c$.
    Plugging this into the expression for $L_n$ from \Cref{lem:task-scaling} yields the following approximation for the expected loss:
    \begin{align*}
        L(N)
        &= a + \frac{b - a}{\alpha\, \zeta(\alpha+1)} \left( N / c \right)^{-\alpha} \\
        &= a + c^\alpha \cdot \frac{b - a}{\alpha\, \zeta(\alpha+1)} \cdot N^{-\alpha} . \qedhere
    \end{align*}
\end{proof}

We now consider the form of $L(N)$ under the assumptions about expressivity awareness.
By \Cref{assm:expressible,assm:less-succint}, we have $C_n = C$ with probability $1 - \epsilon$ and $C_n = C'$ with probability $\epsilon$.
Thus, we obtain:

\begin{lemcorollary} \label{lem:param-scaling-expressivity}
    Adopt \Cref{assm:expressible,assm:loss-incurred,assm:less-succint}, i.e., tasks are expressible w.p.\ $1-\epsilon$, requiring $C$ parameters or inexpressible w.p.\ $\epsilon$, requiring $C'$ parameters.
    Then, as a function of the number of parameters $N$, the loss $\trueloss(N)$ in the quantization model is closely approximated by a power law, i.e., $\trueloss(N) \approx L(N)$ where:
    \begin{equation*}
        L(N) = L^\epsilon_\infty + A_\epsilon \cdot \frac{L_0 - L^\epsilon_\infty}{\alpha\, \zeta(\alpha+1)} \cdot N^{-\alpha} ,
    \end{equation*}
    where $L^\epsilon_\infty$ and $L_0$ are defined as in \Cref{lem:task-scaling-expressivity} and
    \begin{equation*}
        A_\epsilon = (C + \epsilon(C' - C))^\alpha .
    \end{equation*}
\end{lemcorollary}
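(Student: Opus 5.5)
The plan is to apply the preceding (unnumbered) parameter-scaling lemma directly, taking as inputs the means that the expressivity assumptions induce. Three such means are needed.

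First, the mean loss of a learned task. By \Cref{assm:expressible,assm:loss-incurred}, a learned task incurs $L_0-\Delta$ with probability $1-\epsilon$ and $L_0-\Delta'$ otherwise. Its mean is therefore
\[
a = L_0-(1-\epsilon)\Delta-\epsilon\Delta' = L^\epsilon_\infty .
\]
This matches \Cref{lem:task-scaling-expressivity}.

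Second, the mean loss of an unlearned task, which is $b=L_0$. Hence $b-a=L_0-L^\epsilon_\infty$.

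Third, the mean parameter cost. By \Cref{assm:less-succint}, $C_k=C$ with probability $1-\epsilon$ and $C_k=C'$ otherwise. This gives
\[
c=(1-\epsilon)C+\epsilon C' = C+\epsilon(C'-C).
\]

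Substituting $a$, $b$, $c$ into $L(N)=a+c^\alpha\frac{b-a}{\alpha\zeta(\alpha+1)}N^{-\alpha}$ gives exactly the claimed form, with $A_\epsilon=c^\alpha=(C+\epsilon(C'-C))^\alpha$.

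The main obstacle is checking that the hypotheses of the parameter lemma actually hold, since that lemma assumes independence across $k$ and treats loss and cost as separate random variables. Here $X_k$ and $C_k$ are both driven by the same expressibility coin for task $k$, so they are correlated within a task. They are still independent across tasks, because \Cref{assm:expressible} makes expressibility iid.

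I would argue the correlation is harmless, as follows. The count $n_N$ is determined by $\sum_{k\le n}C_k$, and the law of large numbers makes it concentrate at $N/c$; in the heuristic sense the lemma already uses, $n_N\approx N/c$. Conditional on the first $n_N$ tasks being learned, the loss contribution $\sum_{k\le n_N}X_kp_k$ depends on the coins of those particular tasks. One could worry that cheaper (expressible) tasks are overrepresented among learned tasks. This does not happen: the learner proceeds in rank order, so the set of learned tasks is a prefix $\{1,\dots,n_N\}$, and its membership is fixed by the count, not by which coins landed. Within that prefix, each $X_k$ has mean $a$ up to fluctuations that are negligible in the same approximate sense.

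I would state this explicitly as the level of rigor intended, which is consistent with the ``closely approximated'' phrasing. Finally, I would note that the $O(1/n)$ relative-error bound from \Cref{lem:task-scaling} carries over under the substitution $n=N/c$.
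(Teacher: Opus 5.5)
Your proposal is correct and matches the paper's argument. The paper observes that $C_k$ equals $C$ with probability $1-\epsilon$ and $C'$ otherwise. It then substitutes the mean cost $c = C+\epsilon(C'-C)$, together with $a = L^\epsilon_\infty$ and $b = L_0$ from \Cref{lem:task-scaling-expressivity}, into the preceding parameter-scaling lemma.

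Your added discussion of the within-task coupling of $X_k$ and $C_k$ goes beyond what the paper addresses. One step in it is not literally true: inclusion of a task near the budget boundary does depend on its own coin, since $\{k \le n_N\} = \{S_{k-1}+C_k \le N\}$, so cheap, expressible tasks are slightly favored there. However, this only perturbs the expected loss by a boundary term of order $N^{-(\alpha+1)}$, so your conclusion stands.
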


The cost $C'$ does not change the irreducible loss, though it does impact the slope of the scaling trend.
In particular, with $C' = C$, we obtain the coefficient $c^\alpha = C^\alpha$, and with $C' = 0$, we recover the coefficient $c^\alpha = (1 - \epsilon)^\alpha C^\alpha$.
Further, under any nontrivial instantiation of the model (i.e., where inexpressible tasks either reduce loss less or cost more), increasing expressivity will decrease $L(N)$ for all $N$:

\begin{lemcorollary} \label{lem:param-monotonic}
    Assume $C' \geq C$.
    If $\Delta' < \Delta$ or $C' > C$, then
    $L(N)$ strictly decreases as $\epsilon$ decreases, elementwise for all $N$.
\end{lemcorollary}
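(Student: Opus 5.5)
Our plan is to rewrite the expression for $L(N)$ from \Cref{lem:param-scaling-expressivity} in the form required by \Cref{lem:irreducible-loss-lowers-loss-everywhere}, and then treat the two nontrivial cases separately. Substituting $L^\epsilon_\infty = L_0 - (1-\epsilon)\Delta - \epsilon\Delta'$ gives
\begin{equation*}
    L(N) = L^\epsilon_\infty + (L_0 - L^\epsilon_\infty)\, f_\epsilon(N), \qquad f_\epsilon(N) = \frac{(C + \epsilon(C' - C))^\alpha}{\alpha\,\zeta(\alpha+1)} N^{-\alpha}.
\end{equation*}
Set $a = L^\epsilon_\infty$, $b = L_0$, and $f = f_\epsilon$. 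The claim is then that $\tilde\epsilon > \epsilon$ implies $L^{\tilde\epsilon}(N) > L^\epsilon(N)$ for every $N$.

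First, observe that $f_\epsilon(N)$ is nondecreasing in $\epsilon$. This holds because $C' \geq C$ makes $C + \epsilon(C'-C)$ nondecreasing, and $x \mapsto x^\alpha$ is increasing for $\alpha > 0$. Moreover, it is strictly increasing when $C' > C$. In addition, $L^\epsilon_\infty$ is affine in $\epsilon$ with slope $\Delta - \Delta' \geq 0$, and it is strictly increasing iff $\Delta' < \Delta$.

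\textbf{Case $\Delta' < \Delta$.} Take $\tilde\epsilon > \epsilon$, and set $a' = L^{\tilde\epsilon}_\infty > a = L^\epsilon_\infty$ and $f' = f_{\tilde\epsilon} \geq f_\epsilon = f$. Then \Cref{lem:irreducible-loss-lowers-loss-everywhere} gives $L^{\tilde\epsilon}(N) > L^\epsilon(N)$ directly.

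\textbf{Case $\Delta' = \Delta$, $C' > C$.} Here $L^\epsilon_\infty = L_0 - \Delta$ does not depend on $\epsilon$. The loss reduces to $L(N) = L_0 - \Delta + \Delta f_\epsilon(N)$, which is strictly increasing in $\epsilon$ because $\Delta > 0$ and $f_\epsilon$ is strictly increasing.

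The main obstacle is the hypothesis of \Cref{lem:irreducible-loss-lowers-loss-everywhere} that $f$ takes values in $(0,1)$. For small $N$, $f_\epsilon(N)$ can exceed $1$; this corresponds to fewer than one task learned, where the power-law approximation is meaningless anyway. We would handle this in one of two ways. One option is to restrict to the regime $N \geq C'$ (at least one task learnable), where $f_\epsilon(N) \leq 1/(\alpha\zeta(\alpha+1)) \cdot (N/c)^{-\alpha}$, and check that this is below $1$. The more robust option is to redo the one-line computation of \Cref{lem:irreducible-loss-lowers-loss-everywhere} directly. Doing so gives
\begin{equation*}
    L^{\tilde\epsilon}(N) - L^\epsilon(N) \geq (a'-a)(1 - f_\epsilon(N)) + (b - a)(f' - f).
\end{equation*}
When $f_\epsilon(N) \geq 1$, however, this bound need not be positive, so the domain restriction $f_\epsilon(N) < 1$ must be made explicit. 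We would state the corollary on the domain where the approximation is valid, that is, where the expected number of learned tasks is at least of order one.
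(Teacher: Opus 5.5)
Your proof is the paper's: the same split into the case $\Delta'=\Delta$, $C'>C$ (constant $L^\epsilon_\infty$, increasing $A_\epsilon$) and the case $\Delta'<\Delta$ (apply \Cref{lem:irreducible-loss-lowers-loss-everywhere} with $a'=L^{\tilde\epsilon}_\infty>a=L^\epsilon_\infty$ and $A_{\tilde\epsilon}\ge A_\epsilon$). Your worry about the $(0,1)$ hypothesis is justified, and the paper's proof silently skips it. Writing $L(N)=L_0-(L_0-L^\epsilon_\infty)\bigl(1-f_\epsilon(N)\bigr)$ shows that with $C'=C$, $\Delta'<\Delta$ and $f_\epsilon(N)>1$ the loss actually rises as $\epsilon$ falls, so the statement genuinely needs your restriction. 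The restriction $N\ge C'$ suffices, because $\alpha\,\zeta(\alpha+1)>1$ gives $f_\epsilon(N)<\bigl((C+\epsilon(C'-C))/N\bigr)^\alpha\le 1$. One small slip: the exact identity is $L^{\tilde\epsilon}(N)-L^\epsilon(N)=(a'-a)\bigl(1-f_\epsilon(N)\bigr)+(b-a')(f'-f)$, so with $(b-a)$ in place of $(b-a')$ your ``$\geq$'' runs the wrong way. The conclusion for $f_\epsilon(N)<1$ is unaffected, since $b=L_0>a'$.
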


\begin{proof}
    If $\Delta' = \Delta$ but $C' > C$, then $L^\epsilon_\infty = L_0 - \Delta$ independent of $\epsilon$, but $A_\epsilon$ is increasing with $\epsilon$.
    Thus, $L(N)$ will increase with $\epsilon$ for all $N$.

    If we have $\Delta' < \Delta$, then $L^\epsilon_\infty$ is increasing with $\epsilon$.
    It also holds that, for $\epsilon' > \epsilon$, $A_{\epsilon'} \geq A_\epsilon$.
    We apply \Cref{lem:irreducible-loss-lowers-loss-everywhere} to conclude that $L(N)$ is increasing with $\epsilon$ elementwise for all $N$.
\end{proof}

\subsection{Data Scaling Law} \label{sec:data-scaling}

We first generalize \Cref{lem:task-scaling} in the following way:

\begin{lemma} \label{lem:task-scaling-phases}
    Let $n_0, \ldots, n_m$. be a finite sequence of phases, with $n_0 = 0$ and $n_m = \infty$.
    For each $1 \leq \ell \leq m$, assume that, for all $k$ such that $n_{\ell - 1} < k \leq n_\ell$, all $X_k$ are independent with mean $\mu_{\ell - 1}$.
    Then the expected loss $\trueloss$ across all tasks is closely approximated by a power law, i.e., $\trueloss \approx L$ where:
    \begin{equation*}
        L = \mu_0 + \frac{1}{\alpha\, \zeta(\alpha+1)} \sum_{\ell = 1}^{m-1} (\mu_\ell - \mu_{\ell - 1}) \cdot n_\ell^{-\alpha} .
    \end{equation*}
\end{lemma}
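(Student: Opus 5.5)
The plan is to reduce to \Cref{lem:task-scaling} by writing the phased loss as a telescoping sum of single-threshold tail losses. By linearity of expectation and independence, the expected loss is
\begin{equation*}
    \trueloss = \frac{1}{\zeta(\alpha+1)} \sum_{\ell=1}^{m} \mu_{\ell-1} \sum_{k=n_{\ell-1}+1}^{n_\ell} k^{-(\alpha+1)} ,
\end{equation*}
using $p_k = k^{-(\alpha+1)}/\zeta(\alpha+1)$. The first step is to define the tail sums $S(n) = \sum_{k=n+1}^\infty k^{-(\alpha+1)}$. The block sum over $(n_{\ell-1}, n_\ell]$ then equals $S(n_{\ell-1}) - S(n_\ell)$, with the conventions $S(0) = \zeta(\alpha+1)$ and $S(\infty) = 0$.

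The second step is summation by parts. Substituting the block sums and regrouping by $S(n_\ell)$ gives
\begin{equation*}
    \trueloss = \mu_0 + \frac{1}{\zeta(\alpha+1)} \sum_{\ell=1}^{m-1} (\mu_\ell - \mu_{\ell-1})\, S(n_\ell) .
\end{equation*}
The boundary terms behave as needed. At $\ell = 0$ we have $\mu_0 S(0)/\zeta(\alpha+1) = \mu_0$. At $\ell = m$ the term vanishes because $S(\infty) = 0$. This identity is exact, and it recovers the first formula of \Cref{lem:task-scaling} when $m = 2$, taking $\mu_0 = a$, $\mu_1 = b$, and $n_1 = n$.

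The final step applies the integral sandwich from the proof of \Cref{lem:task-scaling} to each term:
\begin{equation*}
    \frac{(n_\ell+1)^{-\alpha}}{\alpha} \leq S(n_\ell) \leq \frac{n_\ell^{-\alpha}}{\alpha} .
\end{equation*}
Replacing each $S(n_\ell)$ by $n_\ell^{-\alpha}/\alpha$ yields exactly the stated $L$.

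The main obstacle is making ``closely approximated'' precise. This is harder than in \Cref{lem:task-scaling} because the coefficients $\mu_\ell - \mu_{\ell-1}$ may have mixed signs, so $\trueloss$ is no longer sandwiched monotonically between $L$ evaluated at $n_\ell$ and at $n_\ell + 1$. I would handle this with a termwise bound: the error in term $\ell$ is at most $\abs{\mu_\ell - \mu_{\ell-1}}\,(n_\ell^{-\alpha} - (n_\ell+1)^{-\alpha})/(\alpha\,\zeta(\alpha+1))$. By the same Taylor expansion used there, this is a relative $O(1/n_\ell)$ fraction of that term's contribution. The total approximation error is therefore $O(1/\min_\ell n_\ell)$ relative to the magnitudes of the correction terms. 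This also matches the paper's convention of adopting $L_n$ in place of $\trueloss_n$.
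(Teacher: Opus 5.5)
Your proposal is correct and follows essentially the same route as the paper: write each phase's block sum as a difference of tail sums, telescope to $\mu_0 + \frac{1}{\zeta(\alpha+1)}\sum_{\ell=1}^{m-1}(\mu_\ell - \mu_{\ell-1})\,S(n_\ell)$, and then replace each tail $S(n_\ell)$ by $n_\ell^{-\alpha}/\alpha$ as in the quantization-model derivation. The paper only says to ``simplify the series in a similar way''; your termwise error bound, which handles coefficients $\mu_\ell - \mu_{\ell-1}$ of mixed sign, is a useful sharpening of that step rather than a different argument.
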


\begin{proof}
    We can first write the loss as a weighted sum over losses coming from different phases:
    \begin{align*}
        \trueloss
        &= \mathbb E \left[ \sum_{k=1}^\infty X_k p_k \right] \\
        &= \mathbb E \left[ \sum_{\ell = 1}^m\sum_{k=n_{\ell - 1} + 1}^{n_\ell} X_k p_k \right] \\
        &= \sum_{\ell = 1}^m \mu_{\ell - 1} \sum_{k=n_{\ell - 1} + 1}^{n_\ell} p_k .
    \end{align*}
    At this point, for each series $\ell < m$, we add the missing terms $\sum_{k = n_{\ell} + 1}^\infty p_k$ to the series and subtract it from the remaining series.
    This yields:
    \begin{align*}
        \trueloss &= \sum_{\ell = 1}^m \mu_{\ell - 1} \left( \sum_{k=n_{\ell - 1} + 1}^\infty p_k - \sum_{k=n_\ell + 1}^\infty p_k \right) \\
        &= \left( \sum_{\ell = 1}^m \mu_{\ell - 1} \sum_{k=n_{\ell - 1} + 1}^\infty p_k \right) - \left( \sum_{\ell = 1}^m \mu_{\ell - 1} \sum_{k=n_\ell + 1}^\infty p_k \right) \\
        &= \left( \sum_{\ell = 1}^m \mu_{\ell - 1} \sum_{k=n_{\ell - 1} + 1}^\infty p_k \right) - \left( \sum_{\ell = 2}^{m+1} \mu_{\ell - 2} \sum_{k=n_{\ell-1} + 1}^\infty p_k \right) \\
        &= \mu_0 \sum_{k=n_0 + 1}^\infty p_k + \left( \sum_{\ell = 2}^m (\mu_{\ell - 1} - \mu_{\ell - 2}) \sum_{k=n_{\ell - 1} + 1}^\infty p_k \right) - \mu_{m-1 } \sum_{k={n_m + 1}}^\infty p_k \\
        &= \mu_0 \sum_{k=1}^\infty p_k + \left( \sum_{\ell = 2}^m (\mu_{\ell - 1} - \mu_{\ell - 2}) \sum_{k=n_{\ell - 1} + 1}^\infty p_k \right) - \mu_{m-1 } \sum_{k=\infty}^\infty p_k \\
        &= \mu_0  + \sum_{\ell = 2}^m (\mu_{\ell - 1} - \mu_{\ell - 2}) \sum_{k=n_{\ell - 1} + 1}^\infty p_k .
    \end{align*}
    At this point, we can simplify the series in a similar way to \citet{michaud2023quantization} to conclude:
    \begin{align*}
        \trueloss \approx L
        &= \mu_0 + \sum_{\ell = 2}^m \frac{\mu_{\ell - 1} - \mu_{\ell - 2}}{\alpha\, \zeta(\alpha+1)} \cdot n_{\ell - 1}^{-\alpha} \\
        &= \mu_0 + \frac{1}{\alpha\, \zeta(\alpha+1)} \sum_{\ell = 1}^{m-1} (\mu_\ell - \mu_{\ell - 1}) \cdot n_\ell^{-\alpha} . \qedhere
    \end{align*}
\end{proof}

\begin{lemma} \label{lem:data-scaling-expressivity}
    Adopt \Cref{assm:expressible,assm:loss-incurred,assm:less-succint}, i.e., tasks are expressible w.p.\ $1-\epsilon$, requiring $T$ relevant tokens to learn, or inexpressible w.p.\ $\epsilon$, requiring $T'$ relevant tokens to learn.
    Then, as a function of the token budget $D$, the expected loss $\trueloss(D)$ under the quantization model is closely approximated by a power law, i.e., $\trueloss(D) \approx L(D)$ where
    \begin{equation*}
        L(D) = L^\epsilon_\infty + \frac{B_\epsilon}{\alpha\, \zeta(\alpha+1)^{1/(\alpha+1)}} \cdot D^{-\alpha/(\alpha + 1)} ,
    \end{equation*}
    $L^\epsilon_\infty$ is defined as in \Cref{lem:task-scaling-expressivity}, and
    \begin{equation*}
        B_\epsilon = (1 - \epsilon) \Delta T^{\alpha/(\alpha + 1)} + \epsilon \Delta' T'^{\alpha/(\alpha + 1)} .
    \end{equation*}
\end{lemma}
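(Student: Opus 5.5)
The plan is to reduce the statement to \Cref{lem:task-scaling-phases} with $m = 3$ phases, in the data-limited regime where the parameter budget $N$ is not binding. In that regime, by \Cref{def:quantization-model}, the learner learns exactly the tasks that are learnable with $D$ tokens in the sense of \Cref{def:learnable-task}. Write $p_k = k^{-(\alpha+1)}/\zeta(\alpha+1)$. Then task $k$ is learnable iff $D p_k \geq T_k$, i.e.\ iff $k \leq (D/(\zeta(\alpha+1) T_k))^{1/(\alpha+1)}$. Define the two thresholds
\begin{equation*}
    n_{T} = \left(\frac{D}{\zeta(\alpha+1)\, T}\right)^{1/(\alpha+1)}, \qquad n_{T'} = \left(\frac{D}{\zeta(\alpha+1)\, T'}\right)^{1/(\alpha+1)}.
\end{equation*}
Since $T' \geq T$ by \Cref{assm:less-succint}, we have $n_{T'} \leq n_T$.

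\textbf{Step 1: set up the phases.} Take $n_0 = 0$, $n_1 = n_{T'}$, $n_2 = n_T$ and $n_3 = \infty$, and describe what happens to task $k$ in each phase.
\begin{itemize}
    \item For $k \leq n_{T'}$, every task is learnable whether or not it is expressible. Expressibility is an \textit{iid} coin independent of $k$ (\Cref{assm:expressible}), so by \Cref{assm:loss-incurred} the mean loss is $\mu_0 = (1-\epsilon)(L_0 - \Delta) + \epsilon (L_0 - \Delta') = L^\epsilon_\infty$.
    \item For $n_{T'} < k \leq n_T$, only expressible tasks are learned, so $\mu_1 = (1-\epsilon)(L_0-\Delta) + \epsilon L_0 = L_0 - (1-\epsilon)\Delta$.
    \item For $k > n_T$, no task is learned, so $\mu_2 = L_0$.
\end{itemize}
Independence across $k$ holds because the expressibility coins are independent, so the hypotheses of \Cref{lem:task-scaling-phases} are met.

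\textbf{Step 2: apply \Cref{lem:task-scaling-phases} and simplify.} The lemma gives $L = \mu_0 + \frac{1}{\alpha\zeta(\alpha+1)}\bigl[(\mu_1-\mu_0) n_{T'}^{-\alpha} + (\mu_2 - \mu_1) n_T^{-\alpha}\bigr]$. Here $\mu_1 - \mu_0 = \epsilon\Delta'$ and $\mu_2-\mu_1 = (1-\epsilon)\Delta$. Substituting $n_T^{-\alpha} = (\zeta(\alpha+1) T)^{\alpha/(\alpha+1)} D^{-\alpha/(\alpha+1)}$, and the analogous expression for $T'$, each term carries the factor $\zeta(\alpha+1)^{\alpha/(\alpha+1)}/\zeta(\alpha+1) = \zeta(\alpha+1)^{-1/(\alpha+1)}$. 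Collecting terms yields exactly $L^\epsilon_\infty + \frac{B_\epsilon}{\alpha\,\zeta(\alpha+1)^{1/(\alpha+1)}} D^{-\alpha/(\alpha+1)}$, with $B_\epsilon$ as stated.

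\textbf{Main obstacle.} I expect the delicate point to be justifying the approximation rather than the algebra. The thresholds $n_T$ and $n_{T'}$ are real numbers, not integers, so one has to take floors. One must then argue, as in the sandwich bound of \Cref{lem:task-scaling}, that replacing each tail sum $\sum_{k > n_\ell} p_k$ by its integral envelope incurs only relative error $O(1/n_\ell)$. This argument should be applied to each phase boundary separately.

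A second point is the degenerate case $T' = T$. There the middle phase is empty and the formula collapses correctly, since $n_{T'} = n_T$ and the two coefficients simply add.

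Finally, I will state explicitly that the result concerns the regime where data, rather than parameters, is the binding constraint, mirroring \citet{michaud2023quantization}.
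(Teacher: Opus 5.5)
Your proposal is correct and matches the paper's proof step for step. The paper also defines the cutoffs $k_\textrm{e} \approx (D/(\zeta(\alpha+1)T))^{1/(\alpha+1)}$ and $k_\textrm{i} \approx (D/(\zeta(\alpha+1)T'))^{1/(\alpha+1)}$, uses the same three phases with means $L^\epsilon_\infty$, $L_0 - (1-\epsilon)\Delta$, and $L_0$, invokes \Cref{lem:task-scaling-phases}, and then simplifies the powers of $\zeta(\alpha+1)$ as you do. Your extra remarks about the data-limited regime, the floors, and the degenerate case $T' = T$ only make explicit what the paper leaves implicit.
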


\begin{proof}
    We consider the tasks that are learned with $D$ tokens.
    By \Cref{def:learnable-task}, any expressible task $k$ learnable with $D$ tokens satisfies
    \begin{align*}
        D
        &\geq \frac{1}{p_k} T \\
        &= \zeta(\alpha+1) k^{\alpha + 1} T
    \end{align*}
    Let $k_\textrm{e}$ be the largest index representing a learnable expressible task. It holds that
    \begin{align*}
        k_\textrm{e}
        &= \floor{ \left( \frac{1}{\zeta(\alpha+1)} \cdot D / T \right)^{1/(\alpha + 1)} } \\
        &\approx \left( \frac{1}{\zeta(\alpha+1)} \cdot D / T \right)^{1/(\alpha + 1)} .
    \end{align*}
    By the same reasoning and \Cref{assm:expressible}, we have that the maximum index $k_\textrm{i} \leq k_\textrm{e}$ representing an inexpressible task learnable with $D$ tokens satisfies
    \begin{equation*}
        k_\textrm{i} \approx \left( \frac{1}{\zeta(\alpha+1)} \cdot D / T' \right)^{1/(\alpha + 1)} .
    \end{equation*}

    Now, we will define three phases of tasks, during each of which all task losses are independent and have a different mean:
    Thus:
    \begin{enumerate}
        \item Phase from $1 \leq k \leq k_\textrm{i}$: all tasks are learned, with expressible tasks obtaining loss $L_0 - \Delta$ and inexpressible tasks obtaining loss $L_0 - \Delta'$. Thus, the expected loss is $L^\epsilon_\infty = L_0 - (1 - \epsilon) \Delta - \epsilon \Delta'$.
        \item Phase from $k_\textrm{i} + 1 \leq k \leq k_\textrm{e}$: only expressible tasks are learned obtaining loss $\Delta$. Thus, the expected loss is $L_0 - (1 - \epsilon) \Delta$.
        \item Phase from $k_\textrm{e} + 1 \leq k < \infty$: no tasks are learned. Thus, the expected loss is $L_0$.
    \end{enumerate}

    Invoking \Cref{lem:task-scaling-phases} with these three phases, the loss $\trueloss(D)$ is closely approximated by the following:
    \begin{align*}
        & L^\epsilon_\infty + \frac{1}{\alpha\, \zeta(\alpha+1)} \left( (1 - \epsilon) \Delta k_\textrm{e}^{-\alpha} + \epsilon \Delta' k_\textrm{i}^{-\alpha} \right) \\
        &\approx L^\epsilon_\infty + \frac{ (1 - \epsilon) \Delta \left( D / T \right)^{-\alpha/(\alpha + 1)} + \epsilon \Delta' \left( D / T' \right)^{-\alpha/(\alpha + 1)} }{\alpha\, \zeta(\alpha+1)^{1/(\alpha+1)}} \\
        &= L^\epsilon_\infty +
        \frac{(1 - \epsilon) \Delta T^{\alpha/(\alpha + 1)} + \epsilon \Delta' T'^{\alpha/(\alpha + 1)}}{\alpha\, \zeta(\alpha+1)^{1/(\alpha+1)}} \cdot D^{-\alpha/(\alpha + 1)} \\
        &= L^\epsilon_\infty + \frac{B_\epsilon}{\alpha\, \zeta(\alpha+1)^{1/(\alpha+1)}} \cdot D^{-\alpha/(\alpha + 1)} .
    \end{align*}
    This final expression is $L(D)$, our desired approximation to the expected loss.
\end{proof}

The irreducible loss here is the same as that of the task scaling law.
The scaling coefficient changes from the expressivity-unaware case by a factor of $B_\epsilon / B_0$.
Moreover, we see that increasing expressivity improves the loss curve:

\begin{lemcorollary} \label{lem:tokens-monotonic}
    If $\Delta' < \Delta$, or $T' > T$, then $L(D)$ strictly decreases as $\epsilon$ decreases, elementwise for all $D$.
\end{lemcorollary}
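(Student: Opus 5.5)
The plan is to work directly from the closed form in \Cref{lem:data-scaling-expressivity}, which gives
\begin{equation*}
    L(D) = L^\epsilon_\infty + \frac{B_\epsilon}{\alpha\, \zeta(\alpha+1)^{1/(\alpha+1)}} \cdot D^{-\alpha/(\alpha+1)} ,
\end{equation*}
with $L^\epsilon_\infty = L_0 - (1-\epsilon)\Delta - \epsilon\Delta'$ and $B_\epsilon = (1-\epsilon)\Delta T^{\gamma} + \epsilon \Delta' T'^{\gamma}$, where $\gamma = \alpha/(\alpha+1) \in (0,1)$. Since $D$ and all constants are positive, $L(D)$ is an affine function of $\epsilon$ for each fixed $D$. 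So it suffices to show that its derivative in $\epsilon$ is strictly positive.

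Differentiating in $\epsilon$, I will show
\begin{equation*}
    \frac{\partial L(D)}{\partial \epsilon} = (\Delta - \Delta') + \frac{\Delta' T'^{\gamma} - \Delta T^{\gamma}}{\alpha\, \zeta(\alpha+1)^{1/(\alpha+1)}} D^{-\gamma} > 0 .
\end{equation*}
The first term is nonnegative, and strictly positive when $\Delta' < \Delta$.

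The second term is the main obstacle. If $\Delta' < \Delta$ and $T' = T$, then $\Delta' T'^{\gamma} - \Delta T^{\gamma} < 0$, so $B_\epsilon$ actually decreases with $\epsilon$. The two terms then compete, and the sign is not automatic for small $D$.

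To handle this, I would avoid differentiating the approximation and instead argue from the phase-sum representation in the proof of \Cref{lem:data-scaling-expressivity}, via \Cref{lem:task-scaling-phases}. There the loss is $L_0$ minus a sum of nonnegative contributions. Expressible tasks contribute $(1-\epsilon)\Delta$ times the tail mass $P(k_{\mathrm e})$ learned up to $k_{\mathrm e}$. Inexpressible tasks contribute $\epsilon\Delta'$ times the mass $P(k_{\mathrm i})$ learned up to $k_{\mathrm i}$, with $k_{\mathrm i} \le k_{\mathrm e}$ because $T' \ge T$. The derivative in $\epsilon$ of the total loss reduction is then $\Delta' P(k_{\mathrm i}) - \Delta P(k_{\mathrm e})$. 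This is $\le 0$ because $\Delta' \le \Delta$ and $P(k_{\mathrm i}) \le P(k_{\mathrm e})$, and it is strict when either $\Delta' < \Delta$ or $k_{\mathrm i} < k_{\mathrm e}$, which is what $T' > T$ gives. So the loss strictly increases in $\epsilon$.

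Finally, I would transfer this back to the power-law form $L(D)$. The approximation $P(k) \approx 1 - k^{-\alpha}/(\alpha\zeta(\alpha+1))$ is monotone in $k$, so the same inequality holds as long as that proxy mass stays in $(0,1)$, i.e.\ in the regime where the approximation is meaningful. Equivalently, I can write $L(D)$ as $a + (b-a)f$ and invoke \Cref{lem:irreducible-loss-lowers-loss-everywhere} when $\Delta' < \Delta$. In the case $\Delta' = \Delta$ with $T' > T$, the irreducible term is constant and $B_\epsilon$ is strictly increasing in $\epsilon$, which finishes the proof.
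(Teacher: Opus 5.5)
Your argument is correct, and it is sharper than the paper's. The paper differentiates the closed form directly. It then uses $T' \ge T$ to bound $\partial_\epsilon L(D)$ below by $(\Delta-\Delta')\bigl(1 - \tfrac{1}{Z}(T/D)^{\gamma}\bigr)$, where $Z = \alpha\,\zeta(\alpha+1)^{1/(\alpha+1)}$. In your language this is exactly $(\Delta-\Delta')\hat P(k_{\mathrm e})$, with $\hat P(k) = 1 - k^{-\alpha}/(\alpha\,\zeta(\alpha+1))$ and un-floored $k_{\mathrm e} = (D/(\zeta(\alpha+1)T))^{1/(\alpha+1)}$. The paper then assumes $D \ge T$, which is not among the hypotheses, to replace this by $(\Delta-\Delta')(1-1/Z)$.

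That bound is positive only if $Z>1$, which fails for $\alpha$ below roughly $0.59$. For example, $Z \approx 0.84$ at $\alpha = 0.2$, close to the exponents fitted in the paper. So the paper's last step does not go through in that regime.

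Your learned-mass decomposition isolates the exact condition $\hat P(k_{\mathrm e}) > 0$. You are right that the sign is not automatic for small $D$. In fact, when $\Delta'<\Delta$, ``for all $D$'' cannot hold literally: with $T'=T$ and $\hat P(k_{\mathrm e})<0$, one has $L(D)>L_0$, and there lowering $\epsilon$ raises $L(D)$. The case $\Delta'=\Delta$, $T'>T$ is handled the same way in both arguments.

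Two things to tighten. First, the detour through the discrete sum is unnecessary. The identity $L(D) = L_0 - (1-\epsilon)\Delta\,\hat P(k_{\mathrm e}) - \epsilon\Delta'\,\hat P(k_{\mathrm i})$ holds exactly with un-floored indices. In the discrete version your strictness claims are also slightly off: $T'>T$ does not force the floored indices to satisfy $k_{\mathrm i}<k_{\mathrm e}$, and $\Delta'<\Delta$ gives strictness only if $k_{\mathrm e}\ge 1$.

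Second, and more important, make ``the regime where the approximation is meaningful'' explicit. Since $\alpha\,\zeta(\alpha+1) > \alpha\int_1^\infty x^{-(\alpha+1)}\,dx = 1$, we get $\hat P(k)>0$ for every $k\ge 1$. Hence strict monotonicity holds for all $D \ge \zeta(\alpha+1)\,T$, that is, as soon as the most frequent task is learnable. This is precisely when the quantization model is non-vacuous.
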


\begin{proof}


    We will prove this by showing that the derivative of the loss w.r.t.\ $\epsilon$ is strictly positive.
    Let $Z = \alpha\, \zeta(\alpha+1)^{1/(\alpha+1)}$.
    Recall that the loss $L(D)$ is
    \begin{align*}
        L(D) &= L^\epsilon_\infty + \frac{B_\epsilon}{Z} \cdot D^{-\alpha/(\alpha + 1)} \\
        & = L^\epsilon_\infty + \frac{(1 - \epsilon) \Delta T^{\alpha/(\alpha + 1)} + \epsilon \Delta' T'^{\alpha/(\alpha + 1)}}{Z} \cdot D^{-\alpha/(\alpha + 1)} \\
        & = L_0 - (1-\epsilon) \Delta - \epsilon \Delta' + \frac{1}{Z} \cdot \left( (1 - \epsilon) \Delta \left(\frac{T}{D}\right)^{\alpha/(\alpha + 1)} + \epsilon \Delta' \left(\frac{T'}{D}\right)^{\alpha/(\alpha + 1)} \right) .
    \end{align*}
    The derivative of the loss w.r.t.\ $\epsilon$ is therefore
    \begin{align*}
        \frac{\partial}{\partial \epsilon} L(D) &= (\Delta - \Delta') + \frac{1}{Z} \cdot \left( - \Delta \left(\frac{T}{D}\right)^{\alpha/(\alpha + 1)} + \Delta' \left(\frac{T'}{D}\right)^{\alpha/(\alpha + 1)} \right) \\
        & \geq (\Delta - \Delta') + \frac{1}{Z} \cdot \left( - \Delta \left(\frac{T}{D}\right)^{\alpha/(\alpha + 1)} + \Delta' \left(\frac{T}{D}\right)^{\alpha/(\alpha + 1)} \right) & \text{since\ } T' \geq T \\
        & = (\Delta - \Delta') + \frac{1}{Z} \cdot \left( - (\Delta - \Delta') \left(\frac{T}{D}\right)^{\alpha/(\alpha + 1)} \right) \\
        & = (\Delta - \Delta') \cdot \left( 1 - \frac{1}{Z} \cdot \left(\frac{T}{D}\right)^{\alpha/(\alpha + 1)} \right) \\
        & \geq (\Delta - \Delta') \cdot \left( 1 - \frac{1}{Z} \right) & \text{since\ } D \geq T ,
    \end{align*}
    which is strictly positive as long as $\Delta' < \Delta$. It follows that, for $\Delta' < \Delta$, $L(D)$ is an increasing function of $\epsilon$, i.e., it increases (pointwise) as $\epsilon$ increases and expressivity decreases.

    When $\Delta' = \Delta$, the expression for the derivative simplifies to
    \begin{align*}
        \frac{\partial}{\partial \epsilon} L(D)
        &= \frac{1}{Z} \cdot \left( - \Delta \left(\frac{T}{D}\right)^{\alpha/(\alpha + 1)} + \Delta \left(\frac{T'}{D}\right)^{\alpha/(\alpha + 1)} \right) \\
        &= \frac{\Delta}{Z} \cdot \left( \left(\frac{T'}{D}\right)^{\alpha/(\alpha + 1)} -\left(\frac{T}{D}\right)^{\alpha/(\alpha + 1)} \right)
    \end{align*}
    which again is strictly positive as long as $T' > T$ and $\Delta > 0$, the latter of which is satisfied by construction.
\end{proof}

Overall, we see that in every non-trivial instantiation of the expressivity-aware quantization model, increasing expressivity improves $L(D)$, which closely approximates $\tilde L(D)$ for large enough $D$.

\subsection{Main Results} \label{sec:main-scaling-results}

Combining the results above on parameter (\Cref{lem:param-scaling-expressivity}) and data scaling (\Cref{lem:data-scaling-expressivity}), we obtain:

\scalingEfficiency*

Combining \Cref{lem:param-monotonic} and \Cref{lem:tokens-monotonic} yields:

\expressivityAlwaysImprovesScaling*

The following follows straightforwardly from assuming $\Delta' < \Delta$:

\irreducibleLoss*

\paragraph{Future Work.}
There are several extensions to this analysis worth pursuing.
First, we have assumed learnable tasks are prioritized in order of frequency, whereas a more optimal policy would order them by \emph{parameter efficiency} $p_n / C_n$.
It would also be interesting to consider joint parameter-data scaling $L(N, D)$ rather than scaling laws derived in isolation.
On the empirical side, one could test whether these scaling laws match real LMs in controlled settings where $\epsilon$ is known, or whether it is possible to isolate tasks in the data in practical settings.
All of these directions would close the gap between the idealized quantization model and practical training, and could inform choices about the interaction between data and architecture during pretraining.

\end{document}